\documentclass[letterpaper, 10pt]{article}

\usepackage[dvipsnames]{xcolor}
\usepackage[hypertexnames=false]{hyperref}
\usepackage{
  amsmath, amsthm, amssymb, mathtools, dsfont, units,       
  graphicx, wrapfig, subfig, float,                         
  listings, color, inconsolata, pythonhighlight,            
  fancyhdr, enumitem, framed ,comment, multirow, 
  appendix, authblk, refcount
}
\usepackage[ruled,vlined]{algorithm2e}

\usepackage{newpxtext, newpxmath, inconsolata}
\usepackage[cal=cm,frak=pxtx,bb=dsfontserif,scr=boondoxo]{mathalpha}
\usepackage[utf8]{inputenc}

\usepackage[
    backend=biber,
    style=numeric,
    giveninits=true,
    maxnames = 999
]{biblatex}

\DeclareFieldFormat{journaltitlecase}{#1}
\DeclareFieldFormat[article]{journaltitle}{#1}

\DeclareFieldFormat[article]{volume}{\mkbibbold{#1}}

\DeclareFieldFormat[article]{title}{\mkbibquote{\mkbibemph{\MakeSentenceCase{#1}}}}
\DeclareFieldFormat[inproceedings]{title}{\mkbibquote{\mkbibemph{\MakeSentenceCase{#1}}}}

\DeclareFieldFormat[inproceedings]{booktitle}{#1}

\renewbibmacro*{in:}{}

\AtEveryBibitem{
  \clearfield{eprint}
  \clearfield{primaryclass}
  \clearfield{issn}
  \clearfield{number}
}

\renewbibmacro*{in:}{} 
\AtEveryBibitem{
    \clearfield{eprint} 
    \clearfield{primaryclass} 
    \clearfield{issn} 
    \clearfield{number} 
} 

\usepackage[left=1in, right=1in, top=1.0in, bottom=.9in, headsep=.2in, footskip=0.35in]{geometry}

\usepackage[bottom]{footmisc}

\allowdisplaybreaks

\usepackage[font={it,footnotesize}]{caption}

\hypersetup{colorlinks=true, linkcolor=RoyalBlue, citecolor=RedOrange, urlcolor=RoyalBlue}

\usepackage{titlesec}
\titleformat{\section}{\large\bfseries\selectfont}{\thesection\;\;\;}{0em}{}
\titleformat{\subsection}{\normalsize\bfseries\selectfont}{\thesubsection\;\;\;}{0em}{}

\setlist[itemize]{wide=0pt, leftmargin=16pt, labelwidth=10pt, align=left}

\usepackage[subfigure]{tocloft}

\theoremstyle{plain}
\newtheorem{theorem}{Theorem}[section]
\newtheorem{proposition}[theorem]{Proposition}
\newtheorem{lemma}[theorem]{Lemma}
\newtheorem{corollary}[theorem]{Corollary}

\theoremstyle{definition}
\newtheorem{definition}[theorem]{Definition}
\newtheorem{remark}{Remark}[section]
\newtheorem{assumption}{Assumption}
\newtheorem*{notations*}{Notations}

\makeatletter
\newenvironment{assumptionprime}[2]
{
    \renewcommand{\theassumption}{\getrefnumber{#1}$'$}
    \@ifundefined{theHassumption}{}{%
        \renewcommand{\theHassumption}{\getrefnumber{#1}prime}%
    }
    \begin{assumption}[#2]
}
{
    \end{assumption}
    \addtocounter{assumption}{-1}
}
\makeatother

\numberwithin{equation}{section}

\newcommand{\bbr}{\mathbb{R}}

\newcommand{\bbc}{\mathbb{C}}

\newcommand{\bbp}{\mathbb{P}}

\newcommand{\bbj}{\mathbb{J}}
\newcommand{\bbh}{\mathbb{H}}
\newcommand{\bbe}{\mathbb{E}}
\newcommand{\bbn}{\mathbb{N}}
\newcommand{\bbone}{\mathbb{1}}
\newcommand{\bbb}{\mathbb{B}}

\newcommand{\bfa}{\mathbf{a}}
\newcommand{\bfh}{\mathbf{h}}
\newcommand{\bfx}{\mathbf{x}}
\newcommand{\bfe}{\mathbf{e}}
\newcommand{\bfy}{\mathbf{y}}
\newcommand{\bfs}{\mathbf{s}}
\newcommand{\bfz}{\mathbf{z}}
\newcommand{\bfu}{\mathbf{u}}
\newcommand{\bfv}{\mathbf{v}}
\newcommand{\bfb}{\mathbf{b}}
\newcommand{\bfw}{\mathbf{w}}
\newcommand{\bfone}{\mathbf{1}}
\newcommand{\bfzero}{\mathbf{0}}

\newcommand{\bfS}{\mathbf{S}}
\newcommand{\bfH}{\mathbf{H}}
\newcommand{\bfK}{\mathbf{K}}
\newcommand{\bfD}{\mathbf{D}}
\newcommand{\bfL}{\mathbf{L}}
\newcommand{\bfI}{\mathbf{I}}
\newcommand{\bfT}{\mathbf{T}}

\newcommand{\bfQ}{\mathbf{Q}}
\newcommand{\bfA}{\mathbf{A}}

\newcommand{\bfB}{\mathbf{B}}

\newcommand{\bfF}{\mathbf{F}}
\newcommand{\bfR}{\mathbf{R}}
\newcommand{\bfU}{\mathbf{U}}
\newcommand{\bfV}{\mathbf{V}}

\newcommand{\bfW}{\mathbf{W}}
\newcommand{\bfM}{\mathbf{M}}

\newcommand{\barK}{\bar{\mathbf{K}}}

\newcommand{\barU}{\bar{\mathbf{U}}}
\newcommand{\baru}{\bar{\mathbf{u}}}
\newcommand{\barV}{\bar{\mathbf{V}}}
\newcommand{\barv}{\bar{\mathbf{v}}}
\newcommand{\barLamb}{\bar{\boldsymbol{\Lambda}}}
\newcommand{\barlamb}{\bar{\lambda}}
\newcommand{\barGamma}{\bar{\boldsymbol{\Gamma}}}
\newcommand{\bargamma}{\bar{\gamma}}

\newcommand{\barDelta}{\bar{\Delta}}
\newcommand{\barL}{\bar{\mathbf{L}}}
\newcommand{\barD}{\bar{\mathbf{D}}}

\newcommand{\ddotK}{\ddot{\mathbf{K}}}
\newcommand{\ddotu}{\ddot{\mathbf{u}}}
\newcommand{\ddotU}{\ddot{\mathbf{U}}}
\newcommand{\ddotv}{\ddot{\mathbf{v}}}
\newcommand{\ddotV}{\ddot{\mathbf{V}}}
\newcommand{\ddotLamb}{\ddot{\boldsymbol{\Lambda}}}
\newcommand{\ddotlamb}{\ddot{\lambda}}
\newcommand{\ddotGamma}{\ddot{\boldsymbol{\Gamma}}}
\newcommand{\ddotgamma}{\ddot{\gamma}}

\newcommand{\ddotG}{\ddot{\mathbf{G}}}
\newcommand{\ddotfkQ}{\ddot{\mathfrak{Q}}}
\newcommand{\ddotfkS}{\ddot{\mathfrak{S}}}
\newcommand{\ddotL}{\ddot{\mathbf{L}}}
\newcommand{\ddotD}{\ddot{\mathbf{D}}}

\newcommand{\rmd}{\mathrm{d}}
\newcommand{\rms}{\mathrm{s}}
\newcommand{\rmi}{\mathrm{i}}

\newcommand{\caL}{\mathcal{L}}
\newcommand{\caH}{\mathcal{H}}
\newcommand{\caS}{\mathcal{S}}
\newcommand{\caC}{\mathcal{C}}
\newcommand{\caM}{\mathcal{M}}

\newcommand{\bfmu}{\boldsymbol{\mu}}

\newcommand{\bfSigma}{\boldsymbol{\Sigma}}
\newcommand{\bfLamb}{\boldsymbol{\Lambda}}
\newcommand{\bfPhi}{\boldsymbol{\Phi}}

\newcommand{\bfzeta}{\boldsymbol{\zeta}}
\newcommand{\bfUps}{\boldsymbol{\Upsilon}}
\newcommand{\bfups}{\boldsymbol{\upsilon}}
\newcommand{\bfPi}{\boldsymbol{\Pi}}
\newcommand{\bfOmega}{\boldsymbol{\Omega}}
\newcommand{\bfGamma}{\boldsymbol{\Gamma}}
\newcommand{\bfXi}{\boldsymbol{\Xi}}

\newcommand{\fkx}{\mathfrak{x}}

\newcommand{\fkq}{\mathfrak{q}}
\newcommand{\fka}{\mathfrak{a}}
\newcommand{\fks}{\mathfrak{s}}
\newcommand{\fkn}{\mathfrak{n}}

\newcommand{\fkU}{\mathfrak{U}}
\newcommand{\fkV}{\mathfrak{V}}
\newcommand{\fkK}{\mathfrak{K}}
\newcommand{\fkA}{\mathfrak{A}}
\newcommand{\fkB}{\mathfrak{B}}
\newcommand{\fkX}{\mathfrak{X}}
\newcommand{\fkL}{\mathfrak{L}}
\newcommand{\fkD}{\mathfrak{D}}
\newcommand{\fkQ}{\mathfrak{Q}}
\newcommand{\fkS}{\mathfrak{S}}

\newcommand{\cov}{\operatorname{Cov}}
\newcommand{\Tr}{\operatorname{Tr}}
\newcommand{\dist}{\operatorname{dist}}
\newcommand{\diag}{\operatorname{diag}}
\newcommand{\rank}{\operatorname{rank}}
\newcommand{\sgn}{\operatorname{sgn}}

\newcommand{\Fnorm}{{\mathrm{F}}}
\newcommand{\argmin}{\mathop{\mathrm{argmin}}}

\newcommand{\qand}{\quad \text{ and } \quad}
\newcommand{\qwhere}{\quad \text{ where } \quad}
\newcommand{\qwith}{\quad \text{ with } \quad}
\newcommand{\qfor}{\quad \text{ for } \quad}

\DeclarePairedDelimiter\pars{\lparen}{\rparen}
\DeclarePairedDelimiter\braks{\lbrack}{\rbrack}
\DeclarePairedDelimiter\dbraks{\llbracket}{\rrbracket}
\DeclarePairedDelimiter\curls{\lbrace}{\rbrace}
\DeclarePairedDelimiter\angles{\langle}{\rangle}
\DeclarePairedDelimiter\abs{\lvert}{\rvert}
\DeclarePairedDelimiter\norm{\lVert}{\rVert}

\NewDocumentCommand{\oprec}{o m}{%
  O_{\prec}\IfNoValueTF{#1}
    {\pars{#2}}        
    {\pars[#1]{#2}}    
}
\NewDocumentCommand{\bigO}{o m}{%
  O\IfNoValueTF{#1}
    {\pars{#2}}        
    {\pars[#1]{#2}}    
}

\newcommand{\shortpara}[1]{\noindent\underline{\emph{#1}}}

\begin{document}


\title{Multi-kernel spectral clustering: Entrywise eigenvector perturbation bounds and exact recovery}


\author[1]{Zeqin Lin\textsuperscript{a,}}
\author[1]{Guangming Pan\textsuperscript{b,}}
\author[2]{Zhixiang Zhang\textsuperscript{c,}}
\author[3]{Yinbing Zhou\textsuperscript{d,}}

\affil[1]{Nanyang Technological University
\textsuperscript{a}\texttt{\href{mailto:zeqin.lin@ntu.edu.sg}{zeqin.lin@ntu.edu.sg}}; 
\textsuperscript{b}\texttt{\href{mailto:gmpan@ntu.edu.sg}{gmpan@ntu.edu.sg}}}
\affil[2]{University of Macau
\textsuperscript{c}\texttt{\href{mailto:zhixzhang@um.edu.mo}{zhixzhang@um.edu.mo}}}
\affil[3]{Dalian University of Technology
\textsuperscript{d}\texttt{\href{mailto:zhouyinbing_@outlook.com}{zhouyinbing\_@outlook.com}}}

\date{\vspace{-2.5em}}

\renewcommand\Authfont{\normalsize}
\renewcommand\Affilfont{\footnotesize}



\maketitle
\begin{abstract}
Kernel spectral clustering with a single bandwidth can be inadequate for data exhibiting multiple characteristic pairwise-distance scales, a problem particularly prevalent in the high-dimensional regime. We address this issue through a multi-kernel formulation that aggregates kernels with different bandwidths. The bandwidths are selected as prescribed empirical quantiles of the pairwise squared distances, thereby capturing the relevant distance scales without requiring prior population-scale information. 

We develop a rigorous theoretical analysis of the resulting method under a general high-dimensional, multi-scale mixture model with heterogeneous cluster centers and covariance geometries. We construct a blockwise constant, low-rank informative approximation to the empirical multi-kernel matrix and establish row-wise $\ell_{2,\infty}$ perturbation bounds for its leading spectral components, as well as for the associated normalized Laplacian matrix. These bounds yield observation-level control of the spectral embedding, which is more informative than conventional global eigenspace perturbation estimates. Under suitable eigen-gap and cluster-separation conditions, we show that approximate $K$-means applied to the multi-kernel spectral embedding achieves exact recovery with high probability.
\end{abstract}

\tableofcontents \label{sec:contents}


\section{Introduction}

Modern data sets are often high-dimensional while exhibiting intrinsic nonlinear structure. Such nonlinear geometric features arise in many real-world applications, yet they are not adequately captured by linear methods. This observation motivates the use of kernel-based approaches \cite{scholkopfLearningKernelsSupport2002}, which provide a flexible framework for encoding nonlinearity through implicit feature mappings. In parallel, spectral methods \cite{chenSpectralMethodsData2021a} play a central role in high-dimensional data analysis by producing low-dimensional embeddings that are computationally tractable while preserving essential structural information.

By combining these two ideas, kernel spectral clustering (KSC) \cite{ngSpectralClusteringAnalysis2001,shiNormalizedCutsImage2000} provides a flexible framework for identifying cluster structure in complex, high-dimensional data. A widely used implementation of KSC consists of three steps. First, a kernel similarity matrix and its associated normalized Laplacian matrix are constructed. Specifically, given data points $\{ \bfx_i \}_{i=1}^n \subset \bbr^p$, the kernel similarity matrix is given by
\begin{equation}
    \bfK = [K_{ij}]_{i,j=1}^n, 
    \quad \text{ with } \quad 
    K_{ij} = f (\bfx_i, \bfx_j; h).
    \label{def:single-Kernel-general}
\end{equation}
where $f: \bbr^p \times \bbr^p \to \bbr$ is a symmetric kernel function depending on a bandwidth parameter $h > 0$. The associated normalized Laplacian matrix is then defined as
\begin{equation*}
    \bfL = n \bfD^{-1/2} \bfK \bfD^{-1/2},
    \qwhere
    \bfD = \diag (D_{11}, \cdots, D_{nn})
    \quad \text{ with } \quad
    D_{ii} = \sum\nolimits_{j = 1}^n K_{ij}
\end{equation*}
Second, the leading eigenvectors of $\bfK$ or $\bfL$ are computed to obtain a low-dimensional embedding of the data. Finally, the $K$-means algorithm is applied to the embedded dataset for the final clustering. In contrast to standard spectral clustering methods based on linear or directly observed similarities, kernel functions introduce a nonlinear transformation of pairwise distances, enabling the representation of complex, nonlinearly separable structures.  The spectral embedding step then extracts the dominant components of $\bfK$ or $\bfL$, yielding a computationally tractable low-dimensional representation that retains the essential information for clustering. When this embedding faithfully reflects the underlying cluster structure, the clusters can be effectively recovered using a simple algorithm such as $K$-means. For theoretical analyses of KSC in classical asymptotic and nonparametric settings, see, for example, \cite{luxburgConsistencySpectralClustering2008,schiebingerGeometryKernelizedSpectral2015} and the references therein.

A critical aspect of kernel methods is the choice of the bandwidth parameter. Although numerous practical heuristics have been proposed, its theoretical understanding remains limited. The issue is particularly challenging for data with multi-scale structure, where different components of the data operate at distinct characteristic scales. In this setting, a single global bandwidth entails an inherent trade-off. A bandwidth calibrated to large-scale components over-smooths the small-scale ones, making their within- and between-cluster kernel similarities insufficiently distinguishable. Conversely, a bandwidth calibrated to small-scale components yields negligible similarities among observations from large-scale components, fragmenting the induced similarity graph and failing to preserve within-cluster connectivity. Such difficulties are especially pronounced in high-dimensional mixture models, where substantial heterogeneity in pairwise-distance scales can arise naturally from the underlying data-generating mechanism.

To address this limitation, we incorporate the framework of multiple kernel learning (MKL) \cite{gonenMultipleKernelLearning2011} into KSC, which provides a flexible mechanism for balancing information across multiple scales. Instead of relying on a single global bandwidth, MKL constructs a weighted sum of multiple kernel matrices, where the bandwidth parameters are allowed to vary across kernels. Specifically, we replace \eqref{def:single-Kernel-general} by
\begin{equation}
    \bfK = [K_{ij}]_{i,j=1}^n,
    \quad \text{ with } \quad
    K_{ij} = \sum\nolimits_{t=1}^T \alpha_t f_t(\bfx_i, \bfx_j; h_t),
    \label{def:multi-Kernel-general}
\end{equation}
where $T \in \bbn$ is the number of kernel components, $\alpha_t > 0$ are kernel weights satisfying $\sum_{t=1}^T \alpha_t = 1$, and $h_t > 0$ are bandwidth parameters. We refer to the resulting method as multi-kernel spectral clustering (multi-KSC) to distinguish it from the standard single-bandwidth formulation. By allowing bandwidths at different scales, this approach effectively mitigates the over-smoothing and under-connection issues inherent in single-bandwidth methods.

The main contributions of this manuscript are threefold. First, building on the multi-kernel construction \eqref{def:multi-Kernel-general}, we develop a concrete multi-KSC procedure equipped with a simple quantile-based rule for bandwidth selection. The rule depends only on the empirical distribution of pairwise distances and requires neither cluster labels nor prior knowledge of the underlying characteristic scales. Under suitable conditions, we show that the selected bandwidths capture the relevant distance scales with high probability. 

Second, we establish high-probability spectral perturbation bounds for the resulting multi-kernel matrix and its associated normalized Laplacian. In addition to eigenvalue bounds, we derive row-wise $\ell_{2,\infty}$ perturbation bounds for their leading eigenspaces. To our knowledge, comparable row-wise perturbation guarantees for the informative eigenspaces of high-dimensional empirical kernel matrices are not currently available, even in the single-kernel setting. Our analysis explicitly tracks the interaction between the intrinsic distance scales and the kernel bandwidths and employs a two-stage comparison argument to exploit the stochastic structure of the perturbations. These results provide pointwise control of the spectral embeddings that cannot be obtained directly by combining spectral-norm bounds with the Davis--Kahan theorem.

Third, we apply these perturbation bounds to analyze the proposed multi-KSC procedure with data-driven bandwidths. Under suitable conditions, we show that approximate $K$-means applied to the multi-kernel spectral embedding achieves exact recovery with high probability.

\subsection{Organization}

The remainder of the manuscript is organized as follows. Section \ref{subsec:related-works} reviews several lines of research related to this work. Section \ref{sec:main-res} motivates our multi-kernel construction and establishes eigenvalue and row-wise eigenspace perturbation bounds for the kernel matrix and its normalized Laplacian. Section \ref{sec:KSC-algorithm} presents the multi-KSC procedure with quantile-based bandwidth selection and proves exact recovery under suitable conditions. All technical proofs are deferred to Sections \ref{sec:proof-kernel}--\ref{sec:proof-misclass}.

\subsection{Related works}
\label{subsec:related-works}

Motivated by the widespread use of kernel methods, a substantial literature has investigated the spectral properties of random kernel matrices. Early work, such as \cite{koltchinskiiRandomMatrixApproximation2000}, considered samples from a fixed manifold and related the empirical eigenvalues to those of an associated integral operator. The seminal works of El Karoui \cite{karouiInformationNoiseKernel2010,karouiSpectrumKernelRandom2010} initiated the corresponding high-dimensional theory, in which the ambient dimension $p$ grows jointly with the sample size $n$. Subsequent research has developed along two main directions.

The first direction concerns the empirical spectral distribution of random kernel matrices. Initial results focused on the proportional regime $p \asymp n$ \cite{chengSpectrumRandomInnerproduct2013,dingSpectralPropertyKernelbased2021,doSpectrumRandomKernel2013,karouiSpectrumKernelRandom2010}, while more recent work has considered the general polynomial regimes $p \asymp n^\alpha$ \cite{dubovaUniversalityGlobalSpectrum2023,luEquivalencePrincipleSpectrum2025}. In this line of research, the feature vectors $\bfx_i$ are typically modeled as high-dimensional noise. Related works analyze extreme eigenvalues at the spectral edge \cite{fanSpectralNormRandom2019,koganExtremalEigenvaluesRandom2024}.

The second direction focuses on extracting the informative structure underlying the empirical kernel matrix, typically manifested through spiked eigenvalues separated from the bulk spectrum and their associated eigenvectors \cite{aminiConcentrationKernelMatrices2021,couilletKernelSpectralClustering2016,dingLearningLowdimensionalNonlinear2023,dingImpactSignaltonoiseRatio2023,karouiInformationNoiseKernel2010}. Particularly relevant to the present work are \cite{aminiConcentrationKernelMatrices2021,couilletKernelSpectralClustering2016}, both of which are motivated by spectral clustering. In the proportional regime $p \asymp n$, Couillet and Benaych-Georges \cite{couilletKernelSpectralClustering2016} characterized the outlier eigenvalues and associated eigenvectors of the kernel normalized Laplacian for Gaussian mixture models (GMMs), thereby revealing how clustering information is encoded in its spectral structure. From a complementary nonasymptotic perspective, Amini and Razaee \cite{aminiConcentrationKernelMatrices2021} established concentration inequalities for Lipschitz kernel matrices around their expectations. These results yield $\ell_2$ control of the spectral embedding and, consequently, misclassification guarantees for KSC under certain mixture models. Our work is most closely aligned with this second direction and develops nonasymptotic perturbation bounds for empirical eigenvectors that encode cluster-membership information. It differs from the existing literature in two main respects.

First, for eigenvectors of random kernel matrices corresponding to isolated eigenvalues, existing works such as \cite{aminiConcentrationKernelMatrices2021,dingLearningLowdimensionalNonlinear2023} primarily establish $\ell_2$ perturbation bounds. These bounds are typically obtained by controlling the spectral-norm deviation of the empirical kernel matrix from its informative counterpart, followed by an application of the classical Davis--Kahan theorem \cite{davisRotationEigenvectorsPerturbation1970}. Since spectral clustering is inherently a pointwise problem rather than a purely global one, we instead establish $\ell_\infty$ bounds for the spiked eigenvectors. Such entrywise bounds provide finer control of the spectral embedding and generally yield sharper misclassification guarantees. Second, our analysis explicitly addresses multi-scale mixture data, whereas existing theories \cite{couilletKernelSpectralClustering2016,aminiConcentrationKernelMatrices2021} primarily concern single-kernel constructions and do not explicitly account for the interaction between intrinsic distance scales and kernel bandwidths. By tracking this interaction, we obtain perturbation bounds tailored to the multi-scale structure. See Remark \ref{remark:compare-Amini} for more details.

Recently, the study of $\ell_\infty$ eigenvector perturbation bounds and, more generally, $\ell_{2,\infty}$ eigenspace perturbation bounds for large random signal-plus-noise matrices has attracted considerable attention. Unlike the classical Davis--Kahan--Wedin framework \cite{davisRotationEigenvectorsPerturbation1970,wedinPerturbationBoundsConnection1972}, which provides deterministic control of global subspace deviations, this theory controls row-wise errors and exploits the stochastic structure of the perturbation to obtain sharper high-probability bounds. A representative early contribution is \cite{fanell_inftyEigenvectorPerturbation2018}, which establishes $\ell_\infty$ eigenvector perturbation bounds for incoherent low-rank signal matrices using leave-one-out arguments. Subsequent work has developed sharper entrywise and row-wise bounds across a broad range of random matrix models; see, for example, \cite{abbeLpTheoryPCA2022,abbeEntrywiseEigenvectorAnalysis2020,agterbergEntrywiseEstimationSingular2022,bhardwajMatrixPerturbationDavisKahan2024,capeSignalplusnoiseMatrixModels2019,capeTwotoinfinityNormSingular2019,eldridgeUnperturbedSpectralAnalysis2018,leiUnifiedell_2rightarrowinftyEigenspace2020}. Of particular relevance here, such bounds have proved useful for establishing exact-recovery and misclassification guarantees in community detection and clustering \cite{abbeLpTheoryPCA2022,abbeEntrywiseEigenvectorAnalysis2020,maoEstimatingMixedMemberships2021,leiUnifiedell_2rightarrowinftyEigenspace2020}. For a broader overview and additional references, we refer to \cite{wangAnalysisSingularSubspaces2026}.

Despite these remarkable developments, entrywise eigenvector analysis for empirical kernel matrices in high-dimensional regimes does not appear to be directly available in the existing literature, even for kernel matrices involving a single bandwidth parameter. The main difficulty lies in the intricate dependence structure induced by the nonlinear kernel function. In this manuscript, we pursue this direction by combining high-dimensional kernel matrix approximation arguments with techniques from entrywise eigenvector perturbation theory.

We conclude this section with a brief review of MKL, a principled framework for combining candidate kernels rather than committing to a single one. These kernels may encode different notions of similarity, feature representations, data modalities, or choices of kernel parameters. The central idea is to learn or prescribe a combination of base kernels that is better adapted to the task at hand. Early studies \cite{lanckrietLearningKernelMatrix2004,bachMultipleKernelLearning2004} laid the optimization foundations of MKL, while subsequent research \cite{kloftLpNormMultipleKernel2011,rakotomamonjySimpleMKL2008,sonnenburgLargeScaleMultiple2006} developed computationally scalable methods and flexible regularization schemes; see \cite{gonenMultipleKernelLearning2011} for a comprehensive survey.

Although MKL was not originally designed to address the multi-scale phenomenon considered here, its organizing principle is naturally suited to our setting. Our base kernels differ in bandwidth and therefore capture similarity information at different distance scales. Combining them allows this information to be retained simultaneously, without requiring a single bandwidth to be appropriate for all cluster pairs. Multi-KSC can thus be viewed as a scale-adaptive specialization of MKL, for which we develop a rigorous theoretical analysis in a high-dimensional multi-scale regime.

\subsection{Notations}
\label{subsec:notations}

For vectors $\bfa, \bfb \in \bbr^q$, we use $\angles{\bfa,\bfb}$ and $\norm{\bfa}$ to denote their Euclidean inner product and Euclidean norm, respectively. We use $\bfone = (1,\ldots,1)^\top \in \bbr^n$ for the all-ones vector. For a matrix $\bfA$, $\norm{\bfA}$ and $\norm{\bfA}_{\Fnorm}$ denote its operator norm and Frobenius norm, respectively. For $\bfA\in\bbr^{q\times r}$, write $\bfA^\top = [\bfa_1,\ldots,\bfa_q]$, so that $\bfa_i^\top$ is the $i$-th row of $\bfA$. Its row-wise $\ell_{2,\infty}$ norm is defined by $\norm{\bfA}_{2,\infty} := \max\nolimits_{i\in\dbraks{q}} \norm{\bfa_i}$.

For $q \in \bbn$, we write $\dbraks{q} := \{1,\cdots,q\}$. We regard $n$ as the fundamental asymptotic parameter. Unless explicitly stated otherwise, all quantities may depend on $n$, although we frequently suppress this dependence in the notation. The notation $a_n \lesssim b_n$ means that $a_n \leq C b_n$ for some positive constant $C$ independent of $n$. We write $a_n \gtrsim b_n$ if $b_n \lesssim a_n$, and $a_n \asymp b_n$ if both $a_n \lesssim b_n$ and $a_n \gtrsim b_n$ hold. Constants implicit in $\lesssim$, $\gtrsim$, and $\asymp$ may change from line to line. We write $a_n \ll b_n$ if $a_n/b_n \to 0$ as $n\to\infty$. We introduce the following notion of high probability bounds to systematize statements of the form ``$X$ is bounded with high probability by $Y$ up to small powers of $n$''. Some of its basic properties are summarized in Lemma \ref{lemma:basic-property-prec}.

\begin{definition}[stochastic domination]
\label{def:stochastic-domination}
Let $X \equiv X_n(t)$ and $Y \equiv Y_n(t)$ be two families of
nonnegative random variables parameterized by $n\in\bbn$ and
$t\in\mathcal{T}_n$. We say that $X$ is \emph{stochastically dominated}
by $Y$, uniformly in $t\in\mathcal{T}_n$, and write $X\prec Y$, if, for
every $\varepsilon>0$ and $C>0$,
\begin{equation*}
    \sup\nolimits_{t\in\mathcal{T}_n}
    \bbp\{X_n(t)>n^\varepsilon Y_n(t)\}
    \leq n^{-C},
    \qquad
    \forall n\geq n_0(\varepsilon,C).
\end{equation*}
For a possibly signed or complex-valued random variable $X$, we write
$X=\oprec{Y}$ if $\abs{X}\prec Y$.
\end{definition}
\section{Spectral perturbation bounds}
\label{sec:main-res}

In this section, we establish row-wise eigenspace perturbation bounds for the multi-kernel matrix
\begin{equation}
    \bfK = [K_{ij}]_{i,j=1}^n,
    \quad \text{with} \quad
    K_{ij}
    := (1 - \delta_{ij}) \sum\nolimits_{t=1}^T \alpha_t
    \exp \pars[\big]{ - \norm{\bfx_i-\bfx_j}^2 / h_t }.
    \label{def:multi-Kernel}
\end{equation}
Here, the weights $\alpha_t \in (0,1)$ satisfy $\sum_{t=1}^T \alpha_t = 1$. The matrix $\bfK$ in \eqref{def:multi-Kernel} is the radial basis function (RBF) specialization of the general formulation in \eqref{def:multi-Kernel-general}. We focus on RBF kernels because this specialization substantially streamlines the analysis while still covering a broad range of practically relevant settings. Extensions to more general nonlinear kernel profiles are discussed in Remark \ref{remark:general-distance}. We set the diagonal entries of $\bfK$ to zero so that it represents a weighted graph without self-loops. Since the usual RBF diagonal equals $\sum_{t=1}^T\alpha_t=1$, retaining it would add $\bfI_n$ to $\bfK$, shifting its eigenvalues without changing its eigenvectors.

We now introduce the basic setup for the dataset $\{ \bfx_i \}_{i=1}^n \subset \bbr^p$. Suppose that the cluster membership is encoded by a partition $\{ \caC_k \}_{k=1}^m$ of $\dbraks{n}$. Specifically, $i \in \caC_k$ if and only if the $i$-th observation belongs to the $k$-th cluster. Let $n_k := \abs{\caC_k}$. We assume that the observations $\{ \bfx_i \}_{i \in \caC_k}$ are i.i.d. random vectors with
\begin{equation}
    \bfx_i = \bfmu_k + \bfA_k \bfz_i,
    \qfor i \in \caC_k.
    \label{eqn:data-model}
\end{equation}
Here the deterministic vector $\bfmu_k \in \bbr^p$ represents the center of the $k$-th cluster, and the deterministic matrix $\bfA_k \in \bbr^{p \times d_k}$ describes the local geometry of the cluster. The isotropic random vectors $\bfz_i \in \bbr^{d_k}$ capture the random variation of the observations and satisfy $\bbe \bfz_i = \bfzero$ and $\cov (\bfz_i) = \bfI$. Consequently,
\begin{equation}
    \bbe \bfx_i = \bfmu_k
    \qand
    \cov (\bfx_i) = \bfA_k \bfA_k^{\top}
    =: \bfSigma_k,
    \qfor i \in \caC_k.
\end{equation}
In our framework, we do not require the latent dimension $d_k$ to be fixed, nor do we assume that the entries of $\bfz_i$ are independent. This generality allows the model to capture data generated from nonlinear manifold structures with noise, where the latent variables may exhibit complex dependence.

Denote the spectral decomposition of the multi-kernel matrix $\bfK$ by
\begin{equation}
    \bfK  = \bfU \bfLamb \bfU^\top
    = \sum\nolimits_{k=1}^n \lambda_k \bfu_k \bfu_k^\top,
    \qwhere
    \bfLamb = \diag (\lambda_1, \cdots, \lambda_n)
    \qand
    \bfU = [\bfu_1, \cdots, \bfu_n].
    \label{eqn:spec-decomp-bfK}
\end{equation}
We arrange the eigenvalues in non-increasing order, $\lambda_1 \geq \lambda_2 \geq \cdots \geq \lambda_n$, and write $\bfu_k(i)$ for the $i$-th entry of $\bfu_k$. Given an embedding dimension $r \in \bbn$, a practical implementation of KSC maps each observation $\bfx_i$ to $\braks{\sqrt{\lambda_1} \bfu_1 (i), \cdots, \sqrt{\lambda_r} \bfu_r (i)} \in \bbr^r$, where $r$ is typically much smaller than the ambient dimension $p$. This construction admits a natural graph-theoretic interpretation. Viewing $\bfK$ as the weighted adjacency matrix of a similarity graph, observations from the same cluster should have similar connectivity profiles, while those from different clusters should have distinguishable profiles. Accordingly, $\bfK$ is expected to be close to a blockwise-structured matrix induced by the underlying cluster partition. Its leading eigenspace, and hence the associated spectral embedding, should then approximately encode the cluster memberships.

This intuition is formalized by the spectral perturbation viewpoint adopted throughout our analysis. Specifically, we approximate the empirical kernel matrix $\bfK$ by a low-rank informative matrix $\barK$ whose blockwise structure encodes the cluster memberships, and regard the residual $\bfK-\barK$ as a perturbation. To illustrate this viewpoint and the construction of $\barK$, consider first the conventional single-kernel setting
\begin{equation*}
    \bfK^\rms = [K_{ij}^\rms]_{i,j=1}^n,
    \quad \text{ with } \quad
    K_{ij}^\rms
    := (1 - \delta_{ij}) \exp \pars[\big]{ - {\norm{\bfx_i-\bfx_j}^2} / {h} }.
\end{equation*}
Suppose that the observations $\{\bfx_i\}_{i=1}^n$ follow the model \eqref{eqn:data-model}. In the high-dimensional regime where $d_k\to\infty$ for each $k\in\dbraks{m}$, and under suitable concentration assumptions on the isotropic random vectors $\bfz_i$, one expects that, for $i\in\caC_k$ and $j\in\caC_{\ell}$ with $i\neq j$, the squared distance $\norm{\bfx_i-\bfx_j}^2$ concentrates around its mean
\begin{equation}
    \theta_{k \ell} 
    := \bbe \norm{\bfx_i-\bfx_j}^2
    = \norm{\bfmu_k-\bfmu_{\ell}}^2
    + \Tr \bfSigma_k + \Tr \bfSigma_{\ell}.
    \label{def:theta-kl}
\end{equation}
Consequently, $\bfK^\rms$ is expected to be well approximated by the deterministic blockwise-constant matrix\footnote{Here, the diagonal of $\barK^\rms$ is filled with the corresponding block values to ensure the blockwise-constant structure; its discrepancy from the zero diagonal of $\bfK^\rms$ is a bounded diagonal perturbation.}
\begin{equation*}
    \barK^\rms = [\bar{K}_{ij}^\rms]_{i,j=1}^n,
    \qwith
    \bar{K}_{ij}^\rms = \exp (- \theta_{k \ell} / h),
    \qfor
    i \in \caC_k, ~ j \in \caC_{\ell}.
\end{equation*}
The spectral structure of $\bfK^\rms$ may therefore be studied as a perturbation of that of $\barK^\rms$. Since $\barK^\rms$ is constant on each block induced by the cluster partition, one easily sees that $\rank(\barK^\rms)\leq m$, and every eigenvector associated with a nonzero eigenvalue lies in the linear span of the cluster indicator vectors
$\bfone_{\caC_k}=[\bbone\{i\in\caC_k\}]_{i=1}^n$. Consequently, the corresponding population spectral embedding is constant within each cluster. From this viewpoint, successful clustering requires the cluster-specific rows of the population embedding to be sufficiently separated and the row-wise perturbation induced by $\bfK^\rms-\barK^\rms$ to be sufficiently small to preserve this separation in the empirical embedding.

\begin{figure}[htb]
    \centering
    \includegraphics[width=6in]{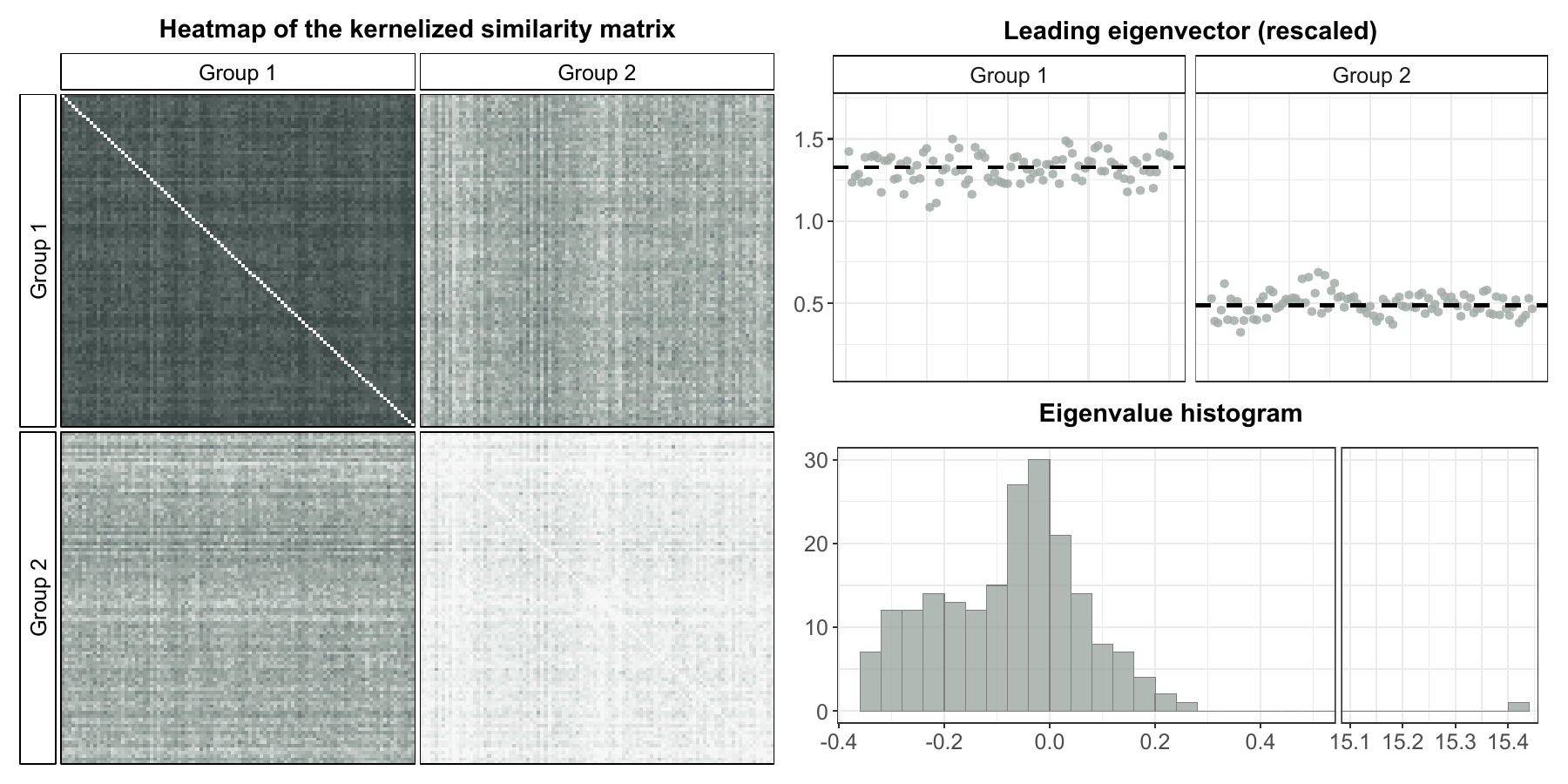}
    \caption{Single-kernel matrix in the two-cluster example.}
    \label{fig:single-kernel-single-scale}
\end{figure}

We illustrate this perspective via a simple two-cluster example. Suppose that $\bfx_i \sim \mathrm{N} (\mathrm{0}, \bfI)$ for $i \in \caC_1$, whereas $\bfx_i \sim \mathrm{N} (\mathrm{0}, 2 \bfI)$ for $i \in \caC_2$. In this case, we have $\theta_{11} = 2p$, $\theta_{12} = 3p$ and $\theta_{22} = 4p$, so all characteristic squared-distance scales are of order $p$. It is therefore natural to choose a bandwidth of the same order, and we set $h=p$ to avoid degeneracy of the kernel matrix. Figure \ref{fig:single-kernel-single-scale} displays one realization of the resulting kernel matrix $\bfK^\rms$ for $n_1=n_2=100$ and $p=400$. The left panel shows a heatmap of $\bfK^\rms$, with darker colors representing larger entries. The visible blockwise structure reflects the underlying cluster partition. The bottom-right panel displays the empirical eigenvalue distribution of $\bfK^\rms$: most eigenvalues are concentrated near zero, while a single dominant eigenvalue is clearly separated from the bulk. From the perturbation viewpoint, this spike originates from the low-rank informative component $\barK^\rms$. The top-right panel compares the leading eigenvector of $\bfK^\rms$ (scatter points) with its population counterpart from $\barK^\rms$ (dashed horizontal lines). The empirical eigenvector fluctuates around the two population cluster levels, with within-cluster variation substantially smaller than the separation between them.

In the two-cluster example above, the bandwidth $h$ must be chosen to avoid a degenerate kernel matrix. For multi-scale data, however, this choice is no longer straightforward, because the parameters $\theta_{k \ell}$ need not be of the same order. As a result, any single bandwidth risks being poorly adapted to part of the data, which may in turn attenuate the between-cluster separation in the resulting spectral embedding. To illustrate this issue, consider a simple four-cluster multi-scale example in which $\bfx_i \sim \mathrm{N}(\bfzero,\bfSigma_k)$ for $i \in \caC_k$, with $\bfSigma_1=\bfI$, $\bfSigma_2=2\bfI$, $\bfSigma_3=\bfI/p$, and $\bfSigma_4=2\bfI/p$. In this case,
\begin{align*}
    \begin{bmatrix}
        \theta_{11} & \theta_{12} & \theta_{13} & \theta_{14}\\
        &\theta_{22} & \theta_{23} & \theta_{24}\\
        & & \theta_{33} & \theta_{34}\\
        & & & \theta_{44}
    \end{bmatrix}
    =
    \begin{bmatrix}
        2p & 3p & p+1 & p+2\\
        & 4p & 2p+1 & 2p+2\\
        & & 2 & 3\\
        & & & 4
    \end{bmatrix}.
\end{align*}
Thus, the quantities $\theta_{k\ell}$ lie on two distinct scales, reflecting the multi-scale structure of the data.

\begin{figure}[htbp]
    \centering
    \includegraphics[width=6in]{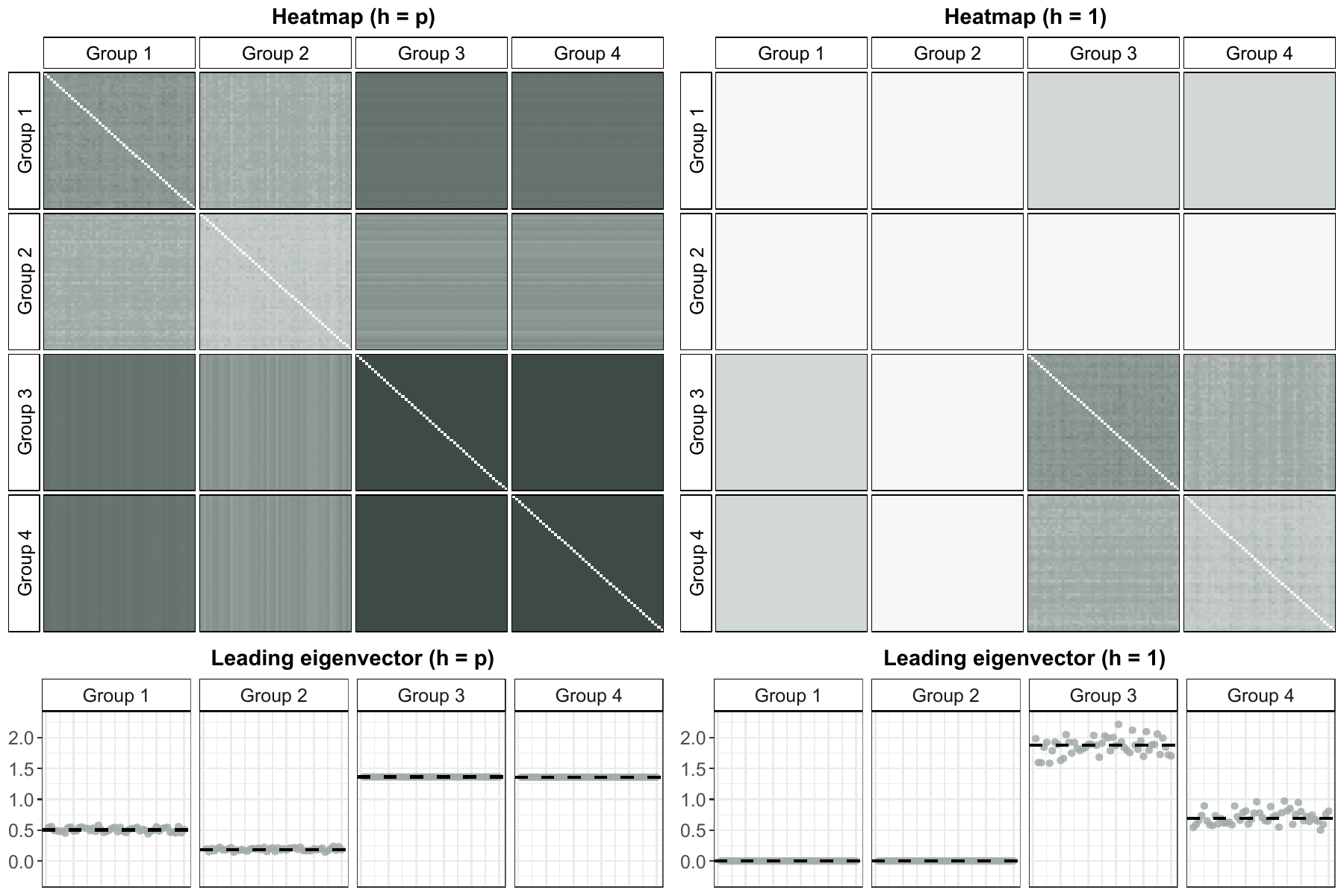}
    \caption{Single-kernel matrix in the multi-scale four-cluster example.}
    \label{fig:single-kernel-multi-scale}
\end{figure}

Figure \ref{fig:single-kernel-multi-scale} shows one realization of the corresponding single-kernel matrix for $p=400$ and $n_k=50$. We consider two bandwidth choices: $h=p$ (left) and $h=1$ (right). When $h=p$, the bandwidth is too large for $\caC_3$ and $\caC_4$, whose pairwise distances are typically of constant order. As a result, the corresponding kernel similarities are overly compressed toward $1$, making the two clusters difficult to distinguish in the heatmap. Consistently, the spectral embedding exhibits almost no separation between $\caC_3$ and $\caC_4$. In graph-theoretic language, this is the \emph{over-smoothing} effect. By contrast, the choice $h=1$ is too small for $\caC_1$ and $\caC_2$, whose pairwise distances are typically of order $p$. In this case, the corresponding kernel similarities become too weak, and the leading eigenvector carries little information for distinguishing these two clusters. This is the familiar \emph{under-connection} effect. This example highlights a basic limitation of the single-kernel approach for multi-scale data: a single bandwidth may fail to preserve both large-scale connectivity and small-scale discrimination, leaving some clusters poorly represented in the resulting spectral embedding.

This scale mismatch motivates the multi-kernel construction. By incorporating bandwidths comparable to the distinct characteristic distance scales, the construction \eqref{def:multi-Kernel} ensures that, at each relevant scale, at least one kernel component avoids both similarity saturation and extinction. The weighted combination can therefore preserve informative cluster contrasts across multiple scales within a single spectral embedding. To illustrate this effect, we return to the four-cluster multi-scale example above. We take $T=2$ in \eqref{def:multi-Kernel}, with bandwidths $h_1=p$ and $h_2=1$, and weights $\alpha_1=1/6$ and $\alpha_2=5/6$. The bandwidth $h_1=p$ captures the coarse-scale distinction between $\caC_1$ and $\caC_2$, while $h_2=1$ resolves the finer-scale distinction between $\caC_3$ and $\caC_4$. The weights are chosen primarily for visualization. In this example, $\alpha_1=\alpha_2=1/2$ already achieves a low misclassification rate. As shown in Figure \ref{fig:multi-kernel-multi-scale}, the combined kernel yields a spectral embedding that successfully separates all four clusters, unlike either single-bandwidth construction. This illustrates how the multi-kernel approach can accommodate heterogeneous distance scales within the same dataset.

\begin{figure}[htbp]
    \centering
    \includegraphics[width=4.5in]{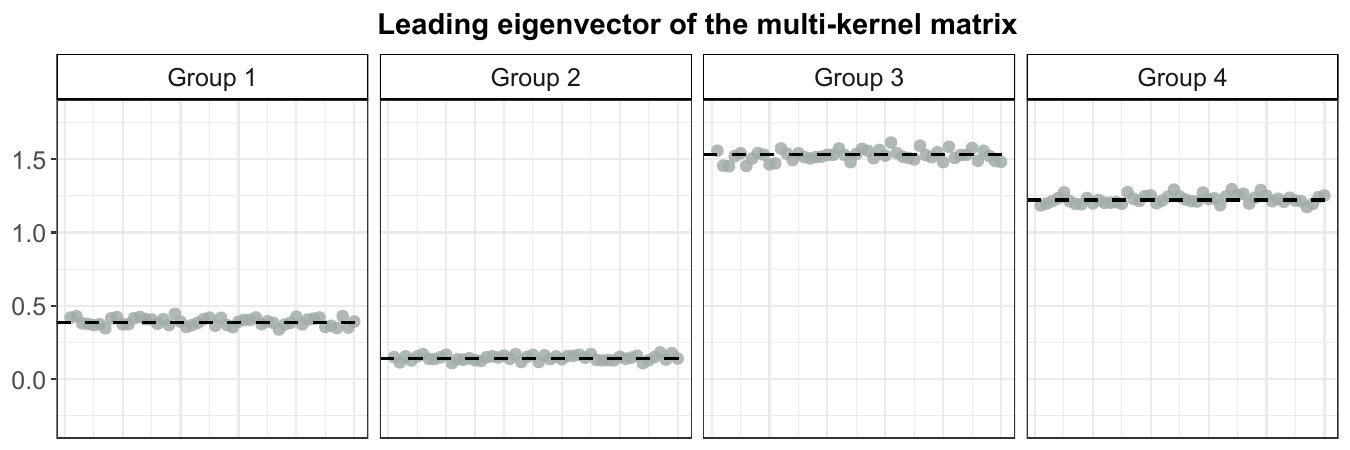}
    \caption{Spectral embedding produced by the multi-kernel matrix in the multi-scale four-cluster example.}
    \label{fig:multi-kernel-multi-scale}
\end{figure}

\subsection{Perturbation bounds for multi-kernel matrices}
\label{subsec:ell-inf-kernel}

In this section, we establish theoretical results for the spectral embeddings induced by the multi-kernel matrix \eqref{def:multi-Kernel}. We begin with an assumption that formalizes the multi-scale structure of the data.

\begin{assumption}[multi-scale data]
\label{assump:multi-scale}
There exist a (small) constant $c_{\ref{assump:multi-scale}} > 0$ and a finite collection of $n$-dependent deterministic parameters $0 < \vartheta_1 < \cdots < \vartheta_{T_{\ref{assump:multi-scale}}}$ such that the following hold for all sufficiently large $n$.
\begin{enumerate}[label = (\roman*)]
\begin{subequations}
     \item The parameters $\{ \theta_{k\ell} \}_{k,\ell = 1}^m$ are captured by the collection $\{ \vartheta_s \}$ in the sense that
    \begin{equation}
        \theta_{k\ell} \in \bigcup\nolimits_{s=1}^{T_{\ref{assump:multi-scale}}} 
        [c_{\ref{assump:multi-scale}} \vartheta_s, 
        \vartheta_s / c_{\ref{assump:multi-scale}}],
        \qfor
        k,\ell \in \dbraks{m}.
        \label{eqn:theta-captured-hbar}
    \end{equation}
    \item The scales $\vartheta_s$ are well separated in the sense that
    \begin{equation}
        \vartheta_{s+1} / \vartheta_{s} \geq n^{c_{\ref{assump:multi-scale}}},
        \qquad 1 \leq s \leq T_{\ref{assump:multi-scale}} - 1.
        \label{eqn:hbar-separation}
    \end{equation}
\end{subequations}
\end{enumerate}
\end{assumption}

Roughly speaking, Assumption \ref{assump:multi-scale} states that the parameters $\theta_{k\ell}$ fall into finitely many distinct asymptotic scale levels, represented by $\vartheta_1 < \cdots < \vartheta_{T_{\ref{assump:multi-scale}}}$, and that consecutive levels are separated by at least a small power of $n$. We refer to the collection $\{ \vartheta_s \}_{s=1} ^{T_{\ref{assump:multi-scale}}}$ as the collection of \emph{characteristic scales}. The separation by a small power of $n$ in \eqref{eqn:hbar-separation} is imposed mainly to make the scale separation compatible with the high-probability bounds in Definition \ref{def:stochastic-domination}. We expect that, under stronger concentration assumptions on the isotropic random vectors $\bfz_i$, this condition can be relaxed to a polylogarithmic separation. We retain the present formulation for simplicity. To streamline the analysis, we first consider an idealized setting in which the bandwidths in \eqref{def:multi-Kernel} are deterministic and have the same orders as the characteristic scales. In practice, the bandwidths $h_t$ should be chosen in a data-dependent manner; we return to this issue in Section \ref{sec:KSC-algorithm}.

\begin{definition}[admissible bandwidths]
\label{def:admissible-bandwidth}
Let $\{ \hbar_t \}_{t=1}^T$ be a finite collection of $n$-dependent deterministic parameters. We say that $\{ \hbar_t \}_{t=1}^T$ forms a collection of admissible bandwidths if
\begin{equation}
    \hbar_t \in \bigcup\nolimits_{s=1}^{T_{\ref{assump:multi-scale}}} 
    [c_{\ref{assump:multi-scale}} \vartheta_s / 2, 
    2 \vartheta_s / c_{\ref{assump:multi-scale}}],
    \quad 
    \forall t \in \dbraks{T} 
    \qand
    \vartheta_s \in \bigcup\nolimits_{t=1}^{T} 
    [c_{\ref{assump:multi-scale}} \hbar_t / 2, 
    2 \hbar_t / c_{\ref{assump:multi-scale}}],
    \quad
    \forall s \in \dbraks{T_{\ref{assump:multi-scale}}}.
    \label{cond:admissible-bandwidth}
\end{equation}
\end{definition}

The condition \eqref{cond:admissible-bandwidth} means that every characteristic scale $\vartheta_s$ is represented by at least one bandwidth $\hbar_t$ of the same order. Conversely, every bandwidth $\hbar_t$ is required to be comparable to at least one characteristic scale. However, we do not require this correspondence to be one-to-one; in particular, several bandwidths may correspond to the same characteristic scale. Consequently, any collection of admissible bandwidths must satisfy $T \geq T_{\ref{assump:multi-scale}}$. Moreover, the collection of characteristic scales $\{ \vartheta_s \}_{s=1} ^{T_{\ref{assump:multi-scale}}}$ itself is admissible.

For what follows, we fix a collection of admissible bandwidths $\{ \hbar_t \}_{t=1}^T$ and consider the empirical multi-kernel matrix $\bfK$ defined in \eqref{def:multi-Kernel} with $h_t = \hbar_t$. For each $k,\ell \in \dbraks{m}$, define the index sets
\begin{equation}
    \caS_{k\ell}(0)
    := \{t \in \dbraks{T}: \hbar_t \ge 
    c_{\ref{assump:multi-scale}}^2 \theta_{k\ell} / 2 \}
    \qand
    \caS_{k\ell}(1)
    := \{t \in \dbraks{T}: \hbar_t \le 
    2 \theta_{k\ell} / (c_{\ref{assump:multi-scale}}^2 
    n^{c_{\ref{assump:multi-scale}}}) \}.
    \label{def:caS}
\end{equation}
Thus, the index set $\caS_{k\ell}(0)$ corresponds to the kernel components whose bandwidths are comparable to or larger than the signal scale $\theta_{k\ell}$, whereas $\caS_{k\ell}(1)$ consists of those whose bandwidths are much smaller than $\theta_{k\ell}$. By Assumption \ref{assump:multi-scale} and the admissibility of $\{ \hbar_t \}_{t=1}^T$, we can easily check $\caS_{k\ell}(0) \cup \caS_{k\ell}(1) = \dbraks{T}$. 

As in the single-kernel case discussed above, we identify the low-rank informative component of the multi-kernel matrix $\bfK$ by expanding the kernelized similarities $\exp(-\norm{\bfx_i-\bfx_j}^2 / \hbar_t)$ round their deterministic counterparts $\exp(-\theta_{k\ell}/ \hbar_t)$ when $i \in \caC_k$, $j \in \caC_\ell$. The main difference is that no expansion is needed for the components with $\hbar_t \ll \theta_{k\ell}$, since in this regime $\exp(-\theta_{k\ell}/\hbar_t)$ is already sufficiently small. More precisely,
\begin{equation}
    \barK = [\bar{K}_{ij}]_{i,j=1}^n,
    \qwith
    \bar{K}_{ij}
    = \sum\nolimits_{t \in \caS_{k\ell}(0)}
    \alpha_t \exp (- \theta_{k\ell}/\hbar_t),
    \qfor
    i \in \caC_k, ~ j \in \caC_{\ell}.
    \label{def:bar-Kij}
\end{equation}

Note that, unlike the empirical kernel matrix $\bfK$, we do not zero out the diagonal entries of the informative component $\barK$. We adopt this convention because it makes $\barK$ exactly blockwise constant, with blocks determined by the cluster memberships. Consequently, $\barK$ has rank at most $m$, and its eigenspace is naturally tied to the underlying cluster structure. To make this observation precise, define the semi-orthogonal matrix
\begin{equation}
    \bfPhi := \braks[\big]{
    \bfone_{\caC_1} / \sqrt{n_1}, 
    \cdots, 
    \bfone_{\caC_m} / \sqrt{n_m} } \in \bbr^{n \times m},
    \qwhere
    \bfone_{\caC_k} := \braks{ \bbone\{i \in \caC_k \}}_{i=1}^n.
    \label{def:phi-mat-vec}
\end{equation}
Thus, the columns of $\bfPhi$ are the normalized cluster indicators. We further define the block-level matrix
\begin{equation}
    \fkK = [\fkK_{k\ell}]_{k,\ell=1}^m
    \qwhere
    \fkK_{k\ell}
    := \sqrt{n_k n_{\ell}}
    \sum\nolimits_{t \in \caS_{k\ell}(0)}
    \alpha_t \exp (-\theta_{k\ell}/\hbar_t).
    \label{eqn:blockwise-K}
\end{equation}
Then, by construction, $\barK = \bfPhi \fkK \bfPhi^\top$. Suppose the spectral decomposition of $\fkK$ is given by
\begin{equation*}
    \fkK = \fkU \barLamb \fkU^\top,
    \qwhere \barLamb = \diag (\barlamb_1, \cdots, \barlamb_m)
    \qand
    \fkU = [\fkU_{k \ell}]_{k,\ell=1}^m.
\end{equation*}
Here we arrange the eigenvalues in a non-increasing order, $\barlamb_1 \geq \barlamb_2 \geq \cdots \geq \barlamb_m$. Since $\fkU$ is orthogonal and $\bfPhi^\top \bfPhi = \bfI_m$, the nonzero eigenstructure of $\barK$ is obtained by lifting that of $\fkK$ through $\bfPhi$. More precisely, $\barK$ admits the spectral decomposition
\begin{equation}
    \barK
    =
    \barU \barLamb \barU^\top
    =
    \sum\nolimits_{\ell=1}^m
    \barlamb_{\ell} \baru_{\ell} \baru_{\ell}^\top,
    \qwhere
    \barU := \bfPhi \fkU.
    \label{eqn:spec-decomp-barK}
\end{equation}
In particular, each eigenvector $\baru_{\ell}$ is piecewise constant on the clusters. Indeed, for each $\ell \in \dbraks{m}$,
\begin{equation}
    \baru_{\ell} = \sum\nolimits_{k = 1}^m 
    \frac{1}{\sqrt{n_k}} \fkU_{k \ell} \bfone_{\caC_k}
    \quad \text{ and hence } \quad
    \baru_{\ell} (i) = \frac{1}{\sqrt{n_k}} \fkU_{k \ell},
    \quad \text{ for } \quad
    i \in \caC_k.
    \label{eqn:baru-piecewise}
\end{equation}

We may regard $\baru_{\ell}$ as the informative component of the empirical eigenvector $\bfu_{\ell}$. From this viewpoint, the success of KSC algorithms hinges on two conditions. First, the population levels $\fkU_{k\ell}$, $k \in \dbraks{m}$, must be sufficiently separated. Second, the fluctuation $\bfu_{\ell}-\baru_{\ell}$ induced by the residual matrix $\bfK-\barK$ must be small relative to the signal encoded by $\baru_{\ell}$. To ensure the latter, we impose several technical assumptions. We begin with a standard concentration assumption for linear and quadratic forms of $\bfz_i$.

\begin{assumption}[concentration]
\label{assump:concentration}
Let $\{ \bfz_i \}_{i=1}^n$ be independent random vectors with $\bbe \bfz_i = \bfzero$ and $\cov(\bfz_i) = \bfI$. Suppose that the following concentration bounds hold uniformly over all $k \in \dbraks{m}$ and $i \in \caC_k$,
\begin{subequations} \label{bound:concentration}
\begin{alignat}{2}
    \abs{\angles{\bfa, \bfz_i}}
    &\prec \norm{\bfa},
    & \qquad & \text{ for all deterministic } \qquad \bfa \in \bbr^{d_k}, 
    \label{eqn:bernstein} \\
    \abs{\angles{\bfz_i, \bfB \bfz_i} - \Tr \bfB}
    & \prec \norm{\bfB}_{\Fnorm},
    & \qquad & \text{ for all deterministic } \qquad \bfB \in \bbr^{d_k \times d_k}.
    \label{eqn:hanson-wright}
\end{alignat}
\end{subequations}
\end{assumption}

This assumption is widely used in the study of high-dimensional random covariance and Gram-type matrices, and it plays an important role in many standard results in this area. We refer interested readers to the recent work \cite{fanAnisotropicLocalLaw2026} for a comprehensive discussion. With a slight abuse of terminology, we refer to \eqref{eqn:bernstein} as a \emph{Bernstein-type estimate} and to \eqref{eqn:hanson-wright} as a \emph{Hanson--Wright-type estimate}, although their standard formulations usually involve exponential probability tails. Here we only require these bounds to hold in the sense of stochastic domination, and this weaker tail requirement allows Assumption \ref{assump:concentration} to accommodate a broad class of isotropic random vectors. In particular, provided that $\min_{k \in \dbraks{m}} d_k \geq n^c$ for some constant $c>0$, the bounds in \eqref{bound:concentration} are satisfied by the following two important classes of random vectors:
\begin{enumerate}[label = (\roman*)]
    \item The entries $\bfz_{i} (\alpha)$ are independent and have uniformly bounded moments: there exist $n$-independent constants $C_\ell > 0$ such that $\max_{\alpha, i} \bbe \abs{\bfz_{i} (\alpha)}^\ell \leq C_\ell$ for all $\ell \in \bbn_+$;
    \label{example1:independent}
    \item The random vectors $\bfz_i$ are log-concave.
    \label{example2:log-concave}
\end{enumerate}
It is well known that distributions in class \ref{example1:independent} satisfy the concentration bounds \eqref{bound:concentration}; see, for instance, \cite[Theorem 7.7]{erdosDynamicalApproachRandom2017}. On the other hand, the implication from isotropic log-concavity to the concentration bounds in \eqref{bound:concentration} was established in \cite[Lemma 3.3]{baoExtremeEigenvaluesLogconcave2025}, building on recent breakthroughs related to the KLS conjecture \cite{chenAlmostConstantLower2021,klartagBourgainsSlicingProblem2022}. We note that both classes \ref{example1:independent} and \ref{example2:log-concave} include the standard Gaussian distribution, and hence the model \eqref{eqn:data-model} in particular encompasses GMMs. 


We next introduce two sets of deterministic control parameters that will be used to state our perturbation bounds. The first set quantifies how well the empirical kernel matrix $\bfK$ is approximated by its informative component $\barK$. Under Assumptions \ref{assump:multi-scale} and \ref{assump:concentration}, the squared distances $\norm{\bfx_i-\bfx_j}^2$ concentrate entrywise around $\theta_{k\ell}$ for $i \in \caC_k$, $j \in \caC_\ell$, and $i \neq j$. The size of the corresponding fluctuation is controlled by
\begin{equation}
    \psi_{k \ell} := \norm{\bfSigma_k}_\Fnorm
    + \norm{\bfSigma_{\ell}}_\Fnorm
    + \norm{\bfA_k^{ \top} (\bfmu_k - \bfmu_{\ell})}
    + \norm{\bfA_{\ell}^{\top} (\bfmu_k - \bfmu_{\ell})}.
    \label{def:psi}
\end{equation}
See Lemma \ref{lemma:xi-control} below for the precise statement. For $\barK$ to provide an effective approximation to $\bfK$, we need suitable control of the signal-to-noise ratios $\theta_{k\ell} / \psi_{k\ell}$. To this end, we introduce
\begin{equation}
    \rho = \max_{k,\ell \in \dbraks{m}} 
    \frac{\psi_{k\ell}}{\theta_{k \ell}}
    \qand
    \phi = \max_{k,\ell \in \dbraks{m}} 
    \frac{\norm{\bfOmega_{k \ell}}_\Fnorm + \sqrt{n} \norm{\bfOmega_{k \ell}}}{\theta_{k \ell}},
    \label{def:ctrl-para-rho-phi}
\end{equation}
where $\bfOmega_{k \ell} := \bfA_k^\top \bfA_{\ell} \in \bbr^{d_k \times d_{\ell}}$. In particular, the parameter $\rho$ measures the relative size of the fluctuations in the pairwise distances, while $\phi$ is introduced mainly to sharpen several bounds involving cross-covariance terms. Since $\norm{\bfOmega_{k\ell}} \leq \norm{\bfOmega_{k\ell}}_\Fnorm$, we have the trivial bound $\phi \lesssim \sqrt{n} \rho$. In many regimes, however, $\phi$ yields substantially sharper control than this crude estimate.

The second set of parameters concerns the eigengaps of the low-rank informative component $\barK$. Eigengaps are central to eigenvector perturbation theory, as already illustrated by the classical Davis--Kahan $\sin\Theta$ theorem \cite{davisRotationEigenvectorsPerturbation1970}. Suppose that we are interested in the first $r$ principal components, where $r \in \dbraks{m}$ is some pre-specified integer. Recall the spectral decompositions of $\bfK$ and $\barK$ in \eqref{eqn:spec-decomp-bfK} and \eqref{eqn:spec-decomp-barK}. Let us write
\begin{equation}
    \bfLamb_r := \diag(\lambda_{1}, \cdots, \lambda_{r}), 
    \qquad
    \bfU_{r} := [\bfu_{1},\cdots,\bfu_{r}],
    \qquad
    \barLamb_r := \diag(\barlamb_{1}, \cdots, \barlamb_{r}),
    \qquad
    \barU_r := [\baru_{1},\cdots,\baru_{r}].
    \label{def:Lamb-U-r}
\end{equation}
To measure the separation of the leading $r$ eigenvalues of $\barK$ from the remaining spectrum, we introduce the $r$-th \emph{effective eigengap}
\begin{equation}
    \Delta_r (\barLamb)
    := (\barlamb_{r}-\barlamb_{r+1}) \wedge \barlamb_{r},
    \label{def:eigengap-kernel}
\end{equation}
where $\barlamb_k=0$ for $k>m$. If $\barK$ is nonnegative definite, then $\Delta_r(\barLamb)$ reduces to the conventional eigengap $\barlamb_r-\barlamb_{r+1}$. The additional truncation by $\barlamb_r$ is useful because several of our arguments involve the inverse $\barLamb_r^{-1}$ and thus require the leading informative eigenvalues to be bounded away from zero. 

Since the entries of $\barK$ are uniformly bounded and $\hbar_t \gtrsim \theta_{k\ell}$ for $t \in \caS_{k\ell}(0)$, one can verify that $\norm{\barK}_{\Fnorm} \asymp n$. Also recall that $\rank (\barK) \leq m$ by the blockwise constant structure. Hence, if $m$ is fixed, we have $\max_{k \in \dbraks{m}} \abs{\barlamb_k} \asymp n$, and therefore $\Delta_r(\barLamb) \lesssim n$. With an appropriate choice of the embedding dimension $r$, one may expect the optimal regime $\Delta_r(\barLamb) \asymp n$. Nevertheless, we first state our results under the following general eigengap assumption, and later illustrate the consequences when the optimal eigengap regime is available.

\begin{assumption}[eigengap]
\label{assump:signal}
There exists a (small) constant $c_{\ref{assump:signal}} > 0$ such that
\begin{equation}
    n^3 \rho^2 / \Delta_r (\barLamb)^3 
    \leq n^{-2 c_{\ref{assump:signal}}}
    \qand
    \Delta_r (\barLamb) 
    \geq n^{c_{\ref{assump:signal}}}.
    \label{cond:eigengap}
\end{equation}
\end{assumption}

Note that the first constraint in \eqref{cond:eigengap} is not simply an upper bound on $\rho$ or a lower bound on $\Delta_r(\barLamb)$. Rather, it is a combined condition involving both the approximation error $\rho$ and the effective eigengap $\Delta_r(\barLamb)$. Since $\Delta_r(\barLamb) \lesssim n$, this condition implies in particular that $\rho \leq n^{-c_{\ref{assump:signal}}}$ so that the signal-to-noise ratio grows with $n$. The second constraint in \eqref{cond:eigengap} is imposed mainly for technical reasons. In many regimes, the effective lower bound on $\Delta_r(\barLamb)$ is already provided by the first constraint. 


Finally, we impose the following auxiliary assumption to ensure that the kernel weights and cluster sizes are balanced, that is, $\alpha_t \asymp 1$ and $n_k \asymp n$. We also include several mild growth conditions on the model parameters, which are used only for technical purposes and are not essential to the main phenomena.

\begin{assumption}
\label{assump:tech-const}
There exists a (small) constant $c_{\ref{assump:tech-const}} > 0$ such that 
\begin{equation*}
    \min\nolimits_{t \in \dbraks{T}} \alpha_t \geq c_{\ref{assump:tech-const}}
    \qand
    \min\nolimits_{k \in \dbraks{m}} (n_k / n) \geq c_{\ref{assump:tech-const}}.
\end{equation*}
In addition, assume there exists a (large) constant $C_{\ref{assump:tech-const}} > 0$ such that 
\begin{equation*}
    p + \sum\nolimits_{k=1}^m (d_{k} + \norm{\bfmu_k} + \norm{\bfA_k}) 
    \leq n^{C_{\ref{assump:tech-const}}},
    \qquad
    \rho \wedge \phi \geq n^{- C_{\ref{assump:tech-const}}},
    \qquad
    n^{- C_{\ref{assump:tech-const}}} \leq \vartheta_1 
    < \vartheta_{T_{\ref{assump:multi-scale}}}
    \leq n^{C_{\ref{assump:tech-const}}}.
\end{equation*}
\end{assumption}

We are now ready to state our theoretical results for the principal components of the empirical multi-kernel matrix \eqref{def:multi-Kernel}. We begin with eigenvalue perturbation bounds.

\begin{theorem}[eigenvalue perturbation]
\label{thm:eval-bfK-barK}
Let $\bfK$ be defined as in \eqref{def:multi-Kernel} with $h_t=\hbar_t$, where $\{ \hbar_t \}_{t=1}^T$ is a collection of admissible bandwidths. Let $\barK$ be defined as in \eqref{def:bar-Kij}. Under Assumptions \ref{assump:multi-scale}--\ref{assump:tech-const}, 
\begin{equation}
    \norm{\bfK - \barK}
    \prec 1 + n \rho
    \qand
    \norm{\bfLamb_r - \barLamb_r}    
    \prec 1 + \sqrt{n} \phi
    + \frac{n^2 \rho^2}{\Delta_r (\barLamb)}.    
    \label{eqn:eigenvalue-perturb-K}
\end{equation}
\end{theorem}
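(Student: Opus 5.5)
We prove the two bounds in \eqref{eqn:eigenvalue-perturb-K} separately. For the operator-norm bound we Taylor-expand each kernel entry around the block mean. Fix $i\in\caC_k$, $j\in\caC_\ell$, $i\neq j$, and write $\norm{\bfx_i-\bfx_j}^2=\theta_{k\ell}+\xi_{ij}$. Expanding,
\[
\xi_{ij}= \bigl(\angles{\bfz_i,\bfSigma_k'\bfz_i}-\Tr\bfSigma_k\bigr)+\bigl(\angles{\bfz_j,\bfSigma_\ell'\bfz_j}-\Tr\bfSigma_\ell\bigr)+2\angles{\bfA_k^\top(\bfmu_k-\bfmu_\ell),\bfz_i}-2\angles{\bfA_\ell^\top(\bfmu_k-\bfmu_\ell),\bfz_j}-2\angles{\bfz_i,\bfOmega_{k\ell}\bfz_j},
\]
where $\bfSigma_k':=\bfA_k^\top\bfA_k$, so that $\norm{\bfSigma_k'}_\Fnorm=\norm{\bfSigma_k}_\Fnorm$. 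By Assumption \ref{assump:concentration}, applied conditionally on $\bfz_j$ for the cross term, $\abs{\xi_{ij}}\prec\psi_{k\ell}$ uniformly (this is Lemma \ref{lemma:xi-control}).

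For $t\in\caS_{k\ell}(1)$ we have $\theta_{k\ell}/\hbar_t\gtrsim n^{c}$, and $\rho\le n^{-c}$ gives $\norm{\bfx_i-\bfx_j}^2\ge\theta_{k\ell}/2$ with high probability. Hence these entries are $O(e^{-n^{c}})$ and contribute negligibly even in Frobenius norm. For $t\in\caS_{k\ell}(0)$, $\theta_{k\ell}/\hbar_t\lesssim1$, and we expand
\[
e^{-(\theta+\xi)/\hbar}=e^{-\theta/\hbar}\Bigl(1-\tfrac{\xi}{\hbar}+O\bigl(\tfrac{\xi^2}{\hbar^2}\bigr)\Bigr).
\]
The second-order remainder is entrywise $\prec\rho^2$, so its operator norm is at most $n\rho^2\le n\rho$. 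The diagonal correction is at most $1$. It therefore remains to bound the first-order matrix $\bfK_1:=[\sum_t\alpha_te^{-\theta_{k\ell}/\hbar_t}\xi_{ij}/\hbar_t]$, using the decomposition of $\xi_{ij}$:
\begin{itemize}
\item The terms depending only on $i$ (or only on $j$) give matrices of the form $\bfPhi\,\cdot\,\diag(\text{noise})$. Their rank is at most $m$, so their operator norm is at most $\sqrt n\cdot\max_k\norm{\text{block column}}\prec\sqrt n\cdot\sqrt n\,\rho=n\rho$.
\item The bilinear term is a block matrix $\bfZ_k\bfOmega_{k\ell}\bfZ_\ell^\top/\theta_{k\ell}$ with $\bfZ_k$ having i.i.d.\ isotropic rows. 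Its norm is $\lesssim\norm{\bfZ_k}\norm{\bfOmega_{k\ell}}\norm{\bfZ_\ell}/\theta_{k\ell}$. A cruder but sufficient route is $\norm{\cdot}\le\norm{\cdot}_\Fnorm\prec n\norm{\bfOmega_{k\ell}}_\Fnorm/\theta_{k\ell}\le n\rho$, valid since $\norm{\bfOmega_{k\ell}}_\Fnorm\le\norm{\bfSigma_k}_\Fnorm^{1/2}\norm{\bfSigma_\ell}_\Fnorm^{1/2}\le\psi_{k\ell}$.
\end{itemize}
Together these give $\norm{\bfK-\barK}\prec1+n\rho$.

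For the eigenvalue bound we use a second-order argument rather than Weyl's inequality. Write $\bfE=\bfK-\barK$. For $\lambda$ with $\abs{\lambda-\barlamb_i}$ small, $\lambda$ is an eigenvalue of $\bfK$ iff $\det\bigl(\barLamb_r^{-1}-\barU_r^\top(\lambda-\bfE)^{-1}\barU_r\bigr)$-type Schur-complement equations hold. Weyl together with the first bound and Assumption \ref{assump:signal} ensures $\lambda_i\asymp\barlamb_i$, and that $\norm{\bfE}\ll\Delta_r$ once $n\rho\le\Delta_r\, n^{-c}$ (which follows from $n^3\rho^2\le\Delta_r^3n^{-2c}$ and $\Delta_r\lesssim n$). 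Expanding the resolvent in a Neumann series, $\lambda_i-\barlamb_i$ is governed by the eigenvalues of
\[
\barU_r^\top\bfE\barU_r+\barU_r^\top\bfE(\lambda-\bfE_\perp)^{-1}\bfE\barU_r .
\]
The second term is bounded by $\norm{\bfE}^2/\Delta_r\prec(1+n\rho)^2/\Delta_r$, which yields the $n^2\rho^2/\Delta_r$ term; the $1/\Delta_r$ part is absorbed into $1$.

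The key step, and the one we expect to be the main obstacle, is showing $\norm{\barU_r^\top\bfE\barU_r}\prec1+\sqrt n\phi$, i.e.\ that projecting onto cluster indicators averages the noise. Since $\barU_r=\bfPhi\fkU_r$, it suffices to bound the block averages $\bfone_{\caC_k}^\top\bfE\bfone_{\caC_\ell}/\sqrt{n_kn_\ell}$. We treat each piece in turn:
\begin{itemize}
\item The linear and quadratic single-index terms average to $\sqrt n\,\cdot\,(\text{centered sum of }n\text{ independent }O(\rho)\text{ terms})/n$, which is $\prec\rho\sqrt n\cdot n^{-1/2}\cdot\sqrt n$. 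This is at most $\sqrt n\rho$; a sharper count gives $\prec\rho\sqrt n$, which we need to compare with $\sqrt n\phi$.
\item The bilinear term gives $(\bfone^\top\bfZ_k)\bfOmega_{k\ell}(\bfZ_\ell^\top\bfone)/(n\theta_{k\ell})$, where $\bfZ_k^\top\bfone/\sqrt n$ is an approximately isotropic vector. By Hanson--Wright-type estimates it is $\prec(\norm{\bfOmega}_\Fnorm+\Tr\text{-terms})/\theta$, and the diagonal $i=j$ contributions involve $\Tr\bfOmega_{kk}$-type terms, which are $\lesssim\sqrt n\norm{\bfOmega}$.
\item The second-order Taylor term has nonzero mean $\bbe\xi^2/\hbar^2\asymp\rho^2$, contributing $n\rho^2\le n^2\rho^2/\Delta_r$ since $\Delta_r\lesssim n$; its fluctuations are smaller.
\end{itemize}
Carefully tracking that the single-index noise averages to a $\sqrt n\phi$-sized (not $n\rho$) contribution requires the refined parameter $\phi$, and this is where we expect the bookkeeping to be delicate.
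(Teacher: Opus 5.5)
Your operator-norm bound is fine. In fact the off-diagonal estimate $\abs{K_{ij}-\bar K_{ij}}\prec\rho$ gives $\norm{\caH(\bfK-\barK)}_\Fnorm\prec n\rho$, and the diagonal contributes $\norm{\diag(\barK)}\le 1$, so no operator-norm lemma is needed there.

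For the eigenvalues your route genuinely differs from the paper's. The paper works in two stages:
\begin{enumerate}
\item It inserts $\ddotK$, which keeps only the first-order terms in $\zeta_i,\zeta_j,\Upsilon_{ij}$. It then runs the Schur-complement/Rouch\'e argument for $\ddotK$ against $\barK$, using $\norm{\barU^\top(\ddotK-\barK)\barU}\prec\sqrt n\rho$ (Lemma \ref{lemma:u-dotK-barK-u}).
\item It passes from $\ddotK$ to $\bfK$ by Weyl, paying $\norm{\bfK-\ddotK}\prec1+\sqrt n\phi+n\rho^2$. The $\sqrt n\phi$ is the operator norm of the bilinear part $\bfQ\circ\caH(\bfW)$, and controlling it requires the decoupling and feasible-upper-shift argument of Lemma \ref{lemma:norm-QW}.
\end{enumerate}
You instead feed all of $\mathbf E=\bfK-\barK$ into the expansion. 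The bilinear part then enters only in two places. First, through its block averages: these are hollow, zero-trace quadratic forms in the concatenated noise, and are $\prec\max_{k,\ell}\norm{\bfOmega_{k\ell}}_\Fnorm/\theta_{k\ell}\le\rho$. Second, through $\norm{\mathbf E}^2/\Delta_r(\barLamb)$, where the crude Frobenius bound suffices. So your route bypasses Lemma \ref{lemma:norm-QW} entirely. Carried through with the accounting below, it actually yields the stronger bound $\norm{\bfLamb_r-\barLamb_r}\prec1+n^2\rho^2/\Delta_r(\barLamb)$, with no $\sqrt n\phi$ term. The paper's two-stage structure pays off elsewhere: $\ddotK$ and the bound on $\norm{\bfK-\ddotK}$ are needed anyway for the eigenvector theorems (Davis--Kahan between $\bfU_r$ and $\ddotU_r$, and term $\mathrm{VII}$ of \eqref{eqn:decomp-U-Udot}). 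The eigenvalue theorem therefore comes almost for free from infrastructure it already has.

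The step you flag as the main obstacle is misdiagnosed: $\phi$ plays no role for the single-index ($\zeta$, $\Upsilon$) terms. Their block averages are $\prec\sqrt n\rho$, which you already have. The accounting then closes with the deterministic inequality $\sqrt n\rho\le\tfrac12(1+n\rho^2)$, together with $n\rho^2\lesssim n^2\rho^2/\Delta_r(\barLamb)$, which holds because $\Delta_r(\barLamb)\le\norm{\barK}\le n$. This is exactly the paper's final line.

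Two smaller corrections to your bilinear bullet:
\begin{itemize}
\item There are no $i=j$ contributions, because $\bfK$ has zero diagonal; that discrepancy is just the $O(1)$ diagonal term.
\item The claim $\Tr\bfOmega_{kk}\lesssim\sqrt n\norm{\bfOmega_{kk}}$ is false in general (take $\bfSigma_k=\bfI_p$ with $p\gg\sqrt n$).
\end{itemize}

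Two things you still need to supply:
\begin{itemize}
\item The $\sqrt n$ scaling of block sums needs Bernstein and Hanson--Wright bounds for the concatenated vector $(\bfz_1,\dots,\bfz_n)$. These do not follow termwise from Assumption \ref{assump:concentration}; this is the content of Lemma \ref{lemma:long-concentration}.
\item The claim that ``$\lambda_i-\barlamb_i$ is governed by the eigenvalues of \dots'' needs an index-matching argument, such as the forbidden-region and Rouch\'e step in the proof of Proposition \ref{prop:dotlamb-barlamb}, because $\barlamb_1,\dots,\barlamb_r$ need not be mutually separated.
\end{itemize}
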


\begin{remark}
As mentioned above, the eigenvalues lying in the bulk spectrum of $\bfK$ are not analyzed in detail in this manuscript. For this part of the spectrum, the crude operator-norm estimate on $\norm{\bfK-\barK}$ is sufficient to control the overall size of the bulk eigenvalues. By contrast, for the first $r$ spiked eigenvalues, we carry out a more refined analysis and establish the sharper perturbation bound in the second estimate of \eqref{eqn:eigenvalue-perturb-K}. Indeed, by \eqref{def:ctrl-para-rho-phi}, we have the crude comparison $\phi \lesssim \sqrt{n}\rho$. Moreover, the first constraint in \eqref{cond:eigengap}, together with $\Delta_r(\barLamb) \lesssim n$, implies that $n \rho / \Delta_r (\barLamb) \leq n^{-c_{\ref{assump:signal}}}$. Therefore, the second estimate in \eqref{eqn:eigenvalue-perturb-K} on $\norm{\bfLamb_r - \barLamb_r}$ is considerably sharper than the bound obtained by combining $\norm{\bfK-\barK}\prec 1+n\rho$ with Weyl's inequality. This refinement is achieved by exploiting the delocalization of the eigenvectors of $\barK$, which allows one to average the entrywise fluctuations of $\bfK-\barK$ and thereby obtain stronger concentration for the spiked eigenvalues. We refer the reader to the proof in Section \ref{sec:tech-lemma} for the details of this averaging mechanism.
\end{remark}

We next present the $\ell_{2,\infty}$ eigenvector perturbation bounds. Since eigenspaces are identifiable only up to an orthogonal transformation, we align $\bfU_r$ with its informative counterpart $\barU_r$ using the orthogonal Procrustes factor $\sgn(\bfU_r^\top\barU_r)\in\bbr^{r\times r}$. We also establish corresponding perturbation bounds for the eigenvalue-weighted embedding $\bfU_r\bfLamb_r^{1/2}$, whose rows are commonly used as the spectral embedding for clustering.

\begin{theorem}[$\ell_{2, \infty}$ eigenvector perturbation]
\label{thm:Linfty-evec-bfK-barK}
Let $\fkS_r^{\bfu} := \sgn(\bfU_r^\top \barU_r)$. Under the same setup as Theorem \ref{thm:eval-bfK-barK}, 
\begin{subequations}
\begin{align}
    \norm{\bfU_r \fkS_r^{\bfu} - \barU_r}_{2,\infty}
    & \prec 
    \frac{\sqrt{n} \rho}{\Delta_r (\barLamb)}
    + \frac{\sqrt{n} + n \phi}{\Delta_r (\barLamb)^2}
    + \frac{n^{5/2} \rho^2}{\Delta_r (\barLamb)^3}, 
    \label{bound:Kev-without-eigs} \\
    \norm{\bfU_r \bfLamb_r^{1/2} \fkS_r^{\bfu} 
    - \barU_r\barLamb_r^{1/2}}_{2,\infty}
    & \prec
    \frac{\sqrt{n} \rho}{\Delta_r (\barLamb)^{1/2}}
    + \frac{n + n^{3/2} \phi}{\Delta_r (\barLamb)^2}
    + \frac{n^3 \rho^2}{\Delta_r (\barLamb)^3}
    +\frac{n^{3/2}}{\Delta_r (\barLamb)^{9/2}}.
    \label{bound:Kev-with-eigs}
\end{align}
\end{subequations}
\end{theorem}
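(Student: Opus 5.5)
The plan is to treat $\bfK=\barK+\bfE$ with $\bfE:=\bfK-\barK$ as a signal-plus-noise problem and run a first-order expansion in the spirit of Abbe--Fan--Wang--Zhong. This is made quantitative with a leave-one-out / two-stage comparison to exploit the weak dependence of the $i$-th row of $\bfE$ on the eigenvectors.

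\textbf{Step 1: Decomposition of $\bfE$.} For $i\in\caC_k$, $j\in\caC_\ell$, write $\norm{\bfx_i-\bfx_j}^2=\theta_{k\ell}+\xi_{ij}$. By Lemma~\ref{lemma:xi-control}, $\xi_{ij}$ is linear-plus-quadratic in $(\bfz_i,\bfz_j)$ with $\abs{\xi_{ij}}\prec\psi_{k\ell}$. For $t\in\caS_{k\ell}(0)$, Taylor expand $e^{-\xi_{ij}/\hbar_t}$ to second order. For $t\in\caS_{k\ell}(1)$, the term is $O(e^{-n^{c}})$ and is negligible. This gives $\bfE=\bfE_1+\bfE_2+\bfE_3$, where:
\begin{itemize}
\item $\bfE_1$ has entries $-\sum_t\alpha_t e^{-\theta_{k\ell}/\hbar_t}\xi_{ij}/\hbar_t$ and is further split into rank-$O(1)$ pieces, namely functions of $\bfz_i$ alone (e.g.\ $\angles{\bfz_i,\bfA_k^\top(\bfmu_k-\bfmu_\ell)}$ and $\bfz_i^\top\bfSigma_k\bfz_i-\Tr\bfSigma_k$) times $\bfone_{\caC_\ell}$, plus the genuinely bilinear part $\bfz_i^\top\bfOmega_{k\ell}\bfz_j$ scaled by $1/\theta_{k\ell}$;
\item $\bfE_2$ is the quadratic remainder, entrywise $\prec\rho^2$;
\item $\bfE_3$ is the diagonal correction, of norm $\lesssim1$.
\end{itemize}
I expect the bilinear part to have operator norm $\prec(\norm{\bfOmega}_\Fnorm+\sqrt n\norm{\bfOmega})/\theta\le\phi$ (a Gram-type bound), and the rank-one pieces to have norm $\prec\sqrt n\cdot\sqrt n\rho$. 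This recovers $\norm{\bfE}\prec1+n\rho$. Projected onto delocalized directions, $\barU_r^\top\bfE\barU_r$ averages the centered rank-one fluctuations, which gives $\prec1+\sqrt n\phi+n\rho^2$. Inserting this into the standard expansion $\bfLamb_r-\barLamb_r\approx\barU_r^\top\bfE\barU_r+O(\norm{\bfE}^2/\Delta_r)$ yields Theorem~\ref{thm:eval-bfK-barK}, which I assume here.

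\textbf{Step 2: Row-wise expansion.} From $\bfU_r=\bfK\bfU_r\bfLamb_r^{-1}$ I would write
\[
\bfU_r\fkS_r^{\bfu}-\barU_r=\bfE\barU_r\barLamb_r^{-1}+\bigl(\barU_r\barU_r^\top\bfU_r\fkS_r^{\bfu}-\barU_r\bigr)+\mathcal R,
\]
where the middle term is controlled by Davis--Kahan second-order bounds $\norm{\bfE}^2/\Delta_r^2\prec n^2\rho^2/\Delta_r^2$, times the delocalization factor $\norm{\barU_r}_{2,\infty}\lesssim n^{-1/2}$. The leading term is handled row by row. Row $i$ of $\bfE_1\barU_r$ is $\sum_j E_{ij}\baru(j)$, and its rank-one part gives $\sqrt n\rho$, hence $\sqrt n\rho/\Delta_r$. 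Its bilinear part is $\bfz_i^\top\bfOmega\sum_j\bfz_j\baru(j)/\theta$; by independence of $\bfz_i$ and Bernstein~\eqref{eqn:bernstein} this is $\prec\norm{\bfOmega}_\Fnorm/\theta\le\phi$, which is absorbed. The remainder $\mathcal R=\bfE(\bfU_r\fkS-\barU_r)\barLamb_r^{-1}+\dots$ is where the difficulty lies.

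\textbf{Step 3: Main obstacle.} The obstacle is bounding $\norm{\bfe_i^\top\bfE(\bfU_r\fkS-\barU_r)}$, since row $i$ of $\bfE$ depends on $\bfz_i$, which also enters $\bfU_r$. I would first apply the two-stage comparison: replace $\bfU_r$ by the leave-one-out eigenvectors $\bfU_r^{(i)}$ of $\bfK^{(i)}$, in which the $i$-th row and column are replaced by their conditional expectation given $\bfz_i$ (or by $\barK$'s row). The difference $\norm{\bfU_r^{(i)}\fkS^{(i)}-\bfU_r\fkS}$ is bounded by Davis--Kahan through $\norm{(\bfK-\bfK^{(i)})\bfU_r^{(i)}}/\Delta_r$, which in turn involves $\norm{\bfU_r}_{2,\infty}$, setting up a self-consistent inequality. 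Conditionally on $\{\bfz_j\}_{j\neq i}$, the row $\bfe_i^\top\bfE$ is then a linear-plus-quadratic form in $\bfz_i$ tested against a deterministic matrix, so~\eqref{bound:concentration} gives a bound of order $(\sqrt n+n\phi)\cdot$(column $\ell_2$ error)/$\Delta_r$. Plugging in $\ell_2$ errors of size $n\rho/\Delta_r$ and solving the fixed-point inequality for $\norm{\bfU_r\fkS-\barU_r}_{2,\infty}$ produces the three terms in~\eqref{bound:Kev-without-eigs}; the $n^{5/2}\rho^2/\Delta_r^3$ term comes from the second-order $\norm{\bfE}^2$ contributions.

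\textbf{Step 4: Weighted embedding.} For~\eqref{bound:Kev-with-eigs}, I would write
\[
\bfU_r\bfLamb_r^{1/2}\fkS-\barU_r\barLamb_r^{1/2}=(\bfU_r\fkS-\barU_r)\barLamb_r^{1/2}+\bfU_r\bigl(\bfLamb_r^{1/2}\fkS-\fkS\barLamb_r^{1/2}\bigr).
\]
The second term is handled via the commutator estimate $\norm{\fkS\barLamb_r-\bfLamb_r\fkS}$, using Theorem~\ref{thm:eval-bfK-barK} and the near-commutation of $\fkS$ with the eigenvalues, divided by $\Delta_r^{1/2}$. I expect the term $n^{3/2}/\Delta_r^{9/2}$ to be the bookkeeping price for non-distinct eigenvalues within the top $r$.
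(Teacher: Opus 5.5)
\textbf{Route.} Your route is not the paper's. There the ``two-stage comparison'' means $\bfK\to\ddotK\to\barK$, where $\ddotK$ is a low-rank matrix that keeps the first-order terms in $\zeta_i,\zeta_j,\Upsilon_{ij}$ and drops $W_{ij}$. No leave-one-out appears.

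\textbf{Step~3 is not actually an obstacle.} Write $\Delta_r:=\Delta_r(\barLamb)$ and $\mathbf{E}:=\bfK-\barK$, and use the identity $\bfU_r=\barU\barU^\top\bfU_r+(\bfI-\barU\barU^\top)\mathbf{E}\bfU_r\bfLamb_r^{-1}$ with the full $n\times m$ matrix $\barU$. Since $\abs{E_{ij}}\prec\rho$ for $i\neq j$ and $\abs{E_{ii}}\le 1$, Cauchy--Schwarz gives
$\norm{\bfe_i^\top\mathbf{E}(\bfU_r\fkS_r^{\bfu}-\barU_r)}\le\norm{\mathbf{E}}_{2,\infty}\norm{\bfU_r\fkS_r^{\bfu}-\barU_r}\prec(1+\sqrt n\rho)(1+n\rho)/\Delta_r$.
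Combined with $\norm{\barU}_{2,\infty}\lesssim n^{-1/2}$ and Davis--Kahan, this yields
$\norm{\bfU_r\fkS_r^{\bfu}-\barU_r}_{2,\infty}\prec(n^{-1/2}+\sqrt n\rho)/\Delta_r+(1+\sqrt n\rho)(1+n\rho)/\Delta_r^2$.
This is dominated by the right side of \eqref{bound:Kev-without-eigs}, using $\Delta_r\lesssim n$ and $2n\rho\le\sqrt n+n^{3/2}\rho^2$. So for this half, your framework with the crude bound in place of leave-one-out is more elementary than the paper's argument via $\ddotK$ and Lemmas~\ref{lemma:inner-prod-dotU-barU} and~\ref{lemma:QW-barU-infnorm}.

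The rate you actually announce for the leave-one-out step, $(\sqrt n+n\phi)(n\rho/\Delta_r)/\Delta_r$, is worse than this trivial bound. It contains $n^{3/2}\rho/\Delta_r^2$, which is not dominated by \eqref{bound:Kev-without-eigs} when $\Delta_r\ll n$. Taken literally, Step~3 does not close. Also, the bilinear part has operator norm $\prec\sqrt n\phi$, not $\phi$ (Lemma~\ref{lemma:norm-QW}).

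\textbf{The genuine gap is Step~4.} You bound $(\bfU_r\fkS_r^{\bfu}-\barU_r)\barLamb_r^{1/2}$ by \eqref{bound:Kev-without-eigs} times $\norm{\barLamb_r}^{1/2}\lesssim\sqrt n$. This turns the leading term into $n\rho/\Delta_r$, which exceeds the target $\sqrt n\rho/\Delta_r^{1/2}$ by $\sqrt{n/\Delta_r}$. It is not absorbed by the other terms of \eqref{bound:Kev-with-eigs}: for $\Delta_r=n^{0.8}$ and $\rho=\phi=n^{-0.6}$, which satisfy Assumption~\ref{assump:signal}, you get $n^{-0.4}$ against a right side of order $n^{-0.5}$.

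You must rescale only after isolating the first-order term, which then becomes $\mathbf{E}\barU_r\barLamb_r^{-1/2}$. You then need the remainder (the paper's $\bfU_r\fkS_r^{\bfu}-\ddotK\barU_r\barLamb_r^{-1}$, see \eqref{tmp:bfUS}) to carry no $\sqrt n\rho/\Delta_r$ term. Davis--Kahan cannot deliver this when $r<m$. The remainder contains the leakage $\barU_\perp\barU_\perp^\top\bfU_r$, with $\barU_\perp:=[\baru_{r+1},\dots,\baru_m]$, whose rows are of size $n^{-1/2}\norm{\barU_\perp^\top\bfU_r}$. The $\sin\Theta$ theorem only gives $\norm{\barU_\perp^\top\bfU_r}\prec(1+n\rho)/\Delta_r$, i.e.\ $n\rho/\Delta_r$ again after rescaling.

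\textbf{Missing idea.} You need fluctuation averaging along all $m$ informative directions, $\norm{\barU^\top(\ddotK-\barK)\barU}\prec\sqrt n\rho$ (Lemma~\ref{lemma:u-dotK-barK-u}). The paper pushes this through the contour representation of Lemma~\ref{lemma:inner-prod-dotU-barU}, bounding $\norm{\barK(\ddotU_r\ddotfkQ_r-\barU_r)}_{2,\infty}$ by $n\rho/\Delta_r+n^{5/2}\rho^2/\Delta_r^2$ instead of $n^{3/2}\rho/\Delta_r$.

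An equivalent fix uses the Sylvester relation $\barU_\perp^\top\bfU_r\bfLamb_r-\barLamb_\perp\barU_\perp^\top\bfU_r=\barU_\perp^\top\mathbf{E}\bfU_r$, with $\barLamb_\perp:=\diag(\barlamb_{r+1},\dots,\barlamb_m)$. Together with $\norm{\barU^\top\mathbf{E}\barU}\prec 1+\sqrt n\rho+\sqrt n\phi+n\rho^2$, it improves the leakage to $(1+\sqrt n\rho+\sqrt n\phi+n\rho^2)/\Delta_r+(1+n\rho)^2/\Delta_r^2$, which suffices.

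Your Step~1 uses averaging only for $\barU_r^\top\mathbf{E}\barU_r$ and only to recover eigenvalues. The statement of Theorem~\ref{thm:eval-bfK-barK} does not supply this input.

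\textbf{Minor points.} The commutator piece does go through, but it comes from $\bfLamb_r\fkQ_r-\fkQ_r\barLamb_r=\bfU_r^\top\mathbf{E}\barU_r$, not from Theorem~\ref{thm:eval-bfK-barK}. The term $n^{3/2}/\Delta_r^{9/2}$ has nothing to do with eigenvalue multiplicities. It is the product of $\norm{\bfU_r\fkS_r^{\bfu}}_{2,\infty}\prec n^{-1/2}+\sqrt n/\Delta_r^2+\cdots$ with the $n/\Delta_r^{5/2}$ part of the square-root commutator bound.
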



\begin{remark}[$\ell_2$ eigenvector bounds]
\label{remark:ell-2}
Combining the operator-norm control of $\bfK-\barK$ in \eqref{eqn:eigenvalue-perturb-K} with the classical Davis--Kahan $\sin\Theta$ theorem \cite{davisRotationEigenvectorsPerturbation1970}, one can readily obtain the following $\ell_2$ control of the eigenspace,
\begin{equation*}
    \norm{\bfU_r \fkS_r^{\bfu} - \barU_r}_{\Fnorm}
    \asymp \norm{\bfU_r \fkS_r^{\bfu} - \barU_r} 
    \prec \frac{1 + n \rho}{\Delta_r (\barLamb)}.
\end{equation*}
See also \eqref{tmp:fkS-fkH-diff} and \eqref{tmp:bfu-to-popu}. Here the first comparison follows from the fact that $r$ is fixed. If this $\ell_2$ control is sharp, then the optimal $\ell_{\infty}$ control one could expect is $(1/\sqrt{n} + \sqrt{n} \rho) / \Delta_r (\barLamb)$, which corresponds to the case where the fluctuation $\bfU_r \fkS_r^{\bfu} - \barU_r$ is evenly spread across the rows. Roughly speaking, our estimate \eqref{bound:Kev-without-eigs} is consistent with this heuristic, since in many regimes the term $\sqrt{n}\rho / \Delta_r(\barLamb)$ dominates, while the remaining terms are subleading. It remains unclear, however, whether these subleading terms can be removed under the present framework so as to attain the expected optimal $\ell_{\infty}$ rate. We leave this refinement for future work.
\end{remark}

\begin{remark}[eigengap]
\label{remark:optimal-eigengap}
By the piecewise constant structure of $\baru_{\ell}$ illustrated in \eqref{eqn:baru-piecewise}, together with the balanced cluster-size assumption $n_k \asymp n$ in Assumption \ref{assump:tech-const}, we have $\norm{\barU_r}_{2, \infty} \asymp n^{-1/2}$. Thus, the first constraint in our eigengap assumption  \eqref{cond:eigengap} essentially requires the third term on the r.h.s. of \eqref{bound:Kev-without-eigs} to be subleading relative to the natural size of $\barU_r$. Indeed, apart from the term $\sqrt{n}/\Delta_r(\barLamb)^2$, the first constraint in \eqref{cond:eigengap} ensures that the remaining terms on the r.h.s. of \eqref{bound:Kev-without-eigs} are bounded by $n^{-1/2-c_{\ref{assump:signal}}}$. Therefore, the perturbation bound \eqref{bound:Kev-without-eigs} typically yields effective entrywise control whenever $\Delta_r(\barLamb) \gg \sqrt{n}$.

As discussed above, the optimal scaling one may expect for the effective eigengap is $\Delta_r (\barLamb) \asymp n$ and this scaling is indeed available in many cases with an appropriate choice of the embedding dimension $r$. Under this optimal scaling, the perturbation bounds in Theorem \ref{thm:Linfty-evec-bfK-barK} simplify to
\begin{equation*}
    \norm{\bfU_r \fkS_r^{\bfu} 
    - \barU_r}_{2,\infty}
    \prec 
    \rho / \sqrt{n} + 1 / n^{3/2}
    \qand
    \norm{\bfU_r \bfLamb_r^{1/2} \fkS_r^{\bfu} 
    - \barU_r\barLamb_r^{1/2}}_{2,\infty}
    \prec 
    \rho + 1 / n,
\end{equation*}
where in the simplification we have also used $\phi \lesssim \sqrt{n}\rho$ and $\rho \leq n^{-c_{\ref{assump:signal}}}$. In particular, in this regime, the parameter $\rho$, originally defined to quantify the relative approximation error in the pairwise distances, propagates to the eigenvector level. More precisely, up to an additional term which is subleading in many regimes, the parameter $\rho$ controls the relative entrywise perturbation $\norm{\bfU_r \fkS_r^{\bfu} - \barU_r}_{2,\infty} / \norm{\barU_r}_{2,\infty}$.
\end{remark}

\begin{remark}[general distance-based kernels]
\label{remark:general-distance}
The main results, Theorems \ref{thm:eval-bfK-barK} and \ref{thm:Linfty-evec-bfK-barK}, are stated for multi-kernel matrices constructed from the RBF kernel. However, our proof strategy only uses several basic analytic properties of the exponential profile, and hence can be extended to a broader class of distance-based kernels of the form $f_t (\norm{\bfx-\bfy}^2/h)$, where $f_t: [0,\infty)\to\bbr$. More precisely, one may replace the entries of the empirical multi-kernel matrix $\bfK$ in \eqref{def:multi-Kernel} and the informative counterpart $\barK$ in \eqref{def:bar-Kij} by
\begin{equation}
    K_{ij}
    = (1 - \delta_{ij}) \sum\nolimits_{t=1}^T \alpha_t
    f_t \pars[\big]{ \norm{\bfx_i-\bfx_j}^2 / \hbar_t }
    \qand
    \bar{K}_{ij} = \sum\nolimits_{t \in \caS_{k\ell}(0)}
    \alpha_t f_t (\theta_{k\ell} / \hbar_t),
    \label{def:general-distance}
\end{equation}
where $i \in \caC_k$ and $j \in \caC_\ell$. In particular, the original definitions \eqref{def:multi-Kernel} and \eqref{def:bar-Kij} correspond to the special case $f_t(x)=\exp(-x)$ for all $t \in \dbraks{T}$. A straightforward inspection of the proof shows that Theorems \ref{thm:eval-bfK-barK} and \ref{thm:Linfty-evec-bfK-barK} continue to hold for the general distance-based kernel matrices in \eqref{def:general-distance}, provided that the nonlinear profiles $f_t$ satisfy the following regularity and tail conditions:
\begin{equation}
    \norm{f_t}_{\infty} + \norm{f_t'}_{\infty} + \norm{f_t''}_{\infty} \lesssim 1
    \qand
    \sup \curls[\big]{\abs{f_t (x)} : 
    x \geq n^{c_{\ref{assump:multi-scale}} / 2}, 
    t \in \dbraks{T}}
    \lesssim \rho^2.
    \label{cond:general-func}
\end{equation}
The first condition is used to control the errors produced by expanding the kernel entries around their deterministic counterparts. The second condition is a decay requirement at large arguments. It is needed to control the contribution of kernels corresponding to $t \in \caS_{k\ell}(1)$, for which $\theta_{k\ell} \gg \hbar_t$ and hence the ratio $\norm{\bfx_i-\bfx_j}^2/\hbar_t$ is typically large. For these kernels, no expansion is performed; instead, the corresponding kernel entries are controlled directly through their magnitude. See Section \ref{subsec:norm-bound-residual} for the relevant estimates. 
\end{remark}


\begin{remark}[Mahalanobis distance]
A common extension of distance-based kernel methods is to replace the Euclidean distance $\norm{\bfx_i - \bfx_j}$ in the construction of kernel matrix \eqref{def:multi-Kernel} by a Mahalanobis distance
\begin{equation*}
    \norm{\bfx_i - \bfx_j}_{\bfM} := \sqrt{\angles{\bfx_i - \bfx_j, \bfM (\bfx_i - \bfx_j)}},
\end{equation*}
where $\bfM \in \bbr^{p \times p}$ is a deterministic positive-definite matrix.; see, for example, \cite{jainMetricKernelLearning2012,weinbergerMetricLearningKernel2007}. Our analysis framework extends directly to this setting. Indeed, letting $\tilde{\bfx}_i = \bfM^{1/2}\bfx_i$, we have $\norm{\bfx_i-\bfx_j}_{\bfM} = \norm{\tilde{\bfx}_i-\tilde{\bfx}_j}$ and the transformed observations retain the structure in \eqref{eqn:data-model}. Specifically, for $i \in \caC_k$,
\begin{equation*}
    \tilde{\bfx}_i = \tilde{\bfmu}_k + \tilde{\bfA}_k \bfz_i
    \qwhere
    \tilde{\bfmu}_k := \bfM^{1/2} \bfmu_k
    \qand
    \tilde{\bfA}_k := \bfM^{1/2} \bfA_k.
\end{equation*}
Hence, the perturbation bounds in Theorems \ref{thm:eval-bfK-barK} and \ref{thm:Linfty-evec-bfK-barK} apply to the transformed observations whenever the corresponding assumptions hold for the transformed model. We particularly highlight that the pairwise scale parameters of the transformed model are given by
\begin{align*}
    \tilde{\theta}_{k\ell}
    & =
    \norm{\bfmu_k-\bfmu_\ell}_{\bfM}^2
    + \Tr \bfM \bfSigma_k
    + \Tr \bfM \bfSigma_\ell ,\\
    \tilde{\psi}_{k\ell}
    & =
    \norm{\bfM^{1/2}\bfSigma_k\bfM^{1/2}}_{\Fnorm}
    + \norm{\bfM^{1/2}\bfSigma_\ell\bfM^{1/2}}_{\Fnorm}
    + \norm{\bfA_k^{\top} \bfM (\bfmu_k - \bfmu_{\ell})}
    + \norm{\bfA_\ell^{\top} \bfM (\bfmu_k - \bfmu_{\ell})}.    
\end{align*}
\end{remark}

\begin{remark}[Comparison with \cite{aminiConcentrationKernelMatrices2021}]
\label{remark:compare-Amini}
Amini and Razaee \cite{aminiConcentrationKernelMatrices2021} establish concentration bounds for general empirical kernel matrices $\bfK=[f(\bfx_i,\bfx_j)]_{i,j=1}^n$, where the kernel function $f:\bbr^p\times\bbr^p\to\bbr$ is symmetric and $L$-Lipschitz. To facilitate comparison, we specialize their result to the GMM setting, i.e.,  $\bfx_i\sim\mathrm{N}(\bfmu_k,\bfSigma_k)$ for $i\in\caC_k$. In this setting, \cite[Theorem 1]{aminiConcentrationKernelMatrices2021} yields
\begin{equation}
    \bbp \curls[\big]{
    \norm{\bfK - \bbe \bfK} 
    \leq C L \sigma_{\infty} n }
    \geq 1 - \exp (- c n),
    \label{eqn:amini-result}
\end{equation}
where $\sigma_{\infty}^2 = \max_{k \in \dbraks{m}} \norm{\bfSigma_k}$, and $c,C>0$ are universal constants. The two results use different deterministic reference matrices: \eqref{eqn:amini-result} controls the fluctuation of $\bfK$ around its expectation $\bbe\bfK$, whereas our analysis compares $\bfK$ with the blockwise-structured informative matrix $\barK$. Nevertheless, the two estimates play analogous roles at the operator-norm level, as both control the deviation of the empirical kernel matrix from an appropriate deterministic counterpart. Our analysis then exploits the specific spectral structure of $\barK$. Under suitable eigengap assumptions, we derive refined perturbation bounds for the spiked eigenvalues of $\bfK$ and row-wise perturbation bounds for the associated eigenspaces, thereby controlling the empirical spectral embedding beyond what follows directly from a global operator-norm estimate.

A further distinction arises in multi-scale settings. The result of \cite{aminiConcentrationKernelMatrices2021}  can be applied to the multi-kernel matrix \eqref{def:multi-Kernel} by taking $f (\bfx, \bfy) = \sum_{t=1}^T \alpha_t \exp (-\norm{\bfx - \bfy}^2 / \hbar_t)$. For this kernel, the Lipschitz constant is controlled by the smallest bandwidth scale. More precisely,
\begin{equation*}
    L^2 \asymp \max\nolimits_{t \in \dbraks{T}} (1 / \hbar_t)
    \asymp {1 / (\min\nolimits_{k, \ell \in \dbraks{m}} \theta_{k \ell})},
\end{equation*}
where the second comparison relation follows from Assumption \ref{assump:multi-scale} and Definition \ref{def:admissible-bandwidth}. Consequently, for the multi-kernel matrix \eqref{def:multi-Kernel}, the concentration bound \eqref{eqn:amini-result} from \cite{aminiConcentrationKernelMatrices2021} and our bound \eqref{eqn:eigenvalue-perturb-K} respectively give
\begin{equation}
    \frac{1}{n} \norm{\bfK - \bbe \bfK} 
    \lesssim \sqrt{\frac{\max_{k \in \dbraks{m}} \norm{\bfSigma_k}}
    {\min_{k, \ell \in \dbraks{m}} \theta_{k \ell}}}
    \qand
    \frac{1}{n} \norm{\bfK - \barK} 
    \lesssim n^{\varepsilon} \pars*{ \frac{1}{n} + 
    \max_{k, \ell \in \dbraks{m}} 
    \frac{\psi_{k \ell}}{\theta_{k \ell}} }
    \label{eqn:compare-us-amini}
\end{equation}
with high probability. Here the constant $\varepsilon>0$ in our bound can be chosen arbitrarily small. The distinction between these two rates is especially pronounced when the data exhibit genuinely different characteristic scales. The first estimate in \eqref{eqn:compare-us-amini} involves a cross-scale worst-case ratio: the largest covariance scale is compared with the smallest characteristic squared-distance scale. By contrast, our estimate first compares each fluctuation scale $\psi_{k\ell}$ with its corresponding distance scale $\theta_{k\ell}$ and only then takes the maximum over $(k,\ell)$. This scale-matched relative-error structure can yield substantially sharper perturbative control when the quantities $\theta_{k\ell}$ vary over several asymptotic orders.

For example, consider the four-cluster multi-scale model used in Figures \ref{fig:single-kernel-multi-scale} and \ref{fig:multi-kernel-multi-scale}, for which one admissible choice is $\hbar_1\asymp p$ and $\hbar_2\asymp1$. The two estimates in \eqref{eqn:compare-us-amini} then give
\begin{equation*}
    \frac{1}{n} \norm{\bfK - \bbe \bfK} 
    \lesssim 1
    \qand
    \frac{1}{n} \norm{\bfK - \barK} 
    \lesssim n^{\varepsilon} \pars*{ \frac{1}{n} + 
    \frac{1}{\sqrt{p}} }.
\end{equation*}
Thus, relative to their respective deterministic counterparts, the general Lipschitz bound remains of constant order, whereas our perturbation bound is asymptotically small whenever $n^\varepsilon \ll \sqrt{p}$. This improvement stems from the scale-sensitive expansion of $\bfK$, which preserves the pairwise correspondence between the fluctuation and distance scales. The general theory of \cite{aminiConcentrationKernelMatrices2021}, by design, applies to a broader class of Lipschitz kernel matrices and does not exploit this additional multi-scale structure.
\end{remark}

\subsection{Perturbation bounds for normalized Laplacian matrices}
\label{subsec:ell-inf-Laplacian}

In this section, we establish the corresponding perturbation results for the normalized Laplacian matrix
\begin{equation}
    \bfL = n \bfD^{-1/2} \bfK \bfD^{-1/2},
    \qwhere
    \bfD = \diag (D_{11}, \cdots, D_{nn})
    \quad \text{ with } \quad
    D_{ii} = \sum\nolimits_{j = 1}^n K_{ij}.
    \label{def:laplacian}
\end{equation}
Here, $\bfK$ is the multi-kernel matrix defined in \eqref{def:multi-Kernel}. We note that the conventional symmetric normalized Laplacian is $\bfI - \bfD^{-1/2}\bfK\bfD^{-1/2}$. Here we instead work with the rescaled normalized affinity matrix $\bfL$ in \eqref{def:laplacian}. The two matrices share the same eigenvectors, and their eigenvalues are related through a simple affine transformation induced by the rescaling and shift inherent in the conventional definition. Thus, the leading eigenvectors of $\bfL$ correspond to the eigenvectors associated with the smallest eigenvalues of the conventional normalized Laplacian. Moreover, the prefactor $n$ places the informative eigenvalues of $\bfL$ on the same scale as those of $\bfK$, facilitating a direct comparison of their perturbation bounds. With a slight abuse of terminology, we continue to refer to $\bfL$ as the normalized Laplacian. Denote the spectral decomposition of $\bfL$ by
\begin{equation}
    \bfL
    = \bfV \bfGamma \bfV^\top
    = \sum\nolimits_{i=1}^n \gamma_i \bfv_i \bfv_i^\top,
    \qwhere 
    \bfGamma=\diag(\gamma_1,\dots,\gamma_n),
    \label{eqn:spectral-decomp-L}
\end{equation}
with the eigenvalues arranged in nonincreasing order. In parallel with the analysis of the multi-kernel matrix $\bfK$ in Section \ref{subsec:ell-inf-kernel}, we introduce the informative component of $\bfL$ through the normalized Laplacian associated with the informative kernel component $\barK$. Specifically, let
\begin{equation}
    \barL
    := n \barD^{-1/2} \barK \barD^{-1/2},
    \qwhere
    \barD = \diag (\bar{D}_{1}, \cdots, \bar{D}_{n})
    \qwith
    \bar{D}_{i}
    = \sum\nolimits_{j=1}^n \bar{K}_{ij}.
    \label{def:barL-barD}
\end{equation}
Recall the blockwise constant structure of $\barK$ described in \eqref{eqn:blockwise-K}. Its Laplacian counterpart $\barL$ inherits the same structure and is therefore also low-rank, with $\rank(\barL) \leq m$. To make this explicit, define
\begin{equation*}
    \fkD = \diag (\fkD_1, \cdots, \fkD_m)
    \qwhere \fkD_k = \sum\nolimits_{\ell = 1}^m 
    \sqrt{n_{\ell} / n_k} \, \fkK_{k \ell}.
\end{equation*}
Then, for every $i \in \caC_k$, we have $\bar{D}_{i} = \fkD_k$. Hence the blockwise constant structure of $\barL$ can be written as 
\begin{equation*}
    \barL = \bfPhi \fkL \bfPhi^\top,
    \qwhere
    \fkL = n \fkD^{-1/2} \fkK \fkD^{-1/2},
\end{equation*}
and $\bfPhi$ is defined in \eqref{def:phi-mat-vec}. Denote the spectral decomposition of $\fkL \in \bbr^{m \times m}$ by
\begin{equation}
    \fkL = \fkV \barGamma \fkV^\top,
    \qwhere \barGamma = \diag (\bargamma_1, \cdots, \bargamma_m)
    \qand
    \fkV = [\fkV_{k \ell}]_{k,\ell=1}^m.
\end{equation}
It follows that the informative component $\barL$ admits the spectral decomposition
\begin{equation}
    \barL
    = \barV \barGamma \barV^\top
    = \sum\nolimits_{\ell=1}^m
    \bargamma_{\ell} \barv_{\ell} \barv_{\ell}^\top,
    \qwhere
    \barv_{\ell} (i) = \frac{1}{\sqrt{n_k}} \fkV_{k \ell},
    \qfor i \in \caC_k.
    \label{eqn:barv-piecewise}
\end{equation}
This representation also makes explicit the piecewise constant structure of the eigenvectors $\barv_{\ell}$.

We now present perturbation results by viewing $\bfL$ as a stochastic perturbation of its informative component $\barL$. As before, we focus on the leading $r$ principal components,
\begin{equation*}
    \bfGamma_r = \diag(\gamma_{1}, \cdots, \gamma_{r}),
    \qquad
    \bfV_{r} = \braks{\bfv_{1}, \cdots, \bfv_{r}},
    \qquad
    \barGamma_r = \diag(\bargamma_{1}, \cdots, \bargamma_{r})
    \qquad   
    \barV_{r} = \braks{\barv_{1}, \cdots, \barv_{r}},.
\end{equation*}
Most of the technical assumptions required for the Laplacian perturbation results are identical to those imposed for the kernel counterpart in Section \ref{subsec:ell-inf-kernel}. The only difference concerns Assumption \ref{assump:signal}: instead of requiring a lower bound on the eigengap of $\barK$, we impose the corresponding condition on the eigengap of $\barL$. Thus, in accordance with \eqref{def:eigengap-kernel}, we define the $r$-th effective eigengap of $\barL$ by
\begin{equation}
    \Delta_r (\barGamma)
    := (\bargamma_{r} - \bargamma_{r+1}) \wedge \bargamma_{r}.
    \label{def:eigengap-Laplacian}
\end{equation}
We then assume the following Laplacian counterpart of Assumption \ref{assump:signal}.

\begin{assumptionprime}{assump:signal}{eigengap}
\label{assump:eigengap-Laplacian}
There exists a (small) constant $c_{\ref{assump:signal}} > 0$ such that
\begin{equation}
    n^3 \rho^2 / \Delta_r (\barGamma)^3 
    \leq n^{-2 c_{\ref{assump:signal}}}
    \qand
    \Delta_r (\barGamma) 
    \geq n^{2 c_{\ref{assump:signal}}}.
\end{equation}
\end{assumptionprime}


Under this assumption, we obtain the following perturbation results for the principal components of the normalized Laplacian $\bfL$. These results are parallel to Theorems \ref{thm:eval-bfK-barK} and \ref{thm:Linfty-evec-bfK-barK}. As in those theorems, we work with the admissible bandwidths $\{ \hbar_t \}_{t = 1}^T$. The resulting rates are of the same form as their kernel counterparts, with the eigengap $\Delta_r(\barLamb)$ replaced by $\Delta_r(\barGamma)$. The discussions in Remarks \ref{remark:ell-2}, \ref{remark:optimal-eigengap}, and \ref{remark:general-distance} transfer directly to the Laplacian setting, and hence are not repeated here.

\begin{theorem}[eigenvalue perturbation]
\label{thm:eval-bfL-barL}
Suppose that $\{ \hbar_t \}_{t=1}^T$ is a collection of admissible bandwidths. Let $\bfL$ be defined as in \eqref{def:laplacian}, where $\bfK$ is given by \eqref{def:multi-Kernel} with $h_t=\hbar_t$. Let $\barL$ be defined as in \eqref{def:barL-barD}, where $\barK$ is given by \eqref{def:bar-Kij}. Then, under Assumptions \ref{assump:multi-scale}, \ref{assump:concentration}, \ref{assump:eigengap-Laplacian}, and \ref{assump:tech-const}, we have
\begin{equation}
    \norm{\bfL - \barL}
    \prec 1 + n \rho
    \qand
    \norm{\bfGamma_r - \barGamma_r}
    \prec 1 + \sqrt{n} \phi
    + \frac{n^2 \rho^2}{\Delta_r (\barGamma)}. 
\end{equation}
\end{theorem}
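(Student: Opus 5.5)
The plan is to reduce the Laplacian statement to the kernel-level estimates behind Theorem~\ref{thm:eval-bfK-barK}, with the degree normalization handled as an additional, controllable perturbation. First we show that the degrees concentrate. Admissibility gives $\hbar_t \gtrsim \theta_{k\ell}$ for $t\in\caS_{k\ell}(0)$, and $\alpha_t\ge c_{\ref{assump:tech-const}}$, $n_k\asymp n$. Hence every diagonal block of $\barK$ has entries $\asymp 1$, since $\theta_{kk}$ is comparable to some bandwidth. It follows that $\fkD_k \asymp n$ and $\barD \asymp n\bfI$. Write $D_{ii}-\bar D_i = \sum_j (K_{ij}-\bar K_{ij}) = \bfe_i^\top(\bfK-\barK)\bfone$. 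The entrywise expansion of the kernel used in the proof of Theorem~\ref{thm:eval-bfK-barK} (Lemma~\ref{lemma:xi-control} and the residual decomposition of Section~\ref{subsec:norm-bound-residual}) splits $K_{ij}-\bar K_{ij}$ into three parts:
\begin{itemize}
\item a first-order term linear in the centered fluctuation $\xi_{ij}:=\norm{\bfx_i-\bfx_j}^2-\theta_{k\ell}$;
\item a second-order term of size $O_\prec(\rho^2)$;
\item the diagonal and $\caS_{k\ell}(1)$ contributions, which are $O_\prec(\rho^2)$ by Assumption~\ref{assump:multi-scale}.
\end{itemize}
Summing the first-order part over $j$ averages out the terms depending on $\bfz_j$. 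What remains is a part depending only on $\bfz_i$, of size $n\rho$, together with a mean-zero average of size $\sqrt n\rho$. This gives $\abs{D_{ii}-\bar D_i}\prec 1+n\rho$ uniformly in $i$, so $\norm{\bfD\barD^{-1}-\bfI}\prec \rho+n^{-1}\le n^{-c}$.

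Next, for the operator-norm bound, decompose
\[
\bfL-\barL = n\bfD^{-1/2}(\bfK-\barK)\bfD^{-1/2} + n\pars{\bfD^{-1/2}\barK\bfD^{-1/2}-\barD^{-1/2}\barK\barD^{-1/2}}.
\]
The first term is $\prec 1+n\rho$ by Theorem~\ref{thm:eval-bfK-barK} and $\bfD\asymp n$. For the second term, write $\bfD^{-1/2}=\barD^{-1/2}(\bfI+\bfE)$ with $\bfE$ diagonal and $\norm{\bfE}\prec\rho+n^{-1}$. Combined with $\norm{\barK}\lesssim n$, this term is $\prec n(\rho+n^{-1})=1+n\rho$. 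This proves the first estimate.

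For the spiked eigenvalues, the crude bound via Weyl's inequality is insufficient, so we mimic the averaging argument of Theorem~\ref{thm:eval-bfK-barK}. Its input is the quadratic-form control
\[
\norm{\barV_r^\top(\bfL-\barL)\barV_r}\prec 1+\sqrt n\phi
\]
together with $\norm{\barV_r^\top(\bfL-\barL)}\prec 1+n\rho$ type bounds. These feed the resolvent/Schur-complement (or Neumann series) argument that yields the correction $n^2\rho^2/\Delta_r(\barGamma)$. Because $\barV_r=\bfPhi\fkV_r$, it suffices to control $\bfone_{\caC_k}^\top(\bfL-\barL)\bfone_{\caC_\ell}/n$. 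Expanding $\bfD^{-1/2}$ to first order, $\bfE_{ii}\approx-(D_{ii}-\bar D_i)/(2\fkD_k)$, this reduces to $\bfone_{\caC_k}^\top(\bfK-\barK)\bfone_{\caC_\ell}$, which is already bounded in the kernel proof by $1+\sqrt n\phi$ up to $n\rho^2$ terms. The remaining contributions are of two kinds. There are linear terms $\sum_{i\in\caC_k}\bfE_{ii}\,(\barK\bfone_{\caC_\ell})_i$; since $\barK\bfone_{\caC_\ell}$ is constant on clusters, these are again block sums of $\bfK-\barK$. There are also quadratic terms in $\bfE$ and cross terms $\bfE(\bfK-\barK)$, bounded by $n\rho^2+\rho\cdot(1+n\rho)\lesssim 1+n^2\rho^2/\Delta_r(\barGamma)$ using $\Delta_r(\barGamma)\lesssim n$. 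Once these bounds hold, the perturbation lemma from the kernel proof applies verbatim with $\bfK,\barK,\barLamb$ replaced by $\bfL,\barL,\barGamma$, using Assumption~\ref{assump:eigengap-Laplacian} in place of Assumption~\ref{assump:signal}.

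The main obstacle is the cross term $\bfone_{\caC_k}^\top\bfE(\bfK-\barK)\bfone_{\caC_\ell}$. Here $\bfE$ and $\bfK-\barK$ share the same randomness, and the naive bound $n\rho\cdot\rho\cdot n$ normalized by $n$ gives $n\rho^2$. I would first try to isolate the part of $D_{ii}-\bar D_i$ that depends only on $\bfz_i$, namely the dominant $n\rho$ piece. I would then show that its correlation with the row sums of $(\bfK-\barK)$ restricted to $\caC_\ell$ produces at most $n\rho^2\lesssim n^2\rho^2/\Delta_r(\barGamma)$, which is acceptable. Failing that, I would bound this term by the $\norm{\cdot}_{2,\infty}$ estimates already established for the kernel residual.
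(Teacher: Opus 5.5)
Your argument is correct, but it is organized differently from the paper's.

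\textbf{What the paper does.} The paper never runs the resolvent localization on $\bfL$ itself. It proceeds in four steps:
\begin{enumerate}
\item It inserts the intermediate matrix $\ddotL=n\ddotD^{-1/2}\ddotK\ddotD^{-1/2}$.
\item It proves $\norm{\ddotL-\barL}\prec n\rho$ and $\norm{\bfL-\ddotL}\prec 1+\sqrt n\phi+n\rho^2$. The second bound needs the refined degree estimate $\norm{\bfD-\ddotD}\prec 1+n\rho^2$, which comes from Bernstein-averaging $\sum_{j\neq i}Q_{ij}W_{ij}$ down to size $\sqrt n\rho$.
\item It applies the Rouch\'e/resolvent argument only to $\ddotL$, with input $\norm{\barV^\top(\ddotL-\barL)\barV}\prec\sqrt n\rho+n\rho^2$ (Proposition~\ref{prop:spike-L}).
\item It adds $\norm{\bfGamma_r-\ddotGamma_r}\le\norm{\bfL-\ddotL}$ by Weyl's inequality.
\end{enumerate}

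\textbf{What you do instead.} You run the localization once, for $\bfL$ against $\barL$, with $\delta(x)=(1+\sqrt n\phi+n\rho^2)+(1+n\rho)^2/x$.
\begin{itemize}
\item Under Assumption~\ref{assump:eigengap-Laplacian} one still has $x\gtrsim\Delta_r(\barGamma)\gtrsim n^{4\varepsilon}\delta(x)$ on the relevant region.
\item Since $\Delta_r(\barGamma)\lesssim n$, you get $\delta(\Delta_r(\barGamma))\lesssim 1+\sqrt n\phi+n^2\rho^2/\Delta_r(\barGamma)$, which is the stated bound.
\end{itemize}

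\textbf{What each route buys.}
\begin{itemize}
\item Your route needs only the crude degree bound $\norm{\bfD-\barD}\prec 1+n\rho$ plus block sums of $\bfK-\barK$, and never builds $\ddotL$.
\item The paper's route is longer for the eigenvalues alone. However, $\ddotL$, $\ddotGamma_r$, $\ddotV_r$ and Proposition~\ref{prop:spike-L} are needed anyway for the $\ell_{2,\infty}$ bounds of Proposition~\ref{prop:L-eigv-two-step}, so there the eigenvalue theorem comes as a by-product.
\item Your route is also potentially sharper. Since $\bfL-\ddotL$ never passes through Weyl, $\sqrt n\phi$ enters only through block sums of $\bfQ\circ\caH(\bfW)$. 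If you bound those directly with Lemma~\ref{lemma:long-concentration} (the associated quadratic form has trace zero and Frobenius norm $\lesssim n\rho$), rather than through $\norm{\bfK-\ddotK}$, the $\sqrt n\phi$ term drops out of the eigenvalue bound.
\end{itemize}

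\textbf{Two smaller remarks.}
\begin{itemize}
\item The cross term you single out as the main obstacle is not one. For $\bfs_a=\barD^{-1/2}\barv_a$, Cauchy--Schwarz gives $n\abs{\bfs_1^\top\mathbf{E}(\bfK-\barK)\bfs_2}\prec(\rho+n^{-1})(1+n\rho)\lesssim 1+n^2\rho^2/\Delta_r(\barGamma)$. This is exactly the bound you wrote one sentence earlier, so no decorrelation of $\mathbf{E}$ from $\bfK-\barK$ is needed.
\item Theorem~\ref{thm:eval-bfK-barK} and the kernel lemmas you borrow are stated under Assumption~\ref{assump:signal}. The estimates you actually use only require $\rho\le n^{-c}$, which Assumption~\ref{assump:eigengap-Laplacian} supplies because $\Delta_r(\barGamma)\lesssim n$. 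The paper tacitly makes the same transfer.
\end{itemize}
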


\begin{theorem}[$\ell_{2, \infty}$ eigenvector perturbation]
\label{thm:Linfty-evec-bfL-barL}
Let $\fkS_r^{\bfv} := \sgn(\bfV_r^\top \barV_r)$. Under the same setup as in Theorem \ref{thm:eval-bfL-barL},
\begin{subequations}
\begin{align}
    \norm{\bfV_r \fkS_r^{\bfv} - \barV_r}_{2,\infty}
    & \prec 
    \frac{\sqrt{n} \rho}{\Delta_r (\barGamma)} 
    + \frac{\sqrt{n} + n \phi}{\Delta_r (\barGamma)^2} 
    + \frac{n^{5/2} \rho^2}{\Delta_r (\barGamma)^3},
    \label{eqn:rowwise-L-without-eval} \\
    \norm{\bfV_r \bfGamma_r^{1/2} \fkS_r^{\bfv}
    - \barV_r \barGamma_r^{1/2} }_{2,\infty}
    & \prec
    \frac{\sqrt{n} \rho}{\Delta_r (\barGamma)^{1/2}} 
    + \frac{n + n^{3/2} \phi}{\Delta_r (\barGamma)^2} 
    + \frac{n^3 \rho^2}{\Delta_r (\barGamma)^3}
    + \frac{n^{3/2}}{\Delta_r (\barGamma)^{9/2}}.
    \label{eqn:rowwise-L-with-eval}
\end{align}    
\end{subequations}
\end{theorem}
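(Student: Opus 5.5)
The plan is to reduce Theorem \ref{thm:Linfty-evec-bfL-barL} to the same machinery used for Theorem \ref{thm:Linfty-evec-bfK-barK}. I will write $\bfL$ as $\barL$ plus a perturbation whose structure mirrors that of $\bfK-\barK$: a leading part that is an entrywise weighted copy of $\bfK-\barK$, plus lower-order terms coming from the degree fluctuations.

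\textbf{Step 1: expand the degrees.} Write $\bfD = \barD + \bfD_\Delta$, where
\begin{equation*}
    (\bfD_\Delta)_{ii} = \sum\nolimits_j (K_{ij}-\bar K_{ij}).
\end{equation*}
Since $\bar D_i = \fkD_k \asymp n$ by Assumption \ref{assump:tech-const} and the boundedness of $\barK$, the entrywise concentration of $\bfK-\barK$ (Lemma \ref{lemma:xi-control} and the residual bounds of Section \ref{subsec:norm-bound-residual}) gives two things. The mean part of each row sum is of order $n\rho\cdot$(relative error), and the centered part is of order $\sqrt n$ times the Hanson--Wright fluctuations. Hence $\max_i |(\bfD_\Delta)_{ii}|/\bar D_i \prec \rho + n^{-1/2}$.

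Taylor expanding $\bfD^{-1/2} = \barD^{-1/2} - \tfrac12 \barD^{-3/2}\bfD_\Delta + O(\barD^{-5/2}\bfD_\Delta^2)$ yields
\begin{equation*}
    \bfL - \barL = n\barD^{-1/2}(\bfK-\barK)\barD^{-1/2} - \tfrac12\pars[\big]{\barD^{-1}\bfD_\Delta \barL + \barL\bfD_\Delta\barD^{-1}} + \mathcal{E},
\end{equation*}
where $\mathcal{E}$ collects the quadratic terms.

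\textbf{Step 2: bound each piece.} The first term is $\bfK-\barK$ conjugated by a blockwise-constant diagonal matrix of order-one entries ($n/\fkD_k\asymp 1$). Every estimate proved for $\bfK-\barK$ therefore transfers verbatim: the operator norm bound, the quadratic forms against $\barU_r$ replaced by $\barV_r$ (delocalized, piecewise constant), and the row-wise bounds of the form $\norm{(\bfK-\barK)\barV_r}_{2,\infty}$ and $\norm{(\bfK-\barK)^q\barV_r}_{2,\infty}$. The degree term has the form (diagonal)$\times$(rank-$m$ matrix). Its operator norm is $\prec n(\rho+n^{-1/2})$, and its $\ell_{2,\infty}$ action on $\barV_r$ is bounded by $\max_i|(\bfD_\Delta)_{ii}|/n\cdot\norm{\barL\barV_r}_{2,\infty}\prec(\rho+n^{-1/2})\Delta\cdot n^{-1/2}$. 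Projected quadratic forms $\barV_r^\top(\cdot)\barV_r$ average the centered row sums and gain a factor $n^{-1/2}$, which produces $\sqrt n\phi$-type terms as in Theorem \ref{thm:eval-bfK-barK}. The term $\mathcal{E}$ is of order $n\rho^2+1$ in operator norm, which is absorbed into $n^2\rho^2/\Delta_r(\barGamma)$.

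\textbf{Step 3: repeat the argument for $\bfK$.} With the perturbation $\bfW:=\bfL-\barL$ satisfying the same list of inputs (operator norm, projected forms, row-wise moment bounds), I rerun the proof of Theorems \ref{thm:eval-bfK-barK}--\ref{thm:Linfty-evec-bfK-barK} with $\Delta_r(\barLamb)$ replaced by $\Delta_r(\barGamma)$. Concretely, I use the identity $\bfV_r = \bfL\bfV_r\bfGamma_r^{-1}$ and the resolvent/Neumann expansion
\begin{equation*}
    \bfV_r\bfGamma_r = \barL\bfV_r + \bfW\bfV_r.
\end{equation*}
From this I expand $\bfV_r\fkS_r^{\bfv} - \barV_r$ into the first-order term $\bfW\barV_r\barGamma_r^{-1}$ plus higher-order terms. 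The first-order term is controlled in $\ell_{2,\infty}$ by leave-one-out/two-stage comparison, and the higher-order terms by the operator norm. The eigenvalue-weighted bound \eqref{eqn:rowwise-L-with-eval} follows by combining the bound on $\bfV_r\fkS_r^{\bfv} - \barV_r$ with Theorem \ref{thm:eval-bfL-barL}, using $\bfGamma_r^{1/2}-\barGamma_r^{1/2}$ controlled via $\Delta_r(\barGamma)^{-1/2}$ and the commutation $\fkS_r^{\bfv}\bfGamma_r^{1/2}\approx\barGamma_r^{1/2}\fkS_r^{\bfv}$, whose error is bounded by $\Delta_r(\barGamma)^{-1}$ times the eigenvalue error.

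\textbf{Main obstacle.} The hardest step is the row-wise control of the degree-correction term inside the leave-one-out argument. The quantity $(\bfD_\Delta)_{ii}$ depends on $\bfx_i$ and on all other data points, so the independence structure used for $\bfK$ is broken. I will first decompose $(\bfD_\Delta)_{ii}$ into its conditional expectation given $\bfx_i$, which is a function of $\bfx_i$ alone with fluctuation of order $n\rho$, and a centered remainder of order $\sqrt n$ times small factors. Each piece is then handled by a separate comparison. If this dependence becomes unwieldy, the fallback is to treat the whole degree term crudely in $\ell_{2,\infty}$, using $\norm{\barL}_{2,\infty}\lesssim\sqrt n$ times $\max_i|(\bfD_\Delta)_{ii}|/n$; this crude bound already fits within the stated rates.
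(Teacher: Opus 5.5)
Your route differs from the paper's. You linearize $\bfD^{-1/2}$ around $\barD$ and treat $\bfL-\barL$ in a single stage. The paper instead inserts $\ddotL=n\ddotD^{-1/2}\ddotK\ddotD^{-1/2}$ and reruns the two-stage kernel argument. That intermediate matrix exists precisely because of what your Step 2 glosses over: the refined inputs you say ``transfer verbatim'' are proved in the paper only for $\ddotK-\barK$, which is linear in $\zeta_i,\Upsilon_{ij}$. These are the projected bilinear forms (Lemma \ref{lemma:u-dotK-barK-u}), the contour bound for $\barK(\ddotU_r\ddotfkQ_r-\barU_r)$ (Lemma \ref{lemma:inner-prod-dotU-barU}) and the eigenvalue localization (Proposition \ref{prop:dotlamb-barlamb}). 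For the full $\bfK-\barK$ only the operator norm is available. The remainder $\bfK-\ddotK$, which contains $\bfQ\circ\caH(\bfW)$, the Taylor remainders and the diagonal, enters only through $\norm{\bfK-\ddotK}\prec1+\sqrt n\phi+n\rho^2$ and row-wise bounds. A single-stage proof would have to establish these estimates for the whole residual, e.g.\ bilinear forms of $\bfQ\circ\caH(\bfW)$ via \eqref{eqn:long-hanson-wright}, and contour arguments with $\norm{\bfK-\barK}\prec1+n\rho$. That is plausible but is new work. Row-wise bounds for powers $(\bfK-\barK)^q\barV_r$, or a leave-one-out scheme, are neither available nor needed.

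Your Step 3 also controls the higher-order terms ``by the operator norm'', which is too weak. Bounding $(\bfL-\barL)(\bfV_r\fkQ_r-\barV_r)\bfGamma_r^{-1}$ that way gives $(1+n\rho)^2/\Delta_r(\barGamma)^2$; for $\Delta_r(\barGamma)\asymp n$ and $\rho\gg n^{-1/2}$ this is about $\rho^2$, which exceeds every target term. Bounding $\barL(\bfV_r\fkQ_r-\barV_r)\bfGamma_r^{-1}$ by $\norm{\barL}_{2,\infty}\norm{\bfV_r\fkQ_r-\barV_r}/\Delta_r(\barGamma)$ loses a factor $n/\Delta_r(\barGamma)$ against the leading term $\sqrt n\rho/\Delta_r(\barGamma)$. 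The paper uses $\norm{\ddotL-\barL}_{2,\infty}\prec\sqrt n\rho$ for the former and the contour refinement of Lemma \ref{lemma:inner-prod-dotL-barL} for the latter. Conversely, your ``main obstacle'' is not one. Degree corrections are diagonal, so their row-wise action is bounded by the largest diagonal entry, as for $\mathrm{VII}^1$--$\mathrm{VII}^3$ in the proof of Proposition \ref{prop:L-eigv-two-step}. Degree fluctuations need genuine averaging only in the projected forms (Lemma \ref{lemma:v-hatL-barL-v}), via $\angles{\barv_k,(\ddotD-\barD)\barv_{**\ell}}=\angles{\barv_k\circ\barv_{**\ell},(\ddotK-\barK)\bfone}$.

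Two quantitative steps also fail. First, your degree bound $\max_i\abs{(\bfD_\Delta)_{ii}}/\bar D_i\prec\rho+n^{-1/2}$ is too weak. The centered part $\sum_{j\neq i}Q_{ij}W_{ij}$ is only $\prec\sqrt n\rho$ and the self-loop discrepancy is $O(1)$, so the correct bound is $\norm{\bfD-\barD}\prec1+n\rho$ (Lemma \ref{lemma:degree}). With your version, already $\barD^{-1}\bfD_\Delta\barV_r$ is bounded only by $1/n$ row-wise. That exceeds the target $\rho/\sqrt n+n^{-3/2}+\phi/n$ when $\Delta_r(\barGamma)\asymp n$, $\rho\ll n^{-1/2}$ and $\phi\ll1$, so your crude fallback does not fit the stated rates; it does with $\rho+1/n$.

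Second, \eqref{eqn:rowwise-L-with-eval} does not follow by multiplying \eqref{eqn:rowwise-L-without-eval} by $\norm{\barGamma_r^{1/2}}$. Since $\barD^{1/2}\bfone$ is the Perron eigenvector of $\barL$ with eigenvalue $n$, we have $\norm{\barGamma_r^{1/2}}=\sqrt n$. The leading term then becomes $n\rho/\Delta_r(\barGamma)$, larger than $\sqrt n\rho/\Delta_r(\barGamma)^{1/2}$ by a factor $\sqrt{n/\Delta_r(\barGamma)}$. The paper's decomposition \eqref{eqn:decomp-ULambda-barULambda} avoids this in two moves:
\begin{enumerate}
\item It first peels off the first-order term, which after weighting becomes $(\ddotL-\barL)\barV_r\barGamma_r^{-1/2}$ with $\norm{\barGamma_r^{-1/2}}\le\Delta_r(\barGamma)^{-1/2}$.
\item It multiplies by $\sqrt n$ only the second-order remainder $\bfV_r\fkS_r-\ddotL\barV_r\barGamma_r^{-1}$ (cf.\ \eqref{tmp:bfUS}).
\end{enumerate}
Likewise, the square-root commutator is controlled by $\Delta_r(\barGamma)^{-1/2}\norm{\bfGamma_r\fkS_r-\fkS_r\barGamma_r}$. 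That quantity involves $\bfV_r^\top(\bfL-\barL)\barV_r$, not merely the eigenvalue error divided by $\Delta_r(\barGamma)$.
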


\section{Multi-kernel spectral clustering}
\label{sec:KSC-algorithm}

In this section, we discuss how the perturbation results established in Section \ref{sec:main-res} can be used to guide theory-informed implementations of the multi-KSC algorithm. When spectral embeddings are constructed from the multi-kernel matrix $\bfK$ in \eqref{def:multi-Kernel} or its associated normalized Laplacian $\bfL$ in \eqref{def:laplacian}, three practical choices arise: the bandwidths $h_t$, the kernel weights $\alpha_t$, and the embedding dimension $r$.

The first issue, and our main motivation for incorporating the MKL framework, is the choice of the bandwidth parameters $h_t$. Our results in Section \ref{sec:main-res} suggest that the selected bandwidths should satisfy the admissibility condition introduced in Definition \ref{def:admissible-bandwidth}, which ensures that the information associated with each characteristic scale $\vartheta_s$ is effectively utilized. In practice, however, these characteristic scales are unknown and must be inferred from the data. We therefore seek data-driven bandwidths that behave like admissible bandwidths while preserving the applicability of the perturbation results in Section \ref{sec:main-res} despite their data dependence.

A natural starting point is the common practice in kernel methods of choosing the bandwidth as an empirical quantile of the pairwise distances. The median heuristic is the best-known example; see \cite{garreauLargeSampleAnalysis2018} for a broader discussion of this heuristic and its use in the kernel-method literature. For kernel spectral embeddings of high-dimensional noisy data, Ding and Ma \cite{dingLearningLowdimensionalNonlinear2023} provided a theoretical justification for such a quantile-based choice in the setting of a single underlying distribution and a single data-adaptive bandwidth. We extend this principle to mixture distributions with potentially multiple characteristic scales. Rather than selecting one global bandwidth, we use several empirical quantiles of the pairwise squared distances to construct a collection of bandwidths; see \eqref{eqn:bandwidth-selection}. The goal is to ensure that every characteristic scale is captured by at least one kernel component. We then show that these data-adaptive bandwidths satisfy a suitable analogue of the admissibility condition with high probability, thereby allowing the perturbation results in Section \ref{sec:main-res} to remain applicable.

The second issue is the choice of the kernel weights $\alpha_t$. These weights modify the population-level affinities encoded by the informative matrix $\barK$ and may therefore affect cluster separation in the spectral embedding. In this work, we do not optimize $\alpha_t$. Instead, both the algorithm and the perturbation theory are developed for general prescribed non-degenerate weights, including simple practical choices such as uniform weighting. Data-adaptive weight selection has been studied extensively in the MKL literature \cite{bachMultipleKernelLearning2004,cortesAlgorithmsLearningKernels2012,kloftLpNormMultipleKernel2011,rakotomamonjySimpleMKL2008}. Extending the present framework to optimized, data-dependent weights is left for future work.

Finally, we consider the embedding dimension $r$. The natural population-level choice is $r=m$, which retains all informative directions of $\barK$. More generally, the most suitable value of $r$ for the clustering task may depend on the spectral structure of the constructed kernel matrix. Since the primary focus of this work is bandwidth selection for multi-scale data, we treat $r$ as a prespecified input rather than incorporating a data-adaptive dimension-selection step into the proposed procedure. Nevertheless, when a data-adaptive choice is desired, one may select $r$ according to the empirical spectral structure, for example through the commonly used maximum-gap or maximum-ratio rules
\begin{equation*}
    r = 
    \arg \max\nolimits_{1 \leq \ell \leq m}
    (\lambda_\ell-\lambda_{\ell+1})
    \quad \text{ or } \quad
    r = 
    \arg \max\nolimits_{1 \leq \ell \leq m}
    (\lambda_\ell / \lambda_{\ell+1}).
\end{equation*}
A systematic analysis of such dimension-selection procedures is beyond the scope of this work. To summarize, the multi-KSC procedure analyzed in this manuscript is given in Algorithm \ref{alg:multi-ksc} below.

\begin{algorithm}[htb]
\caption{Multi-kernel spectral clustering}
\label{alg:multi-ksc}
\KwIn{Dataset $\{ \bfx_i \}_{i=1}^n$; kernel weights $\{ \alpha_t \}_{t=1}^T$ satisfying $\sum_{t=1}^T \alpha_t = 1$; quantile levels $\{ \omega_t \}_{t=1}^T \subset (0,1)$; number of clusters $m$; embedding dimension $r$; $K$-means approximation parameter $\varepsilon_* > 0$}

\textbf{Step 1 (bandwidth selection)}: Compute the squared distances $\norm{\bfx_i-\bfx_j}^2$ for all $1 \leq i < j \leq n$, and define the associated empirical cumulative distribution function
\begin{equation}
    G (h) = \frac{2}{n(n-1)}
    \sum\nolimits_{1\le i<j\le n} \bbone {\{ \norm{\bfx_i-\bfx_j}^2 \leq h \}}.
    \label{def:CDF-dist}
\end{equation}
Define the bandwidths ${h}_t$ as the empirical quantiles of $G$, 
\begin{equation}
    {h}_t = G^{-1} (\omega_t)
    = \inf \{ h: G(h) \geq \omega_t \}
    \qfor t \in \dbraks{T}.
    \label{eqn:bandwidth-selection}
\end{equation}

\textbf{Step 2 (kernel construction)}: Construct the multi-kernel matrix
\begin{equation*}
    \bfK = [K_{ij}]_{i,j=1}^n,
    \qwith
    K_{ij}
    = (1 - \delta_{ij}) \sum\nolimits_{t=1}^T \alpha_t
    \exp \pars[\big]{ - \norm{\bfx_i-\bfx_j}^2 / {h}_t }.
\end{equation*}

\textbf{Step 3 (spectral embedding)}: Compute the spectral decomposition
\begin{equation*}
    \bfK
    = \sum\nolimits_{\ell=1}^n
    \lambda_\ell \bfu_\ell \bfu_\ell^\top,
    \qwhere
    \lambda_1 \geq \cdots \geq \lambda_n.
\end{equation*} 
Define the spectral embedding $\{\fkx_i\}_{i=1}^n$ by
\begin{equation}
    \fkx_i = \braks[\big]{\sqrt{\lambda_1} \bfu_1 (i), \cdots, \sqrt{\lambda_{r}} \bfu_{r} (i)},
    \qfor i \in \dbraks{n}.
    \label{def:fkx-embedding}
\end{equation}

\textbf{Step 4 (clustering)}: Apply the $(1 + \varepsilon_*)$-approximate $K$-means algorithm with $m$ clusters to $\{\fkx_i\}_{i=1}^n$.

\KwOut{Estimated cluster membership $\hat{\pi} : \dbraks{n} \to \dbraks{m}$.}
\end{algorithm}

In the final step of Algorithm \ref{alg:multi-ksc}, we use the $(1+\varepsilon_*)$-approximate $K$-means procedure. Given a data set $\{\fkx_i\}_{i=1}^n \subset \bbr^r$, this procedure seeks a partition of the data into $m$ groups by approximately minimizing the within-cluster sum of squares. More precisely, it returns estimated cluster centers $\{\hat{\fka}_k\}_{k=1}^m \subset \bbr^r$ together with an estimated label map $\hat{\pi}: \dbraks{n} \to \dbraks{m}$ such that
\begin{equation*}
    \sum\nolimits_{i=1}^{n} \norm{\fkx_i - \hat{\fka}_{\hat{\pi} (i)}}^2
    \leq (1 + \varepsilon_*) \min\nolimits_{\{ \fka_k \}, \pi}
    \pars[\Big]{\sum\nolimits_{i=1}^{n} \norm{\fkx_i - {\fka}_{{\pi} (i)}}^2},
\end{equation*}
where the minimum is taken over all collections of centers $\{\fka_k\}_{k=1}^m \subset \bbr^r$ and all label maps $\pi:\dbraks{n}\to\dbraks{m}$. We adopt this approximate formulation since exact minimization of the $K$-means objective is computationally intractable in general, whereas efficient approximation algorithms are available. The approximate formulation is also sufficient for our theoretical purposes, since in the present setting the spectral embeddings $\{\fkx_i\}_{i=1}^n$ concentrate sharply around their population centers; see Section \ref{subsec:misclass-rate} for the corresponding argument. In the KSC setting, the points $\{\fkx_i\}_{i=1}^n$ are the spectral embeddings constructed from the multi-kernel matrix $\bfK$ or its associated normalized Laplacian $\bfL$. The objective is for the estimated label map $\hat{\pi}$ to recover the true cluster assignment $\bar{\pi}$, defined by
\begin{equation}
    \bar{\pi}: \dbraks{n} \to \dbraks{m},
    \qwith
    \bar{\pi}(i) = k
    \quad \Leftrightarrow \quad
    i \in \caC_k.
\end{equation}
As cluster labels are identifiable only up to a permutation, we measure the discrepancy between $\hat{\pi}$ and $\bar{\pi}$ by
\begin{equation}
    \caM (\hat{\pi}, \bar{\pi}) 
    = n^{-1} \min_{\tau} 
    \abs[\big]{ \{ i \in \dbraks{n}: \hat{\pi} (i) \neq (\tau \circ \bar{\pi}) (i) \} } ,
    \label{def:misclassification}
\end{equation}
where the minimum is taken over all permutations $\tau:\dbraks{m}\to\dbraks{m}$. We refer to $\caM(\hat{\pi},\bar{\pi})$ as the misclassification rate. The goal of this section is to establish theoretical guarantees for $\caM(\hat{\pi},\bar{\pi})$ when the estimated label map $\hat{\pi}$ is produced by Algorithm \ref{alg:multi-ksc}. Our analysis builds on the perturbation results of Section \ref{sec:main-res}. Since the bandwidths $\{h_t\}_{t=1}^T$ are now selected from the data, our perturbation results must be extended to kernel matrices with data-adaptive bandwidths. To this end, we impose several additional technical assumptions that ensure the selected bandwidths satisfy the required admissibility properties with high probability.

Consider the bandwidth-selection procedure \eqref{eqn:bandwidth-selection}. Recall that, in Assumption \ref{assump:multi-scale}, the deterministic counterparts $\theta_{k\ell}$ for the pairwise distances are grouped into $T_{\ref{assump:multi-scale}}$ levels characterized by $\{ \vartheta_s \}_{s=1}^{T_{\ref{assump:multi-scale}}}$. Since the pairwise distances concentrate around the corresponding counterparts $\theta_{k\ell}$, the empirical pairwise distances are expected to exhibit the same multi-scale structure. To this end, for each characteristic scale $\vartheta_s$, we let $N_s$ denote the number of pairwise distances whose deterministic proxy belongs to this scale,
\begin{equation*}
    N_s
    := \sum\nolimits_{k = 1}^m 
    \frac{n_k (n_k - 1)}{2}
    \bbone \curls[\big]{ \theta_{kk} \in
    [c_{\ref{assump:multi-scale}} \vartheta_s, 
    \vartheta_s / c_{\ref{assump:multi-scale}}] }
    + \sum\nolimits_{1 \leq k < \ell \leq m}
    n_k n_\ell
    \bbone \curls[\big]{ \theta_{k \ell} \in
    [c_{\ref{assump:multi-scale}} \vartheta_s, 
    \vartheta_s / c_{\ref{assump:multi-scale}}] }.
\end{equation*}
The total number of pairwise distances is
\begin{equation*}
    N := \sum\nolimits_{s=1}^{T_{\ref{assump:multi-scale}}} N_s
    = \frac{n (n-1)}{2}.
\end{equation*}
The purpose of the next assumption is to ensure that the bandwidths produced by \eqref{eqn:bandwidth-selection} behave like the admissible bandwidths introduced in Definition \ref{def:admissible-bandwidth}, even though they are now data-dependent. Intuitively, condition \eqref{cond:quantile-levels}, though formulated in a different way, essentially plays the same role as the second constraint in \eqref{cond:admissible-bandwidth}: each characteristic scale $\vartheta_s$ is captured by at least one bandwidth $h_t = G^{-1}(\omega_t)$.

\begin{assumption}[quantile levels]
\label{assump:quantile}
For each $s \in \dbraks{T_{\ref{assump:multi-scale}}}$, there exists at least one $t \in \dbraks{T}$ such that
\begin{equation}
    \omega_t \in
    \pars[\big]{{N_{< s}} / {N}, {(N_{< s} + N_s)} / {N}} ,
    \qwhere N_{< s} = N_1 + \cdots + N_{s-1}.
    \label{cond:quantile-levels}
\end{equation}
\end{assumption}

Under Assumption \ref{assump:quantile}, the bandwidths produced by the empirical quantile selection rule \eqref{eqn:bandwidth-selection} can be well approximated by a deterministic collection of admissible bandwidths.

\begin{lemma}[bandwidth admissibility]
\label{lemma:bandwidth-regularity}
Suppose Assumptions \ref{assump:multi-scale} and \ref{assump:quantile} hold. Let
\begin{equation*}
    \bar{G} (h)
    := \sum\nolimits_{k = 1}^m
    \frac{n_k (n_k-1) / 2}{N} \bbone { \{ \theta_{k k} \leq h \} } 
    + \sum\nolimits_{1 \leq k < \ell \leq m} 
    \frac{n_k n_\ell}{N} \bbone { \{ \theta_{k \ell} \leq h \} }.
\end{equation*}
Given quantile levels $\{\omega_t\}_{t=1}^T$, define the corresponding quantiles by
\begin{equation}
    \hbar_t^* 
    := \bar{G}^{-1} (\omega_t)
    = \inf \{ h: \bar{G}(h) \geq \omega_t \},
    \label{def:proxy-bandwidth}
\end{equation}
Then, $( \hbar_t^* )_{t = 1}^T$ forms a collection of admissible bandwidths. Let $h_t$ be defined as in \eqref{eqn:bandwidth-selection}. If, in addition, Assumption \ref{assump:concentration} holds and $\rho \leq n^{-c}$ for some constant $c > 0$, then for any fixed constants $\varepsilon \in (0, c/2)$ and $C > 0$, we have
\begin{equation}
    \bbp \curls[\big]{
    \abs{{h}_t - \hbar_t^*} / \hbar_t^*
    \leq n^\varepsilon \rho
    \, \text{ for all } \,
    t \in \dbraks{T} }
    \geq 1 - n^{-C}.
    \label{eqn:close-admissible}
\end{equation}    
\end{lemma}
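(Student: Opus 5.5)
The proof splits into a deterministic part, showing that the population quantiles $\hbar_t^*$ are admissible, and a probabilistic part, showing that the empirical quantiles $h_t$ are uniformly close to them.

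\emph{Deterministic part.} The function $\bar{G}$ is a step function whose jumps occur only at the values $\theta_{k\ell}$. Every such value lies in some window $I_s := [c_{\ref{assump:multi-scale}}\vartheta_s, \vartheta_s/c_{\ref{assump:multi-scale}}]$. By \eqref{eqn:hbar-separation}, these windows are disjoint and ordered for large $n$. Consequently, for $h$ between $\max I_s$ and $\min I_{s+1}$ we have $\bar{G}(h) = (N_{<s}+N_s)/N$. Since $\omega_t \in (0,1)$, the quantile $\hbar_t^* = \bar{G}^{-1}(\omega_t)$ is always a jump location, so $\hbar_t^* = \theta_{k\ell}$ for some pair $(k,\ell)$. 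This gives the first half of \eqref{cond:admissible-bandwidth}, even with the sharper constant $c_{\ref{assump:multi-scale}}$. For the second half, fix $s$ and use Assumption~\ref{assump:quantile} to pick $t$ with $\omega_t \in (N_{<s}/N, (N_{<s}+N_s)/N)$. The cumulative mass below $I_s$ is exactly $N_{<s}/N < \omega_t$, while the mass up to and including $I_s$ is at least $\omega_t$. Hence $\hbar_t^* \in I_s$, so $\vartheta_s \in [c_{\ref{assump:multi-scale}}\hbar_t^*, \hbar_t^*/c_{\ref{assump:multi-scale}}]$.

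\emph{Probabilistic part.} I would first invoke the entrywise concentration of Lemma~\ref{lemma:xi-control}. Combining \eqref{eqn:bernstein} and \eqref{eqn:hanson-wright} with a union bound over the $O(n^2)$ pairs gives, on an event of probability at least $1-n^{-C}$, the uniform bound $|\norm{\bfx_i-\bfx_j}^2 - \theta_{k\ell}| \le n^{\varepsilon}\psi_{k\ell} \le n^\varepsilon \rho\, \theta_{k\ell}$ for all $i\neq j$. On this event the empirical CDF sandwiches the population one:
\[
\bar{G}\pars[\big]{h/(1+n^\varepsilon\rho)} \le G(h) \le \bar{G}\pars[\big]{h/(1-n^\varepsilon\rho)} \qquad \text{for all } h .
\]
The idea is that every pair with $\theta_{k\ell} \le h/(1+n^\varepsilon\rho)$ has squared distance at most $h$, and every pair with squared distance at most $h$ has $\theta_{k\ell} \le h/(1-n^\varepsilon\rho)$. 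Taking generalized inverses, which preserve these monotone relations, gives
\[
(1-n^\varepsilon\rho)\,\hbar_t^* \le h_t \le (1+n^\varepsilon\rho)\,\hbar_t^* .
\]
This is \eqref{eqn:close-admissible} once $\varepsilon$ is slightly enlarged, which is allowed because $\varepsilon$ is arbitrary and $n^\varepsilon\rho \le n^{\varepsilon-c} \to 0$.

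\emph{Main obstacle.} The delicate point is the quantile inversion at the jumps. The quantile $\bar{G}^{-1}(\omega_t)$ is stable under multiplicative perturbation of the atom locations only if $\omega_t$ does not sit exactly at a plateau value of $\bar{G}$. When it does, the left and right inverses differ and can jump to a different atom $\theta_{k'\ell'}$ at a very different scale. Within a scale this is harmless, but across scales it is not, and this is exactly why Assumption~\ref{assump:quantile} uses open intervals.

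For a general $\omega_t$ I would argue through the infimum definition directly. Write $\hbar^* = \hbar_t^*$. The right inequality holds because $G((1+n^\varepsilon\rho)\hbar^*) \ge \bar{G}(\hbar^*) \ge \omega_t$. The left inequality holds because, for any $h < (1-n^\varepsilon\rho)\hbar^*$, we have $G(h) \le \bar{G}(h/(1-n^\varepsilon\rho)) < \omega_t$, using $h/(1-n^\varepsilon\rho) < \hbar^*$. Both steps use only the definition of the infimum, so the argument goes through without any genericity assumption.
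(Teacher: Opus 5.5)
Your proposal is correct and follows essentially the same route as the paper: you identify $\hbar_t^*$ with an atom $\theta_{k\ell}$ and use Assumption~\ref{assump:quantile} to place it in the window of $\vartheta_s$, then compare $G$ with $\bar{G}$ at $(1\pm\eta)\hbar_t^*$ on the uniform distance-concentration event from Lemma~\ref{lemma:xi-control}. The paper takes $\eta=n^{\varepsilon/2}\rho$ and gets the bracket $[(1-2\eta)\hbar_t^*,(1+\eta)\hbar_t^*]$, whereas your strict-inequality argument gives the symmetric bracket directly, so you do not actually need to enlarge $\varepsilon$.

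One correction to your commentary: the one-sided comparison with the left-continuous inverse is stable even when $\omega_t$ is a plateau value of $\bar{G}$. The openness in Assumption~\ref{assump:quantile} is therefore needed only in the deterministic step, to exclude $\omega_t=N_{<s}/N$ (which would put $\hbar_t^*$ in a lower window), and not in the concentration step.
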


Before extending the perturbation results of Section \ref{sec:main-res} to kernel matrices with data-adaptive bandwidths, we also modify the eigengap condition in Assumption \ref{assump:signal}. More precisely, since the eigengaps of $\barK$ depend on the bandwidths, we must specify the bandwidths at which they are evaluated. For the present purpose, it is sufficient to impose the eigengap condition on $\barK$ evaluated at the deterministic proxy bandwidths $( \hbar_t^* )_{t = 1}^T$ defined in \eqref{def:proxy-bandwidth}. Since several bandwidth choices are now involved, it is convenient to make explicit the bandwidth dependence of the multi-kernel matrix $\bfK$, its informative component $\barK$, and the associated spectral quantities. Accordingly, for a collection of bandwidths $\boldsymbol{h}=(h_t)_{t=1}^T$, we write $\bfK(\boldsymbol{h})$, $\barK(\boldsymbol{h})$, $\bfU(\boldsymbol{h})$, $\barU(\boldsymbol{h})$, and similarly for the remaining bandwidth-dependent quantities.

In addition, to simplify the presentation, we focus on the optimal eigengap regime $\Delta_r(\barLamb) \asymp n$. As discussed above, this regime can be expected in many situations under an appropriate choice of $r$. Importantly, it illustrates the main ideas of the data-adaptive extension in the cleanest setting. Our argument extends directly to more general eigengap regimes: one only needs to modify the eigengap assumption and replace the corresponding rates in the statements of the results. We omit these routine variants for clarity.

\begin{assumption}[eigengap]
\label{assump:gap-hbar}
Let $\boldsymbol{\hbar}^* = (\hbar_t^*)_{t=1}^T$ be the bandwidths defined in \eqref{def:proxy-bandwidth}. Suppose that
\begin{equation}
    \rho \leq n^{-c_{\ref{assump:gap-hbar}}}
    \qand
    \Delta_{r} (\barLamb (\boldsymbol{\hbar}^*)) 
    \geq c_{\ref{assump:gap-hbar}} n .
    \label{eqn:gap-adaptive}
\end{equation}
\end{assumption}

We now extend the entrywise eigenvector perturbation bounds in Theorem \ref{thm:Linfty-evec-bfK-barK} to kernel matrices with data-adaptive bandwidths. In principle, the eigenvalue perturbation bounds in Theorem \ref{thm:eval-bfK-barK} can be extended in the same way; we omit the corresponding statement for simplicity.

\begin{corollary}[$\ell_{2, \infty}$ eigenvector perturbation]
\label{coro:pertur-adaptive}
Let $\boldsymbol{h} = (h_t)_{t=1}^T$ be the data-adaptive bandwidths defined as in \eqref{eqn:bandwidth-selection}. Then, under Assumptions \ref{assump:multi-scale}--\ref{assump:concentration} and \ref{assump:tech-const}--\ref{assump:gap-hbar}, we have
\begin{subequations}
\begin{align}
    \norm{\bfU_r (\boldsymbol{h}) 
    \fkS_r^{\bfu} (\boldsymbol{h})
    - \barU_r (\boldsymbol{h}) }_{2,\infty}
    & \prec 
    \rho / \sqrt{n} + 1 / n^{3/2} ,
    \label{eqn:bfu-baru-adaptive} \\
    \norm{\bfU_r (\boldsymbol{h})
    \bfLamb_r(\boldsymbol{h})^{1/2}
    \fkS_r^{\bfu} (\boldsymbol{h})
    - \barU_r (\boldsymbol{h})
    \barLamb_r^{1/2} (\boldsymbol{h}) 
    }_{2,\infty}
    & \prec 
    \rho + 1 / n .
    \label{eqn:bfu-baru-adaptive-with-eigenvalue}
\end{align}
\end{subequations}
\end{corollary}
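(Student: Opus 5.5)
The plan is to reduce to Theorem \ref{thm:Linfty-evec-bfK-barK}, which requires deterministic admissible bandwidths. We do this by a discretization (net) argument over a small deterministic neighbourhood of the proxy bandwidths $\boldsymbol{\hbar}^*$, followed by a continuity step.

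\textbf{Step 1: localization and a deterministic grid.} Fix small $\varepsilon>0$. By Lemma \ref{lemma:bandwidth-regularity}, which applies since Assumption \ref{assump:gap-hbar} gives $\rho\le n^{-c_{\ref{assump:gap-hbar}}}$, the event
\begin{equation*}
    \Xi := \curls[\big]{\abs{h_t-\hbar_t^*}/\hbar_t^*\le n^{\varepsilon}\rho \ \text{ for all } t\in\dbraks{T}}
\end{equation*}
has probability at least $1-n^{-C}$. Let $\mathcal{G}$ be the grid of deterministic vectors $\boldsymbol{g}=(g_t)$ with $g_t=\hbar_t^*(1+jn^{-D})$ and $\abs{j}n^{-D}\le n^{\varepsilon}\rho$, for a large constant $D$. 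Then $\abs{\mathcal{G}}\le n^{C'}$, and on $\Xi$ there is $\boldsymbol{g}\in\mathcal{G}$ with $\abs{h_t-g_t}\le n^{-D}\hbar_t^*$.

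\textbf{Step 2: each grid point is a valid input to Theorem \ref{thm:Linfty-evec-bfK-barK}, uniformly.} Each $\hbar_t^*$ equals some $\theta_{k'\ell'}\in[c_{\ref{assump:multi-scale}}\vartheta_s,\vartheta_s/c_{\ref{assump:multi-scale}}]$. Definition \ref{def:admissible-bandwidth} has a factor-$2$ slack, so every $\boldsymbol{g}\in\mathcal{G}$ is admissible for large $n$.

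The index sets $\caS_{k\ell}(0),\caS_{k\ell}(1)$ in \eqref{def:caS} coincide for $\boldsymbol{g}$ and $\boldsymbol{\hbar}^*$. Indeed, by the scale separation \eqref{eqn:hbar-separation}, the thresholds $c_{\ref{assump:multi-scale}}^2\theta_{k\ell}/2$ and $2\theta_{k\ell}/(c_{\ref{assump:multi-scale}}^2 n^{c_{\ref{assump:multi-scale}}})$ lie a constant factor away from every admissible value of the same scale, and a polynomial factor away from the other scales.

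It follows that the entries of $\barK(\boldsymbol{g})$ and $\barK(\boldsymbol{\hbar}^*)$ differ by $O(n^{\varepsilon}\rho)$, using $\abs{\partial_h e^{-\theta/h}}\lesssim 1/h$. Hence $\norm{\barK(\boldsymbol{g})-\barK(\boldsymbol{\hbar}^*)}\lesssim n^{1+\varepsilon}\rho\ll n$. Weyl's inequality then gives $\Delta_r(\barLamb(\boldsymbol{g}))\ge c_{\ref{assump:gap-hbar}}n/2$, so Assumption \ref{assump:signal} holds with $\Delta_r\asymp n$.

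Theorem \ref{thm:Linfty-evec-bfK-barK}, in the simplified form of Remark \ref{remark:optimal-eigengap}, then gives both bounds \eqref{eqn:bfu-baru-adaptive}--\eqref{eqn:bfu-baru-adaptive-with-eigenvalue} at $\boldsymbol{h}=\boldsymbol{g}$. Because stochastic domination is uniform and $\abs{\mathcal{G}}$ is polynomial, a union bound yields these bounds simultaneously for all $\boldsymbol{g}\in\mathcal{G}$.

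\textbf{Step 3: transfer from the nearest grid point to the random $\boldsymbol{h}$.} This is the main obstacle. By Assumption \ref{assump:tech-const}, all squared distances and bandwidths are polynomially bounded with overwhelming probability. Hence $\boldsymbol{h}\mapsto\bfK(\boldsymbol{h})$ is Lipschitz with a polynomial constant, and $\norm{\bfK(\boldsymbol{h})-\bfK(\boldsymbol{g})}_{\Fnorm}\le n^{C''-D}$; similarly for $\barK$.

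Davis--Kahan and Weyl, applied with the gap $\gtrsim n$ (valid for $\bfK(\boldsymbol{g})$ by Theorem \ref{thm:eval-bfK-barK}), show that suitably aligned $\bfU_r$, $\bfLamb_r$, $\barU_r$, $\barLamb_r$ move by at most $n^{C''-D}$ in operator norm, which dominates the $\ell_{2,\infty}$ norm. The delicate point is the Procrustes factor $\sgn(\bfU_r^\top\barU_r)$: it must also move by a negligible amount. This holds because the matrix sign (polar factor) is Lipschitz on matrices whose singular values are bounded below. By the $\ell_2$ bound in Remark \ref{remark:ell-2}, the singular values of $\bfU_r^\top\barU_r$ are $1-O(n^{\varepsilon}\rho)\ge 1/2$ at $\boldsymbol{g}$, and hence also at $\boldsymbol{h}$. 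With $\barU_r$ the polar factor is likewise well defined, since $\barlamb_r-\barlamb_{r+1}\gtrsim n$.

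Choosing $D$ large makes these transfer errors $\ll 1/n^{3/2}$. The triangle inequality then yields \eqref{eqn:bfu-baru-adaptive} and \eqref{eqn:bfu-baru-adaptive-with-eigenvalue} on $\Xi$ intersected with the union-bound event, which completes the argument.
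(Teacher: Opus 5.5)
Your proposal is correct and follows essentially the same route as the paper: localize $\boldsymbol{h}$ in $\bbh=\prod_{t}[(1-n^{\varepsilon}\rho)\hbar_t^*,(1+n^{\varepsilon}\rho)\hbar_t^*]$ via Lemma~\ref{lemma:bandwidth-regularity}, transfer the eigengap from $\barK(\boldsymbol{\hbar}^*)$ by an entrywise bound, apply Theorem~\ref{thm:Linfty-evec-bfK-barK} on a polynomial-size net with a union bound, and pass to $\boldsymbol{h}$ by a crude polynomial Lipschitz estimate from Davis--Kahan and Lipschitz continuity of the polar factor. The paper phrases this last step as a uniform bound over all of $\bbh$ via Lemma~\ref{lemma:Lipschitz-norm}, which is equivalent to your nearest-grid-point transfer.

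One small correction in Step~2: the thresholds in \eqref{def:caS} are not a constant (resp.\ polynomial) factor away from every admissible value, since they can hit the endpoints of the factor-$2$ windows. What makes it work is that each $\hbar_t^*$ is some $\theta_{k'\ell'}\in[c_{\ref{assump:multi-scale}}\vartheta_s,\vartheta_s/c_{\ref{assump:multi-scale}}]$, so it lies on the correct side of both thresholds with a factor-$2$ margin. Together with $g_t/\hbar_t^*=1+o(1)$, this is what keeps $\caS_{k\ell}(0)$ and $\caS_{k\ell}(1)$ unchanged.
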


Here $\bfU_r(\boldsymbol{h})$ is viewed as a perturbation of $\barU_r(\boldsymbol{h})$. However, since now $\boldsymbol{h}$ is data-adaptive, the informative counterpart $\barU_r(\boldsymbol{h})$ is no longer deterministic, in contrast to the setting of \eqref{bound:Kev-without-eigs}. One could go one step further and replace $\barU_r(\boldsymbol{h})$ by $\barU_r(\boldsymbol{\hbar}^*)$, thereby obtaining a purely deterministic reference. We do not pursue this refinement here, because the bounds \eqref{eqn:bfu-baru-adaptive} and \eqref{eqn:bfu-baru-adaptive-with-eigenvalue} are already sufficient for the subsequent clustering analysis. Indeed, for any realization of $\boldsymbol{h}$, the informative component $\barK(\boldsymbol{h})$ retains the same blockwise constant structure, although the values associated with the blocks may now fluctuate with $\boldsymbol{h}$.

We can now present the analysis of the misclassification rate $\caM(\hat{\pi},\bar{\pi})$ defined in \eqref{def:misclassification}. Recall the spectral embeddings $\fkx_i$ introduced in \eqref{def:fkx-embedding}. The entrywise perturbation bound \eqref{eqn:bfu-baru-adaptive-with-eigenvalue} controls, up to a common orthogonal transformation, the deviation of these embeddings from the informative centers
\begin{equation}
    \bar{\fka}_k = \frac{1}{\sqrt{n_k}} 
    \braks[\big]{
    {\textstyle\sqrt{\barlamb_1}} \fkU_{k 1}, \cdots, {\textstyle\sqrt{\barlamb_{r}}} \fkU_{k r}},
    \label{def:fka-center}
\end{equation}
where we recall that $\fkU = [\fkU_{k\ell}]$ is the eigenvector matrix of the block-level matrix $\fkK$. In fact, in view of \eqref{eqn:baru-piecewise}, these centers satisfy $\bar{\fka}_k = (\barU_{r} \barLamb_{r}^{1/2})^\top \bfe_i$ for $i \in \caC_k$. To convert the resulting entrywise control into a guarantee for the $K$-means output, we also require a lower bound on the separation between distinct informative centers. This motivates the following assumption.

\begin{assumption}[separation]
\label{assump:between-cluster}
Suppose that there exists a constant $c_{\ref{assump:between-cluster}} > 0$ such that
\begin{equation}
    \bar{\fks} (\boldsymbol{\hbar}^*) 
    := \min\nolimits_{k \neq \ell} 
    \norm{\bar{\fka}_k (\boldsymbol{\hbar}^*) 
    - \bar{\fka}_\ell (\boldsymbol{\hbar}^*)}
    \geq n^{c_{\ref{assump:between-cluster}}} (\rho + 1/n) .
    \label{cond:separation-hstar}
\end{equation}
\end{assumption}

\begin{remark}
The separation condition \eqref{cond:separation-hstar} is not strong in the present setting. Indeed, the entries $\fkU_{k\ell}$ are typically of constant order. Moreover, under Assumption \ref{assump:gap-hbar}, the spiked eigenvalues $\barlamb_1 \geq \cdots \geq \barlamb_{r}$ are all of order $n$. It follows that the entries of the informative centers $\bar{\fka}_k$ are typically of constant order. Hence, it is natural to expect that distinct centers remain separated at a constant scale, which means that $\bar{\fks}(\boldsymbol{\hbar}^*) \gtrsim 1$. 

There is also a more concrete interpretation of the separation between the informative centers $\bar{\fka}_k$. Let $k,\ell \in \dbraks{m}$, and take $i \in \caC_k$ and $j \in \caC_\ell$. We can write
\begin{equation*}
    \norm{\bar{\fka}_k - \bar{\fka}_\ell}^2
    = \angles{\bfe_i - \bfe_j, \barU_{r} \barLamb_{r} \barU_{r}^\top (\bfe_i - \bfe_j)} 
    =: \angles{\bfe_i - \bfe_j, \barK_{r} (\bfe_i - \bfe_j)} 
    = (\barK_{r})_{ii} 
    + (\barK_{r})_{jj} 
    - 2 (\barK_{r})_{ij} ,    
\end{equation*}
where $\barK_{r} = \barU_{r} \barLamb_{r} \barU_{r}^\top$ represents the rank-$r$ truncation of the informative matrix $\barK$. It is not hard to see that $\barK_{r}$ inherits the blockwise constant structure of $\barK$. Hence, the (squared) separation $\norm{\bar{\fka}_k - \bar{\fka}_\ell}^2$ can be interpreted through the contrast between the corresponding within-block and between-block affinities encoded by the truncated informative matrix $\barK_{r}$. In particular, when $r = \rank(\barK)$, we have $\barK_{r}=\barK$, and therefore
\begin{align*}
    \norm{\bar{\fka}_k (\boldsymbol{\hbar}^*)
    - \bar{\fka}_\ell (\boldsymbol{\hbar}^*) }^2
    & = \bar{K}_{ii} (\boldsymbol{\hbar}^*) 
    + \bar{K}_{jj} (\boldsymbol{\hbar}^*) 
    - 2 \bar{K}_{ij} (\boldsymbol{\hbar}^*) \\
    & = \sum_{t \in \mathcal{S}_{k k} (0)} 
    \alpha_t \exp (-{\theta_{k k}} / {\hbar_t^*})
    + \sum_{t \in \mathcal{S}_{\ell \ell} (0)} 
    \alpha_t \exp ( -{\theta_{\ell \ell}} / {\hbar_t^*} )
    - 2 \sum_{t \in \mathcal{S}_{k \ell} (0)} 
    \alpha_t \exp (-{\theta_{k \ell}} / {\hbar_t^*}) .
\end{align*}
This representation also illustrates why it is reasonable to expect $\bar{\fks}(\boldsymbol{\hbar}^*) \gtrsim 1$.
\end{remark}

\begin{theorem}[exact recovery]
\label{thm:zero-misclass}
Let $C>0$ be arbitrary. Let $\hat{\pi}: \dbraks{n} \to \dbraks{m}$ be the label map produced by Algorithm \ref{alg:multi-ksc}. Under Assumptions \ref{assump:multi-scale}--\ref{assump:concentration} and \ref{assump:tech-const}--\ref{assump:between-cluster}, there exists $n_0(C)>0$ such that, for all $n\geq n_0(C)$,
\begin{equation}
    \bbp \{ \caM (\hat{\pi}, \bar{\pi}) = 0 \} \geq 1 - n^{-C}.
    \label{eqn:exact-recovery}
\end{equation}
\end{theorem}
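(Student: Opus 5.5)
The plan is to combine the entrywise bound \eqref{eqn:bfu-baru-adaptive-with-eigenvalue} of Corollary \ref{coro:pertur-adaptive} with a deterministic argument for approximate $K$-means. Write $\fkX := \bfU_r(\boldsymbol{h})\bfLamb_r(\boldsymbol{h})^{1/2}$, whose rows are the embeddings $\fkx_i$. Set $\fkO := \fkS_r^{\bfu}(\boldsymbol{h})$, which is orthogonal with high probability since $\bfU_r^\top\barU_r$ is nonsingular by the $\ell_2$ bound of Remark \ref{remark:ell-2}. Let $\bar{\fka}_k(\boldsymbol{h})$ be the centers \eqref{def:fka-center} computed at $\boldsymbol{h}$. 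Fix a small $\varepsilon>0$ and define the event $\mathcal{E}$ on which three things hold: $\max_i\norm{\fkO^\top\fkx_i-\bar{\fka}_{\bar\pi(i)}(\boldsymbol{h})}\le n^{\varepsilon}(\rho+1/n)=:\delta$; the conclusion \eqref{eqn:close-admissible} of Lemma \ref{lemma:bandwidth-regularity} holds; and $\min_{k\neq\ell}\norm{\bar{\fka}_k(\boldsymbol{h})-\bar{\fka}_\ell(\boldsymbol{h})}\ge \tfrac12\bar{\fks}(\boldsymbol{\hbar}^*)$. If $\bbp(\mathcal{E})\ge1-n^{-C}$, it then suffices to show deterministically that on $\mathcal{E}$ the approximate $K$-means output coincides with $\bar\pi$ up to a permutation. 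Since $K$-means is invariant under a common orthogonal transformation, we may replace $\fkx_i$ by $\fkO^\top\fkx_i$.

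The first two pieces of $\mathcal{E}$ follow from Corollary \ref{coro:pertur-adaptive}, Lemma \ref{lemma:bandwidth-regularity} and Definition \ref{def:stochastic-domination}. The main obstacle is the third piece: transferring the separation in Assumption \ref{assump:between-cluster} from $\boldsymbol{\hbar}^*$ to the random $\boldsymbol{h}$. I would first show that $\caS_{k\ell}(0)$ is unchanged when $\hbar_t^*$ is replaced by $h_t$. This holds because $\abs{h_t-\hbar_t^*}\le n^\varepsilon\rho\,\hbar_t^*$ and, by Assumption \ref{assump:multi-scale}, the thresholds in \eqref{def:caS} sit away from the admissible scales; alternatively, the sum in $\barK$ can be defined with the deterministic index sets. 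Next, since $\theta_{k\ell}/\hbar_t^*\lesssim1$ for $t\in\caS_{k\ell}(0)$, the entries change by $O(n^\varepsilon\rho)$, which gives $\norm{\fkK(\boldsymbol{h})-\fkK(\boldsymbol{\hbar}^*)}\lesssim n^{1+\varepsilon}\rho$ for these $m\times m$ matrices. The squared center distances have the form $\bfe^\top\barK_r\bfe$, so it is enough to control the rank-$r$ truncation $\fkK_r=\fkU_r\barLamb_r\fkU_r^\top$. By Assumption \ref{assump:gap-hbar} the gap is $\gtrsim n$, so Davis--Kahan together with Weyl's inequality gives $\norm{\fkK_r(\boldsymbol{h})-\fkK_r(\boldsymbol{\hbar}^*)}\lesssim n^{1+\varepsilon}\rho$; by Weyl the gap at $\boldsymbol{h}$ also stays $\gtrsim n$. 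Because $\norm{\bar{\fka}_k-\bar{\fka}_\ell}^2$ equals $(\sqrt{n_k}^{-1}\bfe_k-\sqrt{n_\ell}^{-1}\bfe_\ell)^\top\fkK_r(\sqrt{n_k}^{-1}\bfe_k-\sqrt{n_\ell}^{-1}\bfe_\ell)$, the squared distances move by $O(n^\varepsilon\rho)$. This is $\ll\bar{\fks}(\boldsymbol{\hbar}^*)^2$, because \eqref{cond:separation-hstar} gives $\bar{\fks}^2\ge n^{2c}\rho^2$ and, more usefully, $\bar{\fks}\ge n^{c}\rho$. When $\bar{\fks}$ is small, a square-root comparison is needed. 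I would try to bound $\abs{\norm{a}-\norm{b}}$ directly, using a $\barLamb_r^{1/2}$-perturbation bound of order $n^{\varepsilon}\rho$, to avoid losing the square root.

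For the deterministic step, let $\bar{\fks}$ now denote the separation at $\boldsymbol{h}$, so $\bar{\fks}\ge n^{c}\delta/2$ after choosing $\varepsilon<c/2$. The true partition with centers $\bar{\fka}_k$ has objective at most $n\delta^2$, so the output satisfies $\sum_i\norm{\fkx_i-\hat{\fka}_{\hat\pi(i)}}^2\le(1+\varepsilon_*)n\delta^2$. Suppose some true center $\bar{\fka}_k$ had no $\hat{\fka}_j$ within $\bar{\fks}/4$. Then all $n_k\ge c_{\ref{assump:tech-const}}n$ points of $\caC_k$ would each contribute at least $(\bar{\fks}/4-\delta)^2\gtrsim\bar{\fks}^2\gg\delta^2$, which is a contradiction. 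Hence each $\bar{\fka}_k$ has an estimated center within $\bar{\fks}/4$. These estimated centers are distinct, since the true centers are $\bar{\fks}$ apart, and there are exactly $m$ of them, which defines a bijection $\tau$. It remains to show that each point is assigned to the right center. Here exactness needs care, because $\hat\pi$ need not be the nearest-center map. I would argue that every output center is in fact within $O(\delta)$ of its true center. If a point $i\in\caC_k$ were assigned to a center $\hat{\fka}_{\tau(\ell)}$ with $\ell\ne k$, its cost would be $\gtrsim\bar{\fks}^2$. Comparing this with the total budget $(1+\varepsilon_*)n\delta^2$ only rules out more than $n\delta^2/\bar{\fks}^2$ such points, not all of them. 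So I would instead use the first-order optimality that standard approximate $K$-means procedures satisfy, namely Lloyd-type reassignment, or use the fact that each misassigned point can be moved to reduce the objective. If that is unavailable, the fallback is to run a single nearest-center assignment step, which yields exact recovery because every point lies within $\delta+O(\delta)<\bar{\fks}/2$ of its own center.
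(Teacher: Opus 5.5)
Your plan is correct and follows essentially the same route as the paper: the row-wise bound of Corollary~\ref{coro:pertur-adaptive}, then transfer of the separation from $\boldsymbol{\hbar}^*$ to the data-driven bandwidths via a center-level perturbation bound of order $n^{2\varepsilon}\rho$, then a deterministic approximate $K$-means lemma, where your nearby-center construction of $\tau$ is a valid substitute for the paper's majority-overlap construction. (The center-level bound is your square-root fallback, which the paper carries out in Lemma~\ref{lemma:separation}; the squared-distance comparison you wrote first does not suffice when $\bar{\fks}$ is near its lower bound $n^{c_{\ref{assump:between-cluster}}}(\rho+1/n)$.) The paper settles your assignment concern by building the nearest-center property into hypothesis \eqref{eqn:near-opt} of Lemma~\ref{lemma:misclassification}, and with it your bound $\norm{\bar{\fka}_k-\hat{\fka}_{\tau(k)}}\le\bar{\fks}/4$ already forces $\hat{\pi}(i)=\tau(\bar{\pi}(i))$, since $\delta+\bar{\fks}/4<3\bar{\fks}/4-\delta$, so you need no $O(\delta)$ center bound, no Lloyd step, and no modification of the algorithm.
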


\begin{remark}[probability tail]
Since $\caM(\hat{\pi},\bar{\pi}) \leq 1$ deterministically, Theorem \ref{thm:zero-misclass} immediately implies that $\bbe \caM(\hat{\pi},\bar{\pi}) \leq n^{-C}$ for all sufficiently large $n$. This polynomial tail probability in \eqref{eqn:exact-recovery} reflects the concentration assumption \eqref{bound:concentration}. Under a suitably strengthened concentration assumption with exponential tails, we expect that the present argument can be refined to yield an exponentially decaying bound for the exact-recovery failure probability. We do not pursue this extension here.

Such exponential behavior has been established in related non-kernel settings. In particular, \cite{lofflerOptimalitySpectralClustering2021} showed that a direct SVD-based spectral clustering procedure for isotropic Gaussian mixture models achieves an exponentially small misclassification rate. Their result further yields exact recovery under a sufficiently strong separation condition, with an explicit exponentially decaying failure probability.
\end{remark}

\begin{remark}[Normalized-Laplacian variant]
Algorithm \ref{alg:multi-ksc} describes the kernel version of the multi-KSC procedure, in which the spectral embeddings are constructed from the leading principal components of $\bfK$. The corresponding normalized-Laplacian variant is obtained by replacing $\bfK$ with $\bfL = n \bfD^{-1/2} \bfK \bfD^{-1/2}$ and computing the spectral decomposition as in \eqref{eqn:spectral-decomp-L}. In this case, the spectral embedding is defined by 
\begin{equation*}
    \fkx_i = \braks[\big]{\sqrt{\gamma_1} \bfv_1 (i), \cdots, \sqrt{\gamma_{r}} \bfv_{r} (i)}.
\end{equation*}
The bandwidth-selection procedure and the subsequent $K$-means step remain unchanged. The corresponding adaptations to the subsequent clustering analysis are straightforward and therefore omitted.
\end{remark}

We conclude this section with a simple numerical illustration of Algorithm \ref{alg:multi-ksc} in a multi-scale clustering setting. Let $n \in 6\bbn$, and consider a balanced six-component GMM in $\bbr^p$, with exactly $n/6$ observations from each cluster. For $i \in \caC_k$, we generate $\bfx_i = \bfmu_k + \bfA_k \bfz_i$, where $\bfz_i \sim \mathrm{N}(\bfzero,\bfI)$ are independent across $i \in \dbraks{n}$. The six clusters are divided into two scale groups, each containing three clusters. Fix $\kappa \in (0,1/2)$, and define $\beta_1 = 1/2 - \kappa$ and $\beta_2 = 1/2 + \kappa$, together with the diagonal matrices
\begin{equation*}
    \bfT_g
    = \diag ( 1^{-\beta_g},2^{-\beta_g}, \cdots, p^{-\beta_g} ) \in \bbr^{p \times p},
    \qfor g \in \{1,2\}.
\end{equation*}
Thus, the diagonal entries of $\bfT_1$ and $\bfT_2$ decay at different polynomial rates. Note that $\Tr \bfT_g \asymp p^{1 - \beta_g}$ and therefore ${\Tr \bfT_1} / {\Tr \bfT_2} \asymp p^{2\kappa}$. In particular, $\kappa$ controls the separation between the two distance scales. For each value of $p$, we first generate six independent random orthogonal matrices $\bfQ_1,\cdots,\bfQ_6$ and keep them fixed throughout the Monte Carlo experiment. We then set
\begin{equation*}
    \bfA_{3g-2} = \bfQ_{3g-2} \bfT_g^{1/2},
    \qquad
    \bfA_{3g-1} = \bfQ_{3g-1} \bfT_g^{1/2},
    \qquad
    \bfA_{3g} = 2 \bfQ_{3g} \bfT_g^{1/2},
\end{equation*}
for $g \in \{1,2\}$. Thus, all three clusters in group $g$ have the same asymptotic covariance scale, determined by $\Tr \bfT_g$. The third cluster differs from the other two only through a fixed-factor inflation of its covariance magnitude. To distinguish the first two clusters within each group, we define the cluster means by
\begin{equation*}
    \bfmu_{3g-2} = \bfb_g + \bfs_g,
    \qand
    \bfmu_{3g-1} = \bfmu_{3g} = \bfb_g,
\end{equation*}
where $\bfs_g$ and $\bfb_g$ denote the within-group and between-group offsets, respectively. Specifically,
\begin{equation*}
    \bfs_1 = (\Tr\bfT_1)^{1/2} \bfe_1,
    \qquad
    \bfs_2 = (\Tr\bfT_2 / p)^{1/2} {\bfone},
    \qquad
    \bfb_1 = (\Tr\bfT_1)^{1/2} \bfe_{p-1},
    \qquad
    \bfb_2 = (\Tr\bfT_2)^{1/2} \bfe_p.
\end{equation*}
Hence, within each scale group, the first cluster is separated from the other two by a mean shift whose squared norm is of the same order as the corresponding within-group distance scale. Under this setup, approximately $1/4$ of the squared pairwise distances are of the smaller order $p^{1/2-\kappa}$, corresponding to pairs of observations both drawn from the second scale group. The remaining $3/4$ are of the larger order $p^{1/2+\kappa}$.

We compare multi-KSC with an oracle-tuned single-KSC benchmark. For multi-KSC, we use the quadratically spaced quantile levels $\omega_t = {t}^2 / ({T+1})^2$. This choice places more candidate bandwidths in the lower quantile range while still spreading the remaining bandwidths across the empirical distance distribution. Here we consider $T \in \{3,4\}$ and use equal kernel weights $\alpha_t=1/T$. The single-KSC benchmark likewise uses a quantile-based bandwidth $h=G^{-1}(\omega)$. For each simulated data set, it selects the value of $\omega$ from $\{0.1,0.2,\ldots,0.9\}$ that yields the smallest misclassification rate. This benchmark is infeasible in practice, since it uses the true class labels to select the bandwidth separately in each Monte Carlo replication. It therefore provides a favorable reference for the single-bandwidth procedure.

\begin{figure}[htb]
    \centering
    \includegraphics[width=6.5in]{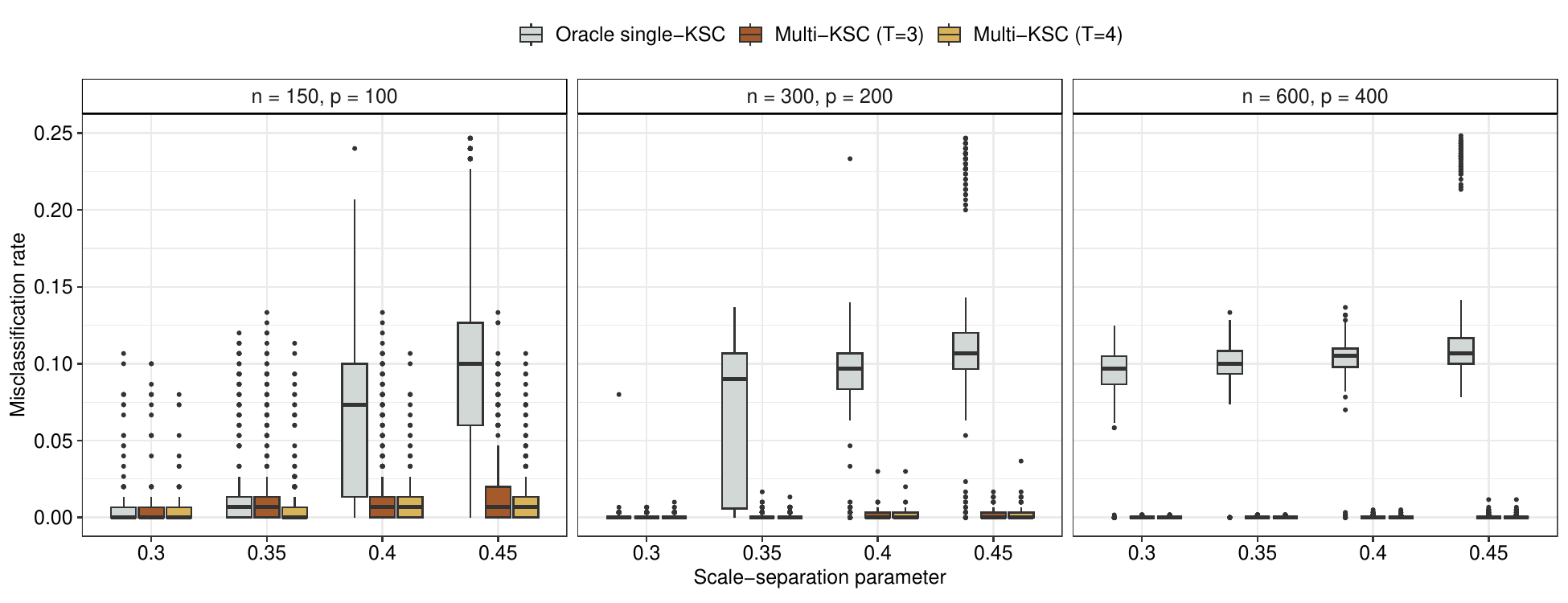}
    \caption{Misclassification rates over $500$ replications for the oracle-tuned single-KSC benchmark and multi-KSC with $T=3$ and $T=4$.}
    \label{fig:multi_vs_single}
\end{figure}

We consider $\kappa \in \{ 0.30,0.35,0.40,0.45 \}$ and $(n,p) \in \{ (150,100),(300,200),(600,400) \}$. Each configuration is repeated over $500$ independent Monte Carlo replications. The resulting misclassification rates are depicted in Figure \ref{fig:multi_vs_single}. In this example, both multi-KSC procedures have essentially zero median misclassification error across the displayed configurations. There are occasional nonzero outliers, especially in the smallest sample-size setting, but both $T=3$ and $T=4$ remain highly accurate overall even for the larger values of $\kappa$. Increasing the number of kernels from $T=3$ to $T=4$ does not yield a systematic improvement, suggesting that, in this example, the main benefit comes from covering the two relevant distance scales rather than from using a larger number of bandwidths. By contrast, the oracle single-KSC benchmark exhibits more substantial errors as $\kappa$ increases, with this deterioration becoming more pronounced at larger problem sizes. Since the single-bandwidth benchmark is selected using the true labels, this comparison illustrates the difficulty, in this example, of using one global bandwidth to represent both distance scales. Figure \ref{fig:multi_vs_single} also provides an empirical illustration of the quantile-based bandwidth selection rule \eqref{eqn:bandwidth-selection}. The multi-KSC bandwidths are selected solely from empirical quantiles of the observed pairwise squared distances, without using either the true labels or the population-scale parameters. Nevertheless, in this example, the resulting bandwidths capture the two relevant distance scales sufficiently well to yield accurate clustering.

To further illustrate how the quantile-based bandwidths capture complementary information from the two distance scales, Figure \ref{fig:ev_pairs} displays the corresponding spectral embeddings for one realization with $(n,p)=(300,200)$ and $\kappa=0.4$. We compare the multi-kernel matrix with $T=4$ to single-kernel matrices constructed using $\omega \in \{ 0.2, 0.7 \}$. The choice $\omega=0.2$ targets the smaller characteristic scale $p^{1/2-\kappa}$, whereas $\omega=0.7$ is for the larger scale $p^{1/2+\kappa}$. For each kernel matrix, we display the coordinate pairs $(\bfu_1,\bfu_3)$ and $(\bfu_2,\bfu_4)$ formed from its leading eigenvectors. The regions returned by $K$-means with $m=6$ are also overlaid.

\begin{figure}[htb]
    \centering
    \includegraphics[width=6in]{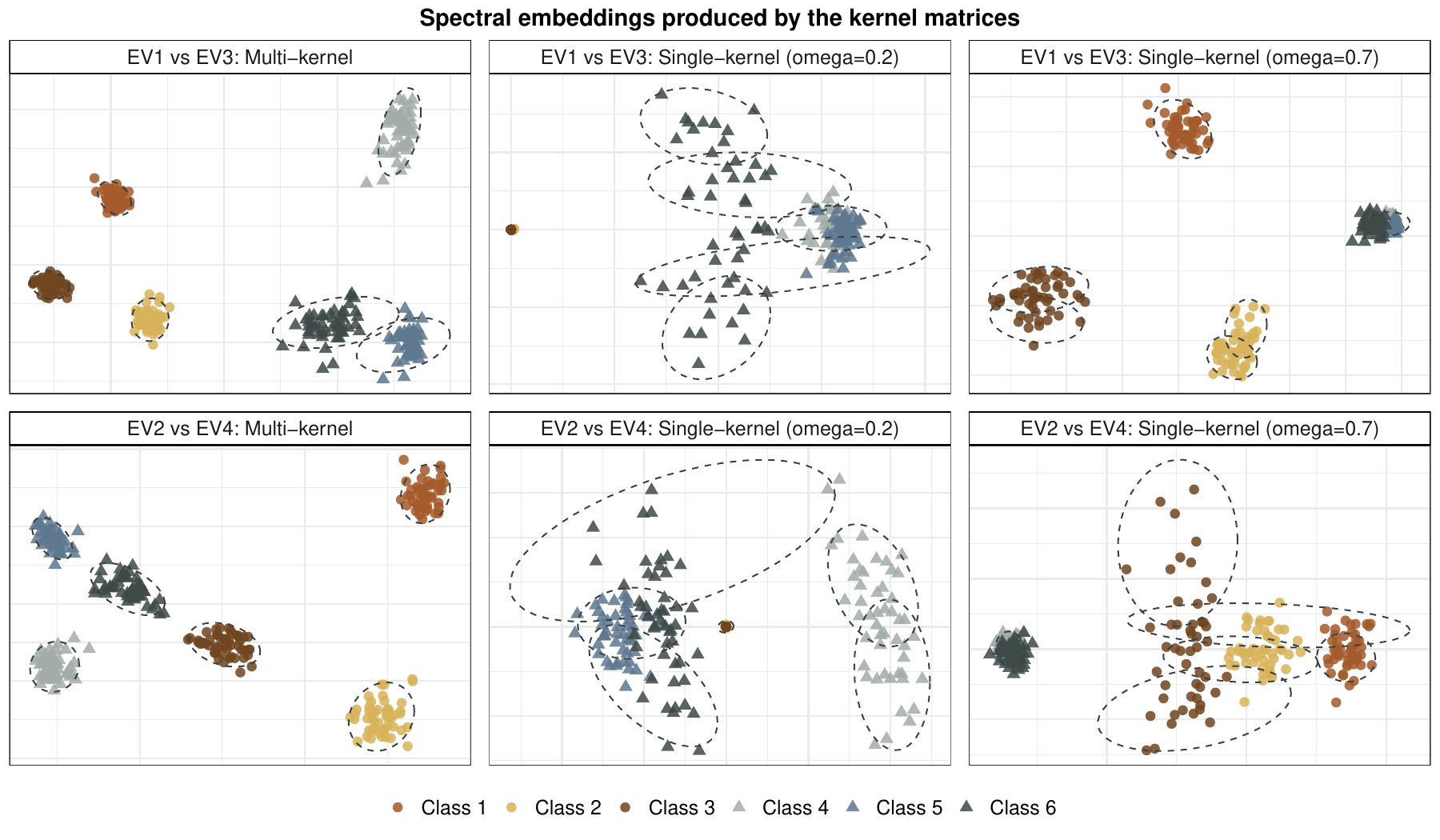}
    \caption{Comparison of spectral embeddings from the multi-kernel matrix ($T=4$) and single-kernel matrices ($\omega=0.2$ and $0.7$).}
    \label{fig:ev_pairs}
\end{figure}

The multi-kernel embedding in the left panel resolves the six classes jointly across the two displayed coordinate pairs: classes that are closer in one pair are separated in the other. In contrast, each single-bandwidth construction captures only part of the clustering structure. Depending on the selected bandwidth, some classes form diffuse or elongated clouds, while others overlap substantially, even when the remaining classes are well separated. For example, for $\omega=0.2$ in the middle panel, the three larger-scale clusters $\caC_1$, $\caC_2$, and $\caC_3$ concentrate in the same region and are difficult to distinguish. In this realization, $K$-means places these points in a single cluster and partitions the observations from $\caC_4$, $\caC_5$, and $\caC_6$ into the remaining five clusters. A corresponding phenomenon occurs for $\omega=0.7$ in the right panel, with the roles of the two scale groups reversed. This illustrates the mechanism behind the preceding misclassification results: different bandwidths emphasize different distance scales, whereas multi-KSC combines their complementary information before constructing the spectral embedding.

\begin{appendices}

\section{Perturbation analysis for multi-kernel matrices}
\label{sec:proof-kernel}

This section contains the proofs of our main spectral perturbation results, Theorems \ref{thm:eval-bfK-barK} and \ref{thm:Linfty-evec-bfK-barK}, for the multi-kernel matrix $\bfK$ defined in \eqref{def:multi-Kernel}. In Section \ref{subsec:finer-decomp}, we introduce an intermediate matrix $\ddotK$, which provides a more accurate approximation to the empirical kernel matrix $\bfK$ than the informative component $\barK$. We then establish Propositions \ref{prop:norm-K-barK-dotK}--\ref{prop:evec-two-stage}, which refine Theorems \ref{thm:eval-bfK-barK} and \ref{thm:Linfty-evec-bfK-barK} through a two-stage perturbation analysis: first, $\bfK$ is treated as a perturbation of $\ddotK$, and then $\ddotK$ is treated as a perturbation of $\barK$. The corresponding norm bounds for $\ddotK-\barK$ and $\bfK-\ddotK$ are established in Section \ref{subsec:norm-bound-residual}. The entrywise eigenvector perturbation bounds for $\ddotU_r \ddotfkS_r-\barU_r$ and $\bfU_r\fkS_r-\ddotU_r\ddotfkS_r$ are proved in Sections \ref{subsec:dominant-evec-entrywise} and \ref{subsec:subleading-evec-entrywise}, respectively. Combining these two bounds directly yields the bound for $\bfU_r\fkS_r-\barU_r$ in \eqref{bound:Kev-without-eigs}. Finally, the proof of \eqref{bound:Kev-with-eigs}, which additionally involves eigenvalue rescaling, is given in Section \ref{subsec:score-bound}.

\subsection{Finer decomposition of the multi-kernel matrix}
\label{subsec:finer-decomp}

In Section \ref{sec:main-res}, we introduce the decomposition of the multi-kernel matrix $\bfK = \barK + (\bfK-\barK)$. Here we further decompose the residual term $\bfK-\barK$ and establish finer control for its constituent parts. Recall that our guiding idea is to approximate the squared distances by the deterministic quantities $\theta_{k\ell}$ defined in \eqref{def:theta-kl}. Accordingly, for $i \in \caC_k$ and $j \in \caC_\ell$, we define the approximation error
\begin{equation}
    \xi_{ij} := \norm{\bfx_i-\bfx_j}^2 - \theta_{k \ell},
    \qfor
    i \in \caC_k, ~ j \in \caC_\ell.
    \label{def:xi-ij}
\end{equation}
Expanding the squared distance gives
\begin{align*}
    \norm{\bfx_i-\bfx_j}^2
    = & ~ \norm{\bfmu_k-\bfmu_\ell}^2
    + \norm{\bfx_i - \bfmu_k}^2 
    + \norm{\bfx_j - \bfmu_\ell}^2 \\
    & + 2 \angles{\bfx_i-\bfmu_k, \bfmu_k - \bfmu_\ell}
    + 2 \angles{\bfx_j-\bfmu_\ell, \bfmu_\ell - \bfmu_k}
    -2 \angles{\bfx_i-\bfmu_k, \bfx_j-\bfmu_\ell} .
\end{align*}
Using the data model \eqref{eqn:data-model}, we arrive at the decomposition
\begin{equation}
    \xi_{ij} = \zeta_i + \zeta_j + 2\Upsilon_{ij} - 2W_{ij},
    \qfor
    i \in \caC_k, ~ j \in \caC_\ell ,
    \label{eqn:xi-decomp}
\end{equation}
where the random fluctuations on the r.h.s. are defined as
\begin{equation}
    \zeta_i = \norm{\bfA_k\bfz_i}^2 - \Tr \bfSigma_k,
    \quad \quad
    \Upsilon_{ij} =  \angles{\bfmu_k - \bfmu_\ell, 
    \bfA_k\bfz_i - \bfA_\ell\bfz_j},
    \quad \quad
    W_{ij} = \angles{\bfA_k\bfz_i, \bfA_\ell\bfz_j} .   
    \label{def:zeta-Ups-W}
\end{equation}
By introducing
\begin{equation*}
    \bfXi = [\xi_{ij}]_{i,j=1}^n,
    \quad \quad
    \bfzeta = [\zeta_i]_{i=1}^n,
    \quad \quad
    \bfUps = [\Upsilon_{ij}]_{i,j=1}^n,
    \quad \quad
    \bfW = [W_{ij}]_{i,j=1}^n,
\end{equation*}
we can express the decomposition \eqref{eqn:xi-decomp} in matrix form as
\begin{equation*}
    \bfXi = \bfzeta \bfone^\top + \bfone \bfzeta^\top + 2\bfUps - 2\bfW.
\end{equation*}
The next lemma provides high-probability control of the random fluctuation terms defined above. Recall that the deterministic control parameters $\psi_{k\ell}$ were defined in \eqref{def:psi}. 

\begin{lemma}[concentration of pairwise distances]
\label{lemma:xi-control}
Under Assumption \ref{assump:concentration}, we have
\begin{equation}
    \abs{\zeta_i}
    + \abs{\Upsilon_{ij}} + 
    + (1-\delta_{ij}) \abs{W_{ij}}
    + (1-\delta_{ij}) \abs{\xi_{ij}}
    \prec \psi_{k \ell},
    \qfor
    i \in \caC_k, ~ j \in \caC_\ell.
    \label{eqn:bound-xi-zeta-Ups}
\end{equation}
\end{lemma}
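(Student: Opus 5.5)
The plan is to bound each of the three fluctuation terms $\zeta_i$, $\Upsilon_{ij}$, $W_{ij}$ directly from Assumption \ref{assump:concentration}, and then read off the bound on $\xi_{ij}$ from the decomposition \eqref{eqn:xi-decomp}. Since \eqref{eqn:bound-xi-zeta-Ups} involves only finitely many cluster pairs $(k,\ell)$, and stochastic domination is stable under finite sums and under the union bounds over $n^{O(1)}$ indices (Lemma \ref{lemma:basic-property-prec}), it suffices to treat a fixed pair $i\in\caC_k$, $j\in\caC_\ell$.

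For $\zeta_i$, write $\zeta_i = \angles{\bfz_i, \bfA_k^\top\bfA_k \bfz_i} - \Tr(\bfA_k^\top \bfA_k)$. The Hanson--Wright-type estimate \eqref{eqn:hanson-wright} with $\bfB = \bfA_k^\top\bfA_k$ then gives $\abs{\zeta_i}\prec \norm{\bfA_k^\top\bfA_k}_\Fnorm = \norm{\bfSigma_k}_\Fnorm$. Here we use that $\bfA_k^\top\bfA_k$ and $\bfA_k\bfA_k^\top$ have the same nonzero singular values.

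For $\Upsilon_{ij}$, split it as $\angles{\bfA_k^\top(\bfmu_k-\bfmu_\ell),\bfz_i} - \angles{\bfA_\ell^\top(\bfmu_k-\bfmu_\ell),\bfz_j}$. Applying the Bernstein-type estimate \eqref{eqn:bernstein} to each piece gives exactly the two mean terms in $\psi_{k\ell}$. When $i=j$ (so $k=\ell$) the term vanishes.

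For $W_{ij}$ with $i\neq j$, the vectors $\bfz_i$ and $\bfz_j$ are independent. Conditioning on $\bfz_j$, I would apply \eqref{eqn:bernstein} to $\bfz_i$ with the vector $\bfa=\bfOmega_{k\ell}\bfz_j$, which gives $\abs{W_{ij}}\prec \norm{\bfOmega_{k\ell}\bfz_j}$. Then \eqref{eqn:hanson-wright} with $\bfB=\bfOmega_{k\ell}^\top\bfOmega_{k\ell}$ gives $\norm{\bfOmega_{k\ell}\bfz_j}^2 \le \Tr\bfB + O_\prec(\norm{\bfB}_\Fnorm)\prec \norm{\bfOmega_{k\ell}}_\Fnorm^2$. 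Finally,
\[
\norm{\bfOmega_{k\ell}}_\Fnorm^2=\Tr(\bfSigma_k\bfSigma_\ell)\le\norm{\bfSigma_k}_\Fnorm\norm{\bfSigma_\ell}_\Fnorm\le\psi_{k\ell}^2
\]
by Cauchy--Schwarz, which yields $\abs{W_{ij}}\prec\psi_{k\ell}$.

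The main obstacle is the conditioning step. Definition \ref{def:stochastic-domination} requires the bound to hold uniformly over deterministic $\bfa$, and here $\bfa$ is random through $\bfz_j$. I would justify the step by independence and Fubini: the conditional probability $\bbp\{\abs{\angles{\bfa,\bfz_i}}>n^\varepsilon\norm{\bfa}\}$ is at most $n^{-C}$ for every fixed $\bfa$, so integrating over the law of $\bfz_j$ preserves the bound. The polynomial-size bounds in Assumption \ref{assump:tech-const} ensure there are no degeneracy issues.

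Summing the four contributions through \eqref{eqn:xi-decomp} then gives $(1-\delta_{ij})\abs{\xi_{ij}}\prec\psi_{k\ell}$. For $i\neq j$ in the same cluster, the decomposition holds with $\bfmu_k-\bfmu_\ell=0$, and the same argument applies.
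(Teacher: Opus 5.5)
Your proposal is correct and follows the paper's proof essentially step for step. The paper also uses Hanson--Wright with $\bfB=\bfOmega_{kk}$ for $\zeta_i$, and Bernstein on the two linear pieces of $\Upsilon_{ij}$. For $W_{ij}$ it likewise conditions on $\bfz_j$, applies Bernstein, then Hanson--Wright to $\angles{\bfz_j,\bfOmega_{k\ell}^\top\bfOmega_{k\ell}\bfz_j}$, and closes with the Cauchy--Schwarz bound $\Tr(\bfSigma_k\bfSigma_\ell)\le\norm{\bfSigma_k}_\Fnorm\norm{\bfSigma_\ell}_\Fnorm$. Your explicit independence/Fubini justification of the conditioning step, using that the Bernstein bound holds uniformly over deterministic $\bfa$, is a detail the paper leaves implicit, and your appeal to Assumption \ref{assump:tech-const} is not needed for this lemma.
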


\begin{proof}[Proof of Lemma \ref{lemma:xi-control}]
Let $i \in \caC_k$ and $j \in \caC_\ell$. By the Hanson-Wright type inequality \eqref{eqn:hanson-wright}, we have
\begin{equation*}
    \norm{\bfA_k\bfz_i}^2
    = \angles{\bfz_i, \bfOmega_{kk} \bfz_i}
    = \Tr \bfOmega_{kk} + \oprec{\norm{\bfOmega_{kk}}_\Fnorm} .
\end{equation*}
The bound $\xi_i = \oprec{\psi_{k \ell}}$ then follows from the identities $\Tr \bfOmega_{kk} = \Tr \bfSigma_k$ and $\norm{\bfOmega_{kk}}_\Fnorm = \norm{\bfSigma_k}_\Fnorm$. Next, to control $\Upsilon_{ij}$, we apply the Bernstein type inequality \eqref{eqn:bernstein} and obtain
\begin{equation*}
    \abs{\angles{\bfA_k \bfz_i, \bfmu_k - \bfmu_\ell}}
    \prec \norm{\bfA_k^{ \top} (\bfmu_k - \bfmu_\ell)}
    \leq \psi_{k \ell}
    \qand
    \abs{\angles{\bfA_\ell \bfz_j, \bfmu_k - \bfmu_\ell}}
    \prec \norm{\bfA_\ell^{ \top} (\bfmu_k - \bfmu_\ell)}    
    \leq \psi_{k \ell} .
\end{equation*}
For the term $W_{ij}$ with $i \neq j$, we first condition on $\bfz_j$, apply \eqref{eqn:bernstein}, and then invoke \eqref{eqn:hanson-wright} to deduce
\begin{equation*}
    \abs{W_{ij}}
    = \abs{\angles{\bfz_i, \bfOmega_{k \ell} \bfz_j}}
    \prec \norm{\bfOmega_{k \ell} \bfz_j} 
    = \angles{\bfz_j, \bfOmega_{\ell k} \bfOmega_{k \ell} \bfz_j}^{1/2} 
    \prec \norm{\bfOmega_{k \ell}}_\Fnorm 
    + \norm{\bfOmega_{k \ell} \bfOmega_{\ell k}}_\Fnorm^{1/2}
    \lesssim \norm{\bfOmega_{k \ell}}_\Fnorm ,    
\end{equation*}
where the last step uses the elementary inequality $\norm{\bfA \bfB}_\Fnorm \leq \norm{\bfA}_\Fnorm \norm{\bfB}_\Fnorm$. Moreover, by the Cauchy--Schwarz inequality, we have
\begin{equation*}
    \norm{\bfOmega_{k \ell}}_\Fnorm
    = ({\Tr \bfSigma_k \bfSigma_\ell})^{1/2}
    \leq \norm{\bfSigma_k}_\Fnorm^{1/2} \norm{\bfSigma_\ell}_\Fnorm^{1/2}
    \lesssim \norm{\bfSigma_k}_\Fnorm + \norm{\bfSigma_\ell}_\Fnorm
    \leq \psi_{k \ell} ,
\end{equation*}
which completes the control of $W_{ij}$. Collecting the above bounds for $\zeta_i$, $\Upsilon_{ij}$ and $W_{ij}$, we obtain the desired bound $\abs{\xi_{ij}} \prec \psi_{k \ell}$ and therefore complete the proof of Lemma \ref{lemma:xi-control}.
\end{proof}

We now introduce a new matrix $\ddotK$, which serves as an intermediate approximation between the empirical kernel matrix $\bfK$ and the purely deterministic counterpart $\barK$. In Theorems \ref{thm:eval-bfK-barK} and \ref{thm:Linfty-evec-bfK-barK}, we treated $\bfK$ directly as a perturbation of $\barK$. As mentioned, here we refine that analysis by inserting the intermediate matrix $\ddotK$, which allows the eigenvalue and eigenvector perturbation argument to be carried out in two stages: first comparing $\ddotK$ with $\barK$, and then comparing $\bfK$ with $\ddotK$. To construct $\ddotK$, we again expand the kernelized similarity $\exp (- \norm{\bfx_i - \bfx_j}^2 / \hbar_t)$ around $\theta_{k \ell} / \hbar_t$, but this time we retain the first-order terms except for the one involving $W_{ij}$. More precisely, the entries of $\ddotK = [\ddot{K}_{ij}]_{i,j=1}^n$ are defined by 
\begin{equation}
    \ddot{K}_{ij} 
    = \sum_{t \in \caS_{k \ell} (0)} \alpha_t \braks[\bigg]{
    f ( {\theta_{k \ell}} / {\hbar_t} )
    + f^\prime ( {\theta_{k \ell}} / {\hbar_t} ) 
    \cdot \frac{\zeta_i + \zeta_j + 2\Upsilon_{ij}}{\hbar_t} },
    \qfor
    i \in \caC_k, ~ j \in \caC_\ell.
    \label{def:dot-Kij}
\end{equation}
Here and throughout the sequel, we write $f(x)=\exp(-x)$ for the kernel profile. In this way, the matrix $\ddotK$ provides a more accurate approximation to $\bfK$ than $\barK$. To express the definition of $\ddotK$ in matrix form, we introduce the following derivative matrix, which is again blockwise constant,
\begin{equation}
    \bfQ := [Q_{ij}]_{i, j = 1}^n
    \qwith
    Q_{ij} := \sum_{t \in \caS_{k \ell} (0)}
    \frac{\alpha_t f^\prime ( {\theta_{k \ell}} / {\hbar_t} )}{\hbar_t},
    \qfor
    i \in \caC_k, ~ j \in \caC_\ell.
    \label{def:bfQ}
\end{equation}
Then, we have
\begin{equation}
    \ddotK 
    = \barK + \bfQ \circ (\bfzeta \bfone^\top + \bfone \bfzeta^\top + 2\bfUps).
    \label{eqn:diff-dotK-barK}
\end{equation}
By definition, we have $\rank (\bfQ) \leq m$ and $\rank (\bfUps) \leq 2m$, where the latter follows immediately from
\begin{equation*}
    \bfUps = \sum\nolimits_{\ell=1}^m
    (\bfups_\ell \bfone_{\caC_\ell}^\top
    + \bfone_{\caC_\ell} \bfups_\ell^\top)
    \qwhere 
    \bfups_\ell (i)
    := \angles{\bfmu_k - \bfmu_\ell, \bfA_k \bfz_i} 
    \qfor i \in \caC_k.
\end{equation*}
Therefore, by the elementary inequality $\rank (\bfA \circ \bfB) \leq \rank (\bfA) \rank (\bfB)$, we find
\begin{equation*}
    \rank (\ddotK)
    \leq \rank (\barK)
    +  \rank (\bfQ) [2 + \rank (\bfUps)]
    \leq m (2m + 3).
\end{equation*}
In particular, the matrix $\ddotK$ remains low-rank. We also note that, in the definitions of $\barK$ in \eqref{def:bar-Kij} and $\ddotK$ in \eqref{def:dot-Kij}, the diagonal entries are not set to zero. The next result control the spectral norm of both the residual terms $\ddotK - \barK$ and $\bfK - \ddotK$. Its proof is presented in Section \ref{subsec:norm-bound-residual}.

\begin{proposition}[two-stage norm bounds]
\label{prop:norm-K-barK-dotK}
Suppose that Assumptions \ref{assump:multi-scale}--\ref{assump:tech-const} hold. Let $\bfK$, $\ddotK$ and $\barK$ be kernel matrices constructed using an admissible collection of bandwidths $\{ \hbar_t \}_{t=1}^T$. Then,
\begin{subequations}
\begin{align}
    \norm{\ddotK - \barK} 
    & \prec n \rho,
    \label{bound:ddotK-barK} \\
    \norm{\bfK - \ddotK} 
    & \prec 1 + \sqrt{n} \phi + n \rho^2.
    \label{bound:bfK-ddotK}
\end{align}
\end{subequations}
\end{proposition}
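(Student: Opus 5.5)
For \eqref{bound:ddotK-barK}, I would use the explicit formula \eqref{eqn:diff-dotK-barK}: $\ddotK-\barK=\bfQ\circ(\bfzeta\bfone^\top+\bfone\bfzeta^\top+2\bfUps)$. Since $\bfQ$ is blockwise constant, I write $\bfQ=\sum_{k,\ell}q_{k\ell}\bfone_{\caC_k}\bfone_{\caC_\ell}^\top$ with $q_{k\ell}=Q_{ij}$ for $i\in\caC_k$, $j\in\caC_\ell$. The Hadamard product then splits into blocks:
\begin{equation*}
    \bfone_{\caC_k}\bfone_{\caC_\ell}^\top\circ(\bfzeta\bfone^\top)=\diag(\bfzeta\circ\bfone_{\caC_k})\bfone_{\caC_k}\bfone_{\caC_\ell}^\top,
\end{equation*}
and $\bfUps$ is handled analogously through its rank-$2m$ representation. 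Each block is a rank-one matrix of the form $\bfa\bfb^\top$ with $\norm{\bfa\bfb^\top}=\norm{\bfa}\norm{\bfb}$.

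For the $\bfzeta$ term, $\norm{\bfzeta\circ\bfone_{\caC_k}}\prec\sqrt{n}\,\psi_{kk}$ by Lemma \ref{lemma:xi-control} together with a union bound over $n$ indices, which costs nothing under $\prec$. Also $\norm{\bfone_{\caC_\ell}}\le\sqrt n$. It remains to check that $\abs{q_{k\ell}}\psi_{kk}\lesssim\rho$. For $t\in\caS_{k\ell}(0)$ we have $\hbar_t\gtrsim\theta_{k\ell}$ and $\abs{f'}\le 1$, so $\abs{q_{k\ell}}\lesssim\theta_{k\ell}^{-1}$. However, $\psi_{kk}$ must be compared with $\theta_{k\ell}$, not only with $\theta_{kk}$. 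Because $\theta_{kk}=2\Tr\bfSigma_k\le 2\theta_{k\ell}$, we get $\psi_{kk}\le\rho\,\theta_{kk}\lesssim\rho\,\theta_{k\ell}$. The $\bfUps$ blocks are bounded the same way using $\psi_{k\ell}$ directly. Summing the finitely many blocks gives $\norm{\ddotK-\barK}\prec n\rho$.

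For \eqref{bound:bfK-ddotK}, I would Taylor expand each kernel entry with $t\in\caS_{k\ell}(0)$ to second order around $\theta_{k\ell}/\hbar_t$, writing $\xi_{ij}=\zeta_i+\zeta_j+2\Upsilon_{ij}-2W_{ij}$ for $i\neq j$. The residual $\bfK-\ddotK$ then consists of four pieces.

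\textbf{(a)} The diagonal correction, since $\bfK$ has zero diagonal while $\ddotK$ does not. This is a diagonal matrix with entries $O_\prec(1)$.

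\textbf{(b)} The terms with $t\in\caS_{k\ell}(1)$. Here $\theta_{k\ell}/\hbar_t\ge n^{c_{\ref{assump:multi-scale}}}c^2/2$, and on the high-probability event $\abs{\xi_{ij}}\le n^{\varepsilon}\rho\,\theta_{k\ell}$ the entries are at most $\exp(-n^{c/2})$. Bounding the operator norm by the Frobenius norm makes this piece negligible.

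\textbf{(c)} The second-order remainder $f''(\cdot)\xi_{ij}^2/\hbar_t^2$. Its entries are $O_\prec(\rho^2)$, so the Frobenius bound gives $\prec n\rho^2$.

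\textbf{(d)} The linear $W$ term $-2\bfQ\circ\bfW$ with its diagonal removed. This is the main obstacle, because it requires the genuine random-matrix improvement $\sqrt n\phi$ rather than the entrywise bound $n\rho$.

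Within block $(k,\ell)$, the off-diagonal part of $\bfW$ equals $\bfZ_k^\top\bfOmega_{k\ell}\bfZ_\ell$ minus its diagonal, where $\bfZ_k=[\bfz_i]_{i\in\caC_k}$. I would estimate its norm by a moment or high-trace argument, or by decoupling plus matrix concentration. For $k\ne\ell$ one can condition on $\bfZ_\ell$: then $\bfZ_k^\top\bfM$ with $\bfM=\bfOmega_{k\ell}\bfZ_\ell$ has independent isotropic rows, giving the heuristic
\begin{equation*}
    \norm{\bfZ_k^\top\bfM}\prec\norm{\bfM}_\Fnorm+\sqrt n\norm{\bfM}.
\end{equation*}
Then I would use $\norm{\bfM}_\Fnorm\prec\sqrt n\norm{\bfOmega_{k\ell}}_\Fnorm$ and $\norm{\bfM}\prec\norm{\bfOmega_{k\ell}}_\Fnorm+\sqrt n\norm{\bfOmega_{k\ell}}$. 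Combined with $\abs{q_{k\ell}}\lesssim\theta_{k\ell}^{-1}$, this would yield the target $\sqrt n(\norm{\bfOmega_{k\ell}}_\Fnorm+\sqrt n\norm{\bfOmega_{k\ell}})/\theta_{k\ell}\le\sqrt n\phi$.

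The weak point is that this heuristic is a Gram-matrix norm bound for general isotropic rows satisfying only Assumption \ref{assump:concentration}. If I cannot get it cleanly, the fallback is a $\bbe\Tr(\bfW\bfW^\top)^{q}$ moment computation using independence across $i$, with the diagonal case $k=\ell$ handled by a standard decoupling.
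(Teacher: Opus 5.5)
Your reduction follows the paper's proof. Splitting $\bfK-\ddotK$ into the diagonal correction, the $\caS_{k\ell}(1)$ terms, the second-order remainder and the hollowed $\bfQ\circ\caH(\bfW)$ term is exactly \eqref{eqn:diff-K-Kdot}, and you handle pieces (a)--(c) correctly. For \eqref{bound:ddotK-barK} your blockwise rank-one argument is valid but more than needed. The paper uses $\abs{\ddot K_{ij}-\bar K_{ij}}\prec\psi_{k\ell}/\theta_{k\ell}\le\rho$ uniformly and bounds the operator norm by the Frobenius norm. The $\psi_{kk}$ detour is also unnecessary, since $\abs{\zeta_i}\prec\norm{\bfSigma_k}_\Fnorm\le\psi_{k\ell}$ directly.

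The genuine gap is in (d). Your estimate rests on the Gram-type bound $\norm{\mathbf{Z}_k^\top\bfM}\prec\norm{\bfM}_\Fnorm+\sqrt n\norm{\bfM}$, used again for $\norm{\bfOmega_{k\ell}\mathbf{Z}_\ell}$. In general form this reads $\norm{\sum_i\bfy_i\bfy_i^\top}\prec\max_i\norm{\bfA_i}_\Fnorm^2+\sum_i\norm{\bfA_i}^2$ for independent $\bfy_i=\bfA_i\bfz_i$, and you only state it as a heuristic. Under Assumption \ref{assump:concentration} this is not routine. \eqref{eqn:hanson-wright} only gives tails of order $n^{-C}$, so the standard $\varepsilon$-net argument fails: it needs a union bound over exponentially many directions. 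Your fallback $\bbe\Tr(\bfW\bfW^\top)^q$ would require high mixed moments of vectors with dependent coordinates, where only linear and quadratic concentration is assumed. You do not indicate how that combinatorics would close.

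The paper fills exactly this hole with Lemma \ref{lemma:norm-cov-type}, proved by the barrier (feasible upper shift) method of \cite{srivastavaCovarianceEstimationDistributions2013,chafaiConvergenceExtremalEigenvalues2018}. The idea is as follows.
\begin{itemize}
\item Add $\bfy_\ell\bfy_\ell^\top$ one at a time, with deterministic barriers $u_0=n^{\varepsilon}\max_i\Tr\bfSigma_i$ and $u_\ell=u_{\ell-1}+n^{\varepsilon}\norm{\bfSigma_\ell}$.
\item Track the potentials $\Tr[(u_\ell-\bfS_\ell)^{-1}\bfSigma_i]$ for all $i$, using the inductive bound $\Tr[(u_{\ell-1}-\bfS_{\ell-1})^{-1}\bfSigma_i]\le n^{-\varepsilon}$.
\item By Sherman--Morrison, $u_\ell>\lambda_{\max}(\bfS_\ell)$ and the potentials do not increase, provided $\angles{\bfy_\ell,(u_\ell-\bfS_{\ell-1})^{-1}\bfy_\ell}$ and $\angles{\bfy_\ell,(u_\ell-\bfS_{\ell-1})^{-1}\bfSigma_i(u_\ell-\bfS_{\ell-1})^{-1}\bfy_\ell}$ are at most $n^{\varepsilon}/2$ times their traces. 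Their matrices are fixed given $\bfS_{\ell-1}$.
\item This needs only $O(n^2)$ applications of \eqref{eqn:hanson-wright}, so a polynomial union bound suffices and yields $\norm{\bfS_n}<u_n$.
\end{itemize}

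With that lemma in hand, the rest of your plan goes through. Conditioning on $\mathbf{Z}_\ell$ handles $k\neq\ell$, and a moment-level decoupling as in Lemma \ref{lemma:decoupling} handles the $k=\ell$ blocks. Together they give $\sqrt n\phi$ exactly as you computed. The paper instead decouples the whole matrix at once and applies the lemma twice.
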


Recall that $\phi \lesssim \sqrt{n}\rho$ by definition and that $\rho \ll 1$ under Assumption \ref{assump:signal}. Therefore, in the two-stage decomposition $\bfK-\barK = (\ddotK-\barK)+(\bfK-\ddotK)$, one may heuristically view $\ddotK-\barK$ as the leading structured component of the residual perturbation, with $\bfK-\ddotK$ treated as a further remainder term.

Denote the spectral decomposition of $\ddotK$ by
\begin{equation*}
    \ddotK 
    = \ddotU \ddotLamb \ddotU^\top,
    \qwhere \ddotLamb = \diag (\ddotlamb_1, \cdots, \ddotlamb_{m (2m + 3)})
    \qand
    \ddotU = [\ddotu_1, \cdots, \ddotu_{m (2m + 3)}].
\end{equation*}
Let $r \in \dbraks{m}$ be fixed. In parallel with \eqref{def:Lamb-U-r}, we define
\begin{equation*}
    \ddotLamb_r = \diag (\ddotlamb_1, \cdots, \ddotlamb_{r})
    \qand
    \ddotU_r = [\ddotu_1, \cdots, \ddotu_{r}].
\end{equation*}
The next result gives a refined bound for the spiked eigenvalues of $\ddotK$. Its proof is deferred to Section \ref{subsec:proof-refined-sqrtn}. This bound is sharper than the one obtained by directly combining \eqref{bound:ddotK-barK} with Weyl's inequality. We do not pursue an analogous refinement for $\norm{\bfLamb_r-\ddotLamb_r}$. Indeed, this quantity can be controlled directly by $\norm{\bfK-\ddotK}$ via Weyl's inequality, and the bound in \eqref{bound:bfK-ddotK} is already sufficiently sharp for our purposes. 

\begin{proposition}[eigenvalue perturbation]
\label{prop:dotlamb-barlamb}
Under the same setup as in Proposition \ref{prop:norm-K-barK-dotK},
\begin{equation}
    \norm{\ddotLamb_r - \barLamb_r}
    \prec \sqrt{n} \rho 
    + \frac{n^2 \rho^2}{\Delta_r (\barLamb)}.
    \label{bound:ddotLamb-barLamb}
\end{equation}
\end{proposition}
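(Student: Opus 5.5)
The idea is to treat $\bfE := \ddotK - \barK = \bfQ \circ (\bfzeta \bfone^\top + \bfone \bfzeta^\top + 2\bfUps)$ as a perturbation of $\barK$. We will use a second-order eigenvalue expansion in which the first-order term is averaged against the delocalized, blockwise-constant eigenvectors of $\barK$. Write $\barK = \barU_r \barLamb_r \barU_r^\top + \barU_\perp \barLamb_\perp \barU_\perp^\top$, where the second part has rank at most $m-r$ with eigenvalues at most $\barlamb_{r+1}$. By Proposition \ref{prop:norm-K-barK-dotK}, $\norm{\bfE} \prec n\rho$. Assumption \ref{assump:signal} gives $n\rho \leq n^{-c}\Delta_r(\barLamb)$, so Weyl's inequality shows that $\ddotlamb_1,\dots,\ddotlamb_r$ stay in a window around $\barlamb_1,\dots,\barlamb_r$ of width $o(\Delta_r(\barLamb))$. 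They are therefore separated from the rest of the spectrum of $\ddotK$ by $\gtrsim \Delta_r(\barLamb)$.

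\textbf{Step 1: first-order term.} We show that $\norm{\barU_r^\top \bfE \barU_r} \prec \sqrt{n}\rho$, and more generally that $\norm{\barU^\top \bfE \barU}\prec \sqrt n\rho$ for the full informative basis $\barU=\bfPhi\fkU$. Since $\bfQ$ is blockwise constant with entries $\lesssim 1/\hbar_t \lesssim 1/\theta_{k\ell}$, each entry of $\bfPhi^\top \bfE \bfPhi$ has the form $(n_kn_\ell)^{-1/2} Q_{k\ell}\sum_{i\in\caC_k,j\in\caC_\ell}(\zeta_i+\zeta_j+2\Upsilon_{ij})$. This reduces to sums like $\sqrt{n_\ell/n_k}\,Q_{k\ell}\sum_{i\in\caC_k}\zeta_i$ and similar sums of $\bfups_\ell(i)$. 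These are sums of $n_k$ independent centered variables, each $\prec \psi_{k\ell}$ by Lemma \ref{lemma:xi-control}. A high-moment (Marcinkiewicz--Zygmund) bound combined with Lemma \ref{lemma:basic-property-prec} gives $\sum_i \zeta_i \prec \sqrt{n}\,\psi_{k\ell}$. Multiplying by $Q_{k\ell}\lesssim\theta_{k\ell}^{-1}$ yields $\sqrt n\,\psi_{k\ell}/\theta_{k\ell}\le\sqrt n\rho$. One technical point is that $\zeta_i$ involves $\Tr\bfSigma_k$ through the cluster $k$ of $i$, so we must track which $\theta_{k\ell}$ pairs with which $Q$. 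Since $r$ and $m$ are fixed, entrywise bounds transfer to operator norms.

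\textbf{Step 2: second-order expansion.} For each eigenpair $(\ddotlamb_a,\ddotu_a)$ with $a\le r$, we start from $\ddotu_a = (\ddotlamb_a - \bfE)^{-1}\barK\ddotu_a$, or equivalently use the Schur-complement / resolvent characterization. Then $\ddotlamb_a$ is an eigenvalue of the $r\times r$ matrix
\begin{equation*}
    \bfM(\lambda) = \barLamb_r + \barU_r^\top \bfE \barU_r + \barU_r^\top \bfE\, \bfR_\perp(\lambda)\, \bfE \barU_r,
\end{equation*}
where $\bfR_\perp(\lambda)$ is the resolvent of $\ddotK$ compressed to the orthogonal complement of $\barU_r$. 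For $\lambda$ in the spiked window, $\norm{\bfR_\perp(\lambda)}\lesssim \Delta_r(\barLamb)^{-1}$, using the truncation $\Delta_r\le\barlamb_r$ and the separation from the zero eigenvalues. This gives $\norm{\bfR_\perp}\norm{\bfE}^2\prec n^2\rho^2/\Delta_r(\barLamb)$. A Weyl comparison between $\bfM(\ddotlamb_a)$ and $\barLamb_r$, combined with a standard monotonicity/fixed-point argument, then matches the $a$-th eigenvalue and yields \eqref{bound:ddotLamb-barLamb}.

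\textbf{Main obstacle.} The crux is Step 1, namely that the projection onto $\barU_r$ gains a factor $\sqrt n$ over $\norm{\bfE}\prec n\rho$. This requires the uniform moment-level concentration of the sums of $\zeta_i$ and of $\Upsilon$, rather than the entrywise bounds alone. A secondary care point is the correct labeling of eigenvalues when there are degeneracies inside $\barLamb_r$. We will handle this by bounding $\norm{\ddotLamb_r-\barLamb_r}$ via the eigenvalues of the $r\times r$ Hermitian matrix $\bfM$ as a whole, using interlacing, rather than treating each eigenvalue individually.
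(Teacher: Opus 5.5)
Your proposal is correct. Step 1 is the paper's Lemma \ref{lemma:u-dotK-barK-u}. The paper proves the more general delocalized bilinear bound \eqref{eqn:average-dotK-barK} through the concatenated Bernstein/Hanson--Wright estimates of Lemma \ref{lemma:long-concentration}. You instead use the block-constant structure of $\bfQ$ and $\bfPhi$ to reduce to cluster sums of $\zeta_i$ and $\bfups_\ell(i)$, which suffices here.

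Step 2 takes a genuinely different route. The paper never compresses $\ddotK$ onto $\mathrm{range}(\barU_r)^\perp$. It works with the resolvent of the perturbation itself, $\bfR(z)=(\ddotK-\barK-z)^{-1}$, which exists for $\abs{z}\ge\barlamb_r-\Delta_r(\barLamb)/4$ since $\norm{\ddotK-\barK}\ll\Delta_r(\barLamb)$. Via Sylvester's identity it reduces $\det(\ddotK-z)=0$ to $\det[\barLamb^{-1}+\barU^\top\bfR(z)\barU]=0$, with the full $m$-column $\barU$. It then expands $\bfR(z)=-\bfI/z-(\ddotK-\barK)/z^2+(\ddotK-\barK)\bfR(z)(\ddotK-\barK)/z^2$. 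Finally it localizes and labels the eigenvalues by a forbidden-region argument plus Rouch\'e's theorem on discs around each $\barlamb_k$.

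Your Schur complement with respect to $\mathrm{range}(\barU_r)$ is more elementary. It is purely real-variable, and it never inverts the full $\barLamb$, which the paper's formula tacitly requires to be nonsingular. You should make the labeling step explicit, though. A Weyl comparison at $\lambda=\ddotlamb_a$ only places $\ddotlamb_a$ near \emph{some} $\barlamb_b$. The clean fix is Haynsworth inertia additivity: for real $\lambda$ above the top eigenvalue of the compressed matrix, it gives $\#\{a:\ddotlamb_a>\lambda\}=\#\{b:\lambda_b(\bfM(\lambda))>\lambda\}$. Combined with Weyl, this yields $\abs{\ddotlamb_a-\barlamb_a}\le\sup_\lambda\norm{\bfM(\lambda)-\barLamb_r}$ index by index, degeneracies included.

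The paper's route buys two things in return. First, it keeps $\baru_{r+1},\dots,\baru_m$ in the low-rank part, so second-order coupling is only to the null space of $\barK$. This yields the sharper error $\sqrt n\rho+n^2\rho^2/\barlamb_k$, instead of your $n^2\rho^2/\Delta_r(\barLamb)$, which comes from $\norm{\bfR_\perp}\lesssim\Delta_r(\barLamb)^{-1}$. Second, the complex Green-function representation and the expansion of $\barU^\top\bfR(z)\barU$ are reused directly in the contour-integral proof of Lemma \ref{lemma:inner-prod-dotU-barU}.
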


We note that Theorem \ref{thm:eval-bfK-barK} follows immediately from Propositions \ref{prop:norm-K-barK-dotK} and \ref{prop:dotlamb-barlamb}. Indeed, the first bound in \eqref{eqn:eigenvalue-perturb-K} follows from \eqref{bound:ddotK-barK} and \eqref{bound:bfK-ddotK} by the triangle inequality. For the second bound in \eqref{eqn:eigenvalue-perturb-K}, Weyl's inequality, together with \eqref{bound:bfK-ddotK} and \eqref{bound:ddotLamb-barLamb}, yields
\begin{equation*}
    \norm{\bfLamb_r - \barLamb_r}
    \leq \norm{\bfLamb_r - \ddotLamb_r} + \norm{\ddotLamb_r - \barLamb_r}
    \leq \norm{\bfK - \ddotK} + \norm{\ddotLamb_r - \barLamb_r}
    \leq 1 + \sqrt{n} \phi 
    + \frac{n^2 \rho^2}{\Delta_r (\barLamb)},
\end{equation*}
where we used $\sqrt{n} \rho \lesssim 1 + n \rho^2$ and $\Delta_r (\barLamb) \lesssim n$ in the last step.

We now turn to the eigenvector perturbation analysis. We decompose
\begin{equation*}
    \bfU_r\fkS_r^{\bfu} - \barU_r
    = \pars{\bfU_r\fkS_r^{\bfu} - \ddotU_r\ddotfkS_r^{\bfu}}
    + \pars{\ddotU_r \ddotfkS_r^{\bfu} - \barU_r},
\end{equation*}
where $\ddotfkS_r^{\bfu} = \sgn(\ddotU_r^\top \barU_r)$ is the orthogonal alignment matrix between $\ddotU_r$ and $\barU_r$. 

\begin{proposition}[two-stage $\ell_{2, \infty}$ eigenvector perturbation]
\label{prop:evec-two-stage}
Under the same setup as in Proposition \ref{prop:norm-K-barK-dotK},
\begin{subequations}
\begin{align}
    \norm{\ddotU_r \ddotfkS_r^{\bfu} - \barU_r}_{2,\infty}
    & \prec 
    \frac{\sqrt{n} \rho}{\Delta_r (\barLamb)}
    + \frac{n \rho}{\Delta_r (\barLamb)^2} 
    + \frac{n^{5/2} \rho^2}{\Delta_r (\barLamb)^3} ,
    \label{bound:ddotU-barU} \\
    \norm{\bfU_r \fkS_r^{\bfu} - \ddotU_r \ddotfkS_r^{\bfu}}_{2,\infty}
    & \prec 
    \frac{\sqrt{n} + n \phi + n^{3/2} \rho^2}{\Delta_r (\barLamb)^2} .
    \label{bound:bfU-ddotU}
\end{align}    
\end{subequations} 
\end{proposition}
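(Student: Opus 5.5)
We prove the two bounds separately, in the same order as the two-stage decomposition.

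\emph{First bound \eqref{bound:ddotU-barU}.} Write $\bfE := \ddotK - \barK = \bfQ \circ (\bfzeta \bfone^\top + \bfone \bfzeta^\top + 2\bfUps)$. Start from the eigen-identity $\ddotU_r = \ddotK \ddotU_r \ddotLamb_r^{-1} = \barK \ddotU_r \ddotLamb_r^{-1} + \bfE \ddotU_r \ddotLamb_r^{-1}$. Since $\barK = \barU \barLamb \barU^\top$, this gives the expansion
\begin{equation*}
    \ddotU_r \ddotfkS_r^{\bfu} - \barU_r
    = \barU \barLamb \barU^\top \ddotU_r \ddotLamb_r^{-1}\ddotfkS_r^{\bfu} - \barU_r
    + \bfE \barU_r \barLamb_r^{-1}
    + \bfE (\ddotU_r \ddotLamb_r^{-1}\ddotfkS_r^{\bfu} - \barU_r \barLamb_r^{-1}).
\end{equation*}
\begin{itemize}
\item The first group is a combination of columns of $\barU$. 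Its $\ell_{2,\infty}$ norm is therefore at most $n^{-1/2}$ times the spectral deviation of $\barU^\top \ddotU_r$ from alignment. We control that deviation by Davis--Kahan, combining \eqref{bound:ddotK-barK} with Proposition \ref{prop:dotlamb-barlamb}.
\item The key term is $\bfE \barU_r \barLamb_r^{-1}$. Because $\bfQ$ is blockwise constant and $\barU_r$ is piecewise constant, row $i$ of $\bfE\barU_r$ equals $\zeta_i$ times a deterministic $O(\sqrt n)$ vector, plus averages $\sum_{j\in\caC_\ell}\zeta_j/\sqrt{n_\ell}$ and $\sum_{j\in\caC_\ell}\Upsilon_{ij}/\sqrt{n_\ell}$. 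Lemma \ref{lemma:xi-control} together with independence gives the entrywise bound $\prec \sqrt n\rho$. Dividing by $\Delta_r(\barLamb)$ yields the first term $\sqrt n \rho/\Delta_r$.
\item The last term is handled with the low-rank structure: $\bfE$ has rank $\le 2m+3$. Its rows therefore satisfy $\norm{\bfE}_{2,\infty}\prec \sqrt n \rho$, since each entry is $\prec\rho$ after the $1/\hbar_t$ scaling. Hence $\norm{\bfE \bfM}_{2,\infty}\le \norm{\bfE}_{2,\infty}\norm{\bfM}$. With $\norm{\ddotU_r \ddotLamb_r^{-1}\ddotfkS - \barU_r\barLamb_r^{-1}} \lesssim n\rho/\Delta_r^2 + \dots$, this produces the $n\rho/\Delta_r^2$ and $n^{5/2}\rho^2/\Delta_r^3$ terms. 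The eigenvalue error $n^2\rho^2/\Delta_r$ from Proposition \ref{prop:dotlamb-barlamb} enters through $\ddotLamb_r^{-1}-\barLamb_r^{-1}$.
\end{itemize}

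\emph{Second bound \eqref{bound:bfU-ddotU}.} Here $\ddotK$ is random, so we condition on it as the new signal and write $\bfF := \bfK - \ddotK$, with $\norm{\bfF}\prec 1+\sqrt n\phi+n\rho^2$ from \eqref{bound:bfK-ddotK}. Repeat the expansion $\bfU_r = \ddotK \bfU_r\bfLamb_r^{-1} + \bfF\bfU_r\bfLamb_r^{-1}$.
\begin{itemize}
\item The $\ddotK$ part is again a combination of columns of $\ddotU$. These columns have $\ell_{2,\infty}$ norm $\lesssim n^{-1/2}$ by the first bound and the low-rank delocalization of $\ddotU$. Davis--Kahan with $\norm{\bfF}/\Delta_r$ then gives $n^{-1/2}\norm{\bfF}/\Delta_r$, which is bounded by the claimed rate once multiplied by another factor of $\sqrt n/\Delta_r$ through the usual second-order argument.
\item The main term is $\norm{\bfF\ddotU_r}_{2,\infty}/\Delta_r$. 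We bound it by $\norm{\bfF\barU_r}_{2,\infty}+\norm{\bfF}\,\norm{\ddotU_r\ddotfkS - \barU_r}$.
\end{itemize}

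\emph{Main obstacle.} The hard step is the entrywise bound $\norm{\bfF\barU_r}_{2,\infty}\prec(\sqrt n+n\phi+n^{3/2}\rho^2)/\Delta_r$ scale. Here $\bfF$ contains $-2\bfQ\circ\bfW$, second-order Taylor remainders $O(\xi_{ij}^2/\hbar_t^2)\prec\rho^2$, the $\caS_{k\ell}(1)$ kernels, which are $\prec\rho^2$ by the scale separation, and the diagonal correction, which is $O(1)$. The remainders contribute $n\rho^2\cdot n^{-1/2}\cdot n$ crudely. For the $\bfW$ part, row $i$ of $(\bfQ\circ\bfW)\barU_r$ equals $\langle \bfA_k\bfz_i, \sum_j c_j \bfA_{\ell}\bfz_j\rangle$ with $c_j\asymp n^{-1/2}$. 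Conditioning on $\bfz_i$ and applying \eqref{eqn:bernstein} and \eqref{eqn:hanson-wright} gives $\prec(\norm{\bfOmega_{k\ell}}_\Fnorm+\sqrt n\norm{\bfOmega_{k\ell}})/\theta_{k\ell}\le\phi$, up to the leave-one-out term $j=i$. Dividing through by $\Delta_r$, and absorbing the extra $\sqrt n/\Delta_r$ loss from the cross term, yields \eqref{bound:bfU-ddotU}.
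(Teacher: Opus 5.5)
Your first-stage expansion is viable, and once repaired it is a different and simpler route than the paper's. However, the justification you give for its first group fails as written. That group is $\barU\mathbf{C}$ with $\mathbf{C}=\barLamb\barU^\top\ddotU_r\ddotLamb_r^{-1}\ddotfkS_r^{\bfu}-[\bfI_r;\bfzero]$, and $\mathbf{C}$ is not the alignment defect of $\barU^\top\ddotU_r$. Write $\barU_\perp=[\baru_{r+1},\dots,\baru_m]$ and $\Delta:=\Delta_r(\barLamb)$. The lower block $\barLamb_\perp\barU_\perp^\top\ddotU_r\ddotLamb_r^{-1}$ carries $\norm{\barLamb_\perp}$, which may be of order $n$, against $\norm{\ddotLamb_r^{-1}}\asymp\Delta^{-1}$. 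Davis--Kahan plus Proposition~\ref{prop:dotlamb-barlamb} (which concerns only the top $r$ eigenvalues) then give $n^{-1/2}\cdot n\cdot(n\rho/\Delta)\cdot\Delta^{-1}=n^{3/2}\rho/\Delta^2$. This exceeds the right-hand side of \eqref{bound:ddotU-barU} when $\Delta\ll n$. The fix is the eigen-equation $\barK\ddotU_r=\ddotU_r\ddotLamb_r-\mathbf{E}\ddotU_r$ with $\mathbf{E}:=\ddotK-\barK$. Since $\barK=\mathbf{P}\barK$ for the block-averaging projector $\mathbf{P}=\barU\barU^\top$, the first group equals $\mathbf{P}(\ddotU_r\ddotfkS_r^{\bfu}-\barU_r)-\mathbf{P}\mathbf{E}\ddotU_r\ddotLamb_r^{-1}\ddotfkS_r^{\bfu}$. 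The bound $\norm{\mathbf{P}\mathbf{X}}_{2,\infty}\lesssim n^{-1/2}\norm{\mathbf{X}}$ then gives $\prec\sqrt n\rho/\Delta$. Your other two terms sum to $\mathbf{E}\ddotU_r\ddotLamb_r^{-1}\ddotfkS_r^{\bfu}$, whose rows are $\prec\norm{\mathbf{E}}_{2,\infty}/\Delta\prec\sqrt n\rho/\Delta$. So your route yields \eqref{bound:ddotU-barU}, even without its last two terms, from Davis--Kahan, Weyl and the entrywise bound \eqref{bound-entry-Kdot-Kbar} alone. The paper instead needs Lemma~\ref{lemma:u-dotK-barK-u}, Proposition~\ref{prop:dotlamb-barlamb} and the contour-integral Lemma~\ref{lemma:inner-prod-dotU-barU}. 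The reason is that its term $\mathrm{IV}$ in \eqref{eqn:decomp-Udot-Ubar} contains $\barK(\ddotU_r\ddotfkQ_r-\barU_r)\barLamb_r^{-1}$, where this cancellation is unavailable.

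The genuine gap is in the second stage.

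\emph{Delocalization of $\ddotU$.} Your claim that the $\ddotK$ part is a combination of columns of $\ddotU$ with rows of size $n^{-1/2}$ is unsupported. The bound \eqref{bound:ddotU-barU} controls only $\ddotU_r$, whereas $\ddotK$ has rank up to $m(2m+3)$. Its other eigenvectors lie in the span of $\bfone_{\caC_k}$, $\bfzeta\circ\bfone_{\caC_k}$ and $\bfups_\ell\circ\bfone_{\caC_k}$, and Assumption~\ref{assump:concentration} does not make these delocalized, since it only gives upper bounds on $\zeta_i$. For example, take $\bfA_k=\bfI$ and $\bfz_i=R_i\boldsymbol{\theta}_i$ with $\boldsymbol{\theta}_i$ uniform on the sphere of radius $\sqrt{d_k}$, where $R_i^2=1+d_k^{-1/2}$ with probability $1/n$ and slightly below $1$ otherwise. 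Then $\bfzeta$ is essentially a single spike, and $\ddotK$ has a localized eigenvector.

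\emph{Alignment.} You never reconcile the Procrustes factors. $\fkS_r^{\bfu}$ and $\ddotfkS_r^{\bfu}$ align to $\barU_r$, not to each other. ``The usual second-order argument'' does not show that their discrepancy is of order $(n^{-1/2}+n^{3/2}\rho^2)/\Delta^2$ in rows.

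\emph{The missing idea.} The paper never uses delocalization beyond $\ddotU_r$. It bounds $\bfK(\bfU_r\fkQ_r-\ddotU_r\ddotfkQ_r)\ddotLamb_r^{-1}$ and $\bfK\bfU_r\fkQ_r(\bfLamb_r^{-1}-\ddotLamb_r^{-1})$ by the trivial $\norm{\bfK}_{2,\infty}\lesssim\sqrt n$ times Davis--Kahan for $\bfK$ versus $\ddotK$. This is exactly where the shape $\sqrt n\,(1+\sqrt n\phi+n\rho^2)/\Delta^2$ of \eqref{bound:bfU-ddotU} comes from. The sign and commutator corrections are handled through $\norm{\fkS_r-\fkQ_r}\prec(1+n^2\rho^2)/\Delta^2$ together with a self-improving bound on $\norm{\bfU_r\fkS_r}_{2,\infty}$.

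\emph{Smaller points.}
\begin{itemize}
\item The cross term needs $\norm{\bfK-\ddotK}_{2,\infty}\prec 1+\sqrt n\rho$. Using the operator norm $\norm{\bfK-\ddotK}$ produces $n^{3/2}\rho\phi/\Delta^2$, which exceeds the target when $\phi\gg\rho$ and $\sqrt n\rho\gg1$.
\item You cannot condition on $\ddotK$. It is built, through $\zeta_i$ and $\Upsilon_{ij}$, from the same $\bfz_i$ that drive $\bfQ\circ\caH(\bfW)$, so Assumption~\ref{assump:concentration} would be lost after conditioning. No conditioning is needed anyway, since $\barU_r$ is deterministic.
\item The Taylor remainders contribute $\sqrt n\rho^2$ per row of $(\bfK-\ddotK)\barU_r$, not $n^{3/2}\rho^2$.
\item Your $\bfW$ estimate requires Lemma~\ref{lemma:long-concentration} for the sum over $j$, and it actually gives $\prec\rho$, as in Lemma~\ref{lemma:QW-barU-infnorm}.
\end{itemize}
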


The perturbation bounds \eqref{bound:ddotU-barU} and \eqref{bound:bfU-ddotU} are proved in Sections
\ref{subsec:dominant-evec-entrywise} and \ref{subsec:subleading-evec-entrywise}, respectively. In the optimal eigengap regime $\Delta_r(\barLamb)\asymp n$, these bounds simplify to
\begin{equation*}
    \norm{\ddotU_r \ddotfkS_r - \barU_r}_{2,\infty}
    \prec 
    {\rho} / {\sqrt{n}} 
    \qand
    \norm{\bfU_r\fkS_r-\ddotU_r\ddotfkS_r}_{2,\infty}
    \prec 
    1 / {n^{3/2}}
    + \phi / {n}
    + \rho^2 / \sqrt{n} .
\end{equation*}
These rates are consistent with the operator-norm bounds \eqref{bound:ddotK-barK} and \eqref{bound:bfK-ddotK}. Indeed, since $\norm{\barK} \asymp \norm{\barK}_{\Fnorm} \asymp n$ and $\norm{\barU_r}_{2,\infty} \asymp n^{-1/2}$, the first-stage perturbation bounds have the same relative scale:
\begin{equation*}
    {\norm{\ddotK - \barK}}
    \big / {\norm{\barK}}
    \prec \rho
    \qand
    {\norm{\ddotU_r \ddotfkS_r - \barU_r}_{2,\infty}}
    \big / {\norm{\barU_r}_{2,\infty}}
    \prec \rho.
\end{equation*}
Likewise, the second-stage bounds satisfy
\begin{equation*}
    {\norm{\bfK-\ddotK}} 
    \big / {\norm{\barK}}
    \prec 1 / {n}
    + \phi / \sqrt{n}
    + \rho^2
    \qand
    {\norm{\bfU_r\fkS_r-\ddotU_r\ddotfkS_r}_{2,\infty}}
    \big / {\norm{\barU_r}_{2,\infty}}
    \prec 1 / {n}
    + \phi / \sqrt{n}
    + \rho^2.
\end{equation*}
Thus, in the optimal eigengap regime, the rowwise eigenvector perturbation at each stage is of the same relative order as the corresponding matrix perturbation.

We conclude this section by recording several elementary properties of $\prec$ that will be used throughout the paper. Each of these properties follows directly from the definition.

\begin{lemma}[basic properties of $\prec$] 
\label{lemma:basic-property-prec}
The relation $\prec$ introduced in Definition \ref{def:stochastic-domination} satisfies the following properties.
\begin{enumerate}[label = (\roman*)]
    \item $X \prec Y$ and $Y \prec Z$ imply $X \prec Z$.
    \item If $X_1 \prec Y_1$ and $X_2 \prec Y_2$, then $X_1 + X_2 \prec Y_1 + Y_2$ and $X_1 X_2 \prec Y_1 Y_2$.
    \item Suppose that $X(s, t) \prec Y(s, t)$ uniformly in $s \in \mathcal{S}_n$ and $t \in \mathcal{T}_n$.
    If $|\mathcal{S}_n| \leq n^C$ for some constant $C$, then
    \begin{equation*}
        \sum\nolimits_{s \in \mathcal{S}_n} X(s,t) 
        \prec 
        \sum\nolimits_{s \in \mathcal{S}_n} Y(s,t)
        \qand
        \max_{s \in \mathcal{S}_n} X(s,t) 
        \prec 
        \max_{s \in \mathcal{S}_n} Y(s,t),
        \quad \text{ uniformly in } \quad
        t \in \mathcal{T}_n.
    \end{equation*}
    \item \label{item:cancel-prec} If $X \prec Y + n^{-\varepsilon} X$ for some fixed $\varepsilon > 0$, then $X \prec Y$.
    \item \label{item:compatibility-expectation} Let $\Psi$ be a deterministic control parameter satisfying $\Psi \geq n^{-C}$. Suppose that for all $\ell \in \bbn_+$ there is a constant $C_\ell$ such that the nonnegative random variable $X$ satisfies $\bbe X^\ell \leq n^{C_\ell}$. Then,
    \begin{equation*}
        X \prec \Psi
        \quad \Longleftrightarrow \quad
        \bbe X^\ell \prec \Psi^\ell
        ~ \text{ for every fixed } ~ \ell \in \bbn_+.
    \end{equation*}
\end{enumerate}
\end{lemma}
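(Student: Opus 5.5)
The statement is Lemma \ref{lemma:basic-property-prec}, and each item is unpacked directly from Definition \ref{def:stochastic-domination}. I will fix $\varepsilon>0$ and $C>0$ throughout, and split the exceptional events into pieces with slightly smaller exponents.

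For (i), I intersect the good events. On $\{X\le n^{\varepsilon/2}Y\}\cap\{Y\le n^{\varepsilon/2}Z\}$ we have $X\le n^{\varepsilon}Z$. The complement has probability at most $2n^{-C-1}\le n^{-C}$ for large $n$, which is uniform in $t$ because both input bounds are.

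For (ii), the same intersection works. On $\{X_1\le n^{\varepsilon}Y_1\}\cap\{X_2\le n^{\varepsilon}Y_2\}$ we get $X_1+X_2\le n^{\varepsilon}(Y_1+Y_2)$, using nonnegativity. Using exponent $\varepsilon/2$ for each factor instead gives $X_1X_2\le n^{\varepsilon}Y_1Y_2$.

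For (iii), I apply the definition with $C'=C+C_0$, where $|\mathcal{S}_n|\le n^{C_0}$. A union bound over $s\in\mathcal{S}_n$ then makes every $X(s,t)\le n^{\varepsilon}Y(s,t)$ simultaneously, outside an event of probability at most $n^{C_0}n^{-C'}=n^{-C}$. On this event both $\sum_s X\le n^{\varepsilon}\sum_s Y$ and $\max_s X\le n^{\varepsilon}\max_s Y$ hold, and uniformity in $t$ carries over.

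For (iv), take $\varepsilon'<\varepsilon$. Outside a negligible event, $X\le n^{\varepsilon'}(Y+n^{-\varepsilon}X)$, so $(1-n^{\varepsilon'-\varepsilon})X\le n^{\varepsilon'}Y$. Since $1-n^{\varepsilon'-\varepsilon}\ge 1/2$ for large $n$, this gives $X\le 2n^{\varepsilon'}Y\le n^{2\varepsilon'}Y$, and $\varepsilon'$ is arbitrary.

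Item (v) is the only step with real content, because the expectation must also be controlled on the small bad event. For $\Rightarrow$, write $\bbe X^\ell=\bbe[X^\ell\bbone_{X\le n^\varepsilon\Psi}]+\bbe[X^\ell\bbone_{X>n^\varepsilon\Psi}]$. The first term is at most $n^{\ell\varepsilon}\Psi^\ell$. By Cauchy--Schwarz the second is at most $(\bbe X^{2\ell})^{1/2}\bbp(X>n^\varepsilon\Psi)^{1/2}\le n^{C_{2\ell}/2}n^{-D/2}$, where $D$ is free. Since $\Psi^\ell\ge n^{-C\ell}$, choosing $D$ large makes the second term at most $\Psi^\ell$. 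Hence $\bbe X^\ell\le 2n^{\ell\varepsilon}\Psi^\ell$. As the quantity is deterministic and $\varepsilon$ is arbitrary, this is $\bbe X^\ell\prec\Psi^\ell$.

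For $\Leftarrow$, I use Markov's inequality with a high moment. Given $\varepsilon$ and $C$, pick $\ell>2C/\varepsilon$. Since the deterministic bound $\bbe X^\ell\prec\Psi^\ell$ gives $\bbe X^\ell\le n^{\ell\varepsilon/2}\Psi^\ell$ for large $n$, Markov yields
$\bbp(X>n^\varepsilon\Psi)\le \bbe X^\ell/(n^{\ell\varepsilon}\Psi^\ell)\le n^{-\ell\varepsilon/2}\le n^{-C}$.
The only care needed here is the order of quantifiers: the moment order $\ell$ is chosen depending on $(\varepsilon,C)$.
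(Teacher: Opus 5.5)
Your proposal is correct and is exactly the direct verification from Definition~\ref{def:stochastic-domination} that the paper has in mind; the paper gives no proof at all, only stating that each property follows directly from the definition. Your argument for (v) is the standard one and is sound: you handle the bad event through Cauchy--Schwarz, the a priori moment bound and $\Psi\geq n^{-C}$, and for the converse you apply Markov with $\ell$ chosen after $(\varepsilon,C)$.
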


\subsection{Proof of Proposition \ref{prop:norm-K-barK-dotK}}
\label{subsec:norm-bound-residual}

\begin{proof}[Proof of \eqref{bound:ddotK-barK}]
Let $i \in \caC_k$ and $j \in \caC_\ell$. Recalling the definition of $\bar{K}_{ij}$ in \eqref{def:bar-Kij} and $\ddot{K}_{ij}$ in \eqref{def:dot-Kij}, we have
\begin{equation*}
    \ddot{K}_{ij} - \bar{K}_{ij}
    = \sum\nolimits_{t \in \caS_{k \ell} (0)} 
    \alpha_t f^\prime ( {\theta_{k \ell}} / {\hbar_t} ) 
    \cdot \frac{\zeta_i + \zeta_j + 2\Upsilon_{ij}}{\hbar_t}.
\end{equation*}
Note that we have $\min_{t \in \caS_{k \ell} (0)} \hbar_t \gtrsim \theta_{k \ell}$ by definition of $\caS_{k \ell} (0)$ in \eqref{def:caS}. Recall the definition of $\rho$ in \eqref{def:ctrl-para-rho-phi}. Applying Lemma \ref{lemma:xi-control}, together with the bound $\norm{f^\prime}_{\infty} \lesssim 1$, we obtain
\begin{equation}
    \abs{\ddot{K}_{ij} - \bar{K}_{ij}} 
    \prec {\psi_{k\ell}}/{\theta_{k \ell}}
    \leq \rho,
    \quad \text{ uniformly in } \quad
    i, j \in \dbraks{n},
    \label{bound-entry-Kdot-Kbar}
\end{equation}
It follows that
\begin{equation*}
    \norm{\ddotK - \barK} 
    \leq \norm{\ddotK - \barK}_{\Fnorm}
    = \pars[\Big]{\sum\nolimits_{i, j = 1}^n \abs{\ddot{K}_{ij} - \bar{K}_{ij}}^2}^{1/2}
    \prec n \rho,
\end{equation*}
which completes the proof of \eqref{bound:ddotK-barK}.
\end{proof}

\begin{proof}[Proof of \eqref{bound:bfK-ddotK}]
For $i \in \caC_k$ and $j \in \caC_\ell$ with $i \neq j$, we have the decomposition $\norm{\bfx_i - \bfx_j}^2 = \theta_{k \ell} + \xi_{ij}$. Applying Taylor expansion to $f (\norm{\bfx_i - \bfx_j}^2 / \hbar_t)$ around $\theta_{k\ell}/\hbar_t$ for each $t \in \caS_{k \ell}(0)$, we obtain
\begin{align}
\begin{split}
    K_{ij}
    = & ~ \sum\nolimits_{t \in \caS_{k \ell} (0)} 
    \alpha_t \braks[\Big]{f ( {\theta_{k \ell}} / {\hbar_t} ) 
    + f'( {\theta_{k\ell}} / {\hbar_t} )
    \cdot ({\xi_{ij}} / {\hbar_t}) 
    + f^{\prime \prime} ( {\eta_{t, ij}} / {\hbar_t} )
    \cdot ({\xi_{ij}} / {\hbar_t})^2 / 2 } \\
    & + \sum\nolimits_{t \in \caS_{k \ell} (1)}
    \alpha_t f \pars[\big]{ (\theta_{k \ell} + \xi_{ij}) / {\hbar_t} } ,        
\end{split} \label{eqn:K-taylor}
\end{align}
where each $\eta_{t, ij}$ lies between $\theta_{k \ell}$ and $\theta_{k \ell}+\xi_{ij}$. Recall the decomposition of $\xi_{ij}$ in \eqref{eqn:xi-decomp} and the definitions of $\ddotK_{ij}$ in \eqref{def:dot-Kij} and $W_{ij}$ in \eqref{def:zeta-Ups-W}. Let $\caH (\bfA) := \bfA - \diag(\bfA)$ denote the hollowing operator, which sets the diagonal entries of a square matrix to zero. Then the expansion
\eqref{eqn:K-taylor} can be written in matrix form as
\begin{equation}
    \bfK 
    = \caH (\ddotK)
    + \bfQ \circ \caH (\bfW) 
    + \bfR + \bfR^\prime,
    \label{eqn:diff-K-Kdot}
\end{equation}
where the entries of $\bfR = [R_{ij}]_{i,j=1}^n$ and $\bfR^\prime = [R_{ij}^\prime]_{i,j=1}^n$ are defined as follows: for $i \in \caC_k$ and $j \in \caC_\ell$,
\begin{subequations}
\begin{align}
    R_{ij} & := (1-\delta_{ij}) 
    \sum\nolimits_{t \in \caS_{k \ell} (0)} 
    \alpha_t f^{\prime \prime} ( {\eta_{ij}} / {\hbar_t} ) 
    \cdot (\xi_{ij} / \hbar_t)^2 / 2,
    \label{tmp:Rij} \\
    R_{ij}^\prime & := (1-\delta_{ij}) 
    \sum\nolimits_{t \in \caS_{k \ell} (1)}
    \alpha_t f \pars[\big]{(\theta_{k \ell} + \xi_{ij}) / {\hbar_t}}.
    \label{tmp:Rij-prime}
\end{align}    
\end{subequations}
Now, it follows from \eqref{eqn:diff-K-Kdot} that
\begin{equation*}
    \norm{\bfK - \ddotK} 
    \leq \norm{\bfR}
    + \norm{\bfR^\prime}
    + \norm{\bfQ \circ \caH (\bfW)}
    + \norm{\diag (\ddotK)}.
\end{equation*}

\shortpara{Control of $\mathbf{R}$}: Let $i \in \caC_k$ and $j \in \caC_\ell$ with $i \neq j$. By \eqref{eqn:bound-xi-zeta-Ups}, we have $\xi_{ij} = \oprec{\psi_{k\ell}}$. Together with the bounds $\norm{f''}_{\infty} \lesssim 1$ and $\min_{t \in \caS_{k \ell}(0)} \hbar_t \gtrsim \theta_{k \ell}$, this yields the entrywise estimate $\abs{R_{ij}} \prec \rho^2$. Consequently, 
\begin{equation*}
    \norm{\mathbf{R}} 
    \leq \norm{\mathbf{R}}_{\Fnorm}
    = \pars[\Big]{\sum\nolimits_{i, j = 1}^n \abs{R_{ij}}^2}^{1/2}
    \prec n \rho^2.
\end{equation*}

\shortpara{Control of $\mathbf{R}^\prime$}: Let $i \in \caC_k$ and $j \in \caC_\ell$ with $i \neq j$. Fix an arbitrarily large constant $C>0$. Since $\xi_{ij} = \oprec{\psi_{k\ell}}$, the definition of stochastic domination, together with the bound $\rho \lesssim n^{-c_{\ref{assump:signal}}}$ from Assumption \ref{assump:signal}, implies that there exists an event $\Xi$ with $\bbp(\Xi) \geq 1-n^{-C}$ such that, on $\Xi$,
\begin{equation*}
    \abs{\xi_{i j}}
    \leq n^{c_{\ref{assump:signal}}/2} \psi_{k \ell} 
    \leq \theta_{k \ell} / 2.
\end{equation*}
In particular, on the event $\Xi$, we have $\theta_{k\ell}+\xi_{ij} \geq \theta_{k\ell}/2$. Moreover, by the definition of $\caS_{k \ell} (1)$ in \eqref{def:caS}, we have $\max_{t \in \caS_{k \ell} (1)} \hbar_t \lesssim \theta_{k \ell} / n^{c_{\ref{assump:multi-scale}}}$. Consequently, the following holds on $\Xi$ for every $t \in \caS_{k \ell} (1)$,
\begin{equation*}
    (\theta_{k \ell} + \xi_{ij}) / {\hbar_t} 
    \geq \theta_{k \ell} / (2 \hbar_t) 
    \geq n^{c_{\ref{assump:multi-scale}} / 2}.
\end{equation*}
Using the lower bound $\rho \geq n^{-C_{\ref{assump:tech-const}}}$ from Assumption \ref{assump:tech-const}, it follows that, on the event $\Xi$ we have
\begin{equation*}
    \abs{R_{ij}^\prime} 
    \lesssim f (n^{c_{\ref{assump:multi-scale}} / 2})
    = \exp (- n^{c_{\ref{assump:multi-scale}} / 2}) \leq \rho^2.
\end{equation*}
Since $C>0$ was arbitrary, we conclude that $R_{ij}^\prime = \oprec{\rho^2}$ uniformly in $i,j \in \dbraks{n}$. In fact, the rate $\rho^2$ here can be replaced by $n^{-L}$ for any fixed large constant $L>0$; we use $\rho^2$ only to match the entrywise bound for $R_{ij}$ above. The same Frobenius norm argument as before then gives
\begin{equation*}
    \norm{\mathbf{R}^\prime} \prec n \rho^2.
\end{equation*}

\shortpara{Control of $\diag (\ddotK)$}: Taking the diagonal on both sides of \eqref{eqn:diff-dotK-barK}, and using $\diag(\bfUps)=0$, we obtain
\begin{equation*}
    \diag (\ddotK)
    = \diag (\barK) + 2 \bfQ \circ \diag (\bfzeta).
\end{equation*}
The first term can be controlled by $\norm{\diag (\barK)} = \bigO{1}$ as $\norm{f}_{\infty} \lesssim 1$. For the second term, recall the definition of $Q_{ij}$ in \eqref{def:bfQ}. Since $\norm{f'}_{\infty} \lesssim 1$, we have
\begin{equation}
    \abs{ Q_{ij} }
    \lesssim \min_{t \in \caS_{k \ell} (0)} (1 / \hbar_t)
    \asymp 1 / \theta_{k \ell},
    \qfor
    i \in \caC_k, ~ j\in\caC_\ell.
    \label{bound:upper-Q}
\end{equation}
Combining this with the bound on $\zeta_i$ from \eqref{eqn:bound-xi-zeta-Ups}, we get $\norm{\bfQ \circ \diag (\bfzeta)} = \max_{i \in \dbraks{n}} \abs{Q_{ii} \zeta_i}  \prec \rho$. Consequently,
\begin{equation*}
    \norm{\diag (\ddotK)} \prec 1 + \rho \lesssim 1.
\end{equation*}

\shortpara{Control of $\bfQ \circ \caH (\bfW)$}: This is established in the following lemma, whose proof is deferred to Section \ref{subsec:norm-QW}.

\begin{lemma}
\label{lemma:norm-QW}
Under the same setup as in Proposition \ref{prop:norm-K-barK-dotK}, we have
\begin{equation}
    \norm{ \bfQ \circ \caH(\bfW) }  
    \prec \sqrt{n} \phi.
    \label{bound:norm-Q-circ-W}
\end{equation}
\end{lemma}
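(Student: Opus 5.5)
The plan is to reduce $\bfQ\circ\caH(\bfW)$ to a bounded number of blocks and control each block by a Gram-matrix argument. Since $\bfQ$ is blockwise constant with value $q_{k\ell}$ on $\caC_k\times\caC_\ell$, we have
\begin{equation*}
    \bfQ\circ\caH(\bfW)=\sum\nolimits_{k,\ell=1}^m q_{k\ell}\,\caH(\bfW)_{[k\ell]},
\end{equation*}
where $\caH(\bfW)_{[k\ell]}$ is the $(k,\ell)$ block of $\caH(\bfW)$, padded with zeros. Because $m$ is fixed and $\abs{q_{k\ell}}\lesssim 1/\theta_{k\ell}$ by \eqref{bound:upper-Q}, it suffices to show that each block satisfies $\norm{\caH(\bfW)_{[k\ell]}}\prec \norm{\bfOmega_{k\ell}}_\Fnorm+\sqrt{n}\norm{\bfOmega_{k\ell}}$. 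Dividing by $\theta_{k\ell}$ then gives at most $\phi$, and the factor $\sqrt n$ in \eqref{bound:norm-Q-circ-W} must be absorbed. Concretely, I expect the block bound to be of order $\norm{\bfOmega_{k\ell}}_\Fnorm+\sqrt n\norm{\bfOmega_{k\ell}}$ up to the hollowing correction. Multiplying by $1/\theta_{k\ell}$ gives $\phi\le\sqrt n\phi$, so there is room to spare; in any case I aim for $\sqrt n\,\phi$.

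For an off-diagonal block ($k\neq\ell$), write $\bfZ_k\in\bbr^{d_k\times n_k}$ for the matrix whose columns are $\bfz_i$, $i\in\caC_k$. Then the block equals $\bfZ_k^\top\bfOmega_{k\ell}\bfZ_\ell$ with $\bfZ_k$ and $\bfZ_\ell$ independent. I will bound its norm through
\begin{equation*}
    \norm{\bfZ_k^\top\bfOmega_{k\ell}\bfZ_\ell}^2=\norm{\bfZ_\ell^\top\bfOmega_{\ell k}\bfZ_k\bfZ_k^\top\bfOmega_{k\ell}\bfZ_\ell}.
\end{equation*}
Conditioning on $\bfZ_\ell$ and setting $\bfB=\bfOmega_{k\ell}\bfZ_\ell\bfZ_\ell^\top\bfOmega_{\ell k}$, this becomes $\norm{\sum_{i\in\caC_k}\bfB^{1/2}\bfz_i\bfz_i^\top\bfB^{1/2}}$. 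The expectation of the sum is $n_k\bfB$, and its fluctuation is controlled by a moment or trace-power method on the Gram matrix using only \eqref{eqn:bernstein}--\eqref{eqn:hanson-wright}. This should give a bound of order $n\norm{\bfB}+\Tr\bfB$ up to $n^\varepsilon$. Applying the same argument to $\bfZ_\ell$ gives $\norm{\bfB}\prec \norm{\bfOmega_{k\ell}}_\Fnorm^2+n\norm{\bfOmega_{k\ell}}^2$, while $\Tr\bfB\prec n\norm{\bfOmega_{k\ell}}_\Fnorm^2$ follows from \eqref{eqn:hanson-wright}. Taking square roots yields
\begin{equation*}
    \norm{\caH(\bfW)_{[k\ell]}}\prec \sqrt n\bigl(\norm{\bfOmega_{k\ell}}_\Fnorm+\sqrt n\norm{\bfOmega_{k\ell}}\bigr),
\end{equation*}
which after dividing by $\theta_{k\ell}$ is at most $\sqrt n\,\phi$.

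A diagonal block ($k=\ell$) is the hollowed Gram matrix $\caH(\bfZ_k^\top\bfSigma'\bfZ_k)$ with $\bfSigma'=\bfA_k^\top\bfA_k$. I will use a decoupling step that splits $\caC_k$ randomly into two halves and averages over splits; this reduces the hollowed quadratic form to off-diagonal bipartite blocks with independent rows, which are handled as above. Alternatively, the bound on $\norm{\bfZ_k^\top\bfSigma'\bfZ_k}$ obtained as above can be combined with removal of the diagonal, using $\abs{W_{ii}-\Tr\bfSigma_k}\prec\psi$; but that would cost a $\Tr\bfSigma_k/\theta_{kk}$ term of constant size, so the decoupling route is preferable.

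The main obstacle is the Gram-matrix operator-norm step under only the weak concentration in Assumption \ref{assump:concentration}, since there is no independence of entries and no sub-Gaussian tails. I would first try a direct high-moment computation of $\bbe\Tr[(\bfZ^\top\bfB\bfZ-\text{diag})^{2q}]$; if that proves unwieldy, I would fall back on an $\varepsilon$-net argument restricted to the at most $n$-dimensional column space, using \eqref{eqn:bernstein} on each net vector.
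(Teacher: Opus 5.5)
Your reductions are sound and closely parallel the paper's own structure. You split along the blockwise-constant $\bfQ$. You write $\norm{\mathbf{Z}_k^\top\bfOmega_{k\ell}\mathbf{Z}_\ell}^2=\norm{\sum_{i\in\caC_k}\bfB^{1/2}\bfz_i\bfz_i^\top\bfB^{1/2}}$ conditionally on $\mathbf{Z}_\ell$. You decouple the diagonal blocks; this must be done at the level of high moments, via Jensen and Lemma \ref{lemma:basic-property-prec}\ref{item:compatibility-expectation}, since you cannot union-bound over $2^{n}$ splits. Granting a Gram-matrix bound, your bookkeeping $\norm{\bfB}\prec\norm{\bfOmega_{k\ell}}_\Fnorm^2+n\norm{\bfOmega_{k\ell}}^2$ and $\Tr\bfB\prec n\norm{\bfOmega_{k\ell}}_\Fnorm^2$ gives the right rate $\sqrt{n}\,(\norm{\bfOmega_{k\ell}}_\Fnorm+\sqrt{n}\norm{\bfOmega_{k\ell}})/\theta_{k\ell}\le\sqrt{n}\phi$. (Your opening expectation that each block is $O_\prec(\norm{\bfOmega_{k\ell}}_\Fnorm+\sqrt{n}\norm{\bfOmega_{k\ell}})$, with ``room to spare'', is off by a factor $\sqrt{n}$. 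For $\bfSigma_k=\bfI_p$ with $p\gtrsim n$, the hollowed Gram block has norm of order $n+\sqrt{np}$, so $\sqrt{n}\phi$ is sharp.)

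The genuine gap is the step everything rests on: the bound $\norm{\sum_i\bfy_i\bfy_i^\top}\prec n\norm{\bfB}+\Tr\bfB$ for independent $\bfy_i=\bfB^{1/2}\bfz_i$, under Assumption \ref{assump:concentration} alone. You only assert that it ``should'' follow, and neither suggested route delivers it.

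\emph{The $\varepsilon$-net fallback fails outright.} Each use of \eqref{eqn:bernstein} or \eqref{eqn:hanson-wright} has failure probability only $n^{-C}$, while a net of the unit sphere in $\bbr^{n}$ has $e^{cn}$ points, so no union bound is available. This is precisely the obstruction the paper flags.

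\emph{The trace-power route is not standard without entrywise independence.} Expanding $\bbe\Tr[(\cdot)^{2q}]$ produces mixed moments of repeated vectors: products of several quadratic forms in the same $\bfz_i$, with coefficient matrices built from the other vectors. Isotropy does not determine these, and you give no scheme for bounding them at the correct order.

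\emph{The missing idea} is supplied in the paper by Lemma \ref{lemma:norm-cov-type}: a sequential rank-one-update barrier argument in the spirit of Srivastava--Vershynin and Chafa\"{\i}--Tikhomirov. Let $\bfS_\ell=\sum_{i\le\ell}\bfy_i\bfy_i^\top$ and $\bfSigma_i=\cov(\bfy_i)$, and take deterministic edges $u_0=n^{\varepsilon}\max_i\Tr\bfSigma_i$ and $u_\ell=u_{\ell-1}+n^{\varepsilon}\norm{\bfSigma_\ell}$. Sherman--Morrison shows that $u_\ell>\lambda_{\max}(\bfS_\ell)$ and that the potentials $\Tr[(u_\ell-\bfS_\ell)^{-1}\bfSigma_i]$ do not increase, provided two conditions hold:
\begin{itemize}
\item $\angles{\bfy_\ell,(u_\ell-\bfS_{\ell-1})^{-1}\bfy_\ell}\le 1/2$;
\item $\angles{\bfy_\ell,(u_\ell-\bfS_{\ell-1})^{-1}\bfSigma_i(u_\ell-\bfS_{\ell-1})^{-1}\bfy_\ell}$ is at most half of the decrement $\Tr[(u_{\ell-1}-\bfS_{\ell-1})^{-1}\bfSigma_i]-\Tr[(u_\ell-\bfS_{\ell-1})^{-1}\bfSigma_i]$.
\end{itemize}
Both are controlled by \eqref{eqn:hanson-wright} conditionally on $\bfS_{\ell-1}$, so only polynomially many events are needed. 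Without this, or an equivalent argument valid under polynomial tails, your proposal does not prove the lemma.
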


\shortpara{Conclusion}: Combining the preceding estimates proves \eqref{bound:bfK-ddotK}.
\end{proof}

\subsection{Proof of \eqref{bound:ddotU-barU}}
\label{subsec:dominant-evec-entrywise}

This section is devoted to the proof of the entrywise eigenvector perturbation bound \eqref{bound:ddotU-barU}. The argument relies on the eigenvalue perturbation bound in Proposition \ref{prop:dotlamb-barlamb}, together with two auxiliary results, Lemmas \ref{lemma:u-dotK-barK-u} and \ref{lemma:inner-prod-dotU-barU}. The proofs of these three results are deferred to Section \ref{sec:tech-lemma}. We emphasize that they are established independently of \eqref{bound:ddotU-barU}; hence, the argument
below is non-circular. Before turning to the proof, we introduce several conventions to simplify the notation. First, we define the following cross-Gram matrices and orthogonal alignment matrices for comparing the eigenspaces associated with $\bfU_r$, $\ddotU_r$, and $\barU_r$:
\begin{equation}
    \fkQ_r \equiv \fkQ_r^\bfu = \bfU_r^\top \barU_r,
    \qquad
    \ddotfkQ_r \equiv \ddotfkQ_r^\bfu = \ddotU_r^\top \barU_r,
    \qquad
    \fkS_r \equiv \fkS_r^\bfu = \sgn(\fkQ_r),
    \qquad
    \ddotfkS_r \equiv \ddotfkS_r^\bfu = \sgn(\ddotfkQ_r).
    \label{def:fkH-fkS}
\end{equation}
We suppress the superscript $\bfu$ whenever no ambiguity can arise. We also abbreviate the eigengap as $\barDelta_{\lambda} \equiv \Delta_r(\barLamb)$. The following consequences of Assumption
\ref{assump:signal} will be used repeatedly throughout the proof,
\begin{equation*}
    \barDelta_{\lambda} \geq n^{c_{\ref{assump:signal}}},
    \qquad
    n \rho / \barDelta_{\lambda}
    \lesssim n^{-c_{\ref{assump:signal}}},
    \qquad
    n^3 \rho^2 / \barDelta_{\lambda}^3 
    \leq n^{-2 c_{\ref{assump:signal}}}.
\end{equation*}

\begin{proof}[Proof of \eqref{bound:ddotU-barU}]
We begin with the decomposition
\begin{equation}
    \ddotU_r \ddotfkS_r - \barU_r
    = \mathrm{I_{\ref{eqn:decomp-Udot-Ubar}}}
    + \mathrm{II_{\ref{eqn:decomp-Udot-Ubar}}}
    + \cdots
    + \mathrm{V_{\ref{eqn:decomp-Udot-Ubar}}},
    \label{eqn:decomp-Udot-Ubar}
\end{equation}
where the terms on the r.h.s. are defined as follows:
\begin{alignat*}{2}
    \mathrm{I_{\ref{eqn:decomp-Udot-Ubar}}}
    & = \ddotU_r (\ddotfkS_r - \ddotfkQ_r),
    \qquad \qquad &
    \mathrm{II_{\ref{eqn:decomp-Udot-Ubar}}}
    & = \ddotU_r (\ddotfkQ_r \ddotLamb_r - \ddotLamb_r \ddotfkQ_r) \ddotLamb_r^{-1}, \\
    \mathrm{III_{\ref{eqn:decomp-Udot-Ubar}}}
    & = \ddotK \ddotU_r \ddotfkQ_r (\ddotLamb_r^{-1} - \barLamb_r^{-1}),
    \qquad \qquad &
    \mathrm{IV_{\ref{eqn:decomp-Udot-Ubar}}}
    & = \ddotK (\ddotU_r \ddotfkQ_r - \barU_r) \barLamb_r^{-1}, \\
    \mathrm{V_{\ref{eqn:decomp-Udot-Ubar}}}
    & = (\ddotK - \barK) \barU_r \barLamb_r^{-1}. &&
\end{alignat*}
 Here, $\mathrm{I_{\ref{eqn:decomp-Udot-Ubar}}}$ is the error from replacing the sign matrix $\ddotfkS_r$ by the alignment matrix $\ddotfkQ_r$, while $\mathrm{II_{\ref{eqn:decomp-Udot-Ubar}}}$ is the error arising from interchanging $\ddotfkQ_r$ and $\ddotLamb_r$. The remaining three terms account for the successive replacement of the dotted matrices in $\ddotK(\ddotU_r \ddotfkQ_r) \ddotLamb_r^{-1}$ by their barred counterparts. We estimate these five terms separately.

\shortpara{Control of $\mathrm{I_{\ref{eqn:decomp-Udot-Ubar}}}$}: The control of $\mathrm{I_{\ref{eqn:decomp-Udot-Ubar}}}$ follows the argument in \cite[Lemma 2]{abbeEntrywiseEigenvectorAnalysis2020}. We reproduce the details in the present setting for completeness. Let $\barLamb_\perp = \diag (\barlamb_{r+1}, \cdots, \barlamb_{n})$, where, by convention, $\barlamb_i = 0$ for $i > \rank(\barK)$. Recall that $\barlamb_{r} - \barlamb_{r+1} \geq \barDelta_{\lambda}$. By Proposition \ref{prop:dotlamb-barlamb} and the bound $n \rho / \barDelta_{\lambda} \lesssim n^{-c_{\ref{assump:signal}}}$ from Assumption \ref{assump:signal}, we have
\begin{equation*}
    \norm{\ddotLamb_r - \barLamb_r}
    \prec \sqrt{n} \rho 
    + {n^2 \rho^2} / \barDelta_{\lambda}
    = \braks{
    n^{-1/2} (n \rho / \barDelta_{\lambda}) 
    + (n \rho / \barDelta_{\lambda})^2} 
    \cdot \barDelta_{\lambda}
    \lesssim n^{-c_{\ref{assump:signal}}} \barDelta_{\lambda}.
\end{equation*}
Also recall the bound \eqref{bound:ddotK-barK}. Fix $\varepsilon \in (0, c_{\ref{assump:signal}}/2]$ and $C > 0$. Then, there exists an high-probability event $\Xi_1$ with $\bbp(\Xi_1) \geq 1 - n^{-C}$ such that, on $\Xi_1$,
\begin{equation*}
    \dist (\ddotLamb_r, \barLamb_\perp) 
    \geq \barDelta_{\lambda} - \norm{\ddotLamb_r - \barLamb_r} \geq \barDelta_{\lambda} / 2
    \qand
    \norm{\ddotK - \barK} \leq n^{1+\varepsilon} \rho.
\end{equation*}
Applying the Davis--Kahan $\sin \Theta$ theorem \cite{davisRotationEigenvectorsPerturbation1970} to $\ddotK = \barK + (\ddotK - \barK)$, we obtain, on the event $\Xi_1$,
\begin{equation*}
    \norm{\ddotU_r \ddotU_r^\top - \barU_r \barU_r^\top}
    \leq \frac{\norm{\ddotK - \barK}}{\dist (\ddotLamb_r, \barLamb_\perp)}
    \leq 2 n^{\varepsilon} \cdot (n \rho / \barDelta_{\lambda})
    \leq 2 n^{-c_{\ref{assump:signal}} / 2}.
\end{equation*}
Since $\varepsilon \in (0, c_{\ref{assump:signal}}/2]$ and $C > 0$ are arbitrary, we obtain
\begin{equation}
    \norm{\ddotU_r \ddotU_r^\top - \barU_r \barU_r^\top} \prec n \rho / \barDelta_{\lambda}.
    \label{bound:sin-dotU-barU}
\end{equation}
Moreover, the preceding analysis implies that there exists a high-probability event on which the alignment matrix $\ddotfkQ_r$ and the sign matrix $\ddotfkS_r$ are both invertible, and
\begin{equation}
    \norm{\ddotfkQ_r} \leq 1,
    \quad \quad
    \norm{\ddotfkS_r} \leq 1,
    \quad \quad
    \norm{\ddotfkQ_r^{-1}} \leq 2,
    \quad \quad
    \norm{\ddotfkS_r^{-1}} \leq 1.
    \label{bound:elementary-S-H}
\end{equation}
In the sequel, we work on the aforementioned high-probability event so that the bounds in \eqref{bound:elementary-S-H} hold. The contribution of the complementary event can be handled straightforwardly using the definition of $\prec$.

Now, by \cite[Chapter I, Corollary 5.4 and Theorem 5.5]{stewart1990matrix}, we obtain
\begin{equation*}
    \norm{\ddotfkS_r - \ddotfkQ_r}
    \leq \norm{\ddotU_r \ddotU_r^\top - \barU_r \barU_r^\top}^2
    \prec (n \rho / \barDelta_{\lambda})^2,
\end{equation*}
Consequently,
\begin{equation}
    \norm{\mathrm{I_{\ref{eqn:decomp-Udot-Ubar}}}}_{2,\infty}
    = \norm{\ddotU_r (\ddotfkS_r - \ddotfkQ_r)}_{2,\infty}
    \leq \norm{\ddotU_r \ddotfkS_r}_{2,\infty}
    \cdot \norm{\ddotfkS_r^{-1}}
    \cdot \norm{\ddotfkS_r - \ddotfkQ_r}
    \prec (n \rho / \barDelta_{\lambda})^2
    \cdot \norm{\ddotU_r \ddotfkS_r}_{2,\infty}.
    \label{tmp:ctrl-term-1}
\end{equation}

\shortpara{Control of $\mathrm{II_{\ref{eqn:decomp-Udot-Ubar}}}$}: Using $\ddotU_r^\top \ddotK = \ddotLamb_r \ddotU_r^\top$ and $\barK \barU_r = \barU_r \barLamb_r$, we obtain
\begin{equation*}
    \ddotU_r^\top(\ddotK - \barK)\barU_r
    = \ddotLamb_r \ddotU_r^\top \barU_r
    - \ddotU_r^\top \barU_r \barLamb_r
    = \ddotLamb_r \ddotfkQ_r
    - \ddotfkQ_r \barLamb_r.
\end{equation*}
Hence, the commutator of $\ddotfkQ_r$ and $\ddotLamb_r$ admits the decomposition
\begin{equation}
    \ddotfkQ_r \ddotLamb_r - \ddotLamb_r \ddotfkQ_r
    = \ddotfkQ_r (\ddotLamb_r - \barLamb_r)
    - (\ddotU_r - \barU_r \ddotfkQ_r^{-1})^\top (\ddotK - \barK) \barU_r
    - (\barU_r \ddotfkQ_r^{-1})^\top (\ddotK - \barK) \barU_r.
    \label{tmp:interchange-Lamb-H}
\end{equation}
For the first term on the r.h.s. of \eqref{tmp:interchange-Lamb-H}, we combine \eqref{bound:elementary-S-H} and Proposition \ref{prop:dotlamb-barlamb} to obtain
\begin{equation*}
    \norm{\ddotfkQ_r (\ddotLamb_r - \barLamb_r)} 
    \prec \sqrt{n} \rho
    + n^2 \rho^2 / \barDelta_{\lambda}.
\end{equation*}
For the second term, we first note that \eqref{bound:sin-dotU-barU} implies
\begin{equation}
    \norm{\ddotU_r \ddotfkQ_r - \barU_r}
    = \norm{\ddotU_r \ddotU_r^\top \barU_r - \barU_r \barU_r^\top \barU_r}
    \leq \norm{\ddotU_r \ddotU_r^\top - \barU_r \barU_r^\top}
    \cdot \norm{\barU_r}
    \prec {n \rho} / {\barDelta_{\lambda}}.
    \label{bound:l2-dotU-barU}
\end{equation}
Together with Proposition \ref{prop:norm-K-barK-dotK}, this gives
\begin{equation*}
    \norm{(\ddotU_r - \barU_r \ddotfkQ_r^{-1})^\top 
    (\ddotK - \barK) \barU_r}
    \leq \norm{\ddotU_r \ddotfkQ_r - \barU_r}
    \cdot \norm{\ddotfkQ_r^{-1}}
    \cdot \norm{\ddotK - \barK}
    \cdot \norm{\barU_r}
    \prec n^2 \rho^2 / {\barDelta_{\lambda}}.
\end{equation*}
Finally, to control the last term on the r.h.s. of \eqref{tmp:interchange-Lamb-H}, we use the following technical estimate.

\begin{lemma} \label{lemma:u-dotK-barK-u}
Under the same setup as in Proposition \ref{prop:norm-K-barK-dotK},
\begin{equation}
    \norm{\barU_r^\top (\ddotK - \barK) \barU_r}
    \leq \norm{\barU^\top (\ddotK - \barK) \barU}
    \prec \sqrt{n}\rho.
    \label{eqn:u-dotK-barK-u}
\end{equation}
\end{lemma}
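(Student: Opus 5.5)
The first inequality is immediate: $\barU_r$ consists of the first $r$ columns of $\barU=\bfPhi\fkU$, so $\barU_r^\top(\ddotK-\barK)\barU_r$ is a submatrix of $\barU^\top(\ddotK-\barK)\barU$. Hence it suffices to prove the second bound. Since $\barU=\bfPhi\fkU$ with $\fkU$ orthogonal,
\begin{equation*}
    \norm{\barU^\top(\ddotK-\barK)\barU}=\norm{\bfPhi^\top(\ddotK-\barK)\bfPhi}.
\end{equation*}
The right-hand side is an $m\times m$ matrix, so with $m$ fixed it is enough to bound each entry. By \eqref{eqn:diff-dotK-barK}, the $(k,\ell)$ entry equals
\begin{equation*}
    \frac{1}{\sqrt{n_k n_\ell}}\sum_{i\in\caC_k}\sum_{j\in\caC_\ell} Q_{ij}\,(\zeta_i+\zeta_j+2\Upsilon_{ij}).
\end{equation*}
Here $Q_{ij}\equiv q_{k\ell}$ is constant on the block, with $\abs{q_{k\ell}}\lesssim 1/\theta_{k\ell}$ by \eqref{bound:upper-Q}.

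I would split this entry into its three pieces and exploit averaging. For the $\zeta$ part, the double sum equals
\begin{equation*}
    q_{k\ell}\,\pars[\Big]{n_\ell\sum_{i\in\caC_k}\zeta_i + n_k\sum_{j\in\caC_\ell}\zeta_j}.
\end{equation*}
The $\zeta_i$, $i\in\caC_k$, are i.i.d. and centered, with $\abs{\zeta_i}\prec\norm{\bfSigma_k}_\Fnorm\le\psi_{k\ell}$ by Lemma \ref{lemma:xi-control}. A moment bound (a Marcinkiewicz--Zygmund/Rosenthal inequality for the $2p$-th moment, combined with Lemma \ref{lemma:basic-property-prec}\ref{item:compatibility-expectation}) then gives $\abs{\sum_{i\in\caC_k}\zeta_i}\prec\sqrt{n}\,\psi_{k\ell}$. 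The bounded moments needed in \ref{item:compatibility-expectation} follow from the polynomial growth conditions of Assumption \ref{assump:tech-const}. Consequently the $\zeta$ contribution is
\begin{equation*}
    \prec \frac{n\sqrt{n}\,\psi_{k\ell}}{n\,\theta_{k\ell}}\le\sqrt{n}\rho.
\end{equation*}

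The $\Upsilon$ part is handled the same way. Since $\Upsilon_{ij}=\angles{\bfmu_k-\bfmu_\ell,\bfA_k\bfz_i}-\angles{\bfmu_k-\bfmu_\ell,\bfA_\ell\bfz_j}$ is additively separable, the double sum reduces to
\begin{equation*}
    n_\ell\sum_{i\in\caC_k}\angles{\bfA_k^\top(\bfmu_k-\bfmu_\ell),\bfz_i}\;-\;n_k\sum_{j\in\caC_\ell}\angles{\bfA_\ell^\top(\bfmu_k-\bfmu_\ell),\bfz_j}.
\end{equation*}
Each sum is a linear form in independent isotropic vectors. It can be written as $\angles{\bfa\otimes\bfone,\mathbf{Z}}$ with the concatenated independent vectors; alternatively one applies the moment method to the independent centered summands, each $\prec\psi_{k\ell}$ by \eqref{eqn:bernstein}. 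Either way it is $\prec\sqrt{n}\,\psi_{k\ell}$, and multiplying by $\abs{q_{k\ell}}/\sqrt{n_kn_\ell}$ again yields $\sqrt{n}\rho$. Diagonal terms need no special treatment, since $\ddotK$ keeps its diagonal and the formula above is exact.

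The only genuine obstacle is converting the entrywise bounds $\prec\psi_{k\ell}$ on i.i.d. centered summands into a $\sqrt{n}$-gain for their sum under stochastic domination alone, without exponential tails. I would handle this with a high-moment computation: expanding $\bbe\abs{\sum_i\zeta_i}^{2p}$, only index patterns with every index repeated survive, so
\begin{equation*}
    \bbe\Big|\sum_i\zeta_i\Big|^{2p}\lesssim_p n^p\max_i\bbe\abs{\zeta_i}^{2p}\prec n^p\psi_{k\ell}^{2p}.
\end{equation*}
Lemma \ref{lemma:basic-property-prec}\ref{item:compatibility-expectation} converts moment bounds into $\prec$ and back. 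To apply it to the summands, I would use truncation on the high-probability event where each $\abs{\zeta_i}\le n^\varepsilon\psi_{k\ell}$. The truncation destroys exact centering, so I would control the resulting mean shift by Cauchy--Schwarz, using the polynomial moment bounds from Assumption \ref{assump:tech-const}.

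Finally, summing the $O(1)$ entries over $k,\ell$ completes the proof.
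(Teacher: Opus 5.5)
Your reduction is correct. Since $\fkU$ is an $m\times m$ orthogonal matrix, $\norm{\barU^\top(\ddotK-\barK)\barU}=\norm{\bfPhi^\top(\ddotK-\barK)\bfPhi}$. Block-constancy of $\bfQ$ then turns each entry into normalized i.i.d.\ sums of the $\zeta_i$ and of linear forms $\angles{\bfa,\bfz_i}$. This is a specialization of the paper's argument. The paper proves the bilinear bound \eqref{eqn:average-dotK-barK} for arbitrary deterministic $\bfs_1,\bfs_2$ with $\ell_\infty$-norm $\lesssim n^{-1/2}$. It writes the $\zeta$-part as a block-diagonal quadratic form and the $\Upsilon$-part as a linear form in the concatenated vector $\bfz$, then invokes Lemma \ref{lemma:long-concentration}. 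That generality is reused later (e.g.\ for $\barK\bfe_i$ in Lemma \ref{lemma:inner-prod-dotU-barU}). For the present lemma your cluster-indicator directions suffice, and the mechanism is the same: Marcinkiewicz--Zygmund plus Lemma \ref{lemma:basic-property-prec}\ref{item:compatibility-expectation}.

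The step that fails as written is the source of your moment bounds. Assumption \ref{assump:tech-const} constrains only deterministic parameters, and Assumption \ref{assump:concentration} is stated purely through $\prec$. Neither gives $\bbe\abs{\zeta_i}^{\ell}\le n^{C_\ell}$, nor even $\bbe\zeta_i^2\le n^{C}$, which is what your Cauchy--Schwarz control of the truncation bias needs. Without such a bound the $\sqrt n$-gain for $\sum_{i\in\caC_k}\zeta_i$ is false. Take $\bfA_k=\bfI_d$ and $\bfz_i=s_i\mathbf{g}_i$, with $\mathbf{g}_i\sim\mathrm{N}(\bfzero,\bfI_d)$ independent of a scalar $s_i$. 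Let $s_i^2=1-2d^{-1/2}$ with probability $1-q$ and $s_i^2=M^2$ with probability $q=e^{-(\log n)^2}$, where $M$ is fixed by $\bbe s_i^2=1$. Then $\bfz_i$ is centered and isotropic, and \eqref{eqn:bernstein}--\eqref{eqn:hanson-wright} hold: $q\le n^{-C}$ for every $C$, and on the typical event the extra bias $2d^{-1/2}\abs{\Tr\bfB}$ is at most $2\norm{\bfB}_\Fnorm$. Yet with high probability $\sum_{i\in\caC_k}\zeta_i=-2n_k\sqrt d+O_\prec(\sqrt{nd})$, a factor $\sqrt n$ above $\sqrt n\norm{\bfSigma_k}_\Fnorm$. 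Since $\Upsilon_{ij}=0$ within a cluster, the $(k,k)$ entry of $\bfPhi^\top(\ddotK-\barK)\bfPhi$ is exactly $2q_{kk}\sum_{i\in\caC_k}\zeta_i$, with $\abs{q_{kk}}\asymp1/\theta_{kk}$. Consider two such equal-size clusters with $p=d=n$, $\norm{\bfmu_1-\bfmu_2}^2=n$ and $\hbar_1=n$. All standing assumptions hold with $\rho\asymp n^{-1/2}$, yet this entry is of order $\sqrt n$, not $O_\prec(\sqrt n\rho)=O_\prec(1)$.

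So you must add a hypothesis such as $\max_i\bbe\norm{\bfz_i}^4\le n^{C}$, which holds in classes \ref{example1:independent}--\ref{example2:log-concave}. With it your truncation scheme goes through, and it actually needs less than the all-moments hypothesis of Lemma \ref{lemma:basic-property-prec}\ref{item:compatibility-expectation}. For the $\Upsilon$-part your truncation argument is already rigorous, because isotropy gives $\bbe\angles{\bfa,\bfz_i}^2=\norm{\bfa}^2$ for the Cauchy--Schwarz step. The paper's proof carries the same tacit gap: the diagonal part of Lemma \ref{lemma:long-concentration} invokes Lemma \ref{lemma:basic-property-prec}\ref{item:compatibility-expectation} without verifying its moment condition.
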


The proof of Lemma \ref{lemma:u-dotK-barK-u} is deferred to Section \ref{subsec:proof-using-Hanson-Wright}. Combining this estimate with \eqref{bound:elementary-S-H}, we obtain
\begin{equation*}
    \norm{(\barU_r \ddotfkQ_r^{-1})^\top (\ddotK - \barK) \barU_r}
    \leq \norm{\ddotfkQ_r^{-1}} 
    \cdot \norm{\barU_r^\top (\ddotK - \barK) \barU_r}
    \prec \sqrt{n} \rho.
\end{equation*}
Substituting these estimates into \eqref{tmp:interchange-Lamb-H}, we get $\norm{\ddotfkQ_r \ddotLamb_r - \ddotLamb_r \ddotfkQ_r} \prec \sqrt{n} \rho + n^2 \rho^2 / \barDelta_{\lambda}$, and consequently,
\begin{align}
\begin{split}
    \norm{\mathrm{II_{\ref{eqn:decomp-Udot-Ubar}}}}_{2,\infty}
    & = \norm{\ddotU_r (\ddotfkQ_r \ddotLamb_r - \ddotLamb_r \ddotfkQ_r) 
    \ddotLamb_r^{-1}}_{2,\infty} \\
    & \leq \norm{\ddotU_r \ddotfkS_r}_{2,\infty}
    \cdot \norm{\ddotfkS_r^{-1}}
    \cdot \norm{\ddotfkQ_r \ddotLamb_r - \ddotLamb_r \ddotfkQ_r}
    \cdot \norm{\ddotLamb_r^{-1}}
    \prec \pars{{\sqrt{n} \rho} / {\barDelta_{\lambda}}
    + {n^2 \rho^2} / {\barDelta_{\lambda}^2} }
    \cdot \norm{\ddotU_r\ddotfkS_r}_{2,\infty},
\end{split} \label{tmp:ctrl-term-2}
\end{align}
where we also used the bound $\norm{\ddotLamb_r^{-1}} \prec 1/\barDelta_{\lambda}$, which will be derived shortly in \eqref{bound:inv-dot-Lamb}.

\shortpara{Control of $\mathrm{III_{\ref{eqn:decomp-Udot-Ubar}}}$}: By Proposition \ref{prop:dotlamb-barlamb} and the assumption $\barlamb_r \geq \barDelta_{\lambda}$, we have
\begin{equation*}
    \norm{\ddotLamb_r^{-1} - \barLamb_r^{-1}}
    = \norm{\barLamb_r^{-1}
    (\ddotLamb_r - \barLamb_r) 
    \ddotLamb_r^{-1}}
    \leq \norm{\barLamb_r^{-1}}
    \cdot {\norm{\ddotLamb_r - \barLamb_r}}
    \cdot \norm{\ddotLamb_r^{-1}}
    \prec \pars{\sqrt{n} \rho / \barDelta_{\lambda}
    + n^2 \rho^2 / \barDelta_{\lambda}^2}
    \cdot \norm{\ddotLamb_r^{-1}}.
\end{equation*}
Since $n \rho / \barDelta_{\lambda} \lesssim n^{-c_{\ref{assump:signal}}}$, the prefactor on the r.h.s. is bounded by $n^{-c_{\ref{assump:signal}}}$. Consequently,
\begin{equation*}
    \norm{\ddotLamb_r^{-1}} 
    \leq \norm{\barLamb_r^{-1}} + \norm{\ddotLamb_r^{-1} - \barLamb_r^{-1}}
    \prec \norm{\barLamb_r^{-1}} + n^{-c_{\ref{assump:signal}}} \norm{\ddotLamb_r^{-1}}.
\end{equation*}
By Lemma \ref{lemma:basic-property-prec} \ref{item:cancel-prec}, it follows that
\begin{equation}
    \norm{\ddotLamb_r^{-1}} \prec 1/\barDelta_{\lambda}
    \quad \text{ and hence } \quad
    \norm{\ddotLamb_r^{-1} - \barLamb_r^{-1}} 
    \prec \sqrt{n} \rho / \barDelta_{\lambda}^2
    + n^2 \rho^2 / \barDelta_{\lambda}^3.
    \label{bound:inv-dot-Lamb}
\end{equation}
Next, combining $\abs{\bar{K}_{ij}} \lesssim 1$ with $\abs{\ddot{K}_{ij} - \bar{K}_{ij}} \prec \rho$ from \eqref{bound-entry-Kdot-Kbar}, we find $\abs{\ddot{K}_{ij}} \prec 1$. As a result,
\begin{equation*}
    \norm{\ddotK}_{2, \infty}
    = \max_{i \in \dbraks{n}} ~ \pars[\Big]{
    \sum\nolimits_{j=1}^n \abs{\ddot{K}_{ij}}^2}^{1/2}
    \prec \sqrt{n}.
\end{equation*}
Using this bound, together with \eqref{bound:inv-dot-Lamb}, we arrive at the control
\begin{equation}
    \norm{\mathrm{III_{\ref{eqn:decomp-Udot-Ubar}}}}_{2,\infty}
    = \norm{\ddotK \ddotU_r \ddotfkQ_r 
    (\ddotLamb_r^{-1} - \barLamb_r^{-1})}_{2, \infty}
    \leq \norm{\ddotK}_{2, \infty} 
    \cdot \norm{\ddotLamb_r^{-1} - \barLamb_r^{-1}}
    \prec n \rho / \barDelta_{\lambda}^2
    + n^{5/2} \rho^2 / \barDelta_{\lambda}^3.
    \label{tmp:ctrl-term-3}
\end{equation}

\shortpara{Control of $\mathrm{IV_{\ref{eqn:decomp-Udot-Ubar}}}$}: We further decompose
\begin{equation*}
    \mathrm{IV_{\ref{eqn:decomp-Udot-Ubar}}}
    = \ddotK (\ddotU_r \ddotfkQ_r - \barU_r) \barLamb_r^{-1}
    = \barK (\ddotU_r \ddotfkQ_r - \barU_r) \barLamb_r^{-1}
    + (\ddotK - \barK) (\ddotU_r \ddotfkQ_r - \barU_r) \barLamb_r^{-1}.
\end{equation*}

To control the first term on the r.h.s., we use the following auxiliary estimate.

\begin{lemma} \label{lemma:inner-prod-dotU-barU}
Under the same setup as in Proposition \ref{prop:norm-K-barK-dotK},
\begin{equation}
    \norm{\barK (\ddotU_r \ddotfkQ_r - \barU_r)}_{2, \infty}
    \prec {n \rho} / {\barDelta_{\lambda}}  
    + {n^{5/2} \rho^2} / {\barDelta_{\lambda}^2}.
    \label{eqn:inner-prod-dotU-barU}
\end{equation}
\end{lemma}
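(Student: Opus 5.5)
The approach is to exploit the fact that $\barK$ factors through the cluster indicators, so that only the projection of $\ddotU_r \ddotfkQ_r - \barU_r$ onto $\operatorname{range}(\bfPhi)$ matters. This projection can be controlled through the averaged quantities of Lemma \ref{lemma:u-dotK-barK-u} rather than through the crude $\ell_2$ bound \eqref{bound:l2-dotU-barU}.

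\textbf{Step 1 (reduction to a low-dimensional quantity).} Since $\barK = \bfPhi \fkK \bfPhi^\top$, for $i \in \caC_k$ we have $\bfe_i^\top \barK = n_k^{-1/2} \bfe_k^\top \fkK \bfPhi^\top$. Moreover $\norm{\fkK} = \norm{\barK} \lesssim n$ and $n_k \asymp n$. Hence
\begin{equation*}
    \norm{\barK (\ddotU_r \ddotfkQ_r - \barU_r)}_{2,\infty}
    \lesssim \sqrt{n}\, \norm{\bfPhi^\top (\ddotU_r \ddotfkQ_r - \barU_r)}.
\end{equation*}
It therefore suffices to show $\norm{\bfPhi^\top (\ddotU_r \ddotfkQ_r - \barU_r)} \prec \sqrt{n}\rho/\barDelta_{\lambda} + n^2\rho^2/\barDelta_{\lambda}^2$. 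This improves on the trivial bound $n\rho/\barDelta_{\lambda}$ by a factor of $\sqrt{n}$ in the leading term.

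\textbf{Step 2 (splitting the range of $\bfPhi$).} Let $\bfW \in \bbr^{n\times(m-r)}$ have orthonormal columns spanning $\operatorname{range}(\bfPhi) \ominus \operatorname{range}(\barU_r)$. Its columns may be taken as eigenvectors of $\barK$ with eigenvalues $\barlamb_{r+1},\dots,\barlamb_m$ (some possibly zero), so $\bfW^\top \barK = \barLamb_{\bfW} \bfW^\top$. Since $[\barU_r, \bfW]$ is an orthonormal basis of $\operatorname{range}(\bfPhi)$, it suffices to bound two components.
\begin{itemize}
\item The $\barU_r$-component equals $\barU_r^\top(\ddotU_r\ddotfkQ_r - \barU_r) = \ddotfkQ_r^\top \ddotfkQ_r - \bfI_r$. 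Its norm is at most $\norm{\ddotU_r \ddotU_r^\top - \barU_r\barU_r^\top}^2 \prec (n\rho/\barDelta_{\lambda})^2$ by \eqref{bound:sin-dotU-barU}. This is already within the target.
\item The $\bfW$-component is $\bfW^\top \ddotU_r \ddotfkQ_r$, and by \eqref{bound:elementary-S-H} it suffices to bound $\bfW^\top \ddotU_r$.
\end{itemize}

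\textbf{Step 3 (the $\bfW$-component via a Sylvester equation).} Multiplying $\ddotK \ddotU_r = \ddotU_r \ddotLamb_r$ on the left by $\bfW^\top$ gives
\begin{equation*}
    \bfW^\top \ddotU_r \ddotLamb_r - \barLamb_{\bfW} \bfW^\top \ddotU_r = \bfW^\top (\ddotK - \barK) \ddotU_r .
\end{equation*}
By Proposition \ref{prop:dotlamb-barlamb}, the diagonal entries of $\ddotLamb_r$ are at least $\barlamb_r - n^{-c}\barDelta_{\lambda}$. The entries of $\barLamb_{\bfW}$ are at most $\barlamb_{r+1}$ or equal to $0$. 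Because $\barDelta_{\lambda} = (\barlamb_r - \barlamb_{r+1}) \wedge \barlamb_r$, the two spectra are separated by at least $\barDelta_{\lambda}/2$ with high probability, so
\begin{equation*}
    \norm{\bfW^\top \ddotU_r} \lesssim \norm{\bfW^\top (\ddotK-\barK)\ddotU_r}/\barDelta_{\lambda}.
\end{equation*}
Write $\ddotU_r = \barU_r \ddotfkQ_r^{-\top}\!\cdot$(stuff) $+$ error. More precisely, split $\ddotU_r = (\ddotU_r - \barU_r\ddotfkQ_r^{-1})^{\!\top\top} + \barU_r \ddotfkQ_r^{-1}$ as in \eqref{tmp:interchange-Lamb-H}. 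The first piece has norm $\prec n\rho/\barDelta_{\lambda}$ by \eqref{bound:l2-dotU-barU}; combined with $\norm{\ddotK - \barK} \prec n\rho$ from Proposition \ref{prop:norm-K-barK-dotK}, it contributes $n^2\rho^2/\barDelta_{\lambda}$. The second piece requires $\norm{\bfW^\top(\ddotK-\barK)\barU_r} \prec \sqrt{n}\rho$. After dividing by $\barDelta_{\lambda}$ this yields exactly the target bound $\sqrt{n}\rho/\barDelta_{\lambda} + n^2\rho^2/\barDelta_{\lambda}^2$. Step 1 then gives \eqref{eqn:inner-prod-dotU-barU}.

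\textbf{Main obstacle.} The main obstacle is the averaged estimate
\begin{equation*}
    \norm{\bfPhi^\top(\ddotK-\barK)\bfPhi} \prec \sqrt{n}\rho,
\end{equation*}
which controls $\bfW^\top(\ddotK-\barK)\barU_r$ because both $\bfW$ and $\barU_r$ lie in $\operatorname{range}(\bfPhi)$. Lemma \ref{lemma:u-dotK-barK-u} gives this bound for $\barU = \bfPhi\fkU$. When $\fkK$ is invertible, $\fkU$ is orthogonal and the two statements coincide. If $\fkK$ is rank deficient, $\bfW$ may contain null directions of $\barK$ inside $\operatorname{range}(\bfPhi)$, and I would instead verify the bound directly. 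Each entry $(\bfPhi^\top(\ddotK-\barK)\bfPhi)_{k\ell}$ equals $(n_kn_\ell)^{-1/2}$ times a block sum of $Q_{ij}(\zeta_i+\zeta_j+2\Upsilon_{ij})$. Here $Q$ is constant on blocks with $\abs{Q} \lesssim 1/\theta_{k\ell}$, and $\Upsilon_{ij}$ is linear in $\bfz_i$ and $\bfz_j$. So the entry reduces to $\sqrt{n}/\theta_{k\ell}$ times normalized sums $n^{-1/2}\sum_i \zeta_i$ and $n^{-1/2}\sum_i \angles{\bfmu_k-\bfmu_\ell, \bfA_k\bfz_i}$ of independent centered variables. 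By independence and a moment bound (Lemma \ref{lemma:basic-property-prec}\ref{item:compatibility-expectation}) these are $\prec \psi_{k\ell}$, giving $\prec \sqrt{n}\psi_{k\ell}/\theta_{k\ell} \le \sqrt{n}\rho$. This is precisely the mechanism underlying Lemma \ref{lemma:u-dotK-barK-u}, so I expect its proof to transfer verbatim to $\bfPhi$.
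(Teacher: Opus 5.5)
Your proof is correct and gives exactly the paper's rate, but by a genuinely different and more elementary route. The paper writes $\ddotU_r\ddotU_r^\top$ as a Cauchy integral of $(\ddotK-z)^{-1}$ and applies the Woodbury identity around $\bfR(z)=(\ddotK-\barK-z)^{-1}$. It then expands $\mathbf{M}(z)$, $\barU^\top\bfR(z)\barU$ and $\barU^\top\bfR(z)\barK\bfe_i$ into leading terms plus remainders. The leading term integrates exactly to $\angles{\bfe_i,\barK\barU\bfe_k}$, and the remainder contributes $n^{1/2+\varepsilon}\delta(\barDelta_\lambda)/\barDelta_\lambda$.

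You instead use $\barK=\barU\barLamb\barU^\top$ (equivalently $\bfPhi\fkK\bfPhi^\top$) to reduce to the fixed-size matrix $\barU^\top(\ddotU_r\ddotfkQ_r-\barU_r)$. The $\barU_r$-block is handled by $\norm{\bfI_r-\ddotfkQ_r^\top\ddotfkQ_r}\le\norm{\sin\Theta}^2$. The complementary block is handled by a diagonal Sylvester equation with gap $\barDelta_\lambda/2$, into which you feed the averaged bound of Lemma \ref{lemma:u-dotK-barK-u}. Every input (Davis--Kahan, eigenvalue separation, Lemma \ref{lemma:u-dotK-barK-u}) is established independently of this lemma, so there is no circularity.

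Your route avoids complex analysis altogether and makes the source of the $\sqrt{n}$ gain transparent: you use $\norm{\barU^\top(\ddotK-\barK)\barU}\prec\sqrt{n}\rho$ in place of $\norm{\ddotK-\barK}\prec n\rho$. The paper's resolvent apparatus is heavier, but it reuses the machinery of the eigenvalue localization and would handle delocalized test vectors outside $\operatorname{range}(\bfPhi)$. That extra generality is not needed here, since $\barK\bfe_i\in\operatorname{range}(\barU)$.

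Some small points:
\begin{itemize}
\item Your ``main obstacle'' is not one. $\fkU$ is the full orthogonal eigenvector matrix of the symmetric $m\times m$ matrix $\fkK$ whether or not $\fkK$ is singular. Hence $\barU=\bfPhi\fkU$ always spans $\operatorname{range}(\bfPhi)$, and you may take your complement basis to be $[\baru_{r+1},\dots,\baru_m]$; its eigenvalues are all at most $\barlamb_{r+1}$, possibly negative. Then $\bfW^\top(\ddotK-\barK)\barU_r$ is literally an off-diagonal block of $\barU^\top(\ddotK-\barK)\barU$. Your direct block-sum check is also valid; it is \eqref{eqn:average-dotK-barK} with $\bfs=\bfone_{\caC_k}/\sqrt{n_k}$.
\item The garbled splitting in Step 3 should read $\ddotU_r=(\ddotU_r\ddotfkQ_r-\barU_r)\ddotfkQ_r^{-1}+\barU_r\ddotfkQ_r^{-1}$. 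The first term has norm $\prec n\rho/\barDelta_\lambda$ by \eqref{bound:l2-dotU-barU} and \eqref{bound:elementary-S-H}.
\item The spectral separation in Step 3 already follows from Weyl's inequality, since $\norm{\ddotK-\barK}\prec n\rho\lesssim n^{-c}\barDelta_\lambda$.
\item Your $\bfW$ clashes with the paper's $\bfW=[W_{ij}]$; rename it.
\end{itemize}
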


The proof of Lemma \ref{lemma:inner-prod-dotU-barU} is deferred to Section \ref{subsec:proof-refined-sqrtn}. Combining this estimate with $\barlamb_r \geq \barDelta_{\lambda}$, we obtain
\begin{equation*}
    \norm{\barK (\ddotU_r \ddotfkQ_r - \barU_r) \barLamb_r^{-1}}_{2, \infty}
    \leq \norm{\barK (\ddotU_r \ddotfkQ_r - \barU_r)}_{2, \infty}
    \cdot \norm{\barLamb_r^{-1}}
    \prec {n \rho} / {\barDelta_{\lambda}^2}  
    + {n^{5/2} \rho^2} / {\barDelta_{\lambda}^3}.
\end{equation*}
For the second term, the entrywise control \eqref{bound-entry-Kdot-Kbar} implies
\begin{equation}
    \norm{\ddotK - \barK}_{2, \infty} 
    = \max_{1 \leq i \leq n} \pars[\Big]{
    \sum\nolimits_{j=1}^n \abs{\ddot{K}_{ij} - \bar{K}_{ij}}^2}^{1/2}
    \prec \sqrt{n} \rho.
    \label{bound:max-row-dotK-barK}
\end{equation}
Combining this with the $\ell_2$ control \eqref{bound:l2-dotU-barU}, we obtain
\begin{equation*}
    \norm{(\ddotK - \barK) (\ddotU_r \ddotfkQ_r - \barU_r) \barLamb_r^{-1}}_{2, \infty}
    \leq \norm{\ddotK - \barK}_{2, \infty}
    \cdot \norm{\ddotU_r\ddotfkQ - \barU_r}
    \cdot \norm{\ddotLamb_r^{-1}}
    \prec {n^{3/2}} \rho^2 / {\barDelta_{\lambda}^2} .
\end{equation*}
Summarizing the two estimates above yields
\begin{equation}
    \norm{\mathrm{IV_{\ref{eqn:decomp-Udot-Ubar}}}}_{2,\infty}
    \prec 
    {n \rho} / {\barDelta_{\lambda}^2}  
    + {n^{3/2}\rho^2} / {\barDelta_{\lambda}^2}
    + {n^{5/2} \rho^2} / {\barDelta_{\lambda}^3}
    \lesssim 
    {n \rho} / {\barDelta_{\lambda}^2}  
    + {n^{5/2} \rho^2} / {\barDelta_{\lambda}^3},
    \label{tmp:ctrl-term-4}
\end{equation}
where in the last step we used the fact that $\barDelta_{\lambda} \leq \norm{\barK}_{\Fnorm} \lesssim n$.

\shortpara{Control of $\mathrm{V_{\ref{eqn:decomp-Udot-Ubar}}}$}: This term can be controlled directly using \eqref{bound:inv-dot-Lamb} and \eqref{bound:max-row-dotK-barK},
\begin{equation}
    \norm{\mathrm{V_{\ref{eqn:decomp-Udot-Ubar}}}}_{2,\infty}
    = \norm{(\ddotK - \barK) \barU_r \barLamb_r^{-1}}_{2, \infty}
    \leq \norm{\ddotK - \barK}_{2, \infty}
    \cdot \norm{\barLamb_r^{-1}}
    \prec {\sqrt{n} \rho} / {\barDelta_{\lambda}}.
    \label{tmp:ctrl-term-5}
\end{equation}

\shortpara{Conclusion}: Summarizing the bounds \eqref{tmp:ctrl-term-1}, \eqref{tmp:ctrl-term-2}, \eqref{tmp:ctrl-term-3}, \eqref{tmp:ctrl-term-4}, and \eqref{tmp:ctrl-term-5}, we obtain
\begin{equation*}
    \norm{\ddotU_r \ddotfkS_r- \barU_r}_{2,\infty}
    \prec \pars{{\sqrt{n} \rho} / {\barDelta_{\lambda}}
    + {n^2 \rho^2} / {\barDelta_{\lambda}^2} }
    \cdot \norm{\ddotU_r\ddotfkS_r}_{2,\infty}
    + \pars{{\sqrt{n} \rho} / {\barDelta_{\lambda}}
    + n \rho / \barDelta_{\lambda}^2
    + n^{5/2} \rho^2 / \barDelta_{\lambda}^3}.
\end{equation*}
Using $n \rho / \barDelta_{\lambda} \lesssim n^{-c_{\ref{assump:signal}}}$ again, we see that the prefactor of $\norm{\ddotU_r\ddotfkS_r}_{2,\infty}$ is bounded by $n^{-c_{\ref{assump:signal}}}$. For brevity, define
\begin{equation*}
    \Psi_{\mathrm{\ref{bound:ddotU-barU}}} = {{\sqrt{n} \rho} / {\barDelta_{\lambda}} + n \rho / \barDelta_{\lambda}^2 + n^{5/2} \rho^2 / \barDelta_{\lambda}^3}.
\end{equation*}
Then, by the triangle inequality, the preceding bound implies
\begin{equation*}
    \norm{\ddotU_r\ddotfkS_r}_{2,\infty}
    \leq \norm{\barU_r}_{2,\infty}
    + \norm{\ddotU_r\ddotfkS_r- \barU_r}_{2,\infty} 
    \prec n^{-c_{\ref{assump:signal}}} \norm{\ddotU_r\ddotfkS_r}_{2,\infty}
    + 1/\sqrt{n} 
    + \Psi_{\mathrm{\ref{bound:ddotU-barU}}}.
\end{equation*}
Here we used the representation of the eigenvectors $\baru_\ell$ in \eqref{eqn:baru-piecewise}, together with $n_k \asymp n$ from Assumption \ref{assump:tech-const}, to obtain $\norm{\barU_r}_{2,\infty} \asymp 1 / \sqrt{n}$. Now, it follows from Lemma \ref{lemma:basic-property-prec} \ref{item:cancel-prec} that
\begin{equation}
    \norm{\ddotU_r\ddotfkS_r}_{2,\infty}
    \prec 1/\sqrt{n} + \Psi_{\mathrm{\ref{bound:ddotU-barU}}}
    \lesssim 1/\sqrt{n},
    \label{bound:dotU-sqrt-n}
\end{equation}
where we also used $n^3 \rho^2 / \barDelta_{\lambda}^3 \leq n^{-2 c_{\ref{assump:signal}}}$ from Assumption \ref{assump:signal} to bound the deterministic parameter $\Psi_{\mathrm{\ref{bound:ddotU-barU}}}$ by $1/\sqrt{n}$. Substituting this estimate back into the previous bound yields
\begin{equation*}
    \norm{\ddotU_r \ddotfkS_r- \barU_r}_{2,\infty}
    \prec \pars{{\sqrt{n} \rho} / {\barDelta_{\lambda}}
    + {n^2 \rho^2} / {\barDelta_{\lambda}^2} } 
    (1/\sqrt{n} + \Psi_{\mathrm{\ref{bound:ddotU-barU}}}) 
    + \Psi_{\mathrm{\ref{bound:ddotU-barU}}}
    \lesssim \Psi_{\mathrm{\ref{bound:ddotU-barU}}},
\end{equation*}
where the last step again uses $n \rho / \barDelta_{\lambda} \leq n^{-c_{\ref{assump:signal}}}$ and $\barDelta_{\lambda} \lesssim n$. This proves the $\ell_{\infty}$ control \eqref{bound:ddotU-barU}.
\end{proof}

\subsection{Proof of \eqref{bound:bfU-ddotU}}
\label{subsec:subleading-evec-entrywise}

\begin{proof}[Proof of \eqref{bound:bfU-ddotU}]
We begin with the decomposition
\begin{equation}
    \bfU_r \fkS_r - \ddotU_r \ddotfkS_r
    = \mathrm{I_{\ref{eqn:decomp-U-Udot}}}
    + \mathrm{II_{\ref{eqn:decomp-U-Udot}}}
    + \cdots
    + \mathrm{VII_{\ref{eqn:decomp-U-Udot}}},
    \label{eqn:decomp-U-Udot}
\end{equation}
where the terms on the r.h.s. are defined by
\begin{alignat*}{2}
    \mathrm{I_{\ref{eqn:decomp-U-Udot}}}
    & = \bfU_r (\fkS_r - \fkQ_r),
    \qquad \qquad &
    \mathrm{II_{\ref{eqn:decomp-U-Udot}}}
    & = - \ddotU_r (\ddotfkS_r - \ddotfkQ_r), \\
    \mathrm{III_{\ref{eqn:decomp-U-Udot}}}
    & = \bfU_r (\fkQ_r \bfLamb_r 
    - \bfLamb_r \fkQ_r) \bfLamb_r^{-1},
    \qquad \qquad &
    \mathrm{IV_{\ref{eqn:decomp-U-Udot}}}
    & = - \ddotU_r ( \ddotfkQ_r \ddotLamb_r - \ddotLamb_r \ddotfkQ_r) \ddotLamb_r^{-1}, \\
    \mathrm{V_{\ref{eqn:decomp-U-Udot}}}
    & = \bfK \bfU_r \fkQ_r (\bfLamb_r^{-1} - \ddotLamb_r^{-1}),
    \qquad \qquad &
    \mathrm{VI_{\ref{eqn:decomp-U-Udot}}}
    & = \bfK (\bfU_r \fkQ_r - \ddotU_r \ddotfkQ_r) \ddotLamb_r^{-1}, \\
    \mathrm{VII_{\ref{eqn:decomp-U-Udot}}}
    & = (\bfK - \ddotK) \ddotU_r \ddotfkQ_r \ddotLamb_r^{-1}. &&
\end{alignat*}
This decomposition basically follows the same strategy as that used in \eqref{eqn:decomp-Udot-Ubar}. The first two terms, $\mathrm{I_{\ref{eqn:decomp-U-Udot}}}$ and $\mathrm{II_{\ref{eqn:decomp-U-Udot}}}$, capture the error incurred by replacing the sign matrices with the corresponding alignment matrices. The next two terms, $\mathrm{III_{\ref{eqn:decomp-U-Udot}}}$ and $\mathrm{IV_{\ref{eqn:decomp-U-Udot}}}$, account for the non-commutativity between the alignment matrices and the eigenvalue matrices. The remaining three terms arise from replacing the factors in $\bfK \bfU_r \fkQ_r \bfLamb_r^{-1}$ by their dotted counterparts. As in the proof of \eqref{bound:ddotU-barU}, we control these terms separately.

\shortpara{Control of $\mathrm{I_{\ref{eqn:decomp-U-Udot}}}$ and $\mathrm{II_{\ref{eqn:decomp-U-Udot}}}$}: We first note that $\mathrm{II_{\ref{eqn:decomp-U-Udot}}} = -\mathrm{I_{\ref{eqn:decomp-Udot-Ubar}}}$ has already been handled in \eqref{tmp:ctrl-term-1}. In addition, utilizing \eqref{bound:dotU-sqrt-n}, we may further bound $\norm{\ddotU_r\ddotfkS_r}_{2,\infty}$ appearing on the r.h.s. of \eqref{tmp:ctrl-term-1} by $1 / \sqrt{n}$. The treatment of $\mathrm{I_{\ref{eqn:decomp-U-Udot}}}$ is analogous. The only difference is that we now combine the Davis--Kahan theorem with the bound $\norm{\bfK - \barK} \prec 1 + n \rho$ from Proposition \ref{prop:norm-K-barK-dotK}. This yields
\begin{equation}
    \norm{\bfU_r \bfU_r^\top - \barU_r \barU_r^\top}
    \lesssim \frac{\norm{\bfK - \barK}}{\dist (\bfLamb_r, \barLamb_\perp)}
    \prec {(1 + n \rho)} / {\barDelta_{\lambda}},
    \label{bound-S-H}
\end{equation}
and consequently,
\begin{equation}
    \norm{\fkS_r - \fkQ_r} \leq \norm{\bfU_r \bfU_r^\top - \barU_r \barU_r^\top}^2 
    \prec {(1 + n^2 \rho^2)} / {\barDelta_{\lambda}^2}.
    \label{tmp:fkS-fkH-diff}
\end{equation}
In what follows, we work on a high-probability event on which \eqref{bound:elementary-S-H} holds and, in addition,
\begin{equation*}
    \norm{\fkQ_r} \leq 1,
    \quad \quad
    \norm{\fkS_r} \leq 1,
    \quad \quad
    \norm{(\fkQ_r)^{-1}} \leq 2,
    \quad \quad
    \norm{\fkS_r^{-1}} \leq 1.
\end{equation*}
To summarize, for the first two terms on the r.h.s. of \eqref{eqn:decomp-U-Udot}, we have
\begin{subequations} \label{tmp:sublead-term-1-2}
\begin{align}
    \norm{\mathrm{I_{\ref{eqn:decomp-U-Udot}}}}_{2,\infty}
    & \prec {(1 + n^2 \rho^2)} / {\barDelta_{\lambda}^2} 
    \cdot \norm{\bfU_r \fkS_r}_{2,\infty}, \\
    \norm{\mathrm{II_{\ref{eqn:decomp-U-Udot}}}}_{2,\infty}
    & \prec {n^{3/2} \rho^2} / {\barDelta_{\lambda}^2}.
\end{align}    
\end{subequations}


\shortpara{Control of $\mathrm{III_{\ref{eqn:decomp-U-Udot}}}$ and $\mathrm{IV_{\ref{eqn:decomp-U-Udot}}}$}: We first note that $\mathrm{IV_{\ref{eqn:decomp-U-Udot}}} = - \mathrm{II_{\ref{eqn:decomp-Udot-Ubar}}}$ has already been analyzed in \eqref{tmp:ctrl-term-2}. Combining this with \eqref{bound:dotU-sqrt-n} gives the desired bound for $\mathrm{IV_{\ref{eqn:decomp-U-Udot}}}$ in \eqref{tmp:sublead-term-4} below. It remains to control $\mathrm{III_{\ref{eqn:decomp-U-Udot}}}$. The argument
is similar to the one used for $\mathrm{II_{\ref{eqn:decomp-Udot-Ubar}}}$, but requires some additional care. We first observe that
\begin{equation}
    \bfU_r^\top (\bfK - \barK) \barU_r
    = \bfLamb_r \bfU_r^\top \barU_r
    - \bfU_r^\top \barU_r \barLamb_r
    = \bfLamb_r \fkQ_r
    - \fkQ_r \barLamb_r.
    \label{eqn:decomp-LambdaH-HbarLambda}
\end{equation}
Therefore, the commutator between $\fkQ_r$ and $\bfLamb_r$ can be decomposed as
\begin{align} \label{tmp:interchange-Lamb-H-2}
\begin{split}
    \fkQ_r \bfLamb_r - \bfLamb_r \fkQ_r
    = & ~ \fkQ_r (\bfLamb_r - \barLamb_r)
    - (\bfU_r - \barU_r \fkQ_r^{-1})^\top (\bfK - \barK) \barU_r \\
    & - (\barU_r \fkQ_r^{-1})^\top (\bfK - \ddotK) \barU_r 
    - (\barU_r \fkQ_r^{-1})^\top (\ddotK - \barK) \barU_r.
\end{split}
\end{align}
Except for the third term on the r.h.s. of \eqref{tmp:interchange-Lamb-H-2}, the remaining terms can be treated in the same way as the terms in \eqref{tmp:interchange-Lamb-H}, using the following counterparts of the perturbation bounds:
\begin{subequations}
\begin{align}
    \norm{\bfLamb_r - \barLamb_r} 
    & \leq \norm{\ddotLamb_r - \barLamb_r} 
    + \norm{\bfK - \ddotK}
    \prec 1 + \sqrt{n} \phi
    + n^2 \rho^2 / \barDelta_{\lambda} ,
    \label{tmp:bflamb-to-popu} \\
    \norm{\bfU_r \fkQ_r - \barU_r} 
    & \leq \norm{\bfU_r \bfU_r^\top - \barU_r \barU_r^\top}
    \prec (1 + n \rho) / \barDelta_{\lambda} .
    \label{tmp:bfu-to-popu}
\end{align}    
\end{subequations}
Here \eqref{tmp:bflamb-to-popu} follows from Propositions \ref{prop:norm-K-barK-dotK} and \ref{prop:dotlamb-barlamb}, together with the fact that $\sqrt{n} \rho \lesssim 1 + n \rho^2$. On the other hand, \eqref{tmp:bfu-to-popu} follows from \eqref{bound-S-H}. While for the third term on the r.h.s. of \eqref{tmp:interchange-Lamb-H-2}, we note that its operator norm is bounded directly by $\norm{\bfK - \ddotK}$. Hence,
\begin{align*}
    \norm{\fkQ_r \bfLamb_r - \bfLamb_r \fkQ_r} 
    & \lesssim \norm{\bfLamb_r - \barLamb_r} 
    + \norm{\bfU_r \fkQ_r - \barU_r}
    \cdot \norm{\bfK - \barK} 
    + \norm{\bfK - \ddotK} 
    + \norm{\barU_r^\top (\ddotK - \barK) \barU_r} \\
    & \prec 1 + \sqrt{n} \phi
    + n^2 \rho^2 / \barDelta_{\lambda} .
\end{align*}
Combining this estimate with $\norm{\bfLamb_r^{-1}} \prec 1 / \barDelta_{\lambda}$, which will be verified shortly below, yields the desired $\ell_\infty$ control for $\mathrm{III_{\ref{eqn:decomp-U-Udot}}}$. To summarize, we have obtained
\begin{subequations} \label{tmp:sublead-term-3-4}
\begin{align} 
    \norm{\mathrm{III_{\ref{eqn:decomp-U-Udot}}}}_{2,\infty}
    & \prec \pars[\big]{1 / \barDelta_{\lambda}
    + \sqrt{n} \phi / \barDelta_{\lambda}
    + n^2 \rho^2 / \barDelta_{\lambda}^2} 
    \cdot \norm{\bfU_r \fkS_r}_{2, \infty} , \\
    \norm{\mathrm{IV_{\ref{eqn:decomp-U-Udot}}}}_{2,\infty}
    & \prec {\rho} / {\barDelta_{\lambda}}
    + {n^{3/2} \rho^2} / {\barDelta_{\lambda}^2} .
    \label{tmp:sublead-term-4}
\end{align}
\end{subequations}

\shortpara{Control of $\mathrm{V_{\ref{eqn:decomp-U-Udot}}}$}: 
The analysis is similar to that leading to \eqref{tmp:ctrl-term-3}. First, we have $\norm{\bfLamb_r^{-1}} \prec 1 / \barDelta_{\lambda}$ by combining
\begin{equation*}
    \norm{\bfLamb_r^{-1} - \barLamb_r^{-1}}
    \leq \norm{\barLamb_r^{-1}}
    \cdot {\norm{\bfK - \barK}}
    \cdot \norm{\bfLamb_r^{-1}}
    \prec (1 + n \rho) / \barDelta_{\lambda}
    \cdot \norm{\bfLamb_r^{-1}}
    \lesssim n^{-c_{\ref{assump:signal}}} \norm{\bfLamb_r^{-1}},
\end{equation*}
with the triangle inequality. Consequently, by \eqref{bound:bfK-ddotK},
\begin{equation*}
    \norm{\bfLamb_r^{-1} - \ddotLamb_r^{-1}} 
    \prec \norm{\bfLamb_r - \ddotLamb_r} / \barDelta_{\lambda}^2
    \leq \norm{\bfK - \ddotK} / \barDelta_{\lambda}^2
    \prec 
    1 / \barDelta_{\lambda}^2
    + \sqrt{n} \phi / \barDelta_{\lambda}^2
    + n \rho^2 / \barDelta_{\lambda}^2.
\end{equation*}
Moreover, since $\norm{f}_{\infty} \lesssim 1$, we have the trivial bound $\norm{\bfK}_{2,\infty} \lesssim \sqrt{n}$. Therefore,
\begin{equation}
    \norm{\mathrm{V_{\ref{eqn:decomp-U-Udot}}}}_{2,\infty}
    = \norm{\bfK \bfU_r \fkQ_r (\bfLamb_r^{-1} - \ddotLamb_r^{-1})}_{2, \infty}
    \prec \sqrt{n} / \barDelta_{\lambda}^2
    + n \phi / \barDelta_{\lambda}^2
    + n^{3/2} \rho^2 / \barDelta_{\lambda}^2.
    \label{tmp:sublead-term-5}
\end{equation}

\shortpara{Control of $\mathrm{VI_{\ref{eqn:decomp-U-Udot}}}$}: Let $\ddotLamb_\perp = \diag (\ddotlamb_{r+1}, \cdots, \ddotlamb_{n})$. Applying the Davis--Kahan theorem again, we obtain
\begin{equation*}
    \norm{\bfU_r \fkQ_r - \ddotU_r \ddotfkQ_r}
    \leq \norm{\bfU_r\bfU_r^\top - \ddotU_r \ddotU_r^\top}
    \leq \frac{\norm{\bfK - \ddotK}}{\dist (\bfLamb_r, \ddotLamb_\perp)}
    \prec 
    1 / \barDelta_{\lambda}
    + \sqrt{n} \phi / \barDelta_{\lambda}
    + n \rho^2 / \barDelta_{\lambda},
\end{equation*}
where we used $1 / \dist (\bfLamb_r, \ddotLamb_\perp) \prec 1 / \barDelta$. This can be deduced from
\begin{equation*}
    \dist (\bfLamb_r, \ddotLamb_\perp) 
    \geq \barDelta_{\lambda} - \norm{\bfLamb_r - \barLamb_r}
    - \norm{\ddotLamb_r - \barLamb_r},
\end{equation*}
together with \eqref{tmp:bflamb-to-popu} and \eqref{bound:ddotLamb-barLamb}. Consequently, using $\norm{\bfK}_{2, \infty} \lesssim \sqrt{n}$ and $\norm{\ddotLamb_r^{-1}} \prec 1 / \barDelta$ again, we find
\begin{equation}
    \norm{\mathrm{VI_{\ref{eqn:decomp-U-Udot}}}}_{2,\infty}
    = \norm{\bfK (\bfU_r \fkQ_r - \ddotU_r \ddotfkQ_r) \ddotLamb_r^{-1}}_{2, \infty}
    \prec \sqrt{n} / \barDelta_{\lambda}^2
    + n \phi / \barDelta_{\lambda}^2
    + n^{3/2} \rho^2 / \barDelta_{\lambda}^2.
    \label{tmp:sublead-term-6}
\end{equation}

\shortpara{Control of $\mathrm{VII_{\ref{eqn:decomp-U-Udot}}}$}: Recall the decomposition \eqref{eqn:diff-K-Kdot}. We have
\begin{equation}
    \mathrm{VII_{\ref{eqn:decomp-U-Udot}}}
    = (\bfK - \ddotK) \ddotU_r \ddotfkQ_r \ddotLamb_r^{-1}
    = - \diag (\ddotK) \ddotU_r \ddotfkQ_r \ddotLamb_r^{-1}
    + \braks{\bfQ \circ \caH (\bfW)} \ddotU_r \ddotfkQ_r \ddotLamb_r^{-1}
    + (\bfR + \bfR') \ddotU_r \ddotfkQ_r \ddotLamb_r^{-1}.
     \label{tmp:term-VII-split}
\end{equation}
The first and third terms on the r.h.s. of \eqref{tmp:term-VII-split} are relatively straightforward to control. Indeed, by the definition of the $\ell_{2,\infty}$ norm, for any diagonal matrix $\bfD$ and conformable matrix $\bfA$, it holds that $\norm{\bfD \bfA}_{2,\infty} \leq \norm{\bfD} \cdot \norm{\bfA}_{2,\infty}$. Thus, for the first term, we have
\begin{align*}
    \norm{\diag (\ddotK) \ddotU_r \ddotfkQ_r \ddotLamb_r^{-1}}_{2,\infty}
    & \leq \norm{\diag (\ddotK)}
    \cdot \norm{\ddotU_r \ddotfkQ_r \ddotLamb_r^{-1}}_{2, \infty} \\
    & \leq \norm{\diag (\ddotK)}
    \cdot \norm{\ddotU_r \ddotfkS_r}_{2, \infty}
    \cdot \norm{\ddotfkS_r^{-1} \ddotfkQ_r \ddotLamb_r^{-1}}
    \prec 1 / (\sqrt{n} \barDelta_{\lambda}) .   
\end{align*}
Here, in the last step, we used $\norm{\diag(\ddotK)} \prec 1$ from Section \ref{subsec:norm-bound-residual}, together with \eqref{bound:elementary-S-H}, \eqref{bound:inv-dot-Lamb}, and \eqref{bound:dotU-sqrt-n}. For the third term on the r.h.s. of \eqref{tmp:term-VII-split}, the entrywise bound $\abs{R_{ij}}+\abs{R_{ij}'} \prec \rho^2$ from Section \ref{subsec:norm-bound-residual} implies
\begin{equation*}
    \norm{(\bfR + \bfR') \ddotU_r \ddotfkQ_r \ddotLamb_r^{-1}}_{2,\infty}
    \leq \norm{\bfR + \bfR'}_{2,\infty}
    \cdot \norm{\ddotU_r \ddotfkQ_r \ddotLamb_r^{-1}}
    \prec \sqrt{n} \rho^2 / \barDelta_{\lambda}.
\end{equation*}
It remains to control the second term on the r.h.s. of \eqref{tmp:term-VII-split}. We further split it as
\begin{equation}
    \braks{\bfQ \circ \caH (\bfW)} \ddotU_r \ddotfkQ_r \ddotLamb_r^{-1}
    = \braks{\bfQ \circ \caH (\bfW)} \barU_r \ddotLamb_r^{-1}
    + \braks{\bfQ \circ \caH (\bfW)} (\ddotU_r \ddotfkQ_r - \barU_r) \ddotLamb_r^{-1}.
    \label{tmp:QW-split}
\end{equation}
For the first term on the r.h.s., we use the following auxiliary estimate.

\begin{lemma}
\label{lemma:QW-barU-infnorm}
Under the same setup as in Proposition \ref{prop:norm-K-barK-dotK},
\begin{equation}
    \norm{\braks{\bfQ \circ \caH (\bfW)} \barU_r}_{2, \infty}
    \prec \rho.
\end{equation}
\end{lemma}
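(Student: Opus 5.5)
We need to bound, for each row $i$, the norm of $\sum_j Q_{ij}(1-\delta_{ij})W_{ij}\barU_r(j,\cdot)$. The plan is to exploit the blockwise structure of $\bfQ$ and $\barU_r$, which reduces the problem to a linear form in $\bfz_i$ with a random coefficient vector that is independent of $\bfz_i$.

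Fix $i\in\caC_k$ and a column index $\ell'\in\dbraks{r}$. By \eqref{eqn:baru-piecewise}, for $j\in\caC_\ell$ we have $\baru_{\ell'}(j)=\fkU_{\ell\ell'}/\sqrt{n_\ell}$. Since $Q_{ij}=:q_{k\ell}$ depends only on $(k,\ell)$, the $(i,\ell')$ entry equals
\begin{equation*}
\sum_{\ell=1}^m q_{k\ell}\frac{\fkU_{\ell\ell'}}{\sqrt{n_\ell}}\sum_{j\in\caC_\ell,\,j\neq i}\angles{\bfA_k\bfz_i,\bfA_\ell\bfz_j}
=\sum_{\ell=1}^m q_{k\ell}\frac{\fkU_{\ell\ell'}}{\sqrt{n_\ell}}\angles[\Big]{\bfz_i,\bfOmega_{k\ell}\bfs_\ell^{(i)}},
\end{equation*}
where $\bfs_\ell^{(i)}:=\sum_{j\in\caC_\ell\setminus\{i\}}\bfz_j$. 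Because $m,r$ are fixed, $\abs{\fkU_{\ell\ell'}}\le1$, and $n_\ell\asymp n$, it suffices to show
\begin{equation*}
\abs{q_{k\ell}}\,n^{-1/2}\abs{\angles{\bfz_i,\bfOmega_{k\ell}\bfs_\ell^{(i)}}}\prec\rho
\end{equation*}
uniformly in $i$, and then take a union bound over the $n$ rows using Lemma \ref{lemma:basic-property-prec}.

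The vector $\bfs_\ell^{(i)}$ is independent of $\bfz_i$. Conditioning on it and applying the Bernstein-type estimate \eqref{eqn:bernstein} gives
\begin{equation*}
\abs{\angles{\bfz_i,\bfOmega_{k\ell}\bfs_\ell^{(i)}}}\prec\norm{\bfOmega_{k\ell}\bfs_\ell^{(i)}}.
\end{equation*}
The conditioning is legitimate because \eqref{eqn:bernstein} holds uniformly over deterministic $\bfa$ with polynomially bounded norms, and Assumption \ref{assump:tech-const} ensures polynomial bounds. Next, $\norm{\bfOmega_{k\ell}\bfs_\ell^{(i)}}^2=\angles{\bfs,\bfOmega_{\ell k}\bfOmega_{k\ell}\bfs}$. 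Expanding the sum gives diagonal terms $\sum_j\angles{\bfz_j,\bfB\bfz_j}$ with $\bfB=\bfOmega_{\ell k}\bfOmega_{k\ell}$, each of which is $\Tr\bfB+\oprec{\norm{\bfB}_\Fnorm}$ by \eqref{eqn:hanson-wright}. This part is therefore $\prec n\norm{\bfOmega_{k\ell}}_\Fnorm^2$. The off-diagonal terms $\sum_{j\neq j'}\angles{\bfz_j,\bfB\bfz_{j'}}$ form a mean-zero chaos. The cleanest way to control them is a moment computation, using Lemma \ref{lemma:basic-property-prec}\ref{item:compatibility-expectation} or a conditioning/decoupling argument, which should give size $\prec n\norm{\bfB}_\Fnorm\le n\norm{\bfOmega_{k\ell}}_\Fnorm^2$. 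Altogether,
\begin{equation*}
\norm{\bfOmega_{k\ell}\bfs_\ell^{(i)}}\prec\sqrt n\,\norm{\bfOmega_{k\ell}}_\Fnorm.
\end{equation*}

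Combining this with \eqref{bound:upper-Q}, namely $\abs{q_{k\ell}}\lesssim1/\theta_{k\ell}$, the term is
\begin{equation*}
\prec n^{-1/2}\theta_{k\ell}^{-1}\sqrt n\,\norm{\bfOmega_{k\ell}}_\Fnorm=\norm{\bfOmega_{k\ell}}_\Fnorm/\theta_{k\ell}\lesssim\psi_{k\ell}/\theta_{k\ell}\le\rho,
\end{equation*}
where we use $\norm{\bfOmega_{k\ell}}_\Fnorm\lesssim\psi_{k\ell}$ from the proof of Lemma \ref{lemma:xi-control}.

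The main obstacle is the off-diagonal chaos bound for $\norm{\sum_j\bfOmega_{k\ell}\bfz_j}^2$, because Assumption \ref{assump:concentration} only controls single-vector linear and quadratic forms. I would first handle it by computing $\bbe\norm{\bfOmega_{k\ell}\bfs}^{2q}$ through the independence of the $\bfz_j$. If that is messy, a fallback is to view $\bfs/\sqrt{n}$ as satisfying a Bernstein/Hanson--Wright-type bound itself by iterative conditioning. A cruder alternative, which uses $\norm{\bfOmega_{k\ell}}$ and would only yield $\phi/\sqrt n$-type bounds, should be avoided.
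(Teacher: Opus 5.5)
Your proposal is correct and follows essentially the paper's route. Both arguments write each row entry as a linear form in $\bfz_i$ whose coefficient vector is built from $\{\bfz_j\}_{j\neq i}$, apply the Bernstein estimate conditionally, and bound the coefficient norm by $\sqrt{n}\,\norm{\bfOmega_{k\ell}}_\Fnorm/\theta_{k\ell}$, which is the same as the paper's $\norm{\bfB_{(-i)}}_\Fnorm\lesssim\rho$. Collapsing the sum via the block structure of $\bfQ$ and $\barU_r$, where the paper instead treats a general deterministic $\bfs$ with $\norm{\bfs}\lesssim 1$, is only cosmetic; the chaos step you leave open is exactly what the paper's concatenation Lemma~\ref{lemma:long-concentration} supplies (Marcinkiewicz--Zygmund for the diagonal part, partition-averaging decoupling for the off-diagonal part), and it gives the $\prec n\norm{\bfB}_\Fnorm$ bound you predicted.
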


The proof of Lemma \ref{lemma:QW-barU-infnorm} is deferred to Section \ref{subsec:proof-using-Hanson-Wright}. Combining this estimate with \eqref{bound:inv-dot-Lamb}, we obtain
\begin{equation*}
    \norm{\braks{\bfQ \circ \caH (\bfW)} \barU_r \ddotLamb_r^{-1}}_{2, \infty}
    \leq \norm{\braks{\bfQ \circ \caH (\bfW)} \barU_r}_{2, \infty} \cdot \norm{\ddotLamb_r^{-1}}
    \prec \rho / \barDelta_{\lambda}.
\end{equation*}
The second term of \eqref{tmp:QW-split} is controlled more directly. Recall from \eqref{eqn:bound-xi-zeta-Ups} and \eqref{bound:upper-Q} that we have the entrywise control $\braks{\bfQ \circ \caH(\bfW)}_{ij} = \oprec{\rho}$. Together with the $\ell_2$ eigenvector perturbation bound \eqref{bound:l2-dotU-barU}, this yields
\begin{equation*}
    \norm{\braks{\bfQ \circ \caH (\bfW)} 
    (\ddotU_r \ddotfkQ_r - \barU_r)
    \ddotLamb_r^{-1}}_{2,\infty}
    \leq \norm{\bfQ \circ \caH (\bfW)}_{2,\infty}
    \cdot \norm{\ddotU_r\ddotfkQ_r - \barU_r}
    \cdot \norm{\ddotLamb_r^{-1}}
    \prec {n^{3/2} \rho^2} / {\barDelta_{\lambda}^2} .
\end{equation*}
Substituting the two preceding estimates into \eqref{tmp:QW-split}, we arrive at
\begin{equation*}
    \norm{\braks{\bfQ \circ \caH (\bfW)} \ddotU_r \ddotfkQ_r \ddotLamb_r^{-1}}_{2,\infty}
    \prec \rho / \barDelta_{\lambda} + {n^{3/2} \rho^2} / {\barDelta_{\lambda}^2} .
\end{equation*}
Now, summarizing the bounds for the three terms in \eqref{tmp:term-VII-split}, we conclude that
\begin{equation}
    \norm{\mathrm{VII_{\ref{eqn:decomp-U-Udot}}}}_{2, \infty}
    \prec 1/ (\sqrt{n} \barDelta_{\lambda})
    + {n^{3/2} \rho^2} / {\barDelta_{\lambda}^2} ,
    \label{tmp:sublead-term-7}
\end{equation}
where we used $\barDelta_{\lambda} \lesssim n$ and $\rho \lesssim \sqrt{n} \rho^2 + 1 / \sqrt{n}$ when merging the error bounds.

\shortpara{Conclusion}: Summarizing the estimates \eqref{tmp:sublead-term-1-2}, \eqref{tmp:sublead-term-3-4}, \eqref{tmp:sublead-term-5}, \eqref{tmp:sublead-term-6} and \eqref{tmp:sublead-term-7}, we obtain
\begin{equation*}
    \norm{\bfU_r \fkS_r - \ddotU_r \ddotfkS_r}_{2,\infty}
    \prec 
    \pars[\big]{1 / \barDelta_{\lambda}
    + \sqrt{n} \phi / \barDelta_{\lambda}
    + n^2 \rho^2 / \barDelta_{\lambda}^2} 
    \cdot \norm{\bfU_r \fkS_r}_{2, \infty}
    + \Psi_{\mathrm{\ref{bound:bfU-ddotU}}} , 
\end{equation*}
where, by Assumption \ref{assump:signal}, the prefactor before $\norm{\bfU_r \fkS_r}_{2,\infty}$ is bounded above by $n^{-c_{\ref{assump:signal}}}$. For brevity, we have introduced the deterministic control parameter
\begin{equation*}
    \Psi_{\mathrm{\ref{bound:bfU-ddotU}}} 
    = \pars{\sqrt{n} + n \phi + n^{3/2} \rho^2} / {\barDelta_{\lambda}^2}.
\end{equation*}
Now, a straightforward application of the triangle inequality and Lemma \ref{lemma:basic-property-prec} \ref{item:cancel-prec} yields
\begin{equation}
    \norm{\bfU_r \fkS_r}_{2,\infty}
    \prec \norm{\ddotU_r \ddotfkS_r}_{2, \infty}
    + \Psi_{\mathrm{\ref{bound:bfU-ddotU}}}
    \prec 1/\sqrt{n}
    + \Psi_{\mathrm{\ref{bound:bfU-ddotU}}}
    \lesssim 1/\sqrt{n}
    + \pars{\sqrt{n} + n \phi} / {\barDelta_{\lambda}^2},
    \label{bound:bfU-sqrt-n}
\end{equation}
where the second bound follows from \eqref{bound:dotU-sqrt-n} and Assumption \ref{assump:signal}. Substituting \eqref{bound:bfU-sqrt-n} into the preceding bound for $\norm{\bfU_r \fkS_r - \ddotU_r \ddotfkS_r}_{2,\infty}$ leads to
\begin{equation*}
    \norm{\bfU_r \fkS_r - \ddotU_r \ddotfkS_r}_{2,\infty}
    \prec \pars[\big]{1 / \barDelta_{\lambda}
    + \sqrt{n} \phi / \barDelta_{\lambda}
    + n^2 \rho^2 / \barDelta_{\lambda}^2} \cdot 
    (1 / \sqrt{n} + \Psi_{\mathrm{\ref{bound:bfU-ddotU}}})
    + \Psi_{\mathrm{\ref{bound:bfU-ddotU}}}
    \lesssim \Psi_{\mathrm{\ref{bound:bfU-ddotU}}},
\end{equation*}
which proves the desired bound \eqref{bound:bfU-ddotU}.
\end{proof}

\subsection{Proof of Theorem \ref{thm:Linfty-evec-bfK-barK}}
\label{subsec:score-bound}

\begin{proof}[Proof of Theorem \ref{thm:Linfty-evec-bfK-barK}]
The bound \eqref{bound:Kev-without-eigs} follows directly by combining the two estimates \eqref{bound:ddotU-barU} and \eqref{bound:bfU-ddotU} in Proposition \ref{prop:evec-two-stage}. Note that the term
$n \rho / \barDelta_{\lambda}^2$ can be absorbed since
\begin{equation*}
    n \rho / \barDelta_{\lambda}^2
    \lesssim (\sqrt{n} + n^{3/2} \rho^2) / \barDelta_{\lambda}^2
    \lesssim \sqrt{n} / \barDelta_{\lambda}^2
    + n^{5/2} \rho^2 / {\barDelta_{\lambda}^3}.
\end{equation*}

It remains to prove \eqref{bound:Kev-with-eigs}. We begin with the decomposition
\begin{equation}
    \bfU_r \bfLamb_r^{1/2} \fkS_r - \barU_r \barLamb_r^{1/2}
    =: \mathrm{I_{\ref{eqn:decomp-ULambda-barULambda}}} 
    + \mathrm{II_{\ref{eqn:decomp-ULambda-barULambda}}} 
    + \mathrm{III_{\ref{eqn:decomp-ULambda-barULambda}}},
    \label{eqn:decomp-ULambda-barULambda}
\end{equation}
where we introduced
\begin{align*}
    \mathrm{I_{\ref{eqn:decomp-ULambda-barULambda}}} 
    & := \bfU_r \fkS_r (\fkS_r^\top \bfLamb_r^{1/2} \fkS_r - \barLamb_r^{1/2}), \\
    \mathrm{II_{\ref{eqn:decomp-ULambda-barULambda}}} 
    & := (\bfU_r \fkS_r- \ddotK\barU_r\barLamb_r^{-1}) \barLamb_r^{1/2}, \\
    \mathrm{III_{\ref{eqn:decomp-ULambda-barULambda}}} 
    & := (\ddotK - \barK) \barU_r \barLamb_r^{-1/2}.
\end{align*}

\shortpara{Control of $\mathrm{I_{\ref{eqn:decomp-ULambda-barULambda}}}$}: 
We first bound $\norm{\fkS_r^\top \bfLamb_r^{1/2} \fkS_r - \barLamb_r^{1/2}}$. Note that $\fkS_r^\top \bfLamb_r^{1/2}\fkS_r = (\fkS_r^\top \bfLamb_r\fkS)^{1/2}$ since $\fkS_r$ is orthogonal. Therefore, by the perturbation bound for matrix square roots \cite[Lemma 2.1]{SCHMITT1992215},
\begin{equation}
    \norm{\fkS_r^\top \bfLamb_r^{1/2}\fkS_r - \barLamb_r^{1/2}}
    \leq
    \frac{\norm{\fkS_r^\top \bfLamb_r\fkS_r - \barLamb_r}}
    {\lambda_{\min} (\fkS_r^\top \bfLamb_r^{1/2}\fkS_r) 
    + \lambda_{\min} (\barLamb_r^{1/2})}
    \prec 
    {\norm{\bfLamb_r\fkS_r - \fkS_r\barLamb_r}}
    \big / {\barDelta_{\lambda}^{1/2}}.
    \label{tmp:SLambdaS-barLambda}
\end{equation}
Here, in the last step we used $\norm{\barLamb_r^{-1}} \lesssim 1 / \barDelta_{\lambda}$ and $\norm{\bfLamb_r^{-1}} \prec 1 / \barDelta_{\lambda}$. We proceed to control the norm on the r.h.s. of \eqref{tmp:SLambdaS-barLambda}. Specifically, we decompose
\begin{align*}
    \norm{\bfLamb_r\fkS_r - \fkS_r\barLamb_r} 
    & \leq \norm{\bfLamb_r (\fkS_r - \fkQ_r)} 
    + \norm{\bfLamb_r \fkQ_r - \fkQ_r \barLamb_r}
    + \norm{(\fkQ_r - \fkS_r)\barLamb_r} \\
    & = \norm{\bfLamb_r \fkQ_r - \fkQ_r \barLamb_r} 
    + \oprec[\big]{(n + n^3 \rho^2) / \barDelta_{\lambda}^2}  ,  
\end{align*}
where the second line follows from \eqref{tmp:fkS-fkH-diff}. Recall from \eqref{eqn:decomp-LambdaH-HbarLambda} that $\bfLamb_r \fkQ_r - \fkQ_r \barLamb_r = \bfU_r^\top (\bfK - \barK) \barU_r$. In particular, this matrix corresponds to the last three terms on the r.h.s. of \eqref{tmp:interchange-Lamb-H-2}, up to a sign. Therefore, by applying the estimates derived after \eqref{tmp:interchange-Lamb-H-2} to these terms, we obtain
\begin{equation*}
    \norm{\bfU_r^\top (\bfK - \barK) \barU_r}
    \prec 
    1 + \sqrt{n} \phi
    + n^2 \rho^2 / \barDelta_{\lambda}.
\end{equation*}
Combining the preceding estimates yields
\begin{equation*}
    \norm{\bfLamb_r\fkS_r - \fkS_r\barLamb_r}
    \prec 
    1 + \sqrt{n} \phi 
    + (n + n^3 \rho^2) / \barDelta_{\lambda}^2.
\end{equation*}
Recalling \eqref{bound:bfU-sqrt-n}, we therefore obtain
\begin{align}
\begin{split}
    \norm{\mathrm{I_{\ref{eqn:decomp-ULambda-barULambda}}}}_{2, \infty}
    &\leq 
    \norm{\bfU_r \fkS_r}_{2, \infty}
    \cdot \norm{\fkS_r^\top \bfLamb_r^{1/2}\fkS_r - \barLamb_r^{1/2}} \\
    & \prec
    \pars{1/\sqrt{n}
    + \sqrt{n}/\barDelta_{\lambda}^2 
    + n \phi / \barDelta_{\lambda}^2 }
    \pars{1/ \barDelta_{\lambda}^{1/2}
    + \sqrt{n} \phi / \barDelta_{\lambda}^{1/2}
    + n / \barDelta_{\lambda}^{5/2}
    + n^3 \rho^2 / \barDelta_{\lambda}^{5/2} } \\
    & \lesssim
    (1/ \sqrt{n} + \sqrt{n} \rho) / \barDelta_{\lambda}^{1/2}
    + (\sqrt{n} + n^{5/2} \rho^2) / \barDelta_{\lambda}^{5/2}
    + (n^{3/2} + n^{9/2} \rho^3) / \barDelta_{\lambda}^{9/2} \\
    & \lesssim
    (1/ \sqrt{n} + \sqrt{n} \rho) / \barDelta_{\lambda}^{1/2}
    + n^3 \rho^2 /\barDelta_{\lambda}^3
    + n^{3/2} / \barDelta_{\lambda}^{9/2},
\end{split} \label{tmp:ctrl-term-I-Lamb}
\end{align}
where, in passing to the third line, we used $\phi \lesssim \sqrt{n}\rho$ together with the elementary estimates
\begin{equation*}
    \sqrt{n} (1 + n \rho + n^{2} \rho^2 )
    \lesssim \sqrt{n} + n^{5/2} \rho^2
    \qand
    n^{3/2} (1 + n \rho + n^{2} \rho^2 + n^{3} \rho^3)
    \lesssim n^{3/2} (1 + n^{3} \rho^3).
\end{equation*}
Finally, in the last step, we used $n^3 \rho^2 /\barDelta_{\lambda}^3 \ll 1$ from Assumption \ref{assump:signal}.

\shortpara{Control of $\mathrm{II_{\ref{eqn:decomp-ULambda-barULambda}}}$}: The control of this term relies on the following two estimates, which follow as byproducts of the proof of Proposition \ref{prop:evec-two-stage}:
\begin{subequations}
\begin{align}
    \norm{\ddotU_r \ddotfkS_r- \ddotK\barU_r\barLamb_r^{-1}}_{2,\infty}
    & \prec n \rho / \barDelta_{\lambda}^2
    + n^{5/2} \rho^2 / \barDelta_{\lambda}^3,
    \label{tmp:dotUS} \\
    \norm{\bfU_r \fkS_r- \ddotK\barU_r\barLamb_r^{-1}}_{2,\infty}
    & \prec ({\sqrt{n} + n \phi}) / {\barDelta_{\lambda}^2}
    + n^{5/2} \rho^2 / \barDelta_{\lambda}^3.
    \label{tmp:bfUS}
\end{align}    
\end{subequations}
In fact, we have
\begin{equation*}
    \ddotU_r \ddotfkS_r- \ddotK\barU_r\barLamb_r^{-1}
    = \mathrm{I_{\ref{eqn:decomp-Udot-Ubar}}}
    + \cdots 
    + \mathrm{IV_{\ref{eqn:decomp-Udot-Ubar}}} ,
\end{equation*}
so \eqref{tmp:dotUS} follows by combining the estimates \eqref{tmp:ctrl-term-1}, \eqref{tmp:ctrl-term-2}, \eqref{tmp:ctrl-term-3}, \eqref{tmp:ctrl-term-4} with \eqref{bound:dotU-sqrt-n}. The second estimate \eqref{tmp:bfUS} then follows from \eqref{tmp:dotUS} together with \eqref{bound:bfU-ddotU}. Recalling $\barlamb_1 \lesssim n$, we therefore get
\begin{equation}
    \norm{\mathrm{II_{\ref{eqn:decomp-ULambda-barULambda}}}}_{2, \infty}
    \leq \norm{\bfU_r \fkS_r- \ddotK\barU_r\barLamb_r^{-1}}_{2,\infty}
    \cdot \norm{\barLamb_r^{1/2}}
    \prec ({n + n^{3/2} \phi}) / {\barDelta_{\lambda}^2}
    + n^{3} \rho^2 / \barDelta_{\lambda}^3.
    \label{tmp:ctrl-term-II-Lamb}
\end{equation}

\shortpara{Control of $\mathrm{III_{\ref{eqn:decomp-ULambda-barULambda}}}$}: Note that $\mathrm{V_{\ref{eqn:decomp-Udot-Ubar}}} = \mathrm{III_{\ref{eqn:decomp-ULambda-barULambda}}} \barLamb_r^{1/2}$ was already analyzed in \eqref{tmp:ctrl-term-5}. Hence, by the same argument,
\begin{equation}
    \norm{(\ddotK - \barK) \barU_r \barLamb_r^{-1/2}}_{2, \infty}
    \leq \norm{\ddotK - \barK}_{2, \infty}
    \cdot \norm{\barLamb_r^{-1/2}}
    \prec {\sqrt{n} \rho} / {\barDelta_{\lambda}^{1/2}}.
    \label{tmp:ctrl-term-III-Lamb}
\end{equation}

\shortpara{Conclusion}: Summarizing the controls \eqref{tmp:ctrl-term-I-Lamb}, \eqref{tmp:ctrl-term-II-Lamb} and \eqref{tmp:ctrl-term-III-Lamb}, we arrive at
\begin{equation*}
    \norm{\bfU_r \bfLamb_r^{1/2}\fkS_r - \barU_r\barLamb_r^{1/2}}_{2,\infty}
    \prec
    {\sqrt{n} \rho} / {\barDelta_{\lambda}^{1/2}}
    + ({n + n^{3/2} \phi}) / {\barDelta_{\lambda}^2}
    + n^3 \rho^2 / \barDelta_{\lambda}^3
    + n^{3/2} / \barDelta_{\lambda}^{9/2}.
\end{equation*}
Here we absorbed the rate $1 / (\sqrt{n} \barDelta_{\lambda}^{1/2})$ into $n / {\barDelta_{\lambda}^2}$, using $\barDelta_{\lambda} \lesssim n$. This proves \eqref{bound:Kev-with-eigs}.
\end{proof}

\section{Technical lemmas}
\label{sec:tech-lemma}

\subsection{Proof of Lemma \ref{lemma:norm-QW}}
\label{subsec:norm-QW}

This section is devoted to the proof of Lemma \ref{lemma:norm-QW}, which gives a high-probability bound on $\norm{\bfQ \circ \caH(\bfW)}$. The main difficulty is that the quadratic-form estimate \eqref{eqn:hanson-wright} is assumed only in the sense of stochastic domination, and is therefore weaker than the standard Hanson--Wright inequality. To make this distinction explicit, recall that if the entries $\{ \bfz_i(\alpha) \}$ are independent sub-Gaussian random variables, then
\begin{equation*}
    \bbp \curls[\big]{
    \abs{\angles{\bfz_i, \bfB \bfz_i} - \Tr \bfB} \geq t}
    \leq
    2 \exp \pars[\bigg]{-c \min \curls[\bigg]{
    \frac{t^2}{K^4 \norm{\bfB}_{\Fnorm}^2},
    \frac{t}{K^2 \norm{\bfB}} }},
    \quad \forall t \geq 0,
\end{equation*}
where $c > 0$ is an absolute constant and $K := \max_{\alpha} \norm{\bfz_i(\alpha)}_{\psi_2}$; see, for example, \cite[Theorem 6.2.1]{vershyninHighdimensionalProbabilityIntroduction2018}. If such an exponential-tail estimate were available, \eqref{bound:norm-Q-circ-W} could be obtained by a standard $\varepsilon$-net argument. In the present setting, however, \eqref{eqn:hanson-wright} provides only polynomial tail control, albeit of arbitrarily high order. This is insufficient for taking a union bound over an exponentially large net. To handle this issue, we follow the idea of \cite{chafaiConvergenceExtremalEigenvalues2018,srivastavaCovarianceEstimationDistributions2013}, which combines the resolvent method with a rank-one update argument to control the extreme eigenvalues of sample covariance matrices of the form $(1/n)\sum_{i=1}^n \bfz_i\bfz_i^\top$, where the random vectors $\bfz_i$ are i.i.d. and isotropic. Importantly, the arguments in these works require only weak moment assumptions on $\bfz_i$ and do not rely on exponential-tail estimates for quadratic forms.

Our proof of Lemma \ref{lemma:norm-QW} proceeds in two steps. First, we use a simple decoupling argument to reduce the problem of bounding $\bfQ \circ \caH(\bfW)$ to that of bounding a random matrix of the form $\sum_{i=1}^n \bfy_i \bfy_i^\top$ where the vectors $\bfy_i$ may have different covariance profiles. Second, we adapt the argument of \cite{chafaiConvergenceExtremalEigenvalues2018,srivastavaCovarianceEstimationDistributions2013} to control the spectral norm of this random matrix. These two steps are summarized in Lemmas \ref{lemma:decoupling} and \ref{lemma:norm-cov-type}, respectively.

\begin{lemma}[decoupling]
\label{lemma:decoupling}
Let $\bfw_{i} = \bfA_{\bar{\pi} (i)} \bfz_i$, and let $\{ \tilde{\bfw}_i \}_{i=1}^n$ be an independent copy of $\{ \bfw_i \}_{i=1}^n$. Define $\bfH, \tilde{\bfH}$ by
\begin{equation}
    H_{i j} = [\bfQ \circ \caH(\bfW)]_{ij} = (1 - \delta_{ij}) \cdot Q_{ij}
    \angles{\bfw_i, \bfw_j}
    \qand
    \tilde{H}_{i j} = (1 - \delta_{ij}) \cdot Q_{ij}
    \angles{\bfw_i, \tilde{\bfw}_j}.
    \label{tmp:hat-H}
\end{equation}
Let $\Psi \geq n^{-C}$ be a deterministic control parameter for some constant $C > 0$. Then
\begin{equation*}
    \norm{\tilde{\bfH}} \prec \Psi
    \quad \Longrightarrow \quad
    \norm{\bfH} \prec \Psi.
\end{equation*}
\end{lemma}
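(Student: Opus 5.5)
We reduce to moments via Lemma \ref{lemma:basic-property-prec}\ref{item:compatibility-expectation}. Since $\norm{\bfH}$ and $\norm{\tilde\bfH}$ are bounded by $n^{C'}$ times polynomials in $\norm{\bfz_i}$, and these have polynomially bounded moments under Assumption \ref{assump:tech-const} and \eqref{eqn:hanson-wright}, it suffices to show $\bbe\norm{\bfH}^{2\ell}\lesssim C_\ell\,\bbe\norm{\tilde\bfH}^{2\ell}$ for each fixed $\ell$, up to factors $n^{\varepsilon}$. We use the Schatten-norm surrogate $\Tr(\bfH^{2\ell})$, which is comparable to $\norm{\bfH}^{2\ell}$ up to a factor $n$. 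Since $\ell$ may be taken large (in terms of $\varepsilon$), this factor $n^{1/(2\ell)}$ is harmless.

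The decoupling step is a standard random-partition argument (de la Peña--Montgomery-Smith style). Draw independent Bernoulli$(1/2)$ signs $\delta_i$ and let $I=\{i:\delta_i=1\}$. Since $\bbe[\delta_i(1-\delta_j)+\delta_j(1-\delta_i)]=1/2$ for $i\neq j$, we have $\bfH = 2\,\bbe_\delta[\bfH_{I,I^c}+\bfH_{I^c,I}]$, where $\bfH_{I,I^c}$ keeps only entries with $i\in I$, $j\in I^c$. By convexity of the norm (Jensen), $\norm{\bfH}\le 4\,\bbe_\delta\norm{\bfH_{I,I^c}}$, using symmetry $\bfH_{I^c,I}=\bfH_{I,I^c}^\top$. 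Hence $\bbe\norm{\bfH}^{2\ell}\le 4^{2\ell}\sup_I \bbe\norm{\bfH_{I,I^c}}^{2\ell}$. For fixed $I$, the block $\bfH_{I,I^c}$ involves $\{\bfw_i\}_{i\in I}$ and $\{\bfw_j\}_{j\in I^c}$, which are independent families. It therefore has the same law as $\tilde\bfH_{I,I^c}$, obtained by replacing $\bfw_j$ with $\tilde\bfw_j$ for $j\in I^c$.

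Finally, $\tilde\bfH_{I,I^c}=\bfP_I\tilde\bfH\bfP_{I^c}$ with coordinate projections $\bfP_I,\bfP_{I^c}$. So $\norm{\tilde\bfH_{I,I^c}}\le\norm{\tilde\bfH}$, giving $\bbe\norm{\bfH}^{2\ell}\le 4^{2\ell}\bbe\norm{\tilde\bfH}^{2\ell}\prec\Psi^{2\ell}$, by the assumed $\norm{\tilde\bfH}\prec\Psi$ and the moment equivalence in Lemma \ref{lemma:basic-property-prec}\ref{item:compatibility-expectation}. Applying that equivalence in reverse then yields $\norm{\bfH}\prec\Psi$.

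The main obstacle is the moment bookkeeping needed to use Lemma \ref{lemma:basic-property-prec}\ref{item:compatibility-expectation} in both directions. This requires polynomial a priori moment bounds on $\norm{\tilde\bfH}$ and $\norm{\bfH}$, which must come from deterministic bounds $\abs{Q_{ij}}\lesssim 1/\theta_{k\ell}\le n^{C}$ and $\bbe\norm{\bfw_i}^{q}\le n^{C_q}$. The latter follows by integrating the stochastic-domination tail of $\norm{\bfw_i}^2-\Tr\bfSigma_k$ from \eqref{eqn:hanson-wright}. One also needs a bounded-moment assumption, or a truncation on a high-probability event, to make this tail integration legitimate; if needed, we work on the event $\max_i\norm{\bfw_i}\le n^{C}$ and bound the complement crudely. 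Alternatively, one can avoid moments entirely: conditionally on $\delta$, use $\norm{\bfH}\le 4\max$ over a sample of polynomially many partitions $I$. This yields only an average rather than a max, though, so the expectation route is preferable.
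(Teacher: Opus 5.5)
Your proposal is correct and follows the paper's proof: a Bernoulli random partition writing $\bfH$ as an average of off-diagonal blocks, Jensen, replacing $\{\bfw_j\}_{j\in I^c}$ by the independent copy for each fixed $I$, the bound $\norm{\mathbf{P}_I\tilde{\bfH}\mathbf{P}_{I^c}}\le\norm{\tilde{\bfH}}$, and Lemma \ref{lemma:basic-property-prec}\ref{item:compatibility-expectation} in both directions (the Schatten-norm surrogate you mention is never needed). Your caveat about the a priori moment hypothesis of that lemma is fair, and the paper passes over it, but it can be avoided altogether: from $\norm{\bfH_{I,I^c}}\le\norm{\bfH}$ and $\norm{\bfH}\le 4\,\bbe_\delta\norm{\bfH_{I,I^c}}$ you get $\{\norm{\bfH}>4s\}\subset\{\bbp_\delta(\norm{\bfH_{I,I^c}}>s/2)>1/8\}$, hence $\bbp(\norm{\bfH}>4s)\le 8\,\bbp(\norm{\tilde{\bfH}}>s/2)$ by Markov's inequality and the same equal-in-law step, with no moments required.
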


\begin{proof}[Proof of Lemma \ref{lemma:decoupling}]
The proof here follows the decoupling argument in \cite[Theorem 6.1.1]{vershyninHighdimensionalProbabilityIntroduction2018}. Let $\{ b_i \}_{i=1}^n$ be independent Bernoulli random variables with $\bbp \{ b_i = 1 \} = \bbp \{ b_i = 0 \} = 1/2$, independent of the random vectors $\{ {\bfw}_i \}_{i=1}^n$ and $\{ \tilde{\bfw}_i \}_{i=1}^n$. Define $\mathbf{F}, \tilde{\mathbf{F}} \in \bbr^{n \times n}$ by
\begin{equation*}
    F_{ij} := (1 - \delta_{ij}) \cdot Q_{ij} b_i (1 - b_j) \angles{\bfw_i, \bfw_j}
    \qand
    \tilde{F}_{ij} := (1 - \delta_{ij}) \cdot Q_{ij} b_i (1 - b_j) \angles{\bfw_i, \tilde{\bfw}_j}.
\end{equation*}
Let $\bbe_b$ denote expectation with respect to $\{b_i\}_{i=1}^n$, and let $\bbe_w$ denote expectation with respect to $\{\bfw_i\}_{i=1}^n$ and $\{\tilde{\bfw}_i\}_{i=1}^n$. Since $\bbe_b [b_i(1-b_j)] = 1 / 4$ for $i \neq j$, we have $\bfH = 4 \bbe_{b} [\bfF]$. Therefore, for any fixed $\ell \in \bbn_+$, Jensen's inequality and the convexity of $\bfA \mapsto \norm{\bfA}^\ell$ yield
\begin{equation}
    \bbe_w \norm{\bfH}^\ell
    = \bbe_w \norm{4 \bbe_b [\mathbf{F}]}^\ell
    \leq 4^\ell \bbe_b \bbe_w \norm{\mathbf{F}}^\ell
    = 4^\ell \bbe_b \bbe_w \norm{\tilde{\mathbf{F}}}^\ell
    \leq 4^\ell \bbe_w \norm{\tilde{\bfH}}^\ell.
    \label{tmp:moment-comparison}
\end{equation}
For the last two steps, we used that, conditional on $\{ b_i \}_{i=1}^n$, the matrices $\mathbf{F}$ and $\tilde{\mathbf{F}}$ have the same distribution. Moreover, for fixed realization of $\{ b_i \}_{i=1}^n$, the matrix $\tilde{\mathbf{F}}$ could be obtained from $\tilde{\bfH}$ by setting certain rows and columns to zero, and hence $\norm{\tilde{\mathbf{F}}} \leq \norm{\tilde{\bfH}}$.

By Lemma \ref{lemma:basic-property-prec} \ref{item:compatibility-expectation}, we know that $\norm{\tilde{\bfH}}\prec \Psi$ implies $\bbe \norm{\tilde{\bfH}}^\ell \prec \Psi^\ell$ for every fixed $\ell \in \bbn_+$. Together with \eqref{tmp:moment-comparison}, this gives $\bbe \norm{\bfH}^\ell \prec \Psi^\ell$ for every fixed $\ell \in \bbn_+$. Applying Lemma \ref{lemma:basic-property-prec} \ref{item:compatibility-expectation} again yields $\norm{\bfH}\prec \Psi$.
\end{proof}

\begin{lemma}
\label{lemma:norm-cov-type}
Let $\bfz_i \in \bbr^{d_i}$, $i \in \dbraks{n}$, be independent random vectors with $\bbe \bfz_i = \mathbf{0}$ and $\cov(\bfz_i)=\bfI$. Assume further that the following quadratic-form estimate holds uniformly in $i \in \dbraks{n}$ and deterministic matrices $\bfB\in\bbr^{d_i\times d_i}$:
\begin{equation}
    \abs{\angles{\bfz_i, \bfB \bfz_i} - \Tr \bfB}
    \prec \norm{\bfB}_{\Fnorm}.
    \label{eqn:quad-concentration}
\end{equation}
Let $\bfA_i \in \bbr^{p \times d_i}$, $i \in \dbraks{n}$, be deterministic matrices, and define
\begin{equation*}
    \bfS_n := \sum\nolimits_{i = 1}^n \bfy_{i} \bfy_{i}^\top,
    \qwhere
    \bfy_{i} := \bfA_i \bfz_i.
\end{equation*}
Then, we have
\begin{equation}
    \norm{\bfS_n} 
    \prec \max_{i \in \dbraks{n}} 
    ~ \norm{\bfA_i}_{\Fnorm}^2
    + \sum\nolimits_{i=1}^n \norm{\bfA_i}^2.
    \label{eqn:bound-cov-type}
\end{equation}    
\end{lemma}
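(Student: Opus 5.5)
We follow the resolvent/rank-one update idea of Srivastava--Vershynin and Chafa\"i--Tikhomirov, as announced before Lemma \ref{lemma:decoupling}. Write $\Psi := \max_i \norm{\bfA_i}_{\Fnorm}^2 + \sum_i \norm{\bfA_i}^2$ and $\bfS_k := \sum_{i\le k} \bfy_i\bfy_i^\top$. Instead of an $\varepsilon$-net (unavailable with only polynomial tails), we will track a barrier functional
\[
\Phi_k(u) := \Tr (u\bfI - \bfS_k)^{-1}
\]
restricted to a suitable subspace. We will show that a deterministic sequence of shifts $u_0 < u_1 < \cdots < u_n$ with $u_n \lesssim n^{\varepsilon}\Psi$ keeps $u_k \bfI - \bfS_k$ positive definite with high probability. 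This requires controlling only $n$ scalar quantities, each a quadratic form in a single fresh vector $\bfz_k$, which is exactly what \eqref{eqn:quad-concentration} provides.

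\textbf{Step 1 (dimension reduction).} All $\bfy_i$ lie in the span of the column spaces of the $\bfA_i$, but $\Tr$ over $\bbr^p$ is too costly. So we replace $\Phi_k$ by the weighted barrier $\Phi_k(u) = \Tr \bfM (u\bfI-\bfS_k)^{-1}$ with $\bfM := \sum_i \bfA_i\bfA_i^\top / \max_i\norm{\bfA_i}^2$, or more simply $\bfM = \bfI$ restricted to $\operatorname{span}$. We start with $u_0 = n^{\varepsilon}\max_i\norm{\bfA_i}_{\Fnorm}^2$, so that the initial potential is at most $1/2$ in a suitable normalization.

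\textbf{Step 2 (rank-one update).} By Sherman--Morrison, $u\bfI - \bfS_{k-1} - \bfy_k\bfy_k^\top$ stays invertible and the potential does not increase when passing from $(u_{k-1},\bfS_{k-1})$ to $(u_k,\bfS_k)$, provided the Batson--Spielman--Srivastava-type condition
\[
\angles{\bfy_k, \bfR \bfy_k} < 1 \quad\text{and}\quad \frac{\angles{\bfy_k, \bfR'\bfy_k}}{1-\angles{\bfy_k,\bfR\bfy_k}} \le \Phi(u_{k-1}) - \Phi(u_k)
\]
holds, where $\bfR = (u_k\bfI - \bfS_{k-1})^{-1}$ and $\bfR'$ is the corresponding weighted square. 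Since $\bfR$ and $\bfR'$ are measurable with respect to $\bfz_1,\dots,\bfz_{k-1}$, conditioning and applying \eqref{eqn:quad-concentration} to $\bfB = \bfA_k^\top \bfR\bfA_k$ gives
\[
\angles{\bfy_k,\bfR\bfy_k} = \Tr(\bfA_k^\top\bfR\bfA_k) + \oprec{\norm{\bfA_k^\top \bfR\bfA_k}_{\Fnorm}} \le \norm{\bfR}\,\norm{\bfA_k}_{\Fnorm}^2 (1 + \oprec{1}),
\]
and similarly for $\bfR'$, with $\Tr \bfA_k^\top\bfR'\bfA_k \le \norm{\bfA_k}^2 \Tr\bfR'$-type bounds. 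Choosing the increments $u_k - u_{k-1} \asymp n^{\varepsilon}(\norm{\bfA_k}^2 + n^{-1}\max_i\norm{\bfA_i}_{\Fnorm}^2)$ makes the condition hold with probability $1-n^{-C}$ at each step. A union bound over $n$ steps (allowed since $\prec$ is stable under polynomially many events) then gives $\norm{\bfS_n} \le u_n \lesssim n^{\varepsilon}\Psi$. Lemma \ref{lemma:basic-property-prec} then yields \eqref{eqn:bound-cov-type}.

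\textbf{Main obstacle.} The hard part is calibrating the barrier so that the potential decrement $\Phi(u_{k-1})-\Phi(u_k)$ dominates the random term $\angles{\bfy_k,\bfR'\bfy_k}$ using only $\norm{\bfA_k}^2$ per step, rather than $\norm{\bfA_k}_{\Fnorm}^2$, which summed over $k$ would give a weaker bound. The natural estimate $\Tr(\bfA_k^\top \bfR^2\bfA_k) \le \norm{\bfA_k}^2 \Tr \bfR^2$ produces a trace over the whole ambient space, so one must ensure $\Tr\bfR$ is effectively bounded independently of $p$. We would try first to keep $u_k \ge 2\norm{\bfS_{k-1}}$ and use the crude bound $\Tr\bfR^2 \le \norm{\bfR}\Tr\bfR$, normalizing the potential by its initial value. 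If the $p$-dependence persists, the fallback is a truncation: separate the terms with large $\norm{\bfA_i}_{\Fnorm}$ (at most polynomially many scales) and handle the contribution of $\max_i\norm{\bfA_i}_{\Fnorm}^2$ through the diagonal bound $\norm{\bfy_i}^2 \prec \norm{\bfA_i}_{\Fnorm}^2$, controlling the off-diagonal part by the barrier with $\norm{\bfA_i}^2$ increments.
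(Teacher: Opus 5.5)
\textbf{There is a genuine gap, and it sits in the first Sherman--Morrison condition} $\angles{\bfy_k,\bfR\bfy_k}<1$. Your ``main obstacle'' paragraph places the difficulty in the second condition, but that one is harmless for any single potential $\Tr[\bfM(u-\bfS_{k-1})^{-1}]$. With $\bfR'=\bfR\bfM\bfR$ and $\bfSigma_k=\bfA_k\bfA_k^\top$, you have $\Tr(\bfR'\bfSigma_k)\le\norm{\bfSigma_k}\Tr(\bfM\bfR^2)$. The decrement is at least $(u_k-u_{k-1})\Tr(\bfM\bfR^2)$. So $\Tr(\bfM\bfR^2)$ cancels, and increments of order $n^{\varepsilon}\norm{\bfA_k}^2$ suffice with no $p$-dependence.

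Condition 1, however, needs $\Tr(\bfR\bfSigma_k)\lesssim n^{-\varepsilon}$ for every $k$.
\begin{itemize}
\item Your bound $\norm{\bfR}\,\norm{\bfA_k}_{\Fnorm}^2$ would need the barrier to maintain $u_k-\lambda_{\max}(\bfS_{k-1})\gtrsim n^{\varepsilon}\max_i\norm{\bfA_i}_{\Fnorm}^2$. The barrier does not propagate this; it only gives $u_k>\lambda_{\max}(\bfS_k)$. ``Keeping $u_k\ge 2\norm{\bfS_{k-1}}$'' assumes what is to be proved.
\item Since one scalar potential is all the barrier tracks, the only other route is a domination $\bfSigma_k\preceq c\bfM$ for all $k$. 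That gives $\Tr(\bfR\bfSigma_k)\le c\Tr(\bfM\bfR)\le c\Tr\bfM/u_0$, which forces $u_0\gtrsim n^{\varepsilon}c\Tr\bfM$.
\end{itemize}
The second route is too costly in general. Let $\bfSigma_1,\dots,\bfSigma_n$ be projections onto mutually orthogonal $d$-dimensional subspaces. Then $\bfSigma_k\preceq c\bfM$ implies $\Tr(\bfSigma_k\bfM)\ge d/c$, hence $c\Tr\bfM\ge nd$ and $u_n\ge u_0\gtrsim n^{1+\varepsilon}d$. The target is $n^{\varepsilon}(d+n)$. With $p=nd$, both of your candidates for $\bfM$ equal $\bfI$ here.

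The truncation fallback does not repair this. The off-diagonal part of the Gram matrix $[\angles{\bfy_i,\bfy_j}]$ is not positive semidefinite, so no barrier argument applies to it. Bounding it is essentially the original problem (compare $\tilde{\bfH}$ in Lemma \ref{lemma:norm-QW}).

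\textbf{The missing idea, which is what the paper does, is to track $n$ potentials at once:} $\Tr[(u_\ell-\bfS_\ell)^{-1}\bfSigma_i]$ for every $i\in\dbraks{n}$, each required to be non-increasing in $\ell$. Take $u_0=n^{\varepsilon}\max_i\Tr\bfSigma_i$, so each potential starts below $n^{-\varepsilon}$ independently of $p$. Use deterministic increments $u_\ell-u_{\ell-1}=n^{\varepsilon}\norm{\bfSigma_\ell}$. At step $\ell$, freeze $\bfR=(u_\ell-\bfS_{\ell-1})^{-1}$ by conditioning on $\bfz_1,\dots,\bfz_{\ell-1}$.
\begin{itemize}
\item Condition 1 uses the potential with index $i=\ell$. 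Since $\norm{\bfA_\ell^\top\bfR\bfA_\ell}_{\Fnorm}\le\Tr(\bfR\bfSigma_\ell)$ for $\bfR\succeq 0$, \eqref{eqn:quad-concentration} gives, with high probability,
\[
\angles{\bfy_\ell,\bfR\bfy_\ell}\le n^{\varepsilon/2}\Tr(\bfR\bfSigma_\ell)\le n^{\varepsilon/2}\Tr[(u_{\ell-1}-\bfS_{\ell-1})^{-1}\bfSigma_\ell]\le n^{-\varepsilon/2}.
\]
\item Condition 2 is checked for each $i$ separately. The numerator satisfies $\angles{\bfy_\ell,\bfR\bfSigma_i\bfR\bfy_\ell}\le n^{\varepsilon/2}\norm{\bfSigma_\ell}\Tr(\bfR^2\bfSigma_i)$. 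The $i$-th decrement is at least $n^{\varepsilon}\norm{\bfSigma_\ell}\Tr(\bfR^2\bfSigma_i)$.
\end{itemize}
This costs $n+1$ quadratic forms per step, i.e.\ a union bound over $O(n^2)$ events. It yields $\norm{\bfS_n}<u_n=n^{\varepsilon}\bigl(\max_i\norm{\bfA_i}_{\Fnorm}^2+\sum_i\norm{\bfA_i}^2\bigr)$. The extra $n^{-1}\max_i\norm{\bfA_i}_{\Fnorm}^2$ term in your increments then becomes unnecessary.
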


Before giving the proof, we make a few comments on the argument. Compared with \cite{chafaiConvergenceExtremalEigenvalues2018,srivastavaCovarianceEstimationDistributions2013}, our proof is considerably simpler thanks to the following two aspects: the desired bound \eqref{eqn:bound-cov-type} tolerates an $n^\varepsilon$ loss, and the quadratic-form estimate \eqref{eqn:quad-concentration} provides polynomial tail control of arbitrarily high order. The main additional complication is that the random vectors $\bfy_i$ are allowed to be anisotropic and inhomogeneous, whereas the works \cite{chafaiConvergenceExtremalEigenvalues2018,srivastavaCovarianceEstimationDistributions2013} focus on i.i.d. isotropic vectors. We therefore modify the definition of feasible upper shifts introduced in these works to accommodate the anisotropic covariance profiles of $\bfy_i$.

\begin{proof}[Proof of Lemma \ref{lemma:norm-cov-type}]
Let us write
\begin{equation*}
    \bfSigma_i = \cov (\bfy_i) = \bfA_i \bfA_i^\top, 
    \quad i \in \dbraks{n}
    \qand
    \bfS_{\ell} := \sum\nolimits_{i = 1}^{\ell} \bfy_{i} \bfy_{i}^\top,
    \quad \ell = 0, 1, \cdots, n.
\end{equation*}
For a fixed realization of the random vectors $\{\bfy_i\}_{i=1}^n$, we say that $\{u_\ell\}_{\ell=0}^n$ forms a sequence of feasible upper soft edges for $\{\bfS_\ell\}_{\ell=0}^n$ if $u_0>0$ and, for every $\ell\in\dbraks{n}$,
\begin{equation}
    u_{\ell} > \lambda_{\max} (\bfS_{\ell})
    \qand
    \Tr [(u_{\ell} - \bfS_{\ell})^{-1} \bfSigma_i] 
    \leq \Tr [(u_{\ell-1} - \bfS_{\ell - 1})^{-1} \bfSigma_i],
    \quad \forall i \in \dbraks{n}.
    \label{eqn:feasible-upper}
\end{equation}
In \cite{chafaiConvergenceExtremalEigenvalues2018,srivastavaCovarianceEstimationDistributions2013}, the increments $u_\ell - u_{\ell-1}$ are referred to as feasible upper shifts. Fix arbitrary (small) $\varepsilon > 0$ and (large) $C > 0$. We next construct a deterministic candidate sequence which forms a sequence of feasible upper soft edges with probability at least high probability. More specifically, let
\begin{equation}
    u_0 := n^{\varepsilon} 
    \max_{i \in \dbraks{n}} \Tr \bfSigma_i,
    \qand
    u_\ell := u_{\ell-1} + n^{\varepsilon} \norm{\bfSigma_\ell},
    \qfor \ell \in \dbraks{n}.
    \label{def:upper-construct}
\end{equation}
Our goal is to prove that there exists an event $\Xi$, with $\bbp(\Xi)\geq 1-n^{-C}$, such that for every realization in $\Xi$, the sequence $\{u_\ell\}_{\ell=0}^n$ defined in \eqref{def:upper-construct} forms a sequence of feasible upper soft edges for $\{\bfS_\ell\}_{\ell=0}^n$. 

We prove this by induction. Note that $\bfS_0 = \mathbf{0}$. Hence, the definition of $u_0$ yields
\begin{equation}
    \Tr [(u_{0} - \bfS_{0})^{-1} \bfSigma_i] 
    \leq n^{-\varepsilon},
    \quad \forall i \in \dbraks{n}.
    \label{tmp:upper0-initial}
\end{equation}
Suppose that, for some $\ell\in\dbraks{n}$, we have constructed an event $\Xi_{\dbraks{\ell-1}}$ on which the feasibility condition \eqref{eqn:feasible-upper} holds for all indices $1 \leq \ell' \leq \ell-1$. Together with \eqref{tmp:upper0-initial}, this implies that on $\Xi_{\dbraks{\ell-1}}$,
\begin{equation*}
    u_{\ell-1} > \lambda_{\max} (\bfS_{\ell-1})
    \qand
    \Tr [(u_{\ell-1} - \bfS_{\ell-1})^{-1} \bfSigma_i] 
    \leq n^{-\varepsilon},
    \quad \forall i \in \dbraks{n}.
    \label{tmp:induction-hypo}
\end{equation*}
We now show how to extend the feasibility condition from index $\ell-1$ to index $\ell$. On the event $\Xi_{\dbraks{\ell-1}}$, the matrix $u_\ell-\bfS_{\ell-1}$ is positive definite as $u_\ell \geq u_{\ell-1}$. Define
\begin{equation*}
    \fkq_1 (u_{\ell})
    := \angles{\bfy_{\ell}, (u_{\ell} - \bfS_{\ell-1})^{-1} \bfy_{\ell}}
    \qand
    \fkq_{2, i} (u_{\ell})
    := \frac{\angles{\bfy_\ell, (u_{\ell} - \bfS_{\ell-1})^{-1} \bfSigma_i 
    (u_{\ell} - \bfS_{\ell-1})^{-1} \bfy_\ell}}
    {\Tr [(u_{\ell-1} - \bfS_{\ell-1})^{-1} \bfSigma_i] 
    - \Tr [(u_{\ell} - \bfS_{\ell-1})^{-1} \bfSigma_i]}.
\end{equation*}
Following \cite[Lemma 3.3]{srivastavaCovarianceEstimationDistributions2013}, we claim that a sufficient condition for \eqref{eqn:feasible-upper} to hold at the index $\ell$ is
\begin{equation}
    \fkq_1 (u_{\ell}) < 1
    \qand
    \fkq_{2, i} (u_{\ell}) \leq 1 - \fkq_1 (u_{\ell}),
    \quad \forall i \in \dbraks{n}.
    \label{tmp:fkq-sufficient}
\end{equation}
For completeness, we verify this claim. The first condition of \eqref{tmp:fkq-sufficient} gives $\fkq_1 (u_{\ell}) = \norm{(u_{\ell} - \bfS_{\ell-1})^{-1/2} \bfy_{\ell}}^2 < 1$, which implies the positive definiteness of $\bfI - (u_{\ell} - \bfS_{\ell-1})^{-1/2} \bfy_{\ell} \bfy_{\ell}^\top (u_{\ell} - \bfS_{\ell-1})^{-1/2}$. Conjugating by $(u_\ell-\bfS_{\ell-1})^{1/2}$ yields the positive definiteness of $u_{\ell} - \bfS_{\ell} = (u_{\ell} - \bfS_{\ell-1}) - \bfy_{\ell} \bfy_{\ell}^\top$. Hence, $u_{\ell} > \lambda_{\max} (\bfS_{\ell})$. It remains to check the trace inequality in \eqref{eqn:feasible-upper}. By the Sherman-Morrison formula, 
\begin{align*}
    \Tr [(u_{\ell} - \bfS_{\ell})^{-1} \bfSigma_i] 
    & = \Tr [(u_{\ell} - \bfS_{\ell-1} - \bfy_{\ell} \bfy_{\ell}^\top)^{-1} \bfSigma_i] \\
    & = \Tr [(u_{\ell} - \bfS_{\ell-1})^{-1} \bfSigma_i] 
    + \frac{\angles{\bfy_\ell, (u_{\ell} - \bfS_{\ell-1})^{-1} \bfSigma_i 
    (u_{\ell} - \bfS_{\ell-1})^{-1} \bfy_\ell}}
    {1 - \angles{\bfy_{\ell}, (u_{\ell} - \bfS_{\ell-1})^{-1} \bfy_{\ell}}}.
\end{align*}
Now it becomes evident that \eqref{tmp:fkq-sufficient} implies the trace inequality of \eqref{eqn:feasible-upper}. 

Since $\bfz_\ell$ satisfies the quadratic-form concentration estimate
\eqref{eqn:quad-concentration}, it holds that
\begin{equation*}
    \sup\nolimits_{\bfB \in \bbr^{p \times p}} 
    \bbp \curls[\big]{\abs{\angles{\bfy_\ell, \bfB \bfy_\ell} - \Tr (\bfB \bfSigma_\ell)}
    \geq n^{\varepsilon} \norm{\bfA_{\ell}^\top \bfB \bfA_{\ell}}_{\Fnorm} / 2 }
    \leq n^{-(C+3)}.
\end{equation*}
Note that if $\bfB$ is nonnegative definite, then $\norm{\bfA_{\ell}^\top \bfB \bfA_{\ell}}_{\Fnorm} \leq \Tr ( \bfA_{\ell}^\top \bfB \bfA_{\ell} ) = \Tr ( \bfB \bfSigma_{\ell} )$. Conditioning on $\bfS_{\ell-1}$, the resolvent $(u_{\ell} - \bfS_{\ell-1})^{-1}$ is deterministic with respect to the randomness of $\bfy_\ell$. Therefore, there exists an event $\Xi_\ell$ with $\bbp(\Xi_\ell)\geq 1-n^{-(C+1)}$ such that, on $\Xi_\ell$, the following estimates hold simultaneously for all $i \in \dbraks{n}$:
\begin{align*}
    \angles{\bfy_{\ell}, (u_{\ell} - \bfS_{\ell-1})^{-1} \bfy_{\ell}} 
    & \leq 
    n^{\varepsilon} \Tr [(u_{\ell} - \bfS_{\ell - 1})^{-1} \bfSigma_\ell] / 2, \\
    \angles{\bfy_\ell, (u_{\ell} - \bfS_{\ell-1})^{-1} \bfSigma_i 
    (u_{\ell} - \bfS_{\ell-1})^{-1} \bfy_\ell}
    & \leq 
    n^{\varepsilon} \Tr [(u_{\ell} - \bfS_{\ell - 1})^{-1} \bfSigma_i
    (u_{\ell} - \bfS_{\ell - 1})^{-1} \bfSigma_\ell] / 2.
\end{align*}
In particular, on the event $\Xi_{\dbraks{\ell-1}} \cap \Xi_\ell$, using the monotonicity of $u \mapsto \Tr [(u - \bfS_{\ell - 1})^{-1} \bfSigma_\ell]$ for $u > \lambda_{\max}(\bfS_{\ell-1})$, together with the induction hypothesis \eqref{tmp:induction-hypo}, we obtain
\begin{equation*}
    \fkq_1 (u_{\ell}) 
    \leq n^{\varepsilon} \Tr [(u_{\ell} - \bfS_{\ell - 1})^{-1} \bfSigma_\ell] / 2
    \leq n^{\varepsilon} \Tr [(u_{\ell-1} - \bfS_{\ell - 1})^{-1} \bfSigma_\ell] / 2
    \leq 1/2.
\end{equation*}
Likewise, using the resolvent identity and monotonicity in the spectral parameter,
\begin{equation*}
    \Tr [(u_{\ell-1} - \bfS_{\ell-1})^{-1} \bfSigma_i] 
    - \Tr [(u_{\ell} - \bfS_{\ell-1})^{-1} \bfSigma_i]
    \geq (u_\ell - u_{\ell-1}) 
    \Tr [(u_{\ell} - \bfS_{\ell-1})^{-2} \bfSigma_i] .
\end{equation*}
Therefore, using $u_\ell - u_{\ell-1} = n^{\varepsilon} \norm{\bfSigma_\ell}$, we have, on $\Xi_{\dbraks{\ell-1}} \cap \Xi_\ell$,
\begin{equation*}
    \fkq_{2, i} (u_{\ell}) 
    \leq \frac{n^{\varepsilon}}{2 (u_\ell - u_{\ell-1})} 
    \cdot
    \frac{\Tr [(u_{\ell} - \bfS_{\ell - 1})^{-1} \bfSigma_i
    (u_{\ell} - \bfS_{\ell - 1})^{-1} \bfSigma_\ell]}
    {\Tr [(u_{\ell} - \bfS_{\ell - 1})^{-1} \bfSigma_i
    (u_{\ell} - \bfS_{\ell - 1})^{-1}]}
    \leq \frac{n^{\varepsilon} \norm{\bfSigma_\ell}}
    {2 (u_\ell - u_{\ell-1})}
    = 1/2.
\end{equation*}
In the last inequality, we used the elementary inequality $\Tr(\bfA\bfB)\leq \norm{\bfB}\Tr\bfA$ when $\bfA$ and $\bfB$ are nonnegative definite. Hence, on the event $\Xi_{\dbraks{\ell-1}}\cap\Xi_\ell$, the sufficient condition \eqref{tmp:fkq-sufficient} holds, and consequently the feasibility condition \eqref{eqn:feasible-upper} is satisfied at the index $\ell$.

Now let $\Xi_\ell$, $\ell\in\dbraks{n}$, be the events constructed above, and set $\Xi = \bigcap_{\ell=1}^n \Xi_{\ell}$. Since $\bbp (\Xi_\ell) \geq 1 - n^{-(C+1)}$ for each $\ell \in \dbraks{n}$, the union bound gives $\bbp (\Xi) \geq 1 - n^{-C}$. By construction, on the event $\Xi$, the sequence $\{ u_{\ell} \}_{\ell = 0}^n$ defined in \eqref{def:upper-construct} forms a sequence of feasible upper soft edges for $\{\bfS_\ell\}_{\ell=0}^n$. Consequently, it holds that
\begin{equation*}
    \bbp \curls[\Big]{ \norm{\bfS_n} \geq n^\varepsilon \pars[\Big]{
    \max_{i \in \dbraks{n}} \Tr \bfSigma_i
    + \sum\nolimits_{i=1}^n \norm{\bfSigma_i} } }
    = \bbp \{ \norm{\bfS_n} \geq u_n \} \leq 1 - \bbp (\Xi) \leq n^{-C}.
\end{equation*}
By the arbitrariness of $\varepsilon>0$ and $C>0$, this concludes the proof.
\end{proof}

Now, equipped with Lemmas \ref{lemma:decoupling} and \ref{lemma:norm-cov-type}, we are ready to control $\norm{\bfQ \circ \caH(\bfW)}$.

\begin{proof}[Proof of Lemma \ref{lemma:norm-QW}]
Let $\{ \tilde{\bfz}_i \}_{i=1}^n$ be an independent copy of $\{ {\bfz}_i \}_{i=1}^n$. Let $\tilde{\bfw}_i = \bfA_{\bar{\pi} (i)} \tilde{\bfz}_i$ and define $\tilde{\bfH}$ as in \eqref{tmp:hat-H}. By Lemma \ref{lemma:decoupling}, in order to prove \eqref{bound:norm-Q-circ-W}, it suffices to show that $\norm{\tilde{\bfH}}^2 = \norm{\tilde{\bfH} \tilde{\bfH}^\top} \prec n \phi^2$. Write
\begin{equation*}
    \tilde{\bfH} = [\bfh_1, \cdots, \bfh_n],
    \qwhere 
    \bfh_i (j) = \tilde{H}_{ji} = (1 - \delta_{ij}) \cdot Q_{ij} \angles{\bfw_j, \tilde{\bfw}_i}.
\end{equation*}
Then $\tilde{\bfH} \tilde{\bfH}^\top = \sum\nolimits_{i = 1}^n \bfh_i \bfh_i^\top$, where the vectors $\bfh_i$ are independent when conditioning on the randomness of $\{ \bfz_j \}_{j=1}^n$. Moreover, we can write $\bfh_i = \bfB_i^\top \tilde{\bfz}_i$ by introducing
\begin{equation*}
    \bfB_i = [\bfb_{i1}, \cdots, \bfb_{in}],
    \qwhere
    \bfb_{ij} 
    = (1 - \delta_{ij}) Q_{ij} \bfA_{\bar{\pi} (i)}^\top \bfw_j
    = (1 - \delta_{ij}) Q_{ij} \bfOmega_{\bar{\pi} (i) \bar{\pi} (j)} \bfz_j.
\end{equation*}
Applying Lemma \ref{lemma:norm-cov-type} to $\tilde{\bfH}$ conditionally on $\{\bfz_j\}_{j=1}^n$, we obtain
\begin{equation}
    \norm{\tilde{\bfH} \tilde{\bfH}^\top} 
    \prec \max_{i \in \dbraks{n}} 
    ~ \norm{\bfB_i}_{\Fnorm}^2
    + \sum\nolimits_{i=1}^n \norm{\bfB_i}^2.
    \label{tmp:hatH-bound}
\end{equation}
We now estimate the two terms on the r.h.s. of \eqref{tmp:hatH-bound}. First, by the definition of $\bfB_i$,
\begin{equation*}
    \norm{\bfB_i}_{\Fnorm}^2
    = \sum\nolimits_{j=1}^{n} \norm{\bfb_{ij}}^2
    = \sum\nolimits_{j \not= i} \abs{Q_{ij}}^2 
    \angles{\bfz_j, \bfOmega_{\bar{\pi} (j) \bar{\pi} (i)}
    \bfOmega_{\bar{\pi} (i) \bar{\pi} (j)} \bfz_j}
    \prec \sum\nolimits_{j \not= i} \abs{Q_{ij}}^2 
    \norm{\bfOmega_{\bar{\pi} (i) \bar{\pi} (j)}}_{\Fnorm}^2
    \lesssim n \phi^2,
\end{equation*}
where the last inequality follows from the upper bound on $Q_{ij}$ in \eqref{bound:upper-Q} and the definition of the control parameter $\phi$ in \eqref{def:ctrl-para-rho-phi}. We next bound the operator norm of $\bfB_i$. Applying Lemma \ref{lemma:norm-cov-type} again yields
\begin{equation*}
    \norm{\bfB_i}^2 
    = \norm{\bfB_i \bfB_i^\top}
    = \norm[\Big]{\sum\nolimits_{j \not= i} \bfb_{ij} \bfb_{ij}^\top}
    \prec \max_{j \not= i} \curls[\big]{\abs{Q_{ij}}^2 \norm{\bfOmega_{\bar{\pi} (i) \bar{\pi} (j)}}_{\Fnorm}^2}
    + \sum\nolimits_{j \not= i} \abs{Q_{ij}}^2 \norm{\bfOmega_{\bar{\pi} (i) \bar{\pi} (j)}}^2
    \lesssim \phi^2.
\end{equation*}
Substituting these two estimates into \eqref{tmp:hatH-bound} yields $\norm{\tilde{\bfH} \tilde{\bfH}^\top} \prec n \phi^2$. This completes the proof.
\end{proof}

\subsection{Proof of Lemmas \ref{lemma:u-dotK-barK-u} and \ref{lemma:QW-barU-infnorm}}
\label{subsec:proof-using-Hanson-Wright}

This section is devoted to the proof of Lemmas \ref{lemma:u-dotK-barK-u} and \ref{lemma:QW-barU-infnorm}. We first record an auxiliary concentration result, which shows that the Bernstein-type bound \eqref{eqn:bernstein} and the Hanson--Wright-type bound \eqref{eqn:hanson-wright} remain valid after concatenating the random vectors from Assumption \ref{assump:concentration}.

\begin{lemma}[concatenation]
\label{lemma:long-concentration}
Let $\{ \bfz_i \}_{i=1}^n$ be independent isotropic random vectors satisfying Assumption \ref{assump:concentration}. Let $\bfz$ denote their concatenation, namely $\bfz^\top = [\bfz_1^\top, \cdots, \bfz_n^\top]$, and let $N = \sum_{k=1}^K n_k d_k$ be the dimension of $\bfz$. Then
\begin{subequations}
\begin{alignat}{2}
    \abs{\angles{\bfa, \bfz}}
    &\prec \norm{\bfa},
    & \qquad & \text{ for all deterministic } \qquad 
    \bfa \in \bbr^{N},
    \label{eqn:long-bernstein} \\
    \abs{\angles{\bfz, \bfB \bfz} - \Tr \bfB}
    & \prec \norm{\bfB}_{\Fnorm},
    & \qquad & \text{ for all deterministic } \qquad
    \bfB \in \bbr^{N \times N}.
    \label{eqn:long-hanson-wright}
\end{alignat}
\end{subequations}
\end{lemma}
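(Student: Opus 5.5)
The plan is to reduce both estimates to Assumption \ref{assump:concentration} applied blockwise, using the independence of the $\bfz_i$ and the moment characterization of $\prec$ in Lemma \ref{lemma:basic-property-prec} \ref{item:compatibility-expectation}. Write $\bfa^\top = [\bfa_1^\top,\cdots,\bfa_n^\top]$ with $\bfa_i \in \bbr^{d_{\bar\pi(i)}}$, and partition $\bfB$ into blocks $\bfB_{ij}$ in the same way.

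For \eqref{eqn:long-bernstein}, we have $\angles{\bfa,\bfz} = \sum_i \angles{\bfa_i,\bfz_i}$, a sum of independent, centered summands, each satisfying $\abs{\angles{\bfa_i,\bfz_i}} \prec \norm{\bfa_i}$ by \eqref{eqn:bernstein}. We bound the $2\ell$-th moment for fixed $\ell$. Expanding $\bbe\,\abs{\sum_i X_i}^{2\ell}$ and using independence with mean zero, only index patterns in which every index appears at least twice survive. By Lemma \ref{lemma:basic-property-prec} \ref{item:compatibility-expectation}, $\bbe\abs{X_i}^q \prec \norm{\bfa_i}^q$; the required polynomial moment bounds follow from Assumption \ref{assump:tech-const} together with the high-moment tails. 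Hence each surviving term is $\prec \prod \norm{\bfa_i}^{q_i}$ with every $q_i\ge 2$. Summing over patterns (there are only $O_\ell(1)$ shapes) gives a bound $\prec (\sum_i \norm{\bfa_i}^2)^{\ell} = \norm{\bfa}^{2\ell}$, via the standard Marcinkiewicz--Zygmund/Rosenthal-type bookkeeping, noting $\sum_i \norm{\bfa_i}^{q} \le \norm{\bfa}^{q}$ for $q\ge2$. Converting back with Lemma \ref{lemma:basic-property-prec} \ref{item:compatibility-expectation} gives \eqref{eqn:long-bernstein}.

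For \eqref{eqn:long-hanson-wright}, split
\begin{equation*}
\angles{\bfz,\bfB\bfz}-\Tr\bfB = \sum_i \pars[\big]{\angles{\bfz_i,\bfB_{ii}\bfz_i}-\Tr\bfB_{ii}} + \sum_{i\ne j}\angles{\bfz_i,\bfB_{ij}\bfz_j}.
\end{equation*}
The diagonal part is again a sum of independent centered variables, each $\prec \norm{\bfB_{ii}}_\Fnorm$ by \eqref{eqn:hanson-wright}, so the moment argument above gives $\prec (\sum_i\norm{\bfB_{ii}}_\Fnorm^2)^{1/2}\le\norm{\bfB}_\Fnorm$. For the off-diagonal part, I would decouple exactly as in Lemma \ref{lemma:decoupling}: the $\ell$-th moment is controlled by that of $\sum_{i\ne j}\angles{\bfz_i,\bfB_{ij}\tilde\bfz_j}$, with $\tilde\bfz$ an independent copy. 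Conditioning on $\tilde\bfz$, this is linear in $\bfz$ with coefficient vector $\bfa$ whose $i$-th block is $\sum_{j\ne i}\bfB_{ij}\tilde\bfz_j$. By \eqref{eqn:long-bernstein} it is $\prec\norm{\bfa}$, and $\norm{\bfa}^2 = \angles{\tilde\bfz,\bfM\tilde\bfz}$ for the PSD matrix $\bfM=\bfB_{\rm off}^\top\bfB_{\rm off}$, where $\bfB_{\rm off}$ denotes $\bfB$ with its diagonal blocks removed.

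The main obstacle is bounding this quadratic form $\angles{\tilde\bfz,\bfM\tilde\bfz}$ without circularity, since it is itself a long quadratic form. I would handle it by splitting it into its diagonal-block part and an off-diagonal remainder. The diagonal-block part is nonnegative and, by the moment argument with \eqref{eqn:hanson-wright} and $\norm{\bfM_{jj}}_\Fnorm\le\Tr\bfM_{jj}$, is $\prec \Tr\bfM \le \norm{\bfB}_\Fnorm^2$. The off-diagonal remainder is handled by the same decoupling step applied again, with $\norm{\bfM}_\Fnorm\le\norm{\bfB}_\Fnorm^2$. This yields a self-improving bound of the form $\angles{\tilde\bfz,\bfM\tilde\bfz}\prec \norm{\bfB}_\Fnorm^2 + (\text{off-diagonal term of size }\prec\norm{\bfM}_\Fnorm)$. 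Closing it takes either one more iteration or a direct moment computation of the decoupled bilinear chaos, where $\bbe\abs{\angles{\bfz,\bfB\tilde\bfz}}^{2\ell}\prec\norm{\bfB}_\Fnorm^{2\ell}$ follows by pairing indices as in the Bernstein case, using the blockwise bounds $\bbe\abs{\angles{\bfx,\bfz_i}}^q\prec\norm{\bfx}^q$. Combining the diagonal and off-diagonal parts then gives \eqref{eqn:long-hanson-wright}.
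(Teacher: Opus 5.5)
Your argument for \eqref{eqn:long-bernstein} and for the diagonal-block part of \eqref{eqn:long-hanson-wright} is the paper's. The paper simply cites the Marcinkiewicz--Zygmund inequality where you expand the $2\ell$-th moment. Your reduction of the off-diagonal part to $\prec \norm{\bfB_{\mathrm{off}}\tilde{\bfz}}$, by decoupling and then a conditional use of \eqref{eqn:long-bernstein}, also matches the paper.

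There are two small differences. First, the paper decouples by averaging over all ordered bipartitions $\bbj_1\cup\bbj_2=\dbraks{n}$ (an exact identity, followed by Minkowski), so it never needs an independent copy. Your independent-copy route is also fine. However, for a scalar chaos the last step of Lemma \ref{lemma:decoupling} (deleting rows and columns does not increase the operator norm) has no analogue; it must be replaced by the conditional Jensen step of the standard decoupling theorem. Second, the polynomial moment bounds required by Lemma \ref{lemma:basic-property-prec} \ref{item:compatibility-expectation} do not follow from Assumption \ref{assump:tech-const} plus stochastic domination, since these give no control of $\bbe\abs{\angles{\bfa_i,\bfz_i}}^q$ for $q>2$. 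The paper uses that lemma here without checking the hypothesis either, so this is a shared implicit assumption rather than a flaw in your route.

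The real gap is at the step you yourself flag as the main obstacle. You treat $\norm{\bfB_{\mathrm{off}}\tilde{\bfz}}^2=\angles{\tilde{\bfz},\bfM\tilde{\bfz}}$ as a quadratic form and remove its diagonal blocks. What remains is an off-diagonal chaos in $\tilde{\bfz}$ of exactly the kind you are trying to bound. Decoupling it again produces $\norm{\bfM_{\mathrm{off}}\hat{\bfz}}^2$, which is another quadratic form, and so on. After any fixed number of rounds you are left with a fractional power of a new, uncontrolled off-diagonal chaos, so ``one more iteration'' does not close the argument. Closing it would need an a priori bound such as $\norm{\bfz}^2\prec N\le n^{C}$ together with an $\varepsilon$-dependent number of rounds, or a fixed-point argument for a supremum of moments over $\norm{\bfB}_\Fnorm\le 1$; neither is set up.

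The missing observation is that no quadratic form is needed. Let $\mathbf{b}_1,\ldots,\mathbf{b}_N$ be the rows of $\bfB_{\mathrm{off}}$. Applying \eqref{eqn:long-bernstein} row by row, and using Lemma \ref{lemma:basic-property-prec}(iii) since $N\le n^{C}$, gives
\begin{equation*}
    \norm{\bfB_{\mathrm{off}}\tilde{\bfz}}^2
    = \sum\nolimits_{r=1}^N \angles{\mathbf{b}_r,\tilde{\bfz}}^2
    \prec \sum\nolimits_{r=1}^N \norm{\mathbf{b}_r}^2
    = \norm{\bfB_{\mathrm{off}}}_\Fnorm^2
    \leq \norm{\bfB}_\Fnorm^2 .
\end{equation*}
This is exactly how the paper proves \eqref{tmp:decouple-quad}. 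It is also what your ``direct moment computation'' reduces to, if you condition on $\tilde{\bfz}$ and apply Minkowski in $L^{\ell}$ over the rows. A full index pairing in both $\bfz$ and $\tilde{\bfz}$, as you sketch it, would instead require a separate combinatorial bound over bipartite index patterns, which you have not supplied.
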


Let us remark that the proof of Lemma \ref{lemma:long-concentration} below is a straightforward adaptation of the argument used in the proof of \cite[Theorem 7.7]{erdosDynamicalApproachRandom2017}. The main difference is that we consider the concatenation of independent random vectors, whereas \cite[Theorem 7.7]{erdosDynamicalApproachRandom2017} treats the corresponding problem for independent random variables. We also note that the proof actually yields Lemma \ref{lemma:long-concentration} in a slightly more general setting: each vector being concatenated may have its own dimension, not necessarily belonging to the finite set $\{d_1,\ldots,d_K\}$ in Assumption \ref{assump:concentration}. Moreover, the number of vectors being concatenated may exceed $n$, provided it remains bounded by $n^C$ for some fixed constant $C>0$.

\begin{proof}[Proof of \eqref{eqn:long-bernstein}]
By homogeneity, it suffices to consider the case $\norm{\bfa}=1$. Let us decompose the deterministic vector $\bfa$ as $\bfa^\top = [\bfa_1^\top, \cdots, \bfa_n^\top]$, such that $\bfa_i$ has the same dimension as $\bfz_i$. Now, $\angles{\bfa, \bfz} = \sum\nolimits_{i=1}^n \angles{\bfa_i, \bfz_i}$. By the Marcinkiewicz--Zygmund inequality; see, for example, \cite[Section 10.3]{chowProbabilityTheoryIndependence2012}, for each fixed $\ell \in \bbn_+$,
\begin{equation*}
    \bbe \abs{\angles{\bfa, \bfz}}^\ell
    = \bbe \abs[\Big]{\sum\nolimits_{i=1}^n \angles{\bfa_i, \bfz_i}}^\ell
    \leq C_{\ell} \bbe \pars[\Big]{\sum\nolimits_{i=1}^n \abs{\angles{\bfa_i, \bfz_i}}^2}^{\ell/2},
\end{equation*}
where $C_{\ell}>0$ depends only on $\ell$. By Assumption \ref{assump:concentration},
\begin{equation*}
    \sum\nolimits_{i=1}^n \abs{\angles{\bfa_i, \bfz_i}}^2
    \prec \sum\nolimits_{i=1}^n \norm{\bfa_i}^2
    = \norm{\bfa}^2 = 1.
\end{equation*}
Hence, by Lemma \ref{lemma:basic-property-prec} \ref{item:compatibility-expectation}, we have
\begin{equation*}
    \bbe \pars[\Big]{\sum\nolimits_{i=1}^n \abs{\angles{\bfa_i, \bfz_i}}^2}^{\ell/2}
    \prec 1.
\end{equation*}
Consequently, we have $\bbe \abs{\angles{\bfa, \bfz}}^\ell \prec 1$ for each fixed $\ell \in \bbn_+$. Since $\ell$ is arbitrary, using the reverse implication in Lemma \ref{lemma:basic-property-prec} \ref{item:compatibility-expectation}, we conclude the proof of \eqref{eqn:long-bernstein}.
\end{proof}

\begin{proof}[Proof of \eqref{eqn:long-hanson-wright}]
We begin by partitioning the matrix $\bfB$ into blocks $\{\bfB_{ij}\}_{i,j=1}^n$, conformally with the decomposition of $\bfz$, so that $\angles{\bfz, \bfB \bfz} = \sum\nolimits_{i, j = 1}^n \angles{\bfz_i, \bfB_{ij} \bfz_j}$. Now,
\begin{equation}
    \angles{\bfz, \bfB \bfz} - \Tr \bfB
    = \sum\nolimits_{i = 1}^n 
    \pars[\big]{\angles{\bfz_i, \bfB_{ii} \bfz_i} - \Tr \bfB_{ii}}
    + \sum\nolimits_{i \neq j}
    \angles{\bfz_i, \bfB_{ij} \bfz_j} .
    \label{tmp:quad-diag-offdiag}
\end{equation}
We control the two terms on the r.h.s. separately.

For the diagonal part, the argument is similar to that in the proof of \eqref{eqn:long-bernstein}. Using the Marcinkiewicz--Zygmund inequality together with Lemma \ref{lemma:basic-property-prec} \ref{item:compatibility-expectation}, we obtain, for each fixed $\ell \in \bbn_+$,
\begin{equation*}
    \bbe \abs[\Big]{\sum\nolimits_{i=1}^n \pars[\big]{
    \angles{\bfz_i, \bfB_{ii}\bfz_i} - \Tr \bfB_{ii}} }^\ell
    \leq C_{\ell} \bbe \pars[\Big]{\sum\nolimits_{i=1}^n 
    \abs{\angles{\bfz_i, \bfB_{ii}\bfz_i} - \Tr \bfB_{ii}}^2}^{\ell/2} 
    \prec \pars[\big]{\sum\nolimits_{i=1}^n \norm{\bfB_{ii}}_{\Fnorm}^2}^{\ell/2}
    \leq \norm{\bfB}_{\Fnorm}^\ell .
\end{equation*}
Since $\ell \in \bbn_+$ is arbitrary, Lemma \ref{lemma:basic-property-prec} \ref{item:compatibility-expectation} yields
\begin{equation}
    \abs[\Big]{\sum\nolimits_{i=1}^n \pars[\big]{
    \angles{\bfz_i, \bfB_{ii}\bfz_i} - \Tr \bfB_{ii}} }
    \prec \norm{\bfB}_{\Fnorm} .
    \label{tmp:quad-diag-part}
\end{equation}

It remains to control the off-diagonal part in \eqref{tmp:quad-diag-offdiag}. We use a decoupling argument similar to that in the proof of Lemma \ref{lemma:decoupling}. We first record a simple auxiliary estimate. Let $\{ \tilde{\bfz}_i \}_{i=1}^n$ be an independent copy of $\{ \bfz_i \}_{i=1}^n$, and let $\bfD \in \bbr^{N \times N}$ be deterministic. Conditioning on $\bfz$ and applying \eqref{eqn:long-bernstein} to $\tilde{\bfz}$, we obtain
\begin{equation*}
    \abs{\angles{\bfz, \bfD \tilde{\bfz}}}
    \prec \norm{\bfD^\top \bfz} .
\end{equation*}
Writing $\bfD = [\mathbf{d}_1, \cdots, \mathbf{d}_N]$, where $\mathbf{d}_r$ denotes the $r$-th column of $\bfD$, another application of \eqref{eqn:long-bernstein} gives
\begin{equation*}
    \norm{\bfD^\top \bfz}^2
    = \sum\nolimits_{r=1}^N \abs{\angles{\mathbf{d}_r, \bfz}}^2
    \prec \sum\nolimits_{r=1}^N \norm{\mathbf{d}_r}^2
    = \norm{\bfD}_{\Fnorm}^2 .
\end{equation*}
Hence, we have
\begin{equation}
    \abs{ \angles{\bfz, \bfD \tilde{\bfz}} }
    \prec \norm{\bfD}_{\Fnorm},
    \qquad \text{ for all deterministic } \qquad
    \bfD \in \bbr^{N \times N}.
    \label{tmp:decouple-quad}
\end{equation}

We now return to the off-diagonal part of \eqref{tmp:quad-diag-offdiag}. For each fixed pair $(i,j)$ with $i \neq j$, we have
\begin{equation*}
    1 
    = \frac{1}{2^{n-2}} 
    \sum\nolimits_{\bbj_1 \cup \bbj_2 = \dbraks{n}} 
    \bbone\{i \in \bbj_1 \} \bbone\{ j \in \bbj_2 \},
\end{equation*}
where the sum is over all ordered partitions of $\dbraks{n}$ into two nonempty disjoint subsets $\bbj_1$ and $\bbj_2$. Moreover, the number of such ordered partitions is $\sum\nolimits_{\bbj_1 \cup \bbj_2 = \dbraks{n}} 1 = 2^n - 2$. Now, we can write
\begin{align*}
    \sum\nolimits_{i \neq j} \angles{\bfz_i, \bfB_{ij} \bfz_j} 
    & = \sum\nolimits_{i \neq j} \angles{\bfz_i, \bfB_{ij} \bfz_j} 
    \pars[\Big]{ \frac{1}{2^{n-2}} 
    \sum\nolimits_{\bbj_1 \cup \bbj_2 = \dbraks{n}} 
    \bbone\{ i \in \bbj_1 \} \bbone\{ j \in \bbj_2 \} } \\
    & = \frac{1}{2^{n-2}} \sum\nolimits_{\bbj_1 \cup \bbj_2 = \dbraks{n}} 
    \sum\nolimits_{i \in \bbj_1 , j \in \bbj_2} 
    \angles{\bfz_i, \bfB_{ij} \bfz_j} .
\end{align*}
Fix $\varepsilon>0$ and $\ell \in \bbn_+$. For each ordered partition $\bbj_1 \cup \bbj_2 = \dbraks{n}$, the two collections $\{ \bfz_i \}_{i \in \bbj_1}$ and $\{ \bfz_j \}_{j \in \bbj_2}$ are independent. Thus, \eqref{tmp:decouple-quad}, together with Lemma \ref{lemma:basic-property-prec} \ref{item:compatibility-expectation}, implies that, for all sufficiently large $n$,
\begin{equation*}
    \bbe \abs[\Big]{ \sum\nolimits_{i \in \bbj_1 , j \in \bbj_2} 
    \angles{\bfz_i, \bfB_{ij} \bfz_j} }^\ell
    \leq n^\varepsilon \pars[\Big]{
    \sum\nolimits_{i \in \bbj_1 , j \in \bbj_2} \norm{\bfB_{ij}}_{\Fnorm}^2 }^{\ell/2}
    \leq n^\varepsilon \norm{\bfB}_{\Fnorm}^\ell .
\end{equation*}
By Minkowski's inequality, it follows that
\begin{equation*}
    \pars[\Big]{ \bbe \abs[\Big]{\sum\nolimits_{i \neq j} 
    \angles{\bfz_i, \bfB_{ij} \bfz_j}}^\ell }^{1 / \ell}
    \leq \frac{1}{2^{n-2}} 
    \sum\nolimits_{\bbj_1 \cup \bbj_2 = \dbraks{n}} 
    \pars[\Big]{ \bbe \abs[\Big]{ \sum\nolimits_{i \in \bbj_1 , j \in \bbj_2} 
    \angles{\bfz_i, \bfB_{ij} \bfz_j} }^\ell }^{1 / \ell}
    \leq 4 n^{\varepsilon / \ell} \norm{\bfB}_{\Fnorm} .
\end{equation*}
Since $\varepsilon>0$ and $\ell \in \bbn_+$ are arbitrary, Lemma \ref{lemma:basic-property-prec} \ref{item:compatibility-expectation} implies that
\begin{equation}
    \abs[\Big]{\sum\nolimits_{i \neq j} 
    \angles{\bfz_i, \bfB_{ij} \bfz_j}} 
    \prec \norm{\bfB}_{\Fnorm} .
    \label{tmp:quad-offdiag-part}
\end{equation}
Combining \eqref{tmp:quad-diag-part} and \eqref{tmp:quad-offdiag-part} with the decomposition \eqref{tmp:quad-diag-offdiag} completes the proof of \eqref{eqn:long-hanson-wright}.
\end{proof}

Equipped with Lemma \ref{lemma:long-concentration}, we are now ready to prove Lemmas \ref{lemma:u-dotK-barK-u} and \ref{lemma:QW-barU-infnorm}.

\begin{proof}[Proof of Lemma \ref{lemma:u-dotK-barK-u}]
Note that the first inequality in \eqref{eqn:u-dotK-barK-u} follows directly from the fact that the operator norm of a submatrix is bounded by that of the full matrix. For the second inequality, since $m$ is fixed, it suffices to establish entrywise control of $\barU^\top (\ddotK - \barK) \barU \in \bbr^{m \times m}$. More precisely, it is enough to show that
\begin{equation*}
    \abs{ \angles{\baru_k, (\ddotK - \barK) \baru_{\ell}} } 
    \prec \sqrt{n} \rho,
    \qfor
    k, \ell \in \dbraks{m}.
\end{equation*}
We prove the following slightly more general estimate: for any deterministic $\bfs_1, \bfs_2 \in \bbr^n$, 
\begin{equation}
    \abs{ \angles{\bfs_1, (\ddotK - \barK) \bfs_2} } 
    \prec \sqrt{n} \rho,
    \quad \text{ if } \quad
    \norm{\bfs_1}_{\infty} \vee \norm{\bfs_2}_{\infty} \lesssim 1 / \sqrt{n}.
    \label{eqn:average-dotK-barK}
\end{equation}
Indeed, by the representation of the eigenvectors $\baru_k$ in \eqref{eqn:baru-piecewise}, we have $\norm{\baru_k}_{\infty} \vee \norm{\baru_{\ell}}_{\infty} \lesssim n^{-1/2}$. Thus \eqref{eqn:average-dotK-barK} implies the desired entrywise bound.

It remains to prove \eqref{eqn:average-dotK-barK}. To this end, we use the decomposition of $\ddotK - \barK$ given in \eqref{eqn:diff-dotK-barK} to get
\begin{align} \label{tmp:dotK-barK-kl}
\begin{split}
    \angles{\bfs_1, (\ddotK - \barK) \bfs_2}
    & = \angles{\bfs_1, 
    \braks{\bfQ \circ (\bfzeta \bfone^\top + \bfone \bfzeta^\top + 2\bfUps)} \bfs_2} \\
    & = \angles{\bfs_1, (\bfQ \circ \bfzeta \bfone^\top) \bfs_2}
    + \angles{\bfs_1, (\bfQ \circ \bfone \bfzeta^\top) \bfs_2}
    + 2 \angles{\bfs_1, ({\bfQ \circ \bfUps}) \bfs_2}.  
\end{split} 
\end{align}
We control the terms on the r.h.s. of \eqref{tmp:dotK-barK-kl} separately. Recall the definition of $\zeta_i$ in \eqref{def:zeta-Ups-W}. Write $\bfs_a=(s_{ai})_{i=1}^n$, and let $\bfz$ denote the concatenation of $\{ \bfz_i \}_{i=1}^n$, as in Lemma \ref{lemma:long-concentration}. Then
\begin{align*}
    \angles{\bfs_1, (\bfQ \circ \bfzeta \bfone^\top) \bfs_2}
    & = \sum\nolimits_{i,j=1}^n 
    Q_{ij} \zeta_i s_{1i} s_{2j} \\
    & = \sum\nolimits_{i,j=1}^n 
    Q_{ij} s_{1i} s_{2j} 
    \braks[\big]{ \angles{\bfz_i, \bfOmega_{\bar{\pi}(i)\bar{\pi}(i)} \bfz_i} 
    - \Tr \bfOmega_{\bar{\pi}(i)\bar{\pi}(i)} } 
    =: \angles{\bfz, \bfB \bfz} - \Tr \bfB.
\end{align*}
where $\bfB = \bfB_{11} \oplus \cdots \oplus \bfB_{nn}$ is block diagonal, with diagonal blocks
\begin{equation*}
    \bfB_{ii}
    = {s_{1i}} \pars[\Big]{ \sum\nolimits_{j=1}^n Q_{ij} s_{2j}} 
    \bfOmega_{\bar{\pi}(i) \bar{\pi}(i)}.
\end{equation*}
To apply the Hanson-Wright estimate \eqref{eqn:long-hanson-wright}, we need an estimate for $\norm{\bfB}_{\Fnorm}$. By \eqref{bound:upper-Q}, together with the relation $2\theta_{k\ell} \geq \theta_{kk} + \theta_{\ell\ell}$ which follows directly from the definition, we have $\abs{Q_{ij}} \lesssim 1/\theta_{\bar{\pi}(i)\bar{\pi}(i)}$. Hence,
\begin{align*}
    \norm{\bfB}_{\Fnorm}^2 
    = \sum\nolimits_{i=1}^n \norm{\bfB_{ii}}_{\Fnorm}^2 
    & = \sum\nolimits_{i=1}^n \abs{s_{1i}}^2 
    \braks[\Big]{ \sum\nolimits_{j=1}^n Q_{ij} s_{2j}}^2 
    \norm{\bfOmega_{\bar{\pi}(i) \bar{\pi}(i)}}_{\Fnorm}^2 \\
    & \lesssim n \sum\nolimits_{i=1}^n \abs{s_{1i}}^2 
    \pars[\big]{\norm{\bfSigma_{\bar{\pi}(i)}}_{\Fnorm} / \theta_{\bar{\pi} (i) \bar{\pi} (i)}}^2
    \leq n \rho^2 \sum\nolimits_{i=1}^n \abs{s_{1i}}^2 
    \lesssim n \rho^2,
\end{align*}
where we used $\norm{\bfs_1} \lesssim 1$ and $\norm{\bfs_2}_\infty \lesssim 1 / \sqrt{n}$. Applying \eqref{eqn:long-hanson-wright} therefore yields the desired bound for the first term on the r.h.s. of \eqref{tmp:dotK-barK-kl}. The second term is treated in exactly the same way. Consequently,
\begin{equation}
    \abs{\angles{\bfs_1, (\bfQ \circ \bfzeta \bfone^\top) \bfs_2}}
    + \abs{\angles{\bfs_1, (\bfQ \circ \bfone \bfzeta^\top) \bfs_2}}
    \prec \sqrt{n} \rho.
    \label{tmp:contri-from-zeta}
\end{equation}
It remains to control the third term in \eqref{tmp:dotK-barK-kl}. Recall the definition of $\bfUps$ in \eqref{def:zeta-Ups-W}. We further decompose
\begin{align*}
    \angles{\bfs_1, ({\bfQ \circ \bfUps}) \bfs_2}
    & = \sum\nolimits_{i,j=1}^n 
    Q_{ij} \Upsilon_{ij} s_{1i} s_{2j} \\
    & = \sum\nolimits_{i,j=1}^n 
    Q_{ij} s_{1i} s_{2j}
    \angles{\bfmu_{\bar{\pi} (i)} - \bfmu_{\bar{\pi}(j)}, \bfA_{\bar{\pi} (i)} \bfz_i} 
    - \sum\nolimits_{i,j=1}^n  
    Q_{ij} s_{1i} s_{2j} 
    \angles{\bfmu_{\bar{\pi} (i)} - \bfmu_{\bar{\pi}(j)}, \bfA_{\bar{\pi} (j)} \bfz_j}.
\end{align*}
The two sums are handled in the same way, so we only consider the first one. This sum can be expressed as $\angles{\bfa,\bfz}$, where $\bfa^\top = [\bfa_1^\top,\ldots,\bfa_n^\top]$ is a deterministic vector with
\begin{equation*}
    \bfa_i := s_{1i} 
    \sum\nolimits_{j=1}^n Q_{ij} s_{2j} 
    \bfA_{\bar{\pi}(i)}^\top 
    \pars[\big]{\bfmu_{\bar{\pi} (i)}  - \bfmu_{\bar{\pi}(j)}}.
\end{equation*}
We next estimate $\norm{\bfa}$. By \eqref{bound:upper-Q}, the definition \eqref{def:psi}, and the bound $\norm{\bfs_2}_{\infty} \lesssim n^{-1/2}$, we have
\begin{align*}
    \norm{\bfa_i} \lesssim \abs{s_{1i}}
    \sum\nolimits_{j=1}^n \abs{s_{2j}} 
    \cdot {\norm[\big]{\bfA_{\bar{\pi}(i)}^\top 
    \pars[\big]{\bfmu_{\bar{\pi} (i)}  - \bfmu_{\bar{\pi}(j)}} }}
    \big / {\theta_{\bar{\pi} (i) \bar{\pi} (j)}}
    \lesssim \sqrt{n} \rho \abs{s_{1i}}.
\end{align*}
Consequently,
\begin{equation*}
    \norm{\bfa}^2
    = \sum\nolimits_{i=1}^n \norm{\bfa_i}^2 
    \lesssim n\rho^2 \norm{\bfs_1}^2
    = n\rho^2.
\end{equation*}
Now, applying \eqref{eqn:long-bernstein} yields $\abs{\angles{\bfa, \bfz}} \prec \sqrt{n}\rho$. The second sum admits the same bound. Therefore,
\begin{equation}
    \abs{\angles{\bfs_1, ({\bfQ \circ \bfUps}) \bfs_2}} \prec \sqrt{n} \rho.
    \label{tmp:contri-from-Ups}
\end{equation}
Combining the estimates \eqref{tmp:contri-from-zeta} and \eqref{tmp:contri-from-Ups} with \eqref{tmp:dotK-barK-kl} yields the desired entrywise bound. We thus complete the proof of Lemma \ref{lemma:u-dotK-barK-u}.
\end{proof}

\begin{proof}[Proof of Lemma \ref{lemma:QW-barU-infnorm}]
Recall that $\braks{\bfQ \circ \caH(\bfW)} \barU_r$ is an $n \times r$ matrix. Since $r$ is fixed, we have
\begin{equation*}
    \norm{\braks{\bfQ \circ \caH (\bfW)} \barU_r}_{2, \infty}
    \lesssim \max\nolimits_{k \in \dbraks{r}}
    \norm{\braks{\bfQ \circ \caH(\bfW)} \baru_k}_{\infty}.
\end{equation*}
Thus it suffices to control the $\ell_\infty$ norm of each column. In fact, we prove the following more general estimate: for any deterministic vector $\bfs = (s_j)_{j=1}^n \in \bbr^n$ satisfying $\norm{\bfs} \lesssim 1$,
\begin{equation}
    \norm{\braks{\bfQ \circ \caH(\bfW)} \bfs}_{\infty} \prec \rho.
    \label{eqn:general-QW-s}
\end{equation}
Fix $i \in \dbraks{n}$. The $i$-th entry of $\braks{\bfQ \circ \caH(\bfW)} \bfs$ is 
\begin{align*}
    \angles{\bfe_i, \braks{\bfQ \circ \caH (\bfW)} \bfs}
    = \sum\nolimits_{j \neq i} Q_{ij} W_{ij} s_{j}
    = \sum\nolimits_{j \neq i} Q_{ij} s_{j}
    \angles{\bfz_i, \bfOmega_{\bar{\pi}(i) \bar{\pi}(j)} \bfz_j}
    =: \angles{\bfz_i, \bfa_i},
\end{align*}
where we introduced
\begin{equation*}
    \bfa_i = \sum\nolimits_{j \neq i} 
    Q_{ij} s_{j}  
    \bfOmega_{\bar{\pi}(i) \bar{\pi}(j)} \bfz_j
    = \bfB_{(-i)} \bfz_{(-i)}.
\end{equation*}
Here, $\bfz_{(-i)}$ denotes the concatenation of the vectors $\bfz_j$ with $j \neq i$, and $\bfB_{(-i)}$ is the corresponding block row matrix whose $j$-th block is $Q_{ij} \bfs(j) \bfOmega_{\bar{\pi}(i) \bar{\pi}(j)}$. We first bound $\norm{\bfa_i}$. By the Hanson--Wright estimate \eqref{eqn:hanson-wright}, together with the elementary inequality $\norm{\bfA}_{\Fnorm} \leq \Tr \bfA$ for positive semidefinite $\bfA$, we have
\begin{align*}
    \norm{\bfa_i}^2
    \prec \norm{\bfB_{(-i)}}_{\Fnorm}^2
    & = \sum\nolimits_{j \not= i} 
    \norm{Q_{ij} s_{j}  
    \bfOmega_{\bar{\pi}(i) \bar{\pi}(j)}}^2_{\Fnorm} \\
    & \lesssim \sum\nolimits_{j \not= i} 
    \abs{s_{j}}^2
    \pars[\big]{
    \norm{\bfSigma_{\bar{\pi}(i) \bar{\pi}(i)}}^2_{\Fnorm}
    + \norm{\bfSigma_{\bar{\pi}(j) \bar{\pi}(j)}}^2_{\Fnorm} }
    \big / \theta_{\bar{\pi} (i) \bar{\pi} (j)}^2
    \lesssim \rho^2     
    \sum\nolimits_{j \not= i} 
    \abs{s_{j}}^2
    \leq \rho^2.
\end{align*}
As $\bfa_i$ is independent from $\bfz_i$, we may apply the Bernstein estimate \eqref{eqn:long-bernstein} conditional on $\bfa_i$ to obtain
\begin{equation*}
    \abs{\angles{\bfe_i, \braks{\bfQ \circ \caH (\bfW)} \bfs}}
    = \abs{\angles{\bfz_i, \bfa_i}}
    \prec \norm{\bfa_i} \prec \rho
\end{equation*} 
This proves the desired $\ell_\infty$ estimate \eqref{eqn:general-QW-s}, and hence completes the proof of Lemma \ref{lemma:QW-barU-infnorm}.
\end{proof}

\subsection{Proof of Proposition \ref{prop:dotlamb-barlamb} and Lemma \ref{lemma:inner-prod-dotU-barU}}
\label{subsec:proof-refined-sqrtn}

This section is devoted to the proof of Proposition \ref{prop:dotlamb-barlamb} and Lemma \ref{lemma:inner-prod-dotU-barU}. These two results provide perturbation bounds for the spectral components of $\ddotK$ relative to the unperturbed matrix $\barK$. Recall that we have already established the a priori estimate $\norm{\ddotK - \barK} \prec n \rho$ in \eqref{bound:ddotK-barK}. As mentioned above, Proposition \ref{prop:dotlamb-barlamb} gives a finer control of the spiked eigenvalues of $\ddotK$, improving by a factor of order $\sqrt{n}$ the bound obtained by directly combining \eqref{bound:ddotK-barK} with Weyl's inequality. Similarly, Lemma \ref{lemma:inner-prod-dotU-barU} gives a refined estimate for the spiked eigenvectors $\ddotU_r$. In fact, recall the $\ell_2$ estimate \eqref{bound:l2-dotU-barU}, which follows from \eqref{bound:ddotK-barK} and the Davis--Kahan theorem. A direct application of this estimate only gives
\begin{equation*}
    \norm{\barK (\ddotU_r \ddotfkQ_r - \barU_r)}_{2, \infty}
    \leq
    \norm{\barK}_{2, \infty}
    \cdot \norm{\ddotU_r \ddotfkQ_r - \barU_r}
    \prec
    \sqrt{n} \cdot ({n \rho} / {\barDelta_r}).
\end{equation*}
Therefore, Lemma \ref{lemma:inner-prod-dotU-barU} improves this estimate by removing the factor $\sqrt{n}$ on the r.h.s. The mechanism behind these two refinements is similar to that in Lemma \ref{lemma:u-dotK-barK-u}: instead of relying on the operator-norm control of $\ddotK-\barK$, which needs to cover the worst case for the direction of eigenvectors, one uses the delocalization of $\barU_r$ to exploit fluctuation averaging in the associated bilinear forms. The main complication here is that, we are now concerning spectral components of $\ddotK$, rather than the perturbation $\ddotK-\barK$ itself. Thus, before applying the available fluctuation estimates, we need to express these spectral quantities in terms of the perturbation. This step is accomplished through the resolvent method combined with the contour integral technique. 

To this end, we introduce the resolvents associated with $\ddotK$ and $\ddotK-\barK$. For $z \in \bbc$, set
\begin{equation*}
    \ddotG (z)
    = (\ddotK - z)^{-1},
    \qand
    \bfR (z)
    = (\ddotK - \barK - z)^{-1}.
\end{equation*}
More precisely, $\ddotG(z)$ is defined for $z \notin \operatorname{spec}(\ddotK)$, whereas $\bfR(z)$ is defined for $z \notin \operatorname{spec}(\ddotK-\barK)$.

\begin{proof}[Proof of Proposition \ref{prop:dotlamb-barlamb}]
Recall from Assumption \ref{assump:signal} that $\barDelta_r \gtrsim n^{1 + c_{\ref{assump:signal}}} \rho$. Fix arbitrary constants $\varepsilon \in (0, (c_{\ref{assump:signal}} \vee 1) / 2)$ and $C > 0$. By \eqref{bound:ddotK-barK}, there exists a high-probability event $\Xi_1$ with $\bbp(\Xi_1) \geq 1-n^{-C}$ such that, on $\Xi_1$,
\begin{equation}
    \abs{\ddotlamb_k - \barlamb_k} 
    \leq \norm{\ddotK - \barK}
    \leq n^{1+\varepsilon/2} \rho,
    \qfor k \in \dbraks{r}.
    \label{tmp:a-priori-dotlamb}
\end{equation}
Since $\barlamb_r \geq \barDelta_r \gtrsim n^{1+2\varepsilon}\rho$, it follows that, on $\Xi_1$,
\begin{equation}
    \ddotlamb_k 
    \geq \barlamb_r - \barDelta_r / 4 =: \barlamb_{r-},
    \qfor k \in \dbraks{r}.
    \label{tmp:lower-bound-dotlamb}
\end{equation}
Next, let us introduce
\begin{equation}
    \delta (x) := 
    \sqrt{n} \rho 
    + (n \rho)^2 / x,
    \qfor x > 0.
    \label{def:func-delta}
\end{equation}
We also introduce the union of intervals
\begin{equation*}
    \mathcal{I} := \bigcup\nolimits_{k = 1}^r \mathcal{I}_k
    \qwhere
    \mathcal{I}_k := \braks[\big]{
    \barlamb_k - n^{2 \varepsilon} \delta(\barlamb_k), 
    ~ \barlamb_k + n^{2 \varepsilon} \delta(\barlamb_k)}.
\end{equation*}
Using $\barDelta_r \gtrsim n^{1+2\varepsilon}\rho$ again, it is not difficult to check
\begin{equation}
    x \gtrsim \barDelta_r \gtrsim n^{4 \varepsilon} \delta (x),
    \qfor 
    x \geq \barlamb_{r-}.
    \label{tmp:order-delta}
\end{equation}
In particular, we have $\mathcal{I} \subset [\barlamb_{r-},\infty)$ by construction. We first show that, with high probability, $\ddotlamb_k \in \mathcal{I}$ for every $k \in \dbraks{r}$. Equivalently, we show that $[\barlamb_{r-},\infty)\setminus \mathcal{I}$ is a forbidden region for $\ddotlamb_k$; namely,
\begin{equation}
    \ddotlamb_k \notin 
    [\barlamb_{r-}, \infty) \backslash \mathcal{I},
    \qfor k \in \dbraks{r}.
    \label{tmp:forbidden-region}
\end{equation}

For what follows, we always work on the high-probability event $\Xi_1$. Using \eqref{tmp:a-priori-dotlamb} and \eqref{tmp:lower-bound-dotlamb}, we find that $\ddotlamb_k \notin \operatorname{spec}(\ddotK-\barK)$ for all $k \in \dbraks{r}$. On the other hand, for any $x \notin \operatorname{spec}(\ddotK-\barK)$, we have
\begin{equation*}
    \det (\ddotK - x) = 0
    \quad \Longleftrightarrow \quad
    \det \braks[\big]{ \barU \barLamb \barU^\top 
    + (\ddotK - \barK - x) } = 0 
    \quad \Longleftrightarrow \quad
    \det \braks[\big]{ \barLamb^{-1} + \barU^\top \bfR (x) \barU } = 0 .   
\end{equation*}
Here the first equivalence uses the spectral decomposition of $\barK$ in \eqref{eqn:spec-decomp-barK}, while the second follows from $\det(\ddotK-\barK-x)\neq 0$ and Sylvester's determinant identity $\det (\bfI + \mathbf{A} \mathbf{B}) = \det (\bfI + \mathbf{B} \mathbf{A})$. Therefore, to prove \eqref{tmp:forbidden-region}, it suffices to show that
\begin{equation}
    \det \braks[\big]{ \barU^\top \bfR (x) \barU + \barLamb^{-1} } \not= 0,
    \qfor
    x \in [\barlamb_{r-}, \infty) \backslash \mathcal{I}.
    \label{tmp:det-nonsingular}
\end{equation}
Before proving \eqref{tmp:det-nonsingular}, we first record several elementary estimates for the resolvent $\bfR(z)$. The resolvent $\bfR(z)$ is well-defined whenever $\abs{z} \geq \barlamb_{r-}$. Indeed, on the event $\Xi_1$, we have $\dist (z, \operatorname{spec} (\ddotK - \barK)) \geq \abs{z} / 2$ and hence
\begin{equation}
    \norm{\bfR(z)} \leq 2 / \abs{z}, 
    \qfor 
    \abs{z} \geq \barlamb_{r-}.
    \label{tmp:norm-resolvent}
\end{equation}
Using the resolvent expansion, we obtain
\begin{equation}
    \bfR(z)
    = - \bfI / z
    - (\ddotK - \barK) / z^2
    + (\ddotK - \barK)\bfR(z)(\ddotK - \barK) / z^2  .  
    \label{tmp:resolvent-expand-R}
\end{equation}
We control the last term on the r.h.s. using \eqref{tmp:a-priori-dotlamb} together with \eqref{tmp:norm-resolvent}, and control the second term using Lemma \ref{lemma:u-dotK-barK-u}. It turns out that there exists another high-probability event $\Xi_2$, with $\bbp(\Xi_2) \geq 1 - n^{-C}$, such that on $\Xi := \Xi_1 \cap \Xi_2$, we have
\begin{equation}
    \barU^\top \bfR(z) \barU
    = - \bfI / z
    + \bigO[\big]{ n^\varepsilon
    \delta (\abs{z}) / \abs{z}^2 },
    \qfor 
    \abs{z} \geq \barlamb_{r-}.
    \label{tmp:URU-without-first}
\end{equation}
Since the matrices in \eqref{tmp:URU-without-first} have fixed dimension, the big $O$ notation may be interpreted either entrywise or in operator norm. 

We now return to the proof of \eqref{tmp:det-nonsingular}. Let $x \in [\barlamb_{r-},\infty) \setminus \mathcal{I}$. By \eqref{tmp:URU-without-first}, we have, on the event $\Xi$,
\begin{equation*}
    \barLamb^{-1} + \barU^\top \bfR (x) \barU 
    = \barLamb^{-1} - \bfI / x 
    + \bigO[\big]{n^{\varepsilon} \delta (x) / x^2} .
\end{equation*}
Therefore, to prove \eqref{tmp:det-nonsingular}, it suffices to show that, on the event $\Xi$,
\begin{equation}
    \min_{k \in \dbraks{m}} 
    \abs{1 / \barlamb_k - 1 / x} \gtrsim n^{2 \varepsilon} \delta (x) / x^2
    \qfor
    x \in [\barlamb_{r-}, \infty) \backslash \mathcal{I}.
    \label{tmp:lower-reciprocal-diff}
\end{equation}
We prove \eqref{tmp:lower-reciprocal-diff} by considering three cases.
\begin{enumerate}[label = (\roman*)]
    \item Let $k \in \dbraks{m} \setminus \dbraks{r}$. Then $x - \barlamb_k \geq \barDelta_r/2$, and therefore
    \begin{equation*}
        \abs{1 / \barlamb_k - 1 / x} 
        \gtrsim \barDelta_r / (\barlamb_k x)
        \geq \barDelta_r / x^2
        \gtrsim n^{4 \varepsilon} \delta (x) / x^2.
    \end{equation*}
    \item Let $k \in \dbraks{r}$ and suppose that $\barlamb_k \geq 2 x$. Then 
    \begin{equation*}
        \abs{1 / \barlamb_k - 1 / x} 
        \geq 1 / (2 x) 
        \gtrsim \barDelta_r / x^2
        \gtrsim n^{4 \varepsilon} \delta (x) / x^2 .
    \end{equation*}
    \item Let $k \in \dbraks{r}$ and suppose that $\barlamb_k < 2 x$. Since $x \notin \mathcal{I}_k$, the definition of $\mathcal{I}_k$ gives
    \begin{equation*}
        \abs{1 / \barlamb_k - 1 / x} 
        \gtrsim \abs{x - \barlamb_k} / x^2
        \geq n^{2 \varepsilon} \delta (\barlamb_k) / x^2
        \gtrsim n^{2 \varepsilon} \delta (x) / x^2,
    \end{equation*}
    where the last step follows directly from the definition of $\delta$ and the relation $\barlamb_k < 2x$.
\end{enumerate}
This proves \eqref{tmp:lower-reciprocal-diff}, and hence \eqref{tmp:det-nonsingular}. Consequently, on the high-probability event $\Xi_1\cap\Xi_2$, none of the first $r$ eigenvalues of $\ddotK$ lies in the forbidden region $[\barlamb_{r-},\infty) \setminus \mathcal{I}$. Equivalently, $\ddotlamb_k \in \mathcal{I}$ for $k \in \dbraks{r}$.


The preceding argument shows that each of the first $r$ eigenvalues of $\ddotK$ lies in the union $\mathcal{I}$, but it does not identify the corresponding interval. We next refine this localization by showing that $\ddotlamb_k$ lies in $\mathcal{I}_k$, up to possible overlaps with neighboring intervals. To this end, introduce the disks
\begin{equation*}
    \bbb_k := \{ z \in \bbc: \abs{z - \barlamb_k} \leq n^{2 \varepsilon} \delta(\barlamb_k) \},
    \qfor k \in \dbraks{r}.
\end{equation*}
Thus $\mathbb{B}_k \cap \bbr = \mathcal{I}_k$. Fix $k\in\dbraks{r}$, and let $\mathcal{L}_k \subset \dbraks{r}$ denote the set of indices whose intervals intersect $\mathcal{I}_k$:
\begin{equation*}
    \mathcal{L}_{k} 
    := \{ \ell \in  \dbraks{r}: \mathcal{I}_{\ell} \cap \mathcal{I}_k \not= \varnothing \}.
\end{equation*}
The set $\mathcal{L}_k$ is a consecutive block of indices containing $k$. Indeed, according to \eqref{tmp:order-delta}, the map $\lambda \mapsto \lambda + n^{2 \varepsilon} \delta (\lambda)$ is increasing for $\lambda \geq \barlamb_{r-}$, so overlaps can occur only between neighboring intervals in the ordered sequence of eigenvalues. Now consider the contour
\begin{equation*}
    \Gamma_k = \partial \pars[\Big]{
    \bigcup\nolimits_{\ell \in \mathcal{L}_{k}} 
    \bbb_\ell }.
\end{equation*}
By construction, the contour $\Gamma_k$ encloses precisely the eigenvalues $\{ \barlamb_\ell \}_{\ell \in \mathcal{L}_{k}}$, and it does not enclose any other eigenvalue of $\barK$; that is, it excludes $\{ \barlamb_\ell \}_{\ell \in \dbraks{m} \backslash \mathcal{L}_{k}}$. Now recall that the relevant eigenvalues of $\barK$ and $\ddotK$ can be characterized as zeros of
\begin{equation*}
    \bar{\varphi} (z) := \det \braks[\big]{\barLamb^{-1} - \bfI / z}.
    \qand
    \ddot{\varphi} (z) := \det \braks[\big]{\barLamb^{-1} + \barU^\top \bfR (z) \barU}.
\end{equation*}
The contour $\Gamma_k$ encloses exactly $|\mathcal{L}_k|$ zeros of $\bar{\varphi}$. We claim that it encloses the same number of zeros of $\ddot{\varphi}$. By Rouch\'{e}'s theorem, it suffices to prove that
\begin{equation*}
    \abs{\bar{\varphi} (z)}
    > \abs{\ddot{\varphi} (z) - \bar{\varphi} (z)},
    \qfor
    z \in \Gamma_k.
\end{equation*}
Applying the expansion \eqref{tmp:URU-without-first} in the definition of $\ddot{\varphi}(z)$, together with the multilinear expansion of the determinant, it is enough to show that,
\begin{equation*}
    \prod\nolimits_{\ell=1}^m 
    \abs{1 / \barlamb_{\ell} - 1 / z} 
    \gtrsim n^{\varepsilon} 
    \sum\nolimits_{\abs{\mathcal{A}} < m}
    \pars[\big]{n^{\varepsilon} \delta (\abs{z}) / \abs{z}^2}^{m - \abs{\mathcal{A}}}
    \prod\nolimits_{\ell \in \mathcal{A}}
    \abs{1 / \barlamb_{\ell} - 1 / z},
    \qfor
    z \in \Gamma_k.
\end{equation*}
where the sum runs over all subsets $\mathcal{A} \subset \dbraks{m}$ with $\abs{\mathcal{A}} < m$. In particular, this follows once we have
\begin{equation*}
    \min\nolimits_{\ell \in \dbraks{m}} 
    \abs{1 / \barlamb_{\ell} - 1 / z} \gtrsim n^{2 \varepsilon} \delta (\abs{z}) / \abs{z}^2,
    \qfor
    z \in \Gamma_k
\end{equation*}
This estimate is proved on the event $\Xi$ by the same argument as that used for \eqref{tmp:lower-reciprocal-diff}, and we omit the details. Therefore, on the high-probability event $\Xi$, Rouch\'{e}'s theorem implies that, for each $k \in \dbraks{r}$, the contour $\Gamma_k$ encloses exactly $\abs{\mathcal{L}_k}$ zeros of $\ddot{\varphi}$. Consequently, on the event $\Xi$,
\begin{equation} 
    \ddotlamb_k \in \bigcup\nolimits_{\ell \in \mathcal{L}_k} \mathcal{I}_{\ell}.
    \label{tmp:dotlamb-location}
\end{equation}
It remains to estimate the size of the union on the r.h.s. Fix
$\ell \in \mathcal{L}_k$. We consider two cases.
\begin{enumerate}[label = (\roman*)]
    \item Suppose first that $\ell>k$, so that $\barlamb_\ell \leq \barlamb_k$. Since $\mathcal{I}_\ell \cap \mathcal{I}_k \neq \varnothing$, the construction of the intervals gives
    \begin{equation*}
        \barlamb_{\ell} + n^{2 \varepsilon} \delta (\barlamb_{\ell})
        \geq \barlamb_k - n^{2 \varepsilon} \delta (\barlamb_k).
    \end{equation*}
    Combining \eqref{tmp:order-delta} with the preceding display yields
    \begin{equation*}
        \barlamb_{\ell} \geq \barlamb_k 
        - n^{2 \varepsilon} \delta (\barlamb_{\ell})
        - n^{2 \varepsilon} \delta (\barlamb_k)
        \geq \barlamb_k - n^{-\varepsilon} (\barlamb_k + \barlamb_\ell)
        \geq \barlamb_k / 2.
    \end{equation*}
    Consequently, using the monotonicity of $\delta (x)$, the left endpoint of $\mathcal{I}_\ell$ satisfies
    \begin{equation*}
        \barlamb_{\ell} - n^{2 \varepsilon} \delta (\barlamb_{\ell}) 
        \geq \barlamb_k - n^{2 \varepsilon} \delta (\barlamb_k) - 2 n^{2 \varepsilon} \delta (\barlamb_{\ell}) 
        \geq \barlamb_k - n^{4 \varepsilon} \delta (\barlamb_k).
    \end{equation*}
    \item Suppose next that $\ell<k$, so that $\barlamb_\ell \geq \barlamb_k$. Again using $\mathcal{I}_\ell \cap \mathcal{I}_k \neq \varnothing$, we have
    \begin{equation*}
        \barlamb_{\ell} + n^{2 \varepsilon} \delta (\barlamb_{\ell})
        \leq \barlamb_k + n^{2 \varepsilon} \delta (\barlamb_k) 
        + 2 n^{2 \varepsilon} \delta (\barlamb_{\ell})
        \leq \barlamb_k + n^{4 \varepsilon} \delta (\barlamb_k).
    \end{equation*}
\end{enumerate}
Combining the two cases, we obtain
\begin{equation*}
    \bigcup\nolimits_{\ell \in \mathcal{L}_k} \mathcal{I}_{\ell}
    \subset
    \braks[\big]{\barlamb_k - n^{4 \varepsilon} \delta (\barlamb_k), 
    \barlamb_k + n^{4 \varepsilon} \delta (\barlamb_k)}.
\end{equation*}
Together with \eqref{tmp:dotlamb-location}, this yields
\begin{equation*}
    \bbp \curls[\big]{ \abs{\ddotlamb_k - \barlamb_k} 
    \leq n^{4 \varepsilon} \delta (\barlamb_k)} \geq 1 - 2 n^{-C}.
\end{equation*}
Since $\varepsilon$ and $C$ are arbitrary, this proves Proposition \ref{prop:dotlamb-barlamb}. In fact, the argument gives a slightly stronger form of the result, since $\barlamb_k$ may be much larger than $\barDelta_r$.
\end{proof}

\begin{proof}[Proof of Lemma \ref{lemma:inner-prod-dotU-barU}]
We use a contour integral argument. Fix constants $\varepsilon \in (0, (c_{\ref{assump:signal}} \wedge 1) / 2)$ and $C > 0$ as in the proof of Lemma \ref{prop:dotlamb-barlamb}, and let the high-probability event $\Xi$ be defined as there. In what follows, we work on $\Xi$, so the argument is deterministic. Consider the contour
\begin{equation*}
    \Gamma := \partial \braks[\Big]{
    \bigcup\nolimits_{k=1}^r \{ z \in \bbc: \abs{z - \barlamb_k} \leq \barDelta_r / 4 \} }.
\end{equation*}
By the analysis in the proof of Lemma \ref{prop:dotlamb-barlamb}, it is not difficult to check that the contour $\Gamma$ encloses the first $r$ eigenvalues $\{ \ddotlamb_k \}_{k=1}^r$ of the matrix $\ddotK$, and excludes all remaining eigenvalues $\{ \ddotlamb_k \}_{k \in \dbraks{n} \backslash \dbraks{r}}$. Recalling the spectral decomposition of $\ddotK$, Cauchy's integral formula therefore gives
\begin{equation*}
    \ddotU_r \ddotU_r^\top
    = - \frac{1}{2\bar{\pi} \rmi} \sum_{i=1}^n \oint_{\Gamma} 
    \frac{\ddotu_i \ddotu_i^\top}{\ddotlamb_i - z} \rmd z
    = - \frac{1}{2\bar{\pi} \rmi} \oint_{\Gamma} \ddotG (z) \rmd z.
\end{equation*}
Fix arbitrary $i \in \dbraks{n}$ and $k \in \dbraks{r}$. Then the $(i,k)$-th entry of $\barK \ddotU_r \ddotfkQ_r$ can be written as
\begin{equation} 
    \angles{\bfe_i, \barK \ddotU_r \ddotfkQ_r \bfe_k}
    = \angles{\barK \bfe_i, \ddotU_r \ddotU_r^\top \baru_k}
    = - \frac{1}{2\bar{\pi} \rmi} \oint_{\Gamma}
    \angles{\barK \bfe_i, \ddotG(z) \baru_k} \rmd z.
    \label{barKm_hatb_hatu}
\end{equation}
We next expand the resolvent $\ddotG(z)$ appearing on the r.h.s. Using the spectral decomposition $\barK = \barU \barLamb \barU^\top$ and the Woodbury matrix identity, we obtain
\begin{equation*}
    \ddotG(z)
    = (\barU \barLamb \barU^\top + \ddotK - \barK - z)^{-1} 
    = \bfR(z) - \bfR(z) \barU \braks[\big]{\barLamb^{-1} + \barU^\top \bfR(z) \barU}^{-1}\barU^\top \bfR(z).    
\end{equation*}
Moreover, by \eqref{tmp:a-priori-dotlamb} and $\barlamb_{r-} \gtrsim \barDelta_r \gtrsim n^{1+2\varepsilon}\rho$, the contour $\Gamma$ does not enclose any eigenvalue of $\ddotK - \barK$. Hence
\begin{equation*}
    \frac{1}{2\bar{\pi} \rmi} \oint_{\Gamma} \bfR (z) \rmd z = 0.
\end{equation*}
In particular, we arrive at the Green function representation
\begin{align}
\begin{split}
    \angles{\bfe_i, \barK \ddotU_r \ddotfkQ_r \bfe_k}
    & = \frac{1}{2\bar{\pi} \rmi} \oint_{\Gamma}
    \angles[\big]{\barU^\top \bfR(z) \barK \bfe_i, 
    \braks[\big]{\barLamb^{-1} + \barU^\top \bfR(z) \barU}^{-1}
    \barU^\top \bfR(z) \baru_k} \rmd z \\
    & =: \frac{1}{2\bar{\pi} \rmi} \oint_{\Gamma}
    \angles{\bfa (z), \mathbf{M} (z) \bfb(z) } \rmd z.    
\end{split} \label{tmp:green-func-representation}
\end{align}

We next analyze the vectors $\bfa(z), \bfb(z) \in \bbc^m$ and the matrix $\mathbf{M}(z) \in \bbr^{m \times m}$ for $z \in \Gamma$. The goal is to separate, for each quantity, a leading contribution from a smaller remainder. More precisely, we split
\begin{equation*}
    \mathbf{M} (z) = \mathbf{M}_0 (z) + \mathbf{M}_1 (z) ,
    \qquad
    \bfa (z) = \bfa_0 (z) + \bfa_1 (z) ,
    \qquad
    \bfb (z) = \bfb_0 (z) + \bfb_1 (z) ,
\end{equation*}
where the leading terms and remainders satisfy
\begin{subequations} \label{tmp:split-MBA}
\begin{alignat}{3}
    \mathbf{M}_0 (z)
    & = {z\barLamb} (z-\barLamb)^{-1}
    = \bigO{\abs{z}^2 / \barDelta_r},
    &\qquad \qquad
    \mathbf{M}_1 (z)
    & = \bigO[\big]{n^\varepsilon \abs{z}^2 \delta (\abs{z}) / \barDelta_r^2} 
    \label{tmp:split-bfM} \\
    \bfa_0 (z)
    & = - \barU^\top \barK \bfe_i / z
    = \bigO{n^{1/2} / \abs{z}},
    &\qquad \qquad
    \bfa_1 (z)
    & = \bigO[\big]{ n^{1/2 + \varepsilon} \delta (\abs{z}) / \abs{z}^2 } 
    \label{tmp:split-bfa} \\
    \bfb_0 (z)
    & = - \bfe_k / z
    = \bigO{1 / \abs{z}},
    &\qquad \qquad
    \bfb_1 (z)
    & = \bigO[\big]{ n^{\varepsilon} \delta (\abs{z}) / \abs{z}^2 }
    \label{tmp:split-bfb}.
\end{alignat} 
\end{subequations}
Since $\abs{z} \geq \barlamb_{r-}$ for $z \in \Gamma$, the estimate \eqref{tmp:order-delta} implies that the upper bound for $\mathbf{M}_1(z)$ is smaller than the corresponding upper bound displayed for $\mathbf{M}_0(z)$. In particular, $\mathbf{M}(z)$ is controlled by the upper bound displayed for $\mathbf{M}_0(z)$. The same observation applies to $\bfa(z)$ and $\bfb(z)$.

We now derive the bounds in \eqref{tmp:split-MBA}. We begin with $\mathbf{M}(z)$. Applying the resolvent identity gives
\begin{align}
\begin{split}
    \mathbf{M} (z)
    & = \braks[\big]{ \barLamb^{-1} - \bfI / z
    + \barU^\top \bfR(z) \barU + \bfI / z }^{-1} \\
    & = \pars[\big]{\barLamb^{-1} - \bfI / z}^{-1}
    + \mathbf{M} (z)
    \braks[\big]{ \barU^\top \bfR(z) \barU + \bfI / z }
    \pars[\big]{\barLamb^{-1} - \bfI / z}^{-1}    
    =: \mathbf{M}_0 (z) + \mathbf{M}_1 (z).
\end{split} \label{tmp:resolvent-expand-invLamb}
\end{align}
Here
\begin{equation*}
    \norm{\mathbf{M}_0 (z)}
    = \norm{{z\barLamb} (z-\barLamb)^{-1}}
    \leq \abs{z} + \abs{z}^2 \norm{(z-\barLamb)^{-1}}
    \lesssim \abs{z} + \abs{z}^2 / \barDelta_r,
    \qfor
    z \in \Gamma.
\end{equation*}
Indeed, by the construction of $\Gamma$, we have $\abs{z - \barlamb_k} \geq \barDelta_r / 4$ for every $k \in \dbraks{r}$. On the other hand, since $\barlamb_r - \barlamb_{r+1} \geq \barDelta_r$, we also have $\abs{z - \barlamb_k} \geq 3\barDelta_r / 4$ for every $k \in \dbraks{m} \backslash \dbraks{r}$. Hence $\norm{(z-\barLamb)^{-1}} \lesssim 1 / \barDelta_r$. Since $\abs{z} \gtrsim \barDelta_r$ on $\Gamma$, the first bound in \eqref{tmp:split-bfM} follows. Next, taking norms in \eqref{tmp:resolvent-expand-invLamb} and using the bound on $\mathbf{M}_0(z)$ together with \eqref{tmp:URU-without-first}, we obtain
\begin{equation*}
    \norm{\mathbf{M} (z)}
    \lesssim \abs{z}^2 / \barDelta_r
    + \pars[\big]{ n^\varepsilon \delta (\abs{z}) / \barDelta_r }
    \cdot \norm{\mathbf{M} (z)},
    \qfor 
    z \in \Gamma.
\end{equation*}
By \eqref{tmp:order-delta}, the prefactor of $\norm{\mathbf{M}(z)}$ on the r.h.s. is bounded by $n^{-2\varepsilon}$. Therefore, 
\begin{equation*}
    \mathbf{M} (z) = \bigO{\abs{z}^2 / \barDelta_r}.
\end{equation*}
Substituting this bound back into the definition of $\mathbf{M}_1(z)$ in \eqref{tmp:resolvent-expand-invLamb}, we get
\begin{equation*}
    \mathbf{M}_1 (z)
    = \mathbf{M} (z)
    \braks[\big]{ \barU^\top \bfR(z) \barU + \bfI / z }
    \mathbf{M}_0 (z)
    = \bigO[\big]{n^\varepsilon \abs{z}^2 \delta (\abs{z}) / \barDelta_r^2},
    \qfor 
    z \in \Gamma.
\end{equation*}
This proves the second estimate in \eqref{tmp:split-bfM}. Next, \eqref{tmp:split-bfa} and \eqref{tmp:split-bfb} can be obtained in the same way as \eqref{tmp:URU-without-first}, by applying the resolvent expansion of $\bfR(z)$ in \eqref{tmp:resolvent-expand-R}. Indeed, \eqref{tmp:split-bfb} is simply the columnwise version of \eqref{tmp:URU-without-first}, specialized to the $k$-th column of $\barU^\top \bfR(z) \barU$. Hence, it remains only to discuss $\barU^\top \bfR(z) \barK \bfe_i$. The argument is basically the same, except that $\baru_k$ is replaced by $\barK \bfe_i$. Therefore, we use the more general estimate \eqref{eqn:average-dotK-barK}, mentioned in the proof of Lemma \ref{lemma:u-dotK-barK-u}, to obtain
\begin{equation*}
    \angles{\baru_\ell, (\ddotK - \barK) \barK \bfe_i} 
    \prec \sqrt{n} \rho \cdot \norm{\barK \bfe_i}
    \prec n \rho.
\end{equation*}
Here we used the elementary estimates $\norm{\barK \bfe_i}_{\infty} \asymp 1$ and $\norm{\barK \bfe_i} \asymp \sqrt{n}$, which follow directly from the definition of $\barK$ in \eqref{def:bar-Kij}. Hence the condition required for \eqref{eqn:average-dotK-barK} is satisfied after a simple normalization. This verifies the remaining estimates in \eqref{tmp:split-MBA}.

Applying the estimates in \eqref{tmp:split-MBA} to the integrand in \eqref{tmp:green-func-representation}, we obtain
\begin{equation*}
    \angles{\bfe_i, \barK \ddotU_r \ddotfkQ_r \bfe_k}
    = \frac{1}{2\bar{\pi} \rmi} \oint_{\Gamma}
    \angles{\bfa_0 (z), \mathbf{M}_0 (z) \bfb_0 (z) } \rmd z
    + \frac{1}{2\bar{\pi} \rmi} \oint_{\Gamma}
    \bigO[\big]{ n^{1/2 + \varepsilon} \delta (\abs{z}) / \barDelta_r^2 }
    \rmd z.
\end{equation*}
Since $\abs{z} \gtrsim \barDelta_r$ on $\Gamma$ and the length of $\Gamma$ is $\bigO{\barDelta_r}$, the error term on the r.h.s. is bounded by $\bigO[\big]{ n^{1/2+\varepsilon} \delta(\barDelta_r) / \barDelta_r }$. It remains to evaluate the leading term. By the definitions of $\bfa_0(z)$, $\mathbf{M}_0(z)$, and $\bfb_0(z)$,
\begin{equation*}
    \frac{1}{2\bar{\pi} \rmi} \oint_{\Gamma}
    \angles{\bfa_0 (z), \mathbf{M}_0 (z) \bfb_0 (z) } \rmd z
    = \frac{1}{2\bar{\pi} \rmi} \oint_{\Gamma}
    \frac{\barlamb_k \angles{\bfe_i, \barK \barU \bfe_k }}
    {z (z - \barlamb_k)} \rmd z
    = \angles{\bfe_i, \barK \barU \bfe_k },
\end{equation*}
where we used that $\Gamma$ encloses $\barlamb_k$. Combining the preceding estimates, we have shown that, on the event $\Xi$,
\begin{equation*}
    \abs{\angles{\bfe_i, (\barK \ddotU_r \ddotfkQ_r - \barK \barU) \bfe_k}}
    \leq n^{\varepsilon} ( n \rho / \barDelta_r
    + n^{5/2} \rho^2 / \barDelta_r^2 ).
\end{equation*}
This completes the proof of Lemma \ref{lemma:inner-prod-dotU-barU}.
\end{proof}

\section{Perturbation analysis for normalized Laplacian matrices}
\label{sec:proof-laplacian}

In analogy with the kernel setting, we introduce the following intermediate approximation to the normalized Laplacian matrix:
\begin{equation}
    \ddotL := n \ddotD^{-1/2}\ddotK\ddotD^{-1/2} 
    \qwhere
    \ddotD = \diag (\ddot{D}_{1}, \cdots, \ddot{D}_{n})
    \qwith
    \ddot{D}_{i} = \sum\nolimits_{j=1}^n \ddot{K}_{ij}.
    \label{def:dotL-dotD}
\end{equation}
Denote the spectral decomposition of $\ddotL$ by
\begin{equation*}
    \ddotL = \ddotV \ddotGamma \ddotV^\top
    = \sum\nolimits_{i=1}^n \ddotgamma_i \ddotv_i \ddotv_i^\top.
\end{equation*}
Fix $r\in\dbraks{m}$. As before, we focus on the leading $r$ eigenvalues and their associated eigenvectors, denoted by
\begin{equation*}
    \ddotGamma_r = \diag (\ddotgamma_{1}, \cdots, \ddotgamma_{r})
    \qand
    \ddotV_r = [\ddotv_{1}, \cdots, \ddotv_{r}].
\end{equation*}
We next establish perturbation results for the normalized Laplacian matrices that parallel their kernel counterparts in Propositions \ref{prop:norm-K-barK-dotK}--\ref{prop:evec-two-stage}. The resulting convergence rates are directly analogous to those in the kernel setting, with the eigengap $\Delta_r(\barLamb)$ replaced by its normalized-Laplacian counterpart $\Delta_r(\barGamma)$.

\begin{proposition}[two-stage norm bounds] 
\label{prop:norm-L-barL-hatL}
Suppose that Assumptions \ref{assump:multi-scale}, \ref{assump:concentration}, \ref{assump:eigengap-Laplacian} and \ref{assump:tech-const} hold. Let $\bfL$, $\ddotL$ and $\barL$ be normalized Laplacian matrices constructed using an admissible collection of bandwidths $\{ \hbar_t \}_{t=1}^T$. Then,
\begin{subequations}
\begin{align}
    \norm{\ddotL - \barL} 
    & \prec 
    n \rho, 
    \label{bound:L-dominant} \\
    \norm{\bfL - \ddotL} 
    & \prec
    1 + \sqrt{n}\phi + n\rho^2.
    \label{bound:L-subleading}
\end{align}
\end{subequations}
\end{proposition}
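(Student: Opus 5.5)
The plan is to reduce both bounds to the kernel-level estimates in Proposition \ref{prop:norm-K-barK-dotK}, together with entrywise control of the degree matrices. The first step is to get the degree scales. Since $\bar K_{ij}\ge 0$ and $\alpha_t\ge c_{\ref{assump:tech-const}}$, one has $\bar K_{ij}\gtrsim 1$ whenever $\caS_{k\ell}(0)$ contains a $t$ with $\hbar_t\gtrsim\theta_{k\ell}$. This holds at least for the diagonal blocks $k=\ell$, by admissibility. Hence $\bar D_i\asymp n$, using $n_k\asymp n$ and $\bar K_{ij}\le 1$.

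Next I compare degrees row by row. The identity $\ddot D_i-\bar D_i=\sum_j(\ddot K_{ij}-\bar K_{ij})$ is a row sum. The entrywise bound \eqref{bound-entry-Kdot-Kbar} gives only $n\rho$, which suffices: it yields $\ddot D_i=\bar D_i(1+\oprec{\rho})$ and so $\norm{\ddotD^{-1/2}-\barD^{-1/2}}\prec \rho\, n^{-1/2}$. Better, the fluctuation-averaging estimate \eqref{eqn:average-dotK-barK} with $\bfs_1=\bfe_i\sqrt{n}$ could be used, but crude bounds may suffice. For $D_i-\ddot D_i=\bfe_i^\top(\bfK-\ddotK)\bfone$, I will use the decomposition \eqref{eqn:diff-K-Kdot}. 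The $\bfR,\bfR'$ parts contribute $\oprec{n\rho^2}$ and $\diag(\ddotK)$ contributes $\oprec{1}$. The $W$ part $\sum_{j\ne i}Q_{ij}W_{ij}$ is $\oprec{\sqrt n\rho}$ by \eqref{eqn:general-QW-s} applied to $\bfs=\bfone/\sqrt n$. So $D_i/\ddot D_i-1=\oprec{n^{-1}+\rho/\sqrt n+\rho^2}$, uniformly in $i$.

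For \eqref{bound:L-dominant}, I will write
\[
\ddotL-\barL=n\ddotD^{-1/2}(\ddotK-\barK)\ddotD^{-1/2}+n(\ddotD^{-1/2}-\barD^{-1/2})\barK\ddotD^{-1/2}+n\barD^{-1/2}\barK(\ddotD^{-1/2}-\barD^{-1/2}).
\]
The first term is $\prec n\cdot n^{-1}\cdot n\rho$. Each of the other two is bounded by $n\cdot\rho n^{-1/2}\cdot n\cdot n^{-1/2}=n\rho$, using $\norm{\barK}\lesssim n$. The subleading bound \eqref{bound:L-subleading} follows from the same three-term telescoping with $(\bfK,\bfD)$ versus $(\ddotK,\ddotD)$. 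The first term is $n\bfD^{-1/2}(\bfK-\ddotK)\bfD^{-1/2}$, which is $\prec 1+\sqrt n\phi+n\rho^2$ by \eqref{bound:bfK-ddotK} once $D_i\asymp n$ holds with high probability. The degree-correction terms are $\prec n\cdot\norm{\ddotK}\cdot n^{-1}(n^{-1}+\rho/\sqrt n+\rho^2)\lesssim 1+\sqrt n\rho+n\rho^2$, using $\norm{\ddotK}\prec n$. Since $\sqrt n\rho\le 1+n\rho^2$, this matches the claim.

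I expect the main obstacle to be the degree fluctuation $D_i-\ddot D_i$, in particular the $W$-term. The naive entrywise estimate gives $\sqrt n\cdot n\rho$ per row, which would destroy the $1+\sqrt n\phi$ rate. This is why I plan to invoke Lemma \ref{lemma:QW-barU-infnorm}'s general form \eqref{eqn:general-QW-s} here. A secondary subtlety is that $D_i$ could come close to zero on some small event; that is handled by working on the high-probability event where $D_i\ge cn$, together with the definition of $\prec$.
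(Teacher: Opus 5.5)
Your proposal is correct and follows essentially the same route as the paper, which likewise proves $\norm{\ddotD-\barD}\prec n\rho$ and $\norm{\bfD-\ddotD}\prec 1+n\rho^2$, uses $\bar K_{ij}\gtrsim 1$ to get all degrees of order $n$, and then applies the same three-term telescoping. The paper handles $\sum_{j\neq i}Q_{ij}W_{ij}=\angles{\bfz_i,\bfa_i}$ by redoing the conditional Bernstein/Hanson--Wright computation that you obtain from \eqref{eqn:general-QW-s}, and it differs from you only in which of $\bfK$, $\ddotK$, $\barK$ sits in the degree-correction terms. Your parenthetical idea of applying \eqref{eqn:average-dotK-barK} to $\ddot D_i-\bar D_i$ would not help: that estimate needs delocalized test vectors, and $\zeta_i$ and the $\bfz_i$-part of $\Upsilon_{ij}$ enter every summand of the row sum, so $n\rho$ is the genuine order. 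You rightly do not rely on it.
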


\begin{proposition}[eigenvalue perturbation] 
\label{prop:spike-L}
Under the same setup as in Proposition \ref{prop:norm-L-barL-hatL},
\begin{equation}
    \norm{\ddotGamma_r - \barGamma_r}    
    \prec 
    \sqrt{n}\rho + \frac{n^2\rho^2}{\Delta_r (\barGamma)}.
\end{equation}
\end{proposition}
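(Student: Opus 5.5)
The plan is to repeat the resolvent and Rouch\'e argument from the proof of Proposition \ref{prop:dotlamb-barlamb}, with $\ddotK,\barK$ replaced by $\ddotL,\barL$. The only genuinely new ingredient is a fluctuation-averaging bound for the Laplacian perturbation, namely the analogue of Lemma \ref{lemma:u-dotK-barK-u}:
\begin{equation*}
    \abs{\angles{\bfs_1,(\ddotL-\barL)\bfs_2}}\prec\sqrt{n}\rho,
    \qquad \norm{\bfs_1}_\infty\vee\norm{\bfs_2}_\infty\lesssim n^{-1/2}.
\end{equation*}
Given this bound and the a priori estimate \eqref{bound:L-dominant}, the arguments based on the forbidden region, the expansion \eqref{tmp:resolvent-expand-R}, the analogue of \eqref{tmp:URU-without-first} with $\barV,\barGamma$, and Rouch\'e's theorem on the contours $\Gamma_k$ go through verbatim. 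Here $\delta(x)=\sqrt{n}\rho+(n\rho)^2/x$ as before, and Assumption \ref{assump:eigengap-Laplacian} supplies $\Delta_r(\barGamma)\gtrsim n^{1+c}\rho$. The rank of $\barL$ is at most $m$, and $\barL=\barV\barGamma\barV^\top$ with $\barV$ delocalized by \eqref{eqn:barv-piecewise}, so the determinant identity and Sylvester's identity apply unchanged.

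The main work is to linearize $\ddotL-\barL$. Write $\ddotD=\barD+\bfE$, where $E_i=\sum_j(\ddot K_{ij}-\bar K_{ij})$. From \eqref{eqn:diff-dotK-barK} and Lemma \ref{lemma:xi-control}, each $E_i$ is a sum over $j$ of terms $Q_{ij}(\zeta_i+\zeta_j+2\Upsilon_{ij})$. The $\zeta_i$ term and the $\Upsilon_{ij}$ term linear in $\bfz_i$ give $\oprec{n\rho}$. The remaining parts, $\sum_j Q_{ij}\zeta_j$ and the $\Upsilon_{ij}$ contribution linear in $\bfz_j$, are averages that are $\oprec{\sqrt n\rho}$ by Lemma \ref{lemma:long-concentration}. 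Thus $E_i/\bar D_i=\oprec{\rho}$, because $\bar D_i\asymp n$: the entries of $\barK$ are bounded, and they are bounded below by Assumption \ref{assump:tech-const} together with the fact that $\caS_{k\ell}(0)\neq\varnothing$ for diagonal blocks. A Taylor expansion of $x\mapsto x^{-1/2}$ then gives
\begin{equation*}
    \ddotL-\barL = n\barD^{-1/2}(\ddotK-\barK)\barD^{-1/2}-\tfrac12(\bfE\barD^{-1}\barL+\barL\barD^{-1}\bfE)+\bfR_L,
\end{equation*}
where the second-order remainder $\bfR_L$ has entries $\oprec{\rho^2}$. The first term is a blockwise-constant rescaling of $\ddotK-\barK$, so the bilinear bound \eqref{eqn:average-dotK-barK} applies after absorbing the $\barD^{-1/2}$ factors into $\bfs_1,\bfs_2$.

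For the second term, note that $\barL\bfs_2$ is blockwise constant with entries $O(\sqrt n)$. This reduces the term to $n^{-1/2}\sum_i s_{1i}E_i c_i$ with deterministic bounded $c_i$. That sum is again a linear-plus-quadratic form in the concatenated vector $\bfz$, of exactly the type treated in the proof of Lemma \ref{lemma:u-dotK-barK-u}, and it is $\oprec{\sqrt n\rho}$ by \eqref{eqn:long-bernstein}--\eqref{eqn:long-hanson-wright}. For the remainder, the crude bound gives $\abs{\angles{\bfs_1,\bfR_L\bfs_2}}\prec n\rho^2$. This is harmless, since $n\rho^2\le(n\rho)^2/x$ for $x\lesssim n$, so it is absorbed in $\delta(x)$. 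Because $\delta$ already contains $(n\rho)^2/x$, the second-order piece does not need a sharp bound, and this remainder is where I expect the only bookkeeping difficulty.

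I expect the main obstacle to be the degree term $\bfE$. Unlike $\ddotK-\barK$, the fluctuation $E_i$ contains the non-averaged quantities $n Q\zeta_i$ and $\sum_j Q_{ij}\angles{\bfmu_k-\bfmu_\ell,\bfA_k\bfz_i}$, whose size $n\rho$ is not small relative to $\bar D_i$ by a factor of $\sqrt n$. The point to check is that after pairing with the delocalized $\bfs_1$, these terms only enter through $\sum_i s_{1i}(\cdot)$, which regains the factor $\sqrt n$. Once the displayed bilinear estimate is established, I would also record the a priori bound $\norm{\ddotL-\barL}\prec n\rho$ from \eqref{bound:L-dominant} and conclude exactly as in Proposition \ref{prop:dotlamb-barlamb}.
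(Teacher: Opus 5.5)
Your proposal is correct and follows essentially the paper's proof. The paper also reruns the argument of Proposition \ref{prop:dotlamb-barlamb} with Lemma \ref{lemma:v-hatL-barL-v} as the new input, and proves that lemma as you do: it linearizes $\ddotD^{-1/2}-\barD^{-1/2}$, reduces the degree term to a bilinear form $\angles{\cdot,(\ddotK-\barK)\bfone}$ controlled by \eqref{eqn:average-dotK-barK}, and absorbs the $n\rho^2$ remainder into $\delta(\abs{z})$ using $\abs{z}\lesssim n$. One cosmetic correction: your displayed bilinear target should read $\sqrt{n}\rho+n\rho^2$ rather than $\sqrt{n}\rho$, which is what your own remainder analysis and the paper's lemma actually give.
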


For the eigenvector perturbation analysis, we use the two-stage decomposition
\begin{equation*}
    \bfV_r\fkS_{r}^{\bfv} - \barV_r
    = \pars{\bfV_r\fkS_{r}^{\bfv}-\ddotV_r\ddotfkS_{r}^{\bfv}}
    + \pars{\ddotV_r\ddotfkS_{r}^{\bfv}- \barV_r},
\end{equation*}
where $\ddotfkS_r^{\bfv} = \sgn(\ddotV_r^\top \barV_r)$ is the orthogonal alignment matrix between $\ddotV_r$ and $\barV_r$.

\begin{proposition}[two-stage $\ell_{2, \infty}$ eigenvector perturbation]
\label{prop:L-eigv-two-step}
Under the same setup as in Proposition \ref{prop:norm-L-barL-hatL},
\begin{subequations}
\begin{align}
    \norm{\ddotV_r \ddotfkS_{r}^{\bfv} - \barV_r}_{2,\infty}
    & \prec 
    \frac{\sqrt{n} \rho}{\Delta_r (\barGamma)}
    + \frac{n \rho}{\Delta_r (\barGamma)^2} 
    + \frac{n^{5/2} \rho^2}{\Delta_r (\barGamma)^3},
    \label{bound:entrywise-ev-L-dominant} \\
    \norm{\bfV_r \fkS_{r}^{\bfv} -\ddotV_r\ddotfkS_{r}^{\bfv}}_{2,\infty}
    & \prec 
    \frac{\sqrt{n} + n \phi + n^{3/2} \rho^2}{\Delta_r (\barGamma)^2}.
    \label{bound:entrywise-ev-L-subleading} 
\end{align}    
\end{subequations}
\end{proposition}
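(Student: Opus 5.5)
The plan is to copy the proof of Proposition \ref{prop:evec-two-stage} almost line by line, with $(\bfK,\ddotK,\barK)$ replaced by $(\bfL,\ddotL,\barL)$ and $\Delta_r(\barLamb)$ by $\Delta_r(\barGamma)$. That argument used only five inputs, and each has a Laplacian analogue. First, the operator-norm bounds, now supplied by Proposition \ref{prop:norm-L-barL-hatL}. Second, the refined eigenvalue bound, now supplied by Proposition \ref{prop:spike-L}. Third, delocalization $\norm{\barV_r}_{2,\infty}\asymp n^{-1/2}$, which follows from \eqref{eqn:barv-piecewise} and $n_k\asymp n$. Fourth and fifth, the two structural lemmas: the analogue of Lemma \ref{lemma:u-dotK-barK-u}, namely $\norm{\barV^\top(\ddotL-\barL)\barV}\prec\sqrt n\rho$, and the analogue of Lemma \ref{lemma:inner-prod-dotU-barU}, namely $\norm{\barL(\ddotV_r\ddotfkQ_r^{\bfv}-\barV_r)}_{2,\infty}\prec n\rho/\Delta_r(\barGamma)+n^{5/2}\rho^2/\Delta_r(\barGamma)^2$. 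We also need the row-wise controls $\norm{\ddotL-\barL}_{2,\infty}\prec\sqrt n\rho$ and $\norm{\bfL}_{2,\infty}\lesssim\sqrt n$, together with an $\ell_\infty$ estimate for $(\bfL-\ddotL)\barV_r$ replacing Lemma \ref{lemma:QW-barU-infnorm}. Once these are in hand, the decompositions \eqref{eqn:decomp-Udot-Ubar} and \eqref{eqn:decomp-U-Udot} and the self-improving argument based on Lemma \ref{lemma:basic-property-prec}\ref{item:cancel-prec} go through verbatim and give \eqref{bound:entrywise-ev-L-dominant} and \eqref{bound:entrywise-ev-L-subleading}.

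The key step is to linearize the normalization. Since $\bar K_{ij}\asymp 1$ (the entries are bounded, and bounded below because $\hbar_t\gtrsim\theta_{k\ell}$ for $t\in\caS_{k\ell}(0)$), we have $\bar D_i\asymp n$. The degree fluctuation $\ddot D_i-\bar D_i=\sum_j(\ddot K_{ij}-\bar K_{ij})$ is a linear combination of $\zeta_i$, of the averages $\sum_j Q_{ij}\zeta_j$, and of the $\Upsilon$ terms. By Lemma \ref{lemma:xi-control}, together with the concentration estimates of Lemma \ref{lemma:long-concentration} for the averaged terms, this gives $\ddot D_i/\bar D_i=1+\oprec{\rho}$. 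Expanding $x^{-1/2}$ to first order, we write
\begin{equation*}
\ddotL-\barL = n\barD^{-1/2}(\ddotK-\barK)\barD^{-1/2}-\tfrac12(\bfE\barL+\barL\bfE)+\text{(second order)},
\end{equation*}
where $\bfE:=\barD^{-1}(\ddotD-\barD)$ is diagonal with $\norm{\bfE}\prec\rho$. The second-order remainder has entries $\oprec{\rho^2}$ and hence Frobenius norm $\prec n\rho^2$. We treat each piece separately.

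For the first piece, $n\barD^{-1/2}$ is a blockwise constant diagonal matrix of order $1/\sqrt n$ times $\sqrt n$, so the estimate \eqref{eqn:average-dotK-barK} applies after rescaling the test vectors. For the second piece, $\barV^\top\bfE\barL\barV=\barV^\top\bfE\barV\barGamma$, and $\barV^\top\bfE\barV$ is a fixed-size matrix of averages $\sum_i\barv_k(i)\barv_\ell(i)\bfE_{ii}$. These are linear and quadratic forms in $\bfz$ with Frobenius weight $O(\rho/\sqrt n)$, so the product is $\prec\sqrt n\rho$ because $\norm{\barGamma}\lesssim n$. The second-order part contributes $n\rho^2\lesssim\sqrt n\rho$, using $\rho\le n^{-1/2}$ where needed, or else it is absorbed into the $n^2\rho^2/\Delta$ terms. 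The same expansion also yields the bound on $\norm{\ddotL-\barL}_{2,\infty}$.

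The contour-integral lemma reruns unchanged with $\barL=\barV\barGamma\barV^\top$, since its proof used only the resolvent expansion and these bilinear bounds, now applied with $\barL\bfe_i$ in place of $\barK\bfe_i$.

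For the second stage, $\bfL-\ddotL$ combines $n\bfD^{-1/2}(\bfK-\ddotK)\bfD^{-1/2}$ with the degree difference $\bfD-\ddotD$. The degree difference has entries $\sum_j(K_{ij}-\ddot K_{ij})$, which are dominated by $\sum_j Q_{ij}W_{ij}$. By the argument of Lemma \ref{lemma:QW-barU-infnorm} applied with $\bfs=\bfone/\sqrt n$, this is $\oprec{\sqrt n\rho}$, so the relative degree error is $\oprec{\rho/\sqrt n+\rho^2+1/n}$. The $\ell_\infty$ bound on $(\bfQ\circ\caH(\bfW))\barV_r$ transfers directly. The diagonal-rescaling errors act row-wise as multiplications by diagonal matrices, which $\norm{\cdot}_{2,\infty}$ tolerates.

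I expect the main obstacle to be the first-stage fluctuation-averaging bound for the $\bfE\barL$ term. There $\bfE_{ii}$ contains the sums $\sum_j Q_{ij}\zeta_j$, which are common to all rows in a block. These produce a contribution nearly aligned with $\barV$ that is only $O(\rho)$ per entry and is not obviously averaged. It must be checked that after multiplication by $\barGamma$ it still gives $\sqrt n\rho$ and not $n\rho$. If it does not, I would absorb the block-constant part of $\bfE$ into a redefined rank-$m$ reference matrix, or fold it into the $\sgn$ alignment.
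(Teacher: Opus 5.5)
Your proposal is correct and is essentially the paper's proof. You linearize the degree normalization, supply Laplacian analogues of Lemmas \ref{lemma:u-dotK-barK-u} and \ref{lemma:inner-prod-dotU-barU} and of the row-wise bounds, split $\bfL-\ddotL$ so that the $\bfQ\circ\caH(\bfW)$ averaging estimate is applied to the deterministic vectors $\barD^{-1/2}\barV_r$ (random diagonal factors on the right are peeled off and bounded crudely), and rerun \eqref{eqn:decomp-Udot-Ubar} and \eqref{eqn:decomp-U-Udot}.

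Two small corrections. First, the bilinear bound holds only as $\sqrt n\rho+n\rho^2$: second-order terms in $\ddotD^{-1/2}$ can contribute order $n\rho^2$ without cancellation, and Assumption \ref{assump:eigengap-Laplacian} does not give $\rho\le n^{-1/2}$. This extra term is absorbed exactly as in your fallback, using $\Delta_r(\barGamma)\lesssim n$ and $\abs{z}\lesssim n$ in $\delta(\abs{z})$. Second, your closing worry dissolves: the block-common part of $\mathbf{E}$ is $\oprec{\rho/\sqrt n}$, not $\oprec{\rho}$, so its contribution to $\barV^\top\mathbf{E}\barV\barGamma$ is $\oprec{\sqrt n\rho}$.
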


In the remainder of this section, we adapt the arguments of Section \ref{sec:proof-kernel} to establish Propositions \ref{prop:norm-L-barL-hatL}--\ref{prop:L-eigv-two-step}.

\subsection{Proof of Proposition \ref{prop:norm-L-barL-hatL}}

We first prove the following perturbation bounds for the degree matrices.

\begin{lemma}[degree matrices]
\label{lemma:degree}
Under the same setup as in Proposition \ref{prop:norm-L-barL-hatL}, 
\begin{equation}
    \norm{\ddotD - \barD} \prec n \rho,
    \qquad
    \norm{\bfD - \ddotD} \prec 1 + n \rho^2,
    \qquad
    \norm{\bfD - \barD} \prec 1 + n \rho.
    \label{eqn:degree-diff}
\end{equation}
\end{lemma}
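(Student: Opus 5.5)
Since all three matrices in \eqref{eqn:degree-diff} are diagonal, each operator norm is the maximum of the absolute diagonal differences. So the plan is to control $\ddot{D}_i-\bar{D}_i$ and $D_i-\ddot{D}_i$ uniformly in $i\in\dbraks{n}$ in the sense of $\prec$, using Lemma \ref{lemma:basic-property-prec}(iii) to pass to the maximum over $n$ indices. The third bound then follows from the first two by the triangle inequality.

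\emph{First bound.} We have $\ddot{D}_i-\bar{D}_i=\sum_j(\ddot{K}_{ij}-\bar{K}_{ij})$. The entrywise bound \eqref{bound-entry-Kdot-Kbar} gives $\abs{\ddot{K}_{ij}-\bar{K}_{ij}}\prec\rho$, and summing over $n$ indices yields $\prec n\rho$. One could sharpen this through fluctuation averaging, writing $\sum_j Q_{ij}(\zeta_i+\zeta_j+2\Upsilon_{ij})$. The $\zeta_j$ and $\Upsilon_{ij}$ parts (in $\bfz_j$) average to $\sqrt n\rho$ by Lemma \ref{lemma:long-concentration}. The $\zeta_i$ term and the $\bfz_i$-part of $\Upsilon_{ij}$ do not average, however, and give order $n\rho$. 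So the crude bound is already the right order and no refinement is needed.

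\emph{Second bound.} Use the matrix decomposition \eqref{eqn:diff-K-Kdot}, $\bfK-\ddotK=-\diag(\ddotK)+\bfQ\circ\caH(\bfW)+\bfR+\bfR'$, and multiply by $\bfone$ to get $D_i-\ddot{D}_i$ as a sum of four row sums.
\begin{itemize}
\item The diagonal contributes $\abs{\ddot{K}_{ii}}\prec 1$, as shown in Section \ref{subsec:norm-bound-residual}.
\item The remainders satisfy $\abs{R_{ij}}+\abs{R'_{ij}}\prec\rho^2$ entrywise, so their row sums are $\prec n\rho^2$.
\item The key term is the row sum $\sum_{j\neq i}Q_{ij}W_{ij}=[\bfQ\circ\caH(\bfW)\bfone]_i$. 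I will apply \eqref{eqn:general-QW-s} from the proof of Lemma \ref{lemma:QW-barU-infnorm} with $\bfs=\bfone/\sqrt n$ (so $\norm{\bfs}=1$). This gives $\abs{[\bfQ\circ\caH(\bfW)\bfone]_i}\prec\sqrt n\rho$.
\end{itemize}
Collecting terms gives $1+\sqrt n\rho+n\rho^2$, and $\sqrt n\rho\le 1+n\rho^2$ by AM--GM, which yields $\prec 1+n\rho^2$. The mechanism behind the $W$ bound is that, conditional on $\{\bfz_j\}_{j\neq i}$, the sum is $\angles{\bfz_i,\bfa_i}$ with $\norm{\bfa_i}^2\prec\sum_j Q_{ij}^2\norm{\bfOmega}_\Fnorm^2/n\lesssim\rho^2$ (after the $1/\sqrt n$ normalization), to which the Bernstein estimate applies.

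\emph{Main obstacle.} The only non-entrywise step is this $W$ row sum. A naive entrywise bound there would give $n\rho$ and spoil the second estimate, so the argument must use the independence/averaging bound \eqref{eqn:general-QW-s} rather than entrywise control. The rest is routine bookkeeping. Note also that $\barD$ uses the full $\barK$ including its diagonal, and this is consistent with $\ddotD$ also retaining its diagonal.
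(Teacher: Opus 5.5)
Your proposal is correct and matches the paper's proof. The first bound follows from the entrywise estimate \eqref{bound-entry-Kdot-Kbar}. For the second, the decomposition \eqref{eqn:diff-K-Kdot} reduces everything to the row sum $\sum_{j\neq i}Q_{ij}W_{ij}=\angles{\bfz_i,\bfa_i}$, which is handled by conditioning on $\{\bfz_j\}_{j\neq i}$ and applying the Hanson--Wright and Bernstein estimates. The paper carries out this conditioning inline instead of citing \eqref{eqn:general-QW-s} with $\bfs=\bfone/\sqrt n$, but that estimate is proved by exactly the same argument, so the two routes are the same in substance.
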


\begin{proof}[Proof of Lemma \ref{lemma:degree}]
It suffices to prove the first two estimates in \eqref{eqn:degree-diff}; the third then follows from the triangle inequality. The first estimate is an immediate consequence of the entrywise bound in \eqref{bound-entry-Kdot-Kbar},
\begin{equation*}
    \abs{\ddot{D}_{i} - \bar{D}_{i}}
    \leq \sum\nolimits_{j=1}^n \abs{\ddot{K}_{ij} - \bar{K}_{ij}} 
    \prec n \rho.
\end{equation*}
We next prove the estimate for $\norm{\bfD - \ddotD}$. Using the decomposition \eqref{eqn:diff-K-Kdot}, we write
\begin{equation*}
    D_{i} - \ddot{D}_{i}
    = \sum\nolimits_{j=1}^n (K_{ij} - \ddot{K}_{ij})
    = - \ddot{K}_{ii}
    + \sum\nolimits_{j \neq i} (Q_{ij} W_{ij} + R_{ij} + R_{ij}' ).
\end{equation*}
By the entrywise bounds $\abs{R_{ij}}+\abs{R_{ij}'} \prec \rho^2$ and $\abs{\ddot{K}_{ii}} \prec 1$, established in the proof of \eqref{bound:bfK-ddotK}, we obtain
\begin{equation*}
    \abs{D_{i} - \ddot{D}_{i}} \prec 1 + n \rho^2 + 
    \abs[\Big]{\sum\nolimits_{j \neq i} Q_{ij} W_{ij}}.
\end{equation*}
It remains to control the last term. By definition, $W_{ij} = \angles{\bfA_{\bar{\pi} (i)} \bfz_i, \bfA_{\bar{\pi} (j)} \bfz_j}$, and hence
\begin{equation*}
    \sum\nolimits_{j \neq i} Q_{ij} W_{ij} 
    = \sum\nolimits_{j \neq i} Q_{ij} \angles{\bfz_i, \bfOmega_{\bar{\pi}(i) \bar{\pi}(j)} \bfz_j}
    =: \angles{\bfz_i, \bfa_i}
    \qwhere
    \bfa_i
    = \sum\nolimits_{j \neq i} 
    Q_{ij} 
    \bfOmega_{\bar{\pi}(i) \bar{\pi}(j)} \bfz_j.
\end{equation*}
Here, by the Hanson--Wright estimate \eqref{eqn:long-hanson-wright} and the bound $\abs{Q_{ij}} \lesssim 1/\theta_{\bar{\pi}(i)\bar{\pi}(j)}$, we have
\begin{equation*}
    \norm{\bfa_i}^2
    \prec \sum\nolimits_{j \not= i} 
    \abs{Q_{ij}}^2
    \norm{\bfOmega_{\bar{\pi}(i) \bar{\pi}(j)}}^2_\Fnorm
    \lesssim n \rho^2.
\end{equation*}
Note that $\bfa_i$ is independent of $\bfz_i$. Hence, applying the Bernstein estimate \eqref{eqn:bernstein}, conditionally on $\bfa_i$, yields
\begin{equation*}
    \abs[\Big]{\sum\nolimits_{j \neq i} Q_{ij} W_{ij}}
    = \abs{ \angles{\bfz_i, \bfa_i} }
    \prec \norm{\bfa_i}
    \prec \sqrt{n} \rho.
\end{equation*}
Combining the preceding estimates, and using the elementary inequality $\sqrt{n}\rho \lesssim 1+n\rho^2$, we arrive at the second estimate in \eqref{eqn:degree-diff}, and thus the proof is complete.
\end{proof}

Equipped with Lemma \ref{lemma:degree}, we are now prepared to prove Proposition \ref{prop:norm-L-barL-hatL}.

\begin{proof}[Proof of Proposition \ref{prop:norm-L-barL-hatL}]
By Assumption \ref{assump:multi-scale}, we have $\caS_{k\ell}(0) \neq \varnothing$. Also recall $\alpha_t \asymp 1$ from Assumption \ref{assump:tech-const}. Hence, we have $\barK_{ij} \gtrsim 1$ uniformly in $i,j$. Together with Lemma \ref{lemma:degree}, this implies that, with high probability,
\begin{equation*}
    \min\nolimits_{i \in \dbraks{n}} \bar{D}_{i}
    \asymp \min\nolimits_{i \in \dbraks{n}} \ddot{D}_{i}
    \asymp \min\nolimits_{i \in \dbraks{n}} {D}_{i}
    \asymp n.
\end{equation*}
Consequently,
\begin{equation}
    \norm{\barD^{-1}} \prec {1}/{n},
    \qquad
    \norm{\ddotD^{-1}} \prec {1}/{n},
    \qquad
    \norm{\bfD^{-1}} \prec {1}/{n}.
    \label{tmp:lower-D}
\end{equation}

We first prove \eqref{bound:L-dominant}. Note that we have the following decomposition,
\begin{align} \label{eqn:decomp-ddotL-barL}
\begin{split}
    \ddotL - \barL
    & = n \ddotD^{-1/2} \ddotK \ddotD^{-1/2} 
    - n \barD^{-1/2} \barK \barD^{-1/2}\\
    & = n (\ddotD^{-1/2} - \barD^{-1/2}) \ddotK \ddotD^{-1/2}
    + n \barD^{-1/2} \ddotK (\ddotD^{-1/2} - \barD^{-1/2})
    + n \barD^{-1/2} (\ddotK - \barK) \barD^{-1/2}.
\end{split}
\end{align}
Using the first estimate in \eqref{eqn:degree-diff} together with \eqref{tmp:lower-D}, we obtain
\begin{equation}
    \norm{\ddotD^{-1/2} - \barD^{-1/2}}
    = \max_{i \in \dbraks{n}} 
    \frac{\abs{\ddot{D}_{i} - \bar{D}_{i}}}
    {\sqrt{\ddot{D}_{i} \bar{D}_{i}}
    \pars[\big]{\sqrt{\ddot{D}_{i}} + \sqrt{\bar{D}_{i}}} }
    \prec \rho / \sqrt{n}.
    \label{tmp:1-over-sqrt-dotD}
\end{equation}
It follows that the first two terms on the r.h.s. of \eqref{eqn:decomp-ddotL-barL} can be controlled using
\begin{equation*}
    \norm{(\ddotD^{-1/2} - \barD^{-1/2}) \ddotK \ddotD^{-1/2}}
    \prec \rho
    \qand
    \norm{\barD^{-1/2} \ddotK (\ddotD^{-1/2} - \barD^{-1/2})}
    \prec \rho .   
\end{equation*}
For the last term, \eqref{bound:ddotK-barK} and \eqref{tmp:lower-D} lead to
\begin{equation*}
    \norm{\barD^{-1/2} (\ddotK - \barK) \barD^{-1/2}}
    \leq \norm{\barD^{-1/2}} 
    \cdot \norm{\ddotK - \barK} 
    \cdot \norm{\barD^{-1/2}}
    \prec \rho.
\end{equation*}
Combining these estimates proves \eqref{bound:L-dominant}. We next prove \eqref{bound:L-subleading}. Similarly,
\begin{equation}
    \bfL - \ddotL
    = n (\bfD^{-1/2} - \ddotD^{-1/2}) \bfK \bfD^{-1/2}
    + n \ddotD^{-1/2} \bfK (\bfD^{-1/2} - \ddotD^{-1/2})
    + n \ddotD^{-1/2} (\bfK - \ddotK) \ddotD^{-1/2} .   
    \label{eqn:decomp-bfL-ddotL}
\end{equation}
Combining the second estimate in \eqref{eqn:degree-diff} with \eqref{tmp:lower-D}, we obtain
\begin{equation}
    \norm{\bfD^{-1/2} - \ddotD^{-1/2}}
    = \max_{i \in \dbraks{n}} \frac{\abs{D_{i} - \ddot{D}_{i}}}
    {\sqrt{D_{i} \ddot{D}_{i}}
    \pars[\big]{\sqrt{D_{i}} + \sqrt{\ddot{D}_{i}}} }
    \prec 1 / n^{3/2}
    + \rho^2 / \sqrt{n}.
    \label{tmp:1-over-sqrt-bfD}
\end{equation}
Therefore, the first two terms on the r.h.s. of \eqref{eqn:decomp-bfL-ddotL} can be controlled using
\begin{equation*}
    \norm{(\bfD^{-1/2} - \ddotD^{-1/2}) \bfK \bfD^{-1/2}}
    \prec {1}/{n} + \rho^2
    \qand
    \norm{\ddotD^{-1/2} \bfK (\bfD^{-1/2} - \ddotD^{-1/2})}
    \prec {1}/{n} + \rho^2.
\end{equation*}
For the last term, we invoke \eqref{bound:bfK-ddotK} to get
\begin{equation*}
    \norm{\ddotD^{-1/2} (\bfK - \ddotK) \ddotD^{-1/2}}
    \leq \norm{\ddotD^{-1/2}} \cdot \norm{\bfK - \ddotK} \cdot \norm{\ddotD^{-1/2}}
    \prec {1}/{n} + {\phi}/{\sqrt{n}} + \rho^2.
\end{equation*}
Summarizing the preceding estimates proves \eqref{bound:L-subleading}.
\end{proof}

\subsection{Proof of Proposition \ref{prop:spike-L}}
\label{subsec:spike-L}


The proof of Proposition \ref{prop:spike-L} largely parallels that of Proposition \ref{prop:dotlamb-barlamb}. The main difference is that, in the normalized Laplacian setting, we need the following analogue of Lemma \ref{lemma:u-dotK-barK-u}.

\begin{lemma} 
\label{lemma:v-hatL-barL-v}
Under the same setup as in Proposition \ref{prop:norm-L-barL-hatL}, 
\begin{equation}
    \norm{\barV_r^\top (\ddotL - \barL) \barV_r}
    \leq \norm{\barV^\top (\ddotL - \barL) \barV}
    \prec \sqrt{n}\rho + n\rho^2.
    \label{bound:v-hatL-barL-v}
\end{equation}    
\end{lemma}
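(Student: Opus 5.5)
The first inequality in \eqref{bound:v-hatL-barL-v} holds because $\barV_r^\top(\ddotL-\barL)\barV_r$ is a submatrix of $\barV^\top(\ddotL-\barL)\barV$. Since $m$ is fixed, the second inequality reduces to an entrywise bound: for deterministic $\bfs_1,\bfs_2$ with $\norm{\bfs_a}_\infty\lesssim n^{-1/2}$, we want $\abs{\angles{\bfs_1,(\ddotL-\barL)\bfs_2}}\prec\sqrt n\rho+n\rho^2$. This applies to the columns $\barv_\ell$ by \eqref{eqn:barv-piecewise}, since $n_k\asymp n$.

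The plan is to expand $\ddotL-\barL$ to second order in the degree perturbation rather than using the crude decomposition \eqref{eqn:decomp-ddotL-barL}. Write $\ddotD^{-1/2}=\barD^{-1/2}+\bfE$. By \eqref{tmp:1-over-sqrt-dotD}, $\bfE$ is diagonal with $\norm{\bfE}\prec\rho/\sqrt n$. A first-order Taylor expansion gives
\[
\bfE=-\tfrac12\barD^{-3/2}(\ddotD-\barD)+\bfE',\qquad \norm{\bfE'}\prec n^{-5/2}\norm{\ddotD-\barD}^2\prec \rho^2/\sqrt n .
\]
Expanding $n\ddotD^{-1/2}\ddotK\ddotD^{-1/2}-n\barD^{-1/2}\barK\barD^{-1/2}$ then produces four kinds of term.

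\begin{itemize}
\item The term $n\barD^{-1/2}(\ddotK-\barK)\barD^{-1/2}$. Since $\barD$ is constant on clusters with entries $\asymp n$, the vectors $n^{1/2}\barD^{-1/2}\bfs_a$ are deterministic with $\ell_\infty$ norm $\lesssim n^{-1/2}$. Estimate \eqref{eqn:average-dotK-barK} then gives a bound of order $\sqrt n\rho$.
\item Terms linear in $\bfE_0:=-\frac12\barD^{-3/2}(\ddotD-\barD)$ with $\barK$ in the middle, for instance $n\angles{\bfs_1,\bfE_0\barK\barD^{-1/2}\bfs_2}$. The vector $\bfc:=\barK\barD^{-1/2}\bfs_2$ is deterministic with $\norm{\bfc}_\infty\lesssim n^{-1/2}$. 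Since $(\ddotD-\barD)_{ii}=\sum_j(\ddot K_{ij}-\bar K_{ij})$, this term equals $-\frac n2\angles{\bfs_1\circ\bfc\circ \mathrm{diag}(\barD^{-3/2}),(\ddotK-\barK)\bfone}$, a bilinear form of $\ddotK-\barK$ in deterministic vectors. Rescaling so that $\bfone$ has $\ell_\infty$ norm $n^{-1/2}$ and applying \eqref{eqn:average-dotK-barK} bounds it by $\sqrt n\rho$. The symmetric term is handled the same way.
\item Terms quadratic in the errors. These include $n\bfE\barK\bfE$, terms containing $\bfE'$, and cross terms $n\bfE(\ddotK-\barK)\barD^{-1/2}$. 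For these, crude operator norms suffice: $\norm{\bfs_a}\lesssim1$, $\norm{\barK}\lesssim n$, $\norm{\ddotK-\barK}\prec n\rho$, and $\norm{\bfE}\prec\rho/\sqrt n$. Each is then $\prec n\cdot(\rho/\sqrt n)^2\cdot n = n\rho^2$, or $n\cdot(\rho^2/\sqrt n)\cdot n\cdot n^{-1/2}=n\rho^2$, or $n\cdot(\rho/\sqrt n)\cdot n\rho\cdot n^{-1/2}=n\rho^2$. The $\bfE'$ terms use $\norm{\barK\barD^{-1/2}}\lesssim\sqrt n$.
\item Cross terms with $\bfE_0$ and $\ddotK-\barK$, such as $n\bfE_0(\ddotK-\barK)\bfE_0$. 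These are of even higher order and are absorbed into $n\rho^2$.
\end{itemize}

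The main obstacle is the second bullet. A naive bound gives $n\cdot\norm{\bfE}\cdot\norm{\barK}\cdot n^{-1/2}\sim n\rho$, which is too large. Fluctuation averaging is needed there: the degree fluctuation must be rewritten as a bilinear form $\angles{\cdot,(\ddotK-\barK)\bfone}$ with deterministic delocalized weights. It must also be checked that the weights $\bfs_1\circ\bfc\circ\mathrm{diag}(\barD^{-3/2})$ are deterministic, which holds because $\barD$ and $\barK$ are deterministic, and that after normalization they satisfy the $\ell_\infty$ hypothesis of \eqref{eqn:average-dotK-barK}. Collecting the bounds with the triangle inequality and Lemma~\ref{lemma:basic-property-prec} yields \eqref{bound:v-hatL-barL-v}.
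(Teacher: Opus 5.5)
Your proposal is correct and follows essentially the same route as the paper. Both arguments share four steps: expand $\ddotD^{-1/2}-\barD^{-1/2}=-\tfrac12\barD^{-3/2}(\ddotD-\barD)+\oprec{\rho^2/\sqrt n}$; absorb every quadratic error term, including the replacement of $\ddotK$ by $\barK$ in the degree terms, into $\oprec{n\rho^2}$ by crude operator-norm bounds; bound $n\barD^{-1/2}(\ddotK-\barK)\barD^{-1/2}$ via \eqref{eqn:average-dotK-barK}; and rewrite the two degree terms as bilinear forms $\angles{\cdot,(\ddotK-\barK)\bfone}$ with deterministic weights before applying \eqref{eqn:average-dotK-barK} again.

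There is one harmless slip. The vector $\mathbf{c}=\barK\barD^{-1/2}\bfs_2$ has $\norm{\mathbf{c}}_\infty\lesssim 1$, not $n^{-1/2}$, which is consistent with your own naive bound of order $n\rho$. So the weight vector has sup-norm $\lesssim n^{-2}$, exactly as for the paper's $\barv_k\circ\barv_{**\ell}$. This corrected weight is what produces the stated $\sqrt n\rho$.
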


Before proving Lemma \ref{lemma:v-hatL-barL-v}, we comment on a minor technical point arising in the adaptation of the proof of Proposition \ref{prop:dotlamb-barlamb} to its normalized-Laplacian counterpart, Proposition \ref{prop:spike-L}. Compared with \eqref{eqn:u-dotK-barK-u}, the bound \eqref{bound:v-hatL-barL-v} contains an additional term of order $n \rho^2$. Nevertheless, this term is harmless for the subsequent argument. Indeed, after accounting for the degree normalization, using \eqref{bound:v-hatL-barL-v} in the proof of Proposition \ref{prop:spike-L} has the same effect as replacing \eqref{eqn:u-dotK-barK-u} by the weaker estimate
\begin{equation}
    \norm{\barU^\top (\ddotK-\barK)\barU}
    \prec
    \sqrt{n}\rho+n\rho^2.
    \label{tmp:loose}
\end{equation}
In the proof of Proposition \ref{prop:dotlamb-barlamb}, the estimate \eqref{eqn:u-dotK-barK-u} is used only in the derivation of \eqref{tmp:URU-without-first}, where it yields
\begin{equation*} 
    \norm{\barU^\top (\ddotK - \barK) \barU / z^2} 
    \prec \sqrt{n} \rho / \abs{z}^2 
    \lesssim \delta(\abs{z}) / \abs{z}^2 .
\end{equation*}
Replacing \eqref{eqn:u-dotK-barK-u} by \eqref{tmp:loose} does not alter
the resulting bound. Indeed, by the definition of $\delta(x)$ in
\eqref{def:func-delta} and the fact that the spectral parameters
considered in the argument satisfy $\abs{z}\lesssim n$, we have
\begin{equation*} 
    \norm{\barU^\top (\ddotK - \barK) \barU / z^2} 
    \prec (\sqrt{n} \rho + n \rho^2) / \abs{z}^2 
    \lesssim \sqrt{n} \rho / \abs{z}^2 + n^2 \rho^2 / \abs{z}^3 
    = \delta(\abs{z}) / \abs{z}^2 .
\end{equation*}
Consequently, the proof of Proposition \ref{prop:dotlamb-barlamb} extends to Proposition \ref{prop:spike-L} with only this minor modification.

\begin{proof}[Proof of Lemma \ref{lemma:v-hatL-barL-v}]
Using \eqref{tmp:lower-D} and \eqref{tmp:1-over-sqrt-dotD}, we have
\begin{equation*}
    \ddotD^{-1/2} - \barD^{-1/2}
    = - (\ddotD - \barD) \ddotD^{-1/2} \barD^{-1/2} (\ddotD^{1/2} + \barD^{1/2})^{-1}
    = - (\ddotD - \barD) \barD^{-3/2} / 2
    + \oprec{\rho^2 / \sqrt{n}}
\end{equation*}
where the error is interpreted in terms of operator norm. Also recall \eqref{bound:ddotK-barK}. We combine the preceding expansion with the decomposition \eqref{eqn:decomp-ddotL-barL}. Replacing every occurrence of $\ddotD$ outside the difference $\ddotD-\barD$ by its barred counterpart $\barD$ yields
\begin{equation*}
    \ddotL - \barL
    = - \frac{n}{2} (\ddotD - \barD) \barD^{-3/2} \barK \barD^{-1/2}
    - \frac{n}{2} \barD^{-1/2} \barK \barD^{-3/2} (\ddotD - \barD)
    + n \barD^{-1/2} (\ddotK - \barK) \barD^{-1/2} 
    + \oprec{n \rho^2}.
\end{equation*}
For $k \in \dbraks{m}$, define the deterministic vectors
\begin{equation*}
    \barv_{*k} := \barD^{-1/2} \barv_k
    \qand
    \barv_{**k} := \barD^{-3/2} \barK \barD^{-1/2} \barv_k.
\end{equation*}
Then, for $k, \ell \in \dbraks{m}$, the $(k,\ell)$-th entry of $\barV^\top(\ddotL-\barL)\barV$ satisfies
\begin{equation}
    \angles{\barv_k, (\ddotL - \barL) \barv_\ell}
    = - \frac{n}{2} \angles{\barv_k, (\ddotD - \barD) \barv_{**\ell}} 
    - \frac{n}{2} \angles{\barv_{**k}, (\ddotD - \barD) \barv_\ell} 
    + n \angles{\barv_{*k}, (\ddotK - \barK) \barv_{*\ell}} 
    + \oprec{n \rho^2}.
    \label{eqn:decomp-u-ddotL-barL-u}
\end{equation}
Recall the piecewise constant structure of $\barv_k$ illustrated in \eqref{eqn:barv-piecewise}. As a result, we have
\begin{equation}
    \norm{\barv_k}_{\infty} 
    \asymp 1 / \sqrt{n},
    \qquad
    \norm{\barv_{*k}}_{\infty} 
    \lesssim 1 / n,
    \qquad
    \norm{\barv_{**k}}_{\infty} 
    \asymp 1 / n^{3/2}.
    \label{tmp:vk-infnity}
\end{equation}
We now estimate the three terms on the r.h.s. of \eqref{eqn:decomp-u-ddotL-barL-u}. The last term is controlled by combining the second bound in \eqref{tmp:vk-infnity} with the general averaging estimate \eqref{eqn:average-dotK-barK}, obtained in the proof of Lemma \ref{lemma:u-dotK-barK-u}. Specifically,
\begin{equation*}
    \angles{\barv_{*k}, (\ddotK - \barK) \barv_{*\ell}} = \oprec{\rho / \sqrt{n}}.
\end{equation*}
For the first term, we use the definition of the degree matrices and again invoke \eqref{eqn:average-dotK-barK}:
\begin{align*}
    \angles{\barv_k, (\ddotD - \barD) \barv_{**\ell}}
    & = \sum\nolimits_{i=1}^n (\ddot{D}_{i}- \bar{D}_{i}) 
    \angles{\barv_k, \bfe_i}
    \angles{\barv_{**\ell}, \bfe_i} \\
    & = \sum\nolimits_{i,j=1}^n (\ddot{K}_{ij}- \bar{K}_{ij}) 
    \angles{\barv_k, \bfe_i}
    \angles{\barv_{**\ell}, \bfe_i}
    = \angles{\barv_k \circ \barv_{**\ell}, (\ddotK - \barK) \bfone}     
    = \oprec{ {\rho}/{\sqrt{n}} } ,
\end{align*}
where we used $\norm{\barv_k \circ \barv_{**\ell}}_{\infty} \lesssim 1 / n^2$, which follows from \eqref{tmp:vk-infnity}. The second term in \eqref{eqn:decomp-u-ddotL-barL-u} is treated in the same way. This completes the proof. 

We finally note that, as in \eqref{eqn:average-dotK-barK}, the above argument extends directly to general deterministic vectors whose $\ell_\infty$-norms are controlled by $1/\sqrt{n}$. More precisely,
\begin{equation}
    \abs{ \angles{\bfs_1, (\ddotL - \barL) \bfs_2} }
    \prec \sqrt{n}\rho + n\rho^2,
    \quad \text{ if } \quad
    \norm{\bfs_1}_{\infty} \vee \norm{\bfs_2}_{\infty} \lesssim 1 / \sqrt{n}.
    \label{eqn:average-dotL-barL}
\end{equation}
The required modifications are straightforward and therefore omitted.
\end{proof}

\subsection{Proof of Proposition \ref{prop:L-eigv-two-step} and Theorem \ref{thm:Linfty-evec-bfL-barL}}

\begin{proof}[Proof of Proposition \ref{prop:L-eigv-two-step}]
The proof here closely parallels that of Proposition \ref{prop:evec-two-stage}, which is given in Sections \ref{subsec:dominant-evec-entrywise} and \ref{subsec:subleading-evec-entrywise}. We therefore describe only the necessary modifications and omit the arguments that remain unchanged. The discussion is divided into two parts, corresponding to the bounds \eqref{bound:entrywise-ev-L-dominant} and \eqref{bound:entrywise-ev-L-subleading}, respectively. To simplify the notation, we introduce the abbreviations
\begin{equation*}
    \barDelta_\gamma \equiv \Delta_r (\barGamma),
    \qquad
    \fkQ_r \equiv \bfV_r^\top \barV_r,
    \qquad
    \ddotfkQ_r \equiv \ddotV_r^\top \barV_r,
    \qquad
    \fkS_r \equiv \sgn(\fkQ_r),
    \qquad
    \ddotfkS_r \equiv \sgn(\ddotfkQ_r).
\end{equation*}

\shortpara{Adaptation for \eqref{bound:entrywise-ev-L-dominant}}:
The bound \eqref{bound:entrywise-ev-L-dominant} follows from the arguments in Section \ref{subsec:dominant-evec-entrywise}, with the corresponding normalized Laplacian objects in place of the kernel-matrix objects. The only point requiring comment is that the two auxiliary inputs, Lemmas \ref{lemma:u-dotK-barK-u} and \ref{lemma:inner-prod-dotU-barU}, together with the row-wise control of $\norm{\ddotK-\barK}_{2,\infty}$ in \eqref{bound:max-row-dotK-barK}, must be replaced by their normalized Laplacian analogues. The analogue of Lemma \ref{lemma:u-dotK-barK-u} is Lemma \ref{lemma:v-hatL-barL-v}, proved in Section \ref{subsec:spike-L}. The analogue of Lemma \ref{lemma:inner-prod-dotU-barU} is the following.

\begin{lemma} 
\label{lemma:inner-prod-dotL-barL}
Under the same setup as in Proposition \ref{prop:norm-L-barL-hatL}, 
\begin{equation}
    \norm{\barL (\ddotV_r \ddotfkQ_{r} - \barV_r)}_{2, \infty}
    \prec n\rho/\barDelta_\gamma + n^{5/2}\rho^2/\barDelta_\gamma^2s.
\end{equation}    
\end{lemma}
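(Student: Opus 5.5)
We follow the contour-integral argument used for Lemma \ref{lemma:inner-prod-dotU-barU}, with $(\barK,\ddotK,\barU,\barLamb)$ replaced by $(\barL,\ddotL,\barV,\barGamma)$, and we work on the high-probability event built in the proof of Proposition \ref{prop:spike-L}. Let $\bfR_L(z) := (\ddotL-\barL-z)^{-1}$ and let $\Gamma$ be the boundary of the union of the disks of radius $\barDelta_\gamma/4$ around $\bargamma_1,\dots,\bargamma_r$. By Proposition \ref{prop:spike-L} and \eqref{bound:L-dominant}, $\Gamma$ encloses exactly $\ddotgamma_1,\dots,\ddotgamma_r$ and no eigenvalue of $\ddotL-\barL$. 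Cauchy's formula and the Woodbury identity then give, for fixed $i\in\dbraks{n}$ and $k\in\dbraks{r}$,
\begin{equation*}
    \angles{\bfe_i, \barL \ddotV_r \ddotfkQ_r \bfe_k}
    = \frac{1}{2\pi\rmi}\oint_\Gamma
    \angles[\big]{\barV^\top \bfR_L(z)\barL\bfe_i,
    \braks{\barGamma^{-1}+\barV^\top\bfR_L(z)\barV}^{-1}
    \barV^\top\bfR_L(z)\barv_k}\,\rmd z .
\end{equation*}

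As in \eqref{tmp:split-MBA}, we split each of the three factors into a leading term plus a remainder, using the resolvent expansion \eqref{tmp:resolvent-expand-R} for $\bfR_L$. The first-order term in the expansion is bounded as follows.
\begin{itemize}
    \item For the columns $\barv_k$, we use Lemma \ref{lemma:v-hatL-barL-v}.
    \item For the vector $\barL\bfe_i$, we use the general averaging estimate \eqref{eqn:average-dotL-barL}. Since $\barL$ is blockwise constant with entries of order $1$, we have $\norm{\barL\bfe_i}_\infty\asymp 1$ and $\norm{\barL\bfe_i}\asymp\sqrt n$. After normalizing, \eqref{eqn:average-dotL-barL} therefore gives $\angles{\barv_\ell,(\ddotL-\barL)\barL\bfe_i}\prec n\rho+n^{3/2}\rho^2$.
    \item The second-order term is controlled by the a priori bound $\norm{\ddotL-\barL}\prec n\rho$ together with $\norm{\bfR_L(z)}\le 2/\abs z$.
\end{itemize}
With these inputs the error function is again $\delta(x)=\sqrt n\rho+(n\rho)^2/x$, now with an extra factor $\sqrt n$ in the $\bfa$-type vector. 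The additional $n^{3/2}\rho^2$ contributes $n^{3/2}\rho^2/\abs z^2$, which is dominated by $\sqrt n\,(n\rho)^2/\abs z^3$ because $\abs z\lesssim n$. So the extra $n\rho^2$ in Lemma \ref{lemma:v-hatL-barL-v} is harmless, just as explained after that lemma.

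The leading contribution evaluates by residues to $\angles{\bfe_i,\barL\barV\bfe_k}$, exactly as in the kernel case. The remainder is bounded by $n^{1/2+\varepsilon}\delta(\barDelta_\gamma)/\barDelta_\gamma$, since $\abs z\gtrsim\barDelta_\gamma$ on $\Gamma$ and $\Gamma$ has length $\bigO{\barDelta_\gamma}$. Since $n^{1/2}\delta(\barDelta_\gamma)/\barDelta_\gamma = n\rho/\barDelta_\gamma + n^{5/2}\rho^2/\barDelta_\gamma^2$, this is the claimed rate. Taking a union bound over $i$ and the finitely many $k$ gives the $\ell_{2,\infty}$ statement.

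The main obstacle is the averaging estimate \eqref{eqn:average-dotL-barL} applied with one deterministic vector $\bfs_2=\barL\bfe_i/\sqrt n$. This vector is delocalized but not piecewise constant across all indices in the same way as $\barv_k$, so it has to be checked directly. To do so, we would expand $\ddotL-\barL$ to first order in $\ddotD-\barD$, as in the proof of Lemma \ref{lemma:v-hatL-barL-v}. This reduces every term to bilinear forms $\angles{\bfs,(\ddotK-\barK)\bfs'}$ with $\norm{\bfs}_\infty,\norm{\bfs'}_\infty$ bounded by the appropriate powers of $n$, where $\bfs'$ may be $\bfone$ or $\barD^{-1}\barL\bfe_i$. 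Each such form is then handled by \eqref{eqn:average-dotK-barK}, and the second-order degree remainders are absorbed into $\oprec{n\rho^2}$.
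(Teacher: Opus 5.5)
Your proposal is correct and follows the paper's own route. The paper proves this lemma by rerunning the contour-integral and Woodbury argument of Lemma \ref{lemma:inner-prod-dotU-barU} with the Laplacian objects. It notes only that the extra $n\rho^2$ term from Lemma \ref{lemma:v-hatL-barL-v} and \eqref{eqn:average-dotL-barL} is absorbed into $\delta(\abs{z})$ in the $\bfa$- and $\bfb$-type bounds, which is exactly what you check using $\abs{z}\lesssim n$. Your additional verification of \eqref{eqn:average-dotL-barL} for the deterministic vector $\barL\bfe_i/\sqrt{n}$, via the first-order degree expansion, fills in a step the paper calls ``straightforward''. This vector is in fact blockwise constant, so that check is easier than you suggest.
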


This lemma can be proved by following the proof of Lemma \ref{lemma:inner-prod-dotU-barU}. The only additional point is the same as in the proof of Proposition \ref{prop:spike-L}: the extra $n \rho^2$ term arising from Lemma \ref{lemma:v-hatL-barL-v}, or more generally from \eqref{eqn:average-dotL-barL}, does not affect the overall argument. For reference, we remark that the relevant estimates of this type are used in the proof of Lemma \ref{lemma:inner-prod-dotU-barU} only when deriving \eqref{tmp:split-bfa} and \eqref{tmp:split-bfb}.

It remains to provide a bound on $\norm{\ddotL-\barL}_{2,\infty}$ replacing \eqref{bound:max-row-dotK-barK}. We first establish an entrywise estimate for $\ddot{L}_{ij} - \bar{L}_{ij}$. By applying \eqref{tmp:lower-D} and \eqref{tmp:1-over-sqrt-dotD} to the decomposition \eqref{eqn:decomp-ddotL-barL}, we obtain
\begin{equation*}
    \ddot{L}_{ij} - \bar{L}_{ij}
    = n \pars[\big]{\ddot{D}_{i}^{-1/2} - \bar{D}_{i}^{-1/2}} \ddot{K}_{ij} \ddot{D}_j^{-1/2}
    + n \bar{D}_i^{-1/2} \ddot{K}_{ij} \pars[\big]{\ddot{D}_{j}^{-1/2} - \bar{D}_{j}^{-1/2}}
    + n \bar{D}_i^{-1/2} (\ddot{K}_{ij} - \bar{K}_{ij}) \bar{D}_j^{-1/2}
    = \oprec{\rho}.
\end{equation*}
Consequently, 
\begin{equation}
    \norm{\ddotL - \barL}_{2, \infty} 
    = \max_{1 \leq i \leq n} \pars[\Big]{
    \sum\nolimits_{j=1}^n \abs{\ddot{L}_{ij} - \bar{L}_{ij}}^2}^{1/2}
    \prec \sqrt{n} \rho.
\end{equation}
With these replacements in hand, the proof of \eqref{bound:entrywise-ev-L-dominant} follows exactly as in Section \ref{subsec:dominant-evec-entrywise}.

\shortpara{Adaptation for \eqref{bound:entrywise-ev-L-subleading}}: 
The proof of \eqref{bound:entrywise-ev-L-subleading} is largely analogous to that of \eqref{bound:bfU-ddotU}, presented in Section \ref{subsec:subleading-evec-entrywise}. As in \eqref{eqn:decomp-U-Udot}, we decompose
\begin{equation}
    \bfV_r \fkS_r - \ddotV_r \ddotfkS_r
    = \mathrm{I_{\ref{eqn:V-Vdot}}}
    + \mathrm{II_{\ref{eqn:V-Vdot}}}
    + \cdots
    + \mathrm{VII_{\ref{eqn:V-Vdot}}},
    \label{eqn:V-Vdot}
\end{equation}
where the terms on the r.h.s. are defined as in \eqref{eqn:decomp-U-Udot}, with the kernel-matrix objects replaced by their normalized Laplacian counterparts. The first six terms can be treated in the same way as in Section \ref{subsec:subleading-evec-entrywise}. Thus, it remains to discuss the last term,
\begin{equation*}
    \mathrm{VII_{\ref{eqn:V-Vdot}}}
    = (\bfL - \ddotL) \ddotV_r \ddotfkQ_r \ddotGamma_r^{-1}.
\end{equation*}
In the kernel-matrix case, the treatment of $\mathrm{VII_{\ref{eqn:decomp-U-Udot}}}$ relies on the specific structure of $\bfK-\ddotK$. In the present normalized Laplacian setting, the difference $\bfL-\ddotL$ has a more involved form, and hence requires a separate discussion. Using the decomposition \eqref{eqn:decomp-bfL-ddotL}, we write $\mathrm{VII_{\ref{eqn:V-Vdot}}} = \mathrm{VII^1_{\ref{eqn:V-Vdot}}} + \cdots + \mathrm{VII^4_{\ref{eqn:V-Vdot}}}$, where
\begin{align*}
    \mathrm{VII^1_{\ref{eqn:V-Vdot}}}
    & = n (\bfD^{-1/2} - \ddotD^{-1/2}) \bfK \bfD^{-1/2} 
    \ddotV_r \ddotfkQ_{r} \ddotGamma_r^{-1}, \\
    \mathrm{VII^2_{\ref{eqn:V-Vdot}}}
    & = n \ddotD^{-1/2} \bfK (\bfD^{-1/2} - \ddotD^{-1/2}) 
    \ddotV_r \ddotfkQ_{r} \ddotGamma_r^{-1}, \\
    \mathrm{VII^3_{\ref{eqn:V-Vdot}}}
    & = n \ddotD^{-1/2} (\bfK - \ddotK) (\ddotD^{-1/2} - \barD^{-1/2})
    \ddotV_r \ddotfkQ_{r} \ddotGamma_r^{-1}, \\
    \mathrm{VII^4_{\ref{eqn:V-Vdot}}}
    & = n \ddotD^{-1/2} (\bfK - \ddotK) \barD^{-1/2}
    \ddotV_r \ddotfkQ_{r} \ddotGamma_r^{-1}. 
\end{align*}
Following the same argument as in Section \ref{subsec:dominant-evec-entrywise}, we obtain the normalized Laplacian analogues of \eqref{bound:elementary-S-H}, \eqref{bound:l2-dotU-barU}, and \eqref{bound:inv-dot-Lamb}:
\begin{equation*}
    \norm{\ddotfkQ_r} + \norm{\ddotfkS_r}
    + \norm{\ddotfkQ_r^{-1}} + \norm{\ddotfkS_r^{-1}} \prec 1,
    \qquad
    \norm{\ddotV_r \ddotfkQ_r - \barV_r}
    \prec {n \rho} / {\barDelta_{\gamma}},
    \qquad
    \norm{\ddotGamma_r^{-1}} \prec 1 / \barDelta_\gamma.
\end{equation*}
Since the degree matrices are diagonal, these estimates, together with \eqref{tmp:lower-D}, \eqref{tmp:1-over-sqrt-bfD}, and the trivial bound $\norm{\bfK}_{2,\infty}\lesssim \sqrt n$, imply
\begin{align*}
    \norm{\mathrm{VII^1_{\ref{eqn:V-Vdot}}}}_{2, \infty}
    & \leq
    n \norm{\bfD^{-1/2} - \ddotD^{-1/2}}
    \cdot \norm{\bfK}_{2, \infty}
    \cdot \norm{\bfD^{-1/2}\ddotV_r \ddotfkQ_{r} \ddotGamma_r^{-1}}
    \prec 1/(\sqrt{n}\barDelta_\gamma) + \sqrt{n}\rho^2/\barDelta_\gamma, \\
    \norm{\mathrm{VII^2_{\ref{eqn:V-Vdot}}}}_{2, \infty}
    & \leq
    n \norm{\ddotD^{-1/2}}
    \cdot \norm{\bfK}_{2, \infty}
    \cdot \norm{(\bfD^{-1/2} - \ddotD^{-1/2})\ddotV_r \ddotfkQ_{r} \ddotGamma_r^{-1}}
    \prec 1/(\sqrt{n}\barDelta_\gamma) + \sqrt{n}\rho^2/\barDelta_\gamma.
\end{align*}
We next estimate the third term. The analysis in Section \ref{subsec:norm-bound-residual} implies
\begin{equation*}
    \norm{\bfK - \ddotK}_{2,\infty}
    \leq \norm{\diag (\ddotK)}_{2,\infty}
    + \norm{\bfR + \bfR' + \bfQ \circ \caH (\bfW)}_{2,\infty}
    \prec 1 + \sqrt{n} \rho .
\end{equation*}
Combining this estimate with \eqref{tmp:1-over-sqrt-dotD}, we obtain
\begin{equation*}
    \norm{\mathrm{VII^3_{\ref{eqn:V-Vdot}}}}_{2, \infty}
    \leq
    n \norm{\ddotD^{-1/2}}
    \cdot \norm{\bfK - \ddotK}_{2,\infty}
    \cdot \norm{(\ddotD^{-1/2} - \barD^{-1/2}) \ddotV_r \ddotfkQ_{r} \ddotGamma_r^{-1}}
    \prec \rho/\barDelta_\gamma + \sqrt{n}\rho^2/\barDelta_\gamma.
\end{equation*}
It remains to control the fourth term. This part is analogous to the argument leading to \eqref{tmp:sublead-term-7}. Specifically,
\begin{equation*}
    \norm{\mathrm{VII^4_{\ref{eqn:V-Vdot}}}}_{2, \infty}
    \leq 
    n \norm{\ddotD^{-1/2}}
    \cdot \norm{(\bfK - \ddotK) \barD^{-1/2} \ddotV_r \ddotfkQ_{r}}_{2,\infty}
    \cdot \norm{\ddotGamma_r^{-1}}
    \prec 
    n \norm{(\bfK - \ddotK) \barD^{-1/2} \ddotV_r}_{2,\infty} 
    / (\sqrt{n}\barDelta_\gamma).
\end{equation*}
For the matrix on the r.h.s., we further decompose
\begin{align*}
    (\bfK - \ddotK) \barD^{-1/2} \ddotV_r \ddotfkQ_{r} 
    = & ~ \braks{\bfQ \circ \caH (\bfW)} \barD^{-1/2} \barV_r
    + \braks{\bfQ \circ \caH (\bfW)} \barD^{-1/2} (\ddotV_r \ddotfkQ_{r}  - \barV_r) \\
    & - \diag (\ddotK) \barD^{-1/2} \ddotV_r \ddotfkQ_{r} 
    + (\bfR + \bfR') \barD^{-1/2} \ddotV_r \ddotfkQ_{r}.     
\end{align*}
Except for the first term, the other three terms are treated exactly as in the derivation of \eqref{tmp:sublead-term-7}. Hence
\begin{equation*}
    \norm{(\bfK - \ddotK) \barD^{-1/2} \ddotV_r}_{2, \infty}
    \leq \norm{\braks{\bfQ \circ \caH (\bfW)} \barD^{-1/2} \barV_r}_{2, \infty}
    + (n \rho^2 / \barDelta_\gamma
    + 1 / n 
    + \rho^2).
\end{equation*}
Note that Lemma \ref{lemma:QW-barU-infnorm} is not directly applicable when controlling this remaining term. However, as noted in the proof of Lemma \ref{lemma:QW-barU-infnorm}, we actually have the more general estimate \eqref{eqn:general-QW-s}, which implies
\begin{equation}
    \norm{\braks{\bfQ \circ \caH (\bfW)} 
    \barD^{-1/2} \barV_r}_{2, \infty}
    \prec \rho / \sqrt{n} ,
\end{equation}
and therefore
\begin{equation*}
    \norm{\mathrm{VII^4_{\ref{eqn:V-Vdot}}}}_{2, \infty}
    \prec
    1/(\sqrt{n}\barDelta_\gamma) + n^{3/2}\rho^2/\barDelta_\gamma^2.
\end{equation*}

Summarizing the preceding estimates, we arrive at
\begin{equation*}
    \norm{\mathrm{VII_{\ref{eqn:V-Vdot}}}}_{2, \infty}
    \prec
    1/(\sqrt{n}\barDelta_\gamma) + n^{3/2}\rho^2/\barDelta_\gamma^2.
\end{equation*}
Thus, although $\bfL-\ddotL$ has a more complicated structure than $\bfK-\ddotK$, the resulting bound matches its kernel-matrix analogue \eqref{tmp:sublead-term-7}. With this replacement in hand, the proof of \eqref{bound:entrywise-ev-L-subleading} follows from the proof of \eqref{bound:bfU-ddotU}, after replacing the kernel-matrix objects by their normalized Laplacian counterparts.
\end{proof}

\begin{proof}[Proof of Theorem \ref{thm:Linfty-evec-bfL-barL}]
The bound \eqref{eqn:rowwise-L-without-eval} follows directly from Proposition \ref{prop:L-eigv-two-step} and the triangle inequality. The bound \eqref{eqn:rowwise-L-with-eval}, which additionally incorporates the eigenvalue rescaling, follows from an argument entirely analogous to that in Section \ref{subsec:score-bound}. As before, it suffices to replace the kernel-matrix quantities there by their normalized-Laplacian counterparts. We therefore omit the details for simplicity.
\end{proof}

\section{Analysis of the clustering algorithm}
\label{sec:proof-misclass}

This section contains the technical proofs of the theoretical results stated in Section \ref{sec:KSC-algorithm}. In Section \ref{subsec:adaptive-bandwidths}, we prove Lemma \ref{lemma:bandwidth-regularity} and Corollary \ref{coro:pertur-adaptive}, which characterize the properties of the data-adaptive bandwidths and the associated kernel matrices. In Section \ref{subsec:misclass-rate}, we prove Theorem \ref{thm:zero-misclass}. The proof of this exact recovery result relies on a technical result from \cite{abbeLpTheoryPCA2022}.

\subsection{Data-adaptive bandwidths}
\label{subsec:adaptive-bandwidths}

\begin{proof}[Proof of Lemma \ref{lemma:bandwidth-regularity}]
We first verify the admissibility of the deterministic bandwidths $( \hbar_t^* )_{t=1}^T$. Since $\omega_t \in (0,1)$, each quantile $\hbar_t^* = \bar{G}^{-1}(\omega_t)$ coincides with one of the deterministic proxies $\theta_{k\ell}$. Therefore, by Assumption \ref{assump:multi-scale}, the first condition in \eqref{cond:admissible-bandwidth} is satisfied. Next, fix $s \in \dbraks{T_{\ref{assump:multi-scale}}}$. By Assumption \ref{assump:quantile}, there exists some $t \in \dbraks{T}$ such that $\omega_t \in \pars[\big]{{N_{< s}} / {N}, {(N_{< s} + N_s)} / {N}}$. Recalling the definition of $\bar{G}$ and using Assumption \ref{assump:multi-scale}, we then obtain
\begin{equation*}
    \hbar_t^* = \bar{G}^{-1} (\omega_t)
    \in \{ \theta_{k \ell}: \theta_{k \ell} \in
    [c_{\ref{assump:multi-scale}} \vartheta_s, 
    \vartheta_s / c_{\ref{assump:multi-scale}}] \}.
\end{equation*}
Hence, the second condition in \eqref{cond:admissible-bandwidth} also holds. This proves the admissibility of $( \hbar_t^* )_{t=1}^T$.

It remains to prove \eqref{eqn:close-admissible}. Fix arbitrary $\varepsilon \in (0,c/2)$ and $C>0$. By Lemma \ref{lemma:xi-control}, there exists an event $\Xi$ satisfying $\bbp(\Xi) \geq 1-n^{-C}$ such that, on $\Xi$, the following holds for every $i \in \caC_k$ and $j \in \caC_\ell$ with $i \neq j$,
\begin{equation}
    \norm{\bfx_i - \bfx_j}^2 
    \in \braks[\big]{
    \theta_{k \ell} - n^{\varepsilon/2} \psi_{k \ell},
    \theta_{k \ell} + n^{\varepsilon/2} \psi_{k \ell} }
    \subset
    \braks[\big]{
    (1 - n^{\varepsilon/2} \rho) \theta_{k \ell},
    (1 + n^{\varepsilon/2} \rho) \theta_{k \ell} }.
    \label{eqn:distant-event}
\end{equation}
In the remainder of the proof, we work on the event $\Xi$. Set $\eta := n^{\varepsilon/2}\rho$. We claim that, for all $t \in \dbraks{T}$,
\begin{equation}
    (1-2\eta)\hbar_t^* \leq h_t \leq (1+\eta)\hbar_t^*.
    \label{tmp:ht-upper-lower}
\end{equation}
Fix $t \in \dbraks{T}$. For every $(k,\ell)$ satisfying $\theta_{k\ell} \leq \hbar_t^*$, the upper bound in \eqref{eqn:distant-event} implies that $\norm{\bfx_i-\bfx_j}^2 \leq (1+\eta)\hbar_t^*$ for all $i \in \caC_k$ and $j \in \caC_\ell$ with $i\neq j$. Thus, all pairwise distances whose associated deterministic proxies do not exceed $\hbar_t^*$ are counted by $G((1+\eta)\hbar_t^*)$. Consequently, we have
\begin{equation*}
    G((1+\eta)\hbar_t^*)\geq \bar{G}(\hbar_t^*)\geq \omega_t
    \quad \text{ and thus } \quad
    h_t \leq (1+\eta)\hbar_t^*.
\end{equation*}
Conversely, suppose that $\norm{\bfx_i-\bfx_j}^2 \leq (1-2\eta) \hbar_t^*$. Then the corresponding deterministic proxy must satisfy $\theta_{k\ell} < \hbar_t^*$, since otherwise
\begin{equation*}
    \norm{\bfx_i-\bfx_j}^2
    \geq (1-\eta)\theta_{k\ell}
    \geq (1-\eta)\hbar_t^*
    > (1-2\eta)\hbar_t^* .
\end{equation*}
which leads to a contradiction. In particular, only pairwise distances associated with deterministic proxies strictly smaller than $\hbar_t^*$ can contribute to $G((1-2\eta)\hbar_t^*)$. Hence, we have
\begin{equation*}
    G((1-2\eta)\hbar_t^*)\leq \bar{G}(\hbar_t^*-)<\omega_t
    \quad \text{ and thus } \quad
    h_t \geq (1-2\eta) \hbar_t^*.
\end{equation*}
Combining the two bounds, we obtain \eqref{tmp:ht-upper-lower} and therefore $\abs{h_t-\hbar_t^*} / \hbar_t^* \leq 2\eta = 2 n^{\varepsilon}\rho$. This proves \eqref{eqn:close-admissible}.
\end{proof}

\begin{proof}[Proof of Corollary \ref{coro:pertur-adaptive}]
Throughout the proof, let $\Xi$ denote the high-probability event introduced in the proof of Lemma \ref{lemma:bandwidth-regularity}. On $\Xi$, the data-adaptive bandwidth vector $\boldsymbol{h} = (h_t)_{t=1}^T$ satisfies
\begin{equation}
    \bbh \equiv \bbh (\varepsilon)
    := \prod\nolimits_{t=1}^T 
    \braks[\big]{ (1 - n^\varepsilon \rho) \hbar_t^*,
    (1 + n^\varepsilon \rho) \hbar_t^* } .
    \label{def:bbH}
\end{equation}
Hereafter, we choose $\varepsilon \in (0, c_{\ref{assump:gap-hbar}}/2)$. Under Assumption \ref{assump:gap-hbar}, this choice ensures that $n^\varepsilon\rho \ll 1$. Rather than analyzing $\bfK(\boldsymbol{h})$ directly for the random, data-adaptive bandwidth vector $\boldsymbol{h}$, we establish estimates uniformly over the family of kernel matrices $\{\bfK(\boldsymbol{\hbar}): \boldsymbol{\hbar} \in \bbh\}$ indexed by deterministic bandwidth vectors $\boldsymbol{\hbar}$. The desired conclusions for $\bfK(\boldsymbol{h})$ then follow by restricting these uniform estimates to the event $\Xi$, on which $\boldsymbol{h} \in \bbh$. We only prove \eqref{eqn:bfu-baru-adaptive}; the proof of \eqref{eqn:bfu-baru-adaptive-with-eigenvalue}, based on \eqref{bound:Kev-with-eigs}, is entirely analogous.

We first establish a deterministic perturbation estimate for the informative eigenvalues. We have
\begin{equation}
    \max\nolimits_{k \in \dbraks{m}} 
    \abs{\barlamb_k (\boldsymbol{\hbar})
    - \barlamb_k (\boldsymbol{\hbar}^*)}
    \leq 
    \norm{\barK (\boldsymbol{\hbar}) - \barK (\boldsymbol{\hbar}^*)} 
    \leq 
    \norm{\barK (\boldsymbol{\hbar}) - \barK (\boldsymbol{\hbar}^*)}_{\Fnorm}
    \lesssim n^{1+\varepsilon} \rho,
    \qfor \boldsymbol{\hbar} \in \bbh.
    \label{tmp:pertur-barK-bbH}
\end{equation}
To see this, note that $\caS_{k\ell}(0)$ and $\caS_{k\ell}(1)$ remain unchanged as $\boldsymbol{\hbar}$ varies over $\bbh$. For $t\in\caS_{k\ell}(0)$, we have
\begin{equation*}
    \abs[\big]{\exp (- \theta_{k\ell}/\hbar_t)
    - \exp (- \theta_{k\ell}/\hbar_t^*)}
    \leq \theta_{k\ell} \abs{1 / \hbar_t - 1 / \hbar_t^*}
    \lesssim n^\varepsilon \rho.
\end{equation*}
Here, we used the definition of $\bbh$ and the fact that $\theta_{k\ell} \lesssim \hbar_t^*$ for $t\in\caS_{k\ell}(0)$. On the other hand, for $t\in\caS_{k\ell}(1)$,
\begin{equation*}
    \abs[\big]{\exp (- \theta_{k\ell}/\hbar_t)
    - \exp (- \theta_{k\ell}/\hbar_t^*)}
    \leq \exp (- \theta_{k\ell}/\hbar_t)
    + \exp (- \theta_{k\ell}/\hbar_t^*)
    \lesssim \exp (- n^{c_{\ref{assump:multi-scale}} / 2})
    \lesssim n^\varepsilon \rho.
\end{equation*}
Combining the two preceding bounds leads to \eqref{tmp:pertur-barK-bbH}. Now, combining \eqref{tmp:pertur-barK-bbH} with the eigengap condition \eqref{eqn:gap-adaptive}, and recalling that $n^{\varepsilon} \rho \ll 1$, we obtain
\begin{equation}
    \Delta_{r} (\barLamb (\boldsymbol{\hbar}^*)) 
    \geq c_{\ref{assump:gap-hbar}} n / 2,
    \qfor \boldsymbol{\hbar} \in \bbh.
    \label{tmp:claim-1}
\end{equation}
We next control the difference between the sample and informative eigenvalues. On $\Xi$, we have
\begin{equation}
    \max\nolimits_{k \in \dbraks{m}} 
    \abs{\lambda_k (\boldsymbol{\hbar})
    - \barlamb_k (\boldsymbol{\hbar})}
    \leq 
    \norm{\bfK (\boldsymbol{\hbar}) - \barK (\boldsymbol{\hbar})} 
    \leq 
    \norm{\bfK (\boldsymbol{\hbar}) - \barK (\boldsymbol{\hbar})}_{\Fnorm}
    \lesssim n^{\varepsilon} (1 + n \rho),
    \label{tmp:pertur-diffK-bbH}
\end{equation}
uniformly over $\boldsymbol{\hbar}\in\bbh$. This estimate can be deduced by repeating the argument in Section \ref{subsec:norm-bound-residual}, while using the uniform distance concentration estimate \eqref{eqn:distant-event}. We omit the details. Combining \eqref{tmp:pertur-diffK-bbH} with \eqref{tmp:claim-1} yields
\begin{equation}
    \Delta_{r} (\bfLamb (\boldsymbol{\hbar}^*)) 
    = [\lambda_{r} (\boldsymbol{\hbar})
    - \lambda_{r + 1} (\boldsymbol{\hbar})]
    \wedge 
    \lambda_{r} (\boldsymbol{\hbar})
    \geq c_{\ref{assump:gap-hbar}} n / 4,
    \qfor \boldsymbol{\hbar} \in \bbh.
    \label{tmp:claim-2}
\end{equation}

We now start the derivation of \eqref{eqn:bfu-baru-adaptive}. Fix arbitrary $\varepsilon_0 > 0$ and $C_0>0$. Retain the high-probability event $\Xi$ defined above, and choose its probability parameter such that $\bbp(\Xi)\geq 1-n^{-C_0}$. Recall that $\boldsymbol{h} \in \bbh$ on $\Xi$. It therefore suffices to show that
\begin{subequations}
\begin{equation}
    \bbp \curls[\big] { 
    \sup\nolimits_{\boldsymbol{\hbar} \in \bbh}  
    \norm{\bfU_r (\boldsymbol{\hbar}) 
    \fkS_r^{\bfu} (\boldsymbol{\hbar})
    - \barU_r (\boldsymbol{\hbar}) }_{2,\infty}
    \leq 
    n^{\varepsilon_0} (\rho / \sqrt{n} + 1 / n^{3/2}) }
    \geq 1 - 2 n^{-C_0}.
    \label{tmp:sup-inside}
\end{equation}
By \eqref{tmp:claim-1}, the eigengap condition $\Delta_{r} (\barLamb (\boldsymbol{\hbar})) \gtrsim n$ holds uniformly over $\boldsymbol{\hbar}\in\bbh$. Inspection of the proof of \eqref{bound:Kev-without-eigs} shows that the same argument applies to every deterministic $\boldsymbol{\hbar}\in\bbh$. In particular,
\begin{equation}
    \inf\nolimits_{\boldsymbol{\hbar} \in \bbh}
    \bbp \curls[\big] {   
    \norm{\bfU_r (\boldsymbol{\hbar}) 
    \fkS_r^{\bfu} (\boldsymbol{\hbar})
    - \barU_r (\boldsymbol{\hbar}) }_{2,\infty}
    \leq 
    n^{\varepsilon_0/2} (\rho / \sqrt{n} + 1 / n^{3/2}) }
    \geq 1 - 2 n^{-2 C_1}.
    \label{tmp:sup-outside}
\end{equation}    
\end{subequations}
where $C_1>0$ is a sufficiently large constant to be specified below. To upgrade the pointwise estimate \eqref{tmp:sup-outside} to the uniform estimate \eqref{tmp:sup-inside}, we use an $\varepsilon$-net argument. The key technical input is the following local continuity estimate, whose proof is deferred to the end of this section.

\begin{lemma}[local continuity]
\label{lemma:Lipschitz-norm}
There exists a fixed constant $D_0>0$ such that, on the event $\Xi$, for all bandwidth vectors $\boldsymbol{\hbar},\boldsymbol{\hbar}^\prime \in \bbh$ satisfying $\norm{\boldsymbol{\hbar} - \boldsymbol{\hbar}^\prime} \leq n^{-(D_0+1)}$,
\begin{equation}
    \abs[\big]{ \norm{\bfU_r (\boldsymbol{\hbar}) 
    \fkS_r^{\bfu} (\boldsymbol{\hbar})
    - \barU_r (\boldsymbol{\hbar}) }_{2,\infty}
    - \norm{\bfU_r (\boldsymbol{\hbar}^\prime) 
    \fkS_r^{\bfu} (\boldsymbol{\hbar}^\prime)
    - \barU_r (\boldsymbol{\hbar}^\prime) }_{2,\infty} }
    \leq n^{D_0} \norm{\boldsymbol{\hbar} 
    - \boldsymbol{\hbar}^\prime}.
    \label{tmp:Lipschitz-continuity}
\end{equation}
\end{lemma}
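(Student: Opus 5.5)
The plan is to prove \eqref{tmp:Lipschitz-continuity} by showing that every ingredient of the map $\boldsymbol{\hbar}\mapsto \bfU_r(\boldsymbol{\hbar})\fkS_r^{\bfu}(\boldsymbol{\hbar})-\barU_r(\boldsymbol{\hbar})$ is polynomially Lipschitz on $\bbh$, on the event $\Xi$. By the reverse triangle inequality and $\norm{\cdot}_{2,\infty}\leq\norm{\cdot}$, the left-hand side of \eqref{tmp:Lipschitz-continuity} is at most
\begin{equation*}
    \norm{\bfU_r(\boldsymbol{\hbar})\fkS_r^{\bfu}(\boldsymbol{\hbar})-\bfU_r(\boldsymbol{\hbar}')\fkS_r^{\bfu}(\boldsymbol{\hbar}')}
    + \norm{\barU_r(\boldsymbol{\hbar})-\barU_r(\boldsymbol{\hbar}')}.
\end{equation*}
It therefore suffices to bound each of these two terms by $n^{D_0}\norm{\boldsymbol{\hbar}-\boldsymbol{\hbar}'}$.

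\emph{Step 1: entrywise Lipschitz bounds for the matrices.} For $t\in\dbraks{T}$, each entry $\alpha_t\exp(-\norm{\bfx_i-\bfx_j}^2/\hbar_t)$ has derivative in $\hbar_t$ bounded by $\norm{\bfx_i-\bfx_j}^2/\hbar_t^2\cdot\sup_{x\ge0}xe^{-x}\lesssim 1/\hbar_t$. On $\bbh$ we have $\hbar_t\asymp\hbar_t^*\geq c\vartheta_1\geq n^{-C_{\ref{assump:tech-const}}}$ by admissibility and Assumption \ref{assump:tech-const}. Hence $\norm{\bfK(\boldsymbol{\hbar})-\bfK(\boldsymbol{\hbar}')}\leq\norm{\cdot}_\Fnorm\lesssim n^{1+C_{\ref{assump:tech-const}}}\norm{\boldsymbol{\hbar}-\boldsymbol{\hbar}'}$, and this holds deterministically. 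Since the index sets $\caS_{k\ell}(0)$ do not change on $\bbh$, the same argument applied to $\theta_{k\ell}/\hbar_t$ gives $\norm{\barK(\boldsymbol{\hbar})-\barK(\boldsymbol{\hbar}')}\lesssim n^{1+C}\norm{\boldsymbol{\hbar}-\boldsymbol{\hbar}'}$.

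\emph{Step 2: Davis--Kahan for projectors.} By \eqref{tmp:claim-1} and \eqref{tmp:claim-2}, on $\Xi$ both $\bfK(\boldsymbol{\hbar})$ and $\barK(\boldsymbol{\hbar})$ have an $r$-th eigengap of at least $cn$, uniformly over $\bbh$. Davis--Kahan then gives
\begin{equation*}
    \norm{\bfU_r\bfU_r^\top(\boldsymbol{\hbar})-\bfU_r\bfU_r^\top(\boldsymbol{\hbar}')}\lesssim n^{C}\norm{\boldsymbol{\hbar}-\boldsymbol{\hbar}'},
\end{equation*}
and the same bound holds for the barred projectors. The smallness $\norm{\boldsymbol{\hbar}-\boldsymbol{\hbar}'}\le n^{-(D_0+1)}$ ensures that the perturbation is below half the gap, so the theorem applies.

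\emph{Step 3: passing from projectors to the aligned bases.} This is the main obstacle, because individual eigenvectors need not be continuous when eigenvalues inside the top-$r$ block coalesce. The point is that $\bfU_r\fkS_r^{\bfu}$ is basis-invariant. Writing $\fkQ=\bfU_r^\top\barU_r$, the polar factor is $\sgn(\fkQ)=\fkQ(\fkQ^\top\fkQ)^{-1/2}$, so
\begin{equation*}
    \bfU_r\fkS_r^{\bfu}=\bfP\barU_r(\barU_r^\top\bfP\barU_r)^{-1/2},
    \qquad
    \bfP=\bfU_r\bfU_r^\top.
\end{equation*}
This depends only on the projector $\bfP$ and on $\barU_r$. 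The same device does not work for $\barU_r(\boldsymbol{\hbar})$ alone, since we need an actual basis of the barred eigenspace. I would first try to fix a smooth choice: $\barU_r=\bfPhi\fkU_r$, and $\fkK(\boldsymbol{\hbar})$ is an $m\times m$ matrix depending smoothly on $\boldsymbol{\hbar}$. If degeneracies inside the top block prevent a smooth eigenbasis, I would instead exploit the freedom in defining the reference: replace $\barU_r(\boldsymbol{\hbar})$ by $\barP(\boldsymbol{\hbar})\barU_r(\boldsymbol{\hbar}^*)(\cdot)^{-1/2}$. This matrix is Lipschitz in the barred projector and is a legitimate orthonormal basis of the same eigenspace. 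The quantity in \eqref{eqn:bfu-baru-adaptive} is invariant under rotating the reference basis on the right, provided the alignment $\fkS$ is recomputed, because $\norm{\cdot}_{2,\infty}$ is invariant under right multiplication by an orthogonal matrix.

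On $\Xi$, Steps 1--2 give $\barU_r^\top\bfP\barU_r\succeq 1/2$ by \eqref{bound-S-H}. The map $\bfA\mapsto\bfA^{-1/2}$ is Lipschitz on $\{\bfA\succeq 1/2\}$, so the composition is Lipschitz with a polynomial constant. Combining the three steps yields \eqref{tmp:Lipschitz-continuity} with $D_0 = C+2$.
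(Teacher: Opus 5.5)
Your proof is correct. Its skeleton matches the paper's: entrywise Lipschitz bounds for $\bfK$ and $\barK$, then Davis--Kahan for the two rank-$r$ spectral projectors, using the eigengaps $\gtrsim n$ from \eqref{tmp:claim-1}--\eqref{tmp:claim-2}. The final step, converting projector bounds into the Lipschitz estimate, is done differently.

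\emph{The paper's route.} It keeps whatever eigenbases are given and aligns them pairwise. It right-multiplies one error matrix by $\sgn(\barU_r(\boldsymbol{\hbar})^\top\barU_r(\boldsymbol{\hbar}'))$, splits the difference into four terms, and handles the two terms containing sign matrices by local Lipschitz continuity of the polar factor.

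\emph{Your route.} You show the quantity depends only on the projectors $\mathbf{P}(\boldsymbol{\hbar})=\bfU_r\bfU_r^\top$ and $\bar{\mathbf{P}}(\boldsymbol{\hbar})=\barU_r\barU_r^\top$. You re-base the barred eigenspace as $\tilde{\mathbf{U}}(\boldsymbol{\hbar})=\bar{\mathbf{P}}(\boldsymbol{\hbar})\barU_r(\boldsymbol{\hbar}^*)\bigl(\barU_r(\boldsymbol{\hbar}^*)^\top\bar{\mathbf{P}}(\boldsymbol{\hbar})\barU_r(\boldsymbol{\hbar}^*)\bigr)^{-1/2}$ and use $\bfU_r\sgn(\bfU_r^\top\tilde{\mathbf{U}})=\mathbf{P}\tilde{\mathbf{U}}(\tilde{\mathbf{U}}^\top\mathbf{P}\tilde{\mathbf{U}})^{-1/2}$. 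After that, a single reverse triangle inequality finishes.

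\emph{What each buys.} Your version makes invariance under the choice of eigenbasis explicit, including sign flips and rotations within coalescing top eigenvalues. It also replaces the paper's informally written ``$\sgn$ of a difference'' terms with an explicit Lipschitz map. The paper's pairwise alignment, in turn, needs no global anchor.

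Your Step 1 is also slightly cleaner. Bounding the $\hbar_t$-derivative uniformly in the distance gives a deterministic constant $n^{1+C}$, whereas the paper uses the distance bounds on $\Xi$ and obtains $n^{1+3C}$. One slip there: the derivative equals $\hbar_t^{-1}xe^{-x}$ with $x=\norm{\bfx_i-\bfx_j}^2/\hbar_t$, so your displayed factor should read $\hbar_t^{-1}\sup_{x\geq 0}xe^{-x}$.

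Two points to tighten.

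First, make the anchored basis your primary construction rather than a fallback. $\barU_r(\boldsymbol{\hbar})$ is determined only up to right rotation, so your opening reduction to $\norm{\barU_r(\boldsymbol{\hbar})-\barU_r(\boldsymbol{\hbar}')}$ is not valid before re-basing; sign choices alone can make it of order one. Moreover, a smooth eigenbasis of $\fkK(\boldsymbol{\hbar})$ need not exist.

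Second, both well-conditioning facts must hold uniformly over $\bbh$ on $\Xi$.
\begin{itemize}
\item $\barU_r(\boldsymbol{\hbar}^*)^\top\bar{\mathbf{P}}(\boldsymbol{\hbar})\barU_r(\boldsymbol{\hbar}^*)\succeq\tfrac{1}{2}\bfI$ follows from \eqref{tmp:pertur-barK-bbH} and Davis--Kahan, which give $\norm{\bar{\mathbf{P}}(\boldsymbol{\hbar})-\bar{\mathbf{P}}(\boldsymbol{\hbar}^*)}\lesssim n^{\varepsilon}\rho$.
\item $\tilde{\mathbf{U}}^\top\mathbf{P}\tilde{\mathbf{U}}\succeq\tfrac{1}{2}\bfI$ should be derived from the uniform estimate \eqref{tmp:pertur-diffK-bbH}, not from the pointwise stochastic-domination bound \eqref{bound-S-H}.
\end{itemize}
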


Here, the distance between two bandwidth vectors is measured in the usual Euclidean norm, $\norm{\boldsymbol{\hbar} - \boldsymbol{\hbar}^\prime}^2 = \sum_{t=1}^T \abs{\hbar_t - \hbar_t^\prime}^2$. Assuming Lemma \ref{lemma:Lipschitz-norm}, let $\mathbb{A}$ be an $n^{-(D_0+2)}$-net of $\bbh$ with respect to the Euclidean norm. As $n^{- C_{\ref{assump:tech-const}}} \lesssim \vartheta_s \lesssim n^{C_{\ref{assump:tech-const}}}$, the hypercube $\bbh$ is a product of $T$ intervals whose lengths have at most polynomial order in $n$. Therefore, there exists a fixed constant $D_1>0$ such that $\abs{\mathbb{A}} \leq n^{D_0 + D_1}$. Now choose $C_1 > C_0 + D_0 + D_1$. Applying \eqref{tmp:sup-outside} to each $\boldsymbol{\hbar}\in\mathbb{A}$ and taking a union bound yields
\begin{equation}
    \bbp \curls[\big]{
    \sup\nolimits_{\boldsymbol{\hbar} \in \mathbb{A}} 
    \norm{\bfU_r (\boldsymbol{\hbar}) 
    \fkS_r^{\bfu} (\boldsymbol{\hbar})
    - \barU_r (\boldsymbol{\hbar}) }_{2,\infty}
    \leq 
    n^{\varepsilon_0 / 2} (\rho / \sqrt{n} + 1 / n^{3/2})}
    \geq 
    1 - 2 n^{-C_0}.
    \label{tmp:bound-on-net}
\end{equation}
Now suppose that the event in \eqref{tmp:bound-on-net} holds, and fix an arbitrary $\boldsymbol{\hbar} \in \bbh$. By the definition of $\mathbb{A}$, there exists some $\boldsymbol{\hbar}' \in \mathbb{A}$ such that $\norm{\boldsymbol{\hbar} - \boldsymbol{\hbar}^\prime} \leq n^{-(D_0 + 2)}$. Consequently, Lemma
\ref{lemma:Lipschitz-norm} implies that
\begin{align*}
    \norm{\bfU_r (\boldsymbol{\hbar}) 
    \fkS_r^{\bfu} (\boldsymbol{\hbar})
    - \barU_r (\boldsymbol{\hbar}) }_{2,\infty}
    & \leq \norm{\bfU_r (\boldsymbol{\hbar}^\prime) 
    \fkS_r^{\bfu} (\boldsymbol{\hbar}^\prime)
    - \barU_r (\boldsymbol{\hbar}^\prime) }_{2,\infty}
    + n^{D_0} \norm{\boldsymbol{\hbar} 
    - \boldsymbol{\hbar}^\prime} \\
    & \leq n^{\varepsilon_0 / 2} (\rho / \sqrt{n} + 1 / n^{3/2})
    + 1 / n^{2}
    \leq n^{\varepsilon_0} (\rho / \sqrt{n} + 1 / n^{3/2}).    
\end{align*}
This proves \eqref{tmp:sup-inside} and complete the proof of \eqref{eqn:bfu-baru-adaptive}.
\end{proof}

It remains to prove Lemma \ref{lemma:Lipschitz-norm}. We emphasize that the purpose of this lemma is not to obtain a sharp continuity estimate for the entrywise perturbation error in \eqref{tmp:Lipschitz-continuity}. Rather, for the $\varepsilon$-net argument used in the proof of Corollary \ref{coro:pertur-adaptive}, it suffices to establish a local Lipschitz bound whose constant grows at most polynomially in $n$. Accordingly, we only seek a crude bound below.

\begin{proof}[Proof of Lemma \ref{lemma:Lipschitz-norm}]
Throughout the proof, we work on the event $\Xi$. First, we have
\begin{equation*}
    \norm{\bfK (\boldsymbol{\hbar}) - \bfK (\boldsymbol{\hbar}^\prime)}
    \leq \norm{\bfK (\boldsymbol{\hbar}) - \bfK (\boldsymbol{\hbar}^\prime)}_{\Fnorm}
    \lesssim n^{3 C_{\ref{assump:tech-const}} + 1} 
    \norm{\boldsymbol{\hbar} - \boldsymbol{\hbar}^\prime},
\end{equation*}
which can be deduced from
\begin{equation*}
    \abs[\big]{\exp (- \norm{\bfx_i - \bfx_j}^2 / \hbar_t)
    - \exp (- \norm{\bfx_i - \bfx_j}^2 / \hbar_t^\prime)}
    \lesssim \norm{\bfx_i - \bfx_j}^2
    \cdot \abs{1/\hbar_t - 1/\hbar_t^\prime}
    \lesssim n^{3 C_{\ref{assump:tech-const}}} \abs{\hbar_t - \hbar_t^\prime},
\end{equation*}  
where we used \eqref{eqn:distant-event} and the technical assumption $n^{- C_{\ref{assump:tech-const}}} \lesssim \vartheta_s\lesssim n^{C_{\ref{assump:tech-const}}}$. On the other hand, Combining \eqref{eqn:gap-adaptive} with \eqref{tmp:pertur-barK-bbH} and \eqref{tmp:pertur-diffK-bbH}, we have $\dist ( \bfLamb_r (\boldsymbol{\hbar}), \bfLamb_\perp (\boldsymbol{\hbar}^\prime) ) \gtrsim n$. Therefore, the Davis--Kahan $\sin\Theta$ theorem gives
\begin{subequations}
\begin{equation}
    \norm{\bfU_r (\boldsymbol{\hbar}) \bfU_r (\boldsymbol{\hbar})^\top 
    - \bfU_r (\boldsymbol{\hbar}^\prime) \bfU_r (\boldsymbol{\hbar}^\prime)^\top}
    \leq \frac{\norm{\bfK (\boldsymbol{\hbar}) - \bfK (\boldsymbol{\hbar}^\prime)}}
    {\dist ( \bfLamb_r (\boldsymbol{\hbar}), 
    \bfLamb_\perp (\boldsymbol{\hbar}^\prime) )}
    \lesssim n^{3 C_{\ref{assump:tech-const}}} 
    \norm{\boldsymbol{\hbar} - \boldsymbol{\hbar}^\prime}.
    \label{tmp:lip-bfU}
\end{equation}
A similar argument gives
\begin{equation}
    \norm{\barU_r (\boldsymbol{\hbar}) \barU_r (\boldsymbol{\hbar})^\top 
    - \barU_r (\boldsymbol{\hbar}^\prime) \barU_r (\boldsymbol{\hbar}^\prime)^\top}
    \lesssim n^{3 C_{\ref{assump:tech-const}}} 
    \norm{\boldsymbol{\hbar} - \boldsymbol{\hbar}^\prime}.
    \label{tmp:lip-barU}
\end{equation}    
\end{subequations}
We now choose the constant $D_0$ sufficiently large so that $D_0 \geq 3 C_{\ref{assump:tech-const}}$. Under the assumption $\norm{\boldsymbol{\hbar} - \boldsymbol{\hbar}^\prime} \leq n^{-(D_0 + 1)}$, the r.h.s. of \eqref{tmp:lip-bfU} and \eqref{tmp:lip-bfU} are controlled by $O(n^{-1})$. 

We are now ready to prove \eqref{tmp:Lipschitz-continuity}. For notational simplicity, write
\begin{equation*}
    \bfU_1 \equiv \bfU_r (\boldsymbol{\hbar}),
    \quad
    \barU_1 \equiv \barU_r (\boldsymbol{\hbar}),
    \quad
    \bfU_2 \equiv \bfU_r (\boldsymbol{\hbar}^\prime),
    \quad
    \barU_2 \equiv \barU_r (\boldsymbol{\hbar}^\prime).
\end{equation*}
Since the $\ell_{2,\infty}$ norm is invariant under right
multiplication by an orthogonal matrix, we have
\begin{align}
\begin{split}
    & ~ \abs[\big]{\norm{\bfU_1 \sgn (\bfU_1^\top \barU_1) - \barU_1}_{2, \infty}
    - \norm{\bfU_2 \sgn (\bfU_2^\top \barU_2) - \barU_2}_{2, \infty}} \\
    \leq & ~ \norm[\big]{ [\bfU_1 \sgn (\bfU_1^\top \barU_1) - \barU_1] 
    \sgn (\barU_1^\top \barU_2)
    - [{\bfU_2 \sgn (\bfU_2^\top \barU_2) - \barU_2}] }_{2, \infty}
    = \norm{ \mathrm{I_{\ref{eqn:U-Lipschitz}}}
    + \cdots
    + \mathrm{IV_{\ref{eqn:U-Lipschitz}}} }_{2, \infty},
\end{split} \label{eqn:U-Lipschitz}
\end{align}
where the terms on the r.h.s. are defined as
\begin{align*}
    \mathrm{I_{\ref{eqn:U-Lipschitz}}} 
    & := \bfU_1 \sgn \pars[\big]{\bfU_1^\top 
    [\barU_1 \sgn (\barU_1^\top \barU_2) - \barU_2]}, \\
    \mathrm{II_{\ref{eqn:U-Lipschitz}}} 
    & := \bfU_1 \sgn \pars[\big]{
    [\bfU_1 - \bfU_2 \sgn (\bfU_2^\top \bfU_1)]^\top \barU_2}, \\
    \mathrm{III_{\ref{eqn:U-Lipschitz}}} 
    & := [\bfU_1 \sgn (\bfU_1^\top \bfU_2) - \bfU_2] 
    \sgn (\bfU_2^\top \barU_2), \\
    \mathrm{IV_{\ref{eqn:U-Lipschitz}}} 
    & := - [\barU_1 \sgn (\barU_1^\top \barU_2) - \barU_2].
\end{align*}
The last two terms can be controlled directly by \eqref{tmp:lip-bfU} and \eqref{tmp:lip-barU},
\begin{align*}
    \norm{\mathrm{III_{\ref{eqn:U-Lipschitz}}}}_{2, \infty}
    & \leq \norm{\bfU_1 \sgn (\bfU_1^\top \bfU_2) - \bfU_2}
    \lesssim \norm{\bfU_1 \bfU_1^\top - \bfU_2 \bfU_2^\top}
    \lesssim n^{3 C_{\ref{assump:tech-const}}} 
    \norm{\boldsymbol{\hbar} - \boldsymbol{\hbar}^\prime} ,\\
    \norm{\mathrm{IV_{\ref{eqn:U-Lipschitz}}}}_{2, \infty}
    & \leq \norm{\barU_1 \sgn (\barU_1^\top \barU_2) - \barU_2}
    \lesssim \norm{\barU_1 \barU_1^\top - \barU_2 \barU_2^\top}
    \lesssim n^{3 C_{\ref{assump:tech-const}}} 
    \norm{\boldsymbol{\hbar} - \boldsymbol{\hbar}^\prime}.     
\end{align*}
To control the first two terms, we use the local Lipschitz continuity of the polar factor:
\begin{equation*}
    \norm{\sgn (\fkB_1) - \sgn (\fkB_2)}
    \lesssim \frac{1}{\sigma_{\min} (\fkB_1) + \sigma_{\min} (\fkB_2)}
    \norm{\fkB_1 - \fkB_2}.
\end{equation*}
Applying this bound together with \eqref{tmp:lip-bfU} and \eqref{tmp:lip-barU}, we obtain
\begin{align*}
    \norm{\mathrm{I_{\ref{eqn:U-Lipschitz}}}}_{2, \infty}
    & \leq \norm{\barU_1 \sgn (\barU_1^\top \barU_2) - \barU_2}
    \lesssim n^{3 C_{\ref{assump:tech-const}}} 
    \norm{\boldsymbol{\hbar} - \boldsymbol{\hbar}^\prime},\\
    \norm{\mathrm{I
    I_{\ref{eqn:U-Lipschitz}}}}_{2, \infty}
    & \leq \norm{\bfU_1 - \bfU_2 \sgn (\bfU_2^\top \bfU_1)}
    \lesssim n^{3 C_{\ref{assump:tech-const}}} 
    \norm{\boldsymbol{\hbar} - \boldsymbol{\hbar}^\prime} .    
\end{align*}
To summarize, the l.h.s. of \eqref{eqn:U-Lipschitz} can be controlled by $n^{3 C_{\ref{assump:tech-const}}} \norm{\boldsymbol{\hbar} - \boldsymbol{\hbar}^\prime}$. This concludes the proof.
\end{proof}

\subsection{Exact recovery}
\label{subsec:misclass-rate}

Our control of the misclassification rate $\caM(\pi,\hat{\pi})$ is inspired by the argument in \cite{abbeLpTheoryPCA2022}. In particular, we rely on \cite[Lemma D.1]{abbeLpTheoryPCA2022}. For completeness, we state a version of this lemma in our notation and include its proof. Let $\{ \fkx_i \}_{i=1}^n \subset \bbr^r$ be a dataset, where $r$ is fixed. For a labeling map $\pi : \dbraks{n} \to \dbraks{m}$ and a collection of centers $\{ \fka_k \}_{k=1}^m \subset \bbr^r$, define the associated $K$-means loss by
\begin{equation}
    \caL ( \pi, \{ \fka_k \} )
    = \sum\nolimits_{i=1}^{n} \norm{\fkx_i - \fka_{\bar{\pi} (i)}}^2.
    \label{def:loss-Kmeans}
\end{equation}

\begin{lemma} 
\label{lemma:misclassification}
Let $m \geq 2$, and suppose that we are given two collections of centers $\{ \bar{\fka}_k \}_{k=1}^m, \{ \hat{\fka}_k \}_{k=1}^m  \subseteq \bbr^r$ together with two labeling maps $\bar{\pi}, \hat{\pi}: \dbraks{n} \to \dbraks{m}$. Define the minimum separation between the reference centers and the smallest reference cluster size by
\begin{equation*}
    \bar{\fks} = \min\nolimits_{k \neq \ell} \, \norm{\bar{\fka}_k - \bar{\fka}_{\ell}}
    \qand
    n_{\min} = \min\nolimits_{k \in \dbraks{m}} \, \abs{\bar{\pi}^{-1} (k)}.
    \label{tmp:fks-bar}
\end{equation*}
Suppose that, for some $B > 0$ and $\eta \in (0,\bar{\fks}/2)$, the following two conditions hold:
\begin{enumerate}[label = (\roman*)]
\begin{subequations}
    \item The clustering $(\hat{\pi}, \{\hat{\fka}_k\})$ is near-optimal, in the sense that
    \begin{equation}
        \caL ( \hat{\pi}, \{ \hat{\fka}_k \} ) 
        \leq B \caL ( \bar{\pi}, \{ \bar{\fka}_k \} ) 
        \qand
        \hat{\pi} (i) \in \argmin\nolimits_{k \in \dbraks{m}} 
        \, \norm{\fkx_i - \hat{\fka}_k},
        \quad
        \forall i \in \dbraks{n}.
        \label{eqn:near-opt}
    \end{equation}
    \item The reference clustering satisfies the low-noise condition
    \begin{equation}
        \caL ( \bar{\pi}, \{ \bar{\fka}_k \} )  
        \leq \frac{\eta^2 n_{\min}}{(1 + \sqrt{B})^2 m} .
        \label{eqn:low-noise}
    \end{equation}
\end{subequations}
\end{enumerate}
Then there exists a permutation $\tau : \dbraks{m} \to \dbraks{m}$ such that
\begin{equation}
    \curls{ i \in \dbraks{n} : 
    \hat{\pi} (i) \neq (\tau \circ \bar{\pi}) (i)}
    \subset 
    \curls{ i \in \dbraks{n} : 
    \norm{ \fkx_i - \bar{\fka}_{\bar{\pi} (i)} } \geq \bar{\fks} / 2 - \eta }.
    \label{eqn:inclusion-misclass}
\end{equation}
\end{lemma}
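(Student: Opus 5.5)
The plan is the standard two-step argument of \cite[Lemma D.1]{abbeLpTheoryPCA2022}: first show that every estimated center lies close to a distinct reference center, then use the nearest-center property of $\hat{\pi}$ to confine misclassified points to the set on the r.h.s. of \eqref{eqn:inclusion-misclass}. Write $L := \caL(\bar{\pi},\{\bar{\fka}_k\})$ for the reference loss (in \eqref{def:loss-Kmeans} the center index should be read as $\pi(i)$).

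\emph{Step 1 (centers match).} For each $k \in \dbraks{m}$, I bound the distance from $\bar{\fka}_k$ to the nearest estimated center via an averaging argument over the $n_k \geq n_{\min}$ points of reference cluster $k$. For $i$ with $\bar{\pi}(i)=k$, the triangle inequality gives
\[
\norm{\bar{\fka}_k - \hat{\fka}_{\hat{\pi}(i)}} \leq \norm{\fkx_i - \bar{\fka}_k} + \norm{\fkx_i - \hat{\fka}_{\hat{\pi}(i)}}.
\]
Squaring, summing over this cluster, and applying Minkowski's inequality in $\ell_2$ together with \eqref{eqn:near-opt} yields
\[
n_{\min}\,\min_{j}\norm{\bar{\fka}_k-\hat{\fka}_j}^2 \leq \pars{\sqrt{L}+\sqrt{BL}}^2 = (1+\sqrt{B})^2 L.
\]
By \eqref{eqn:low-noise}, the right side is at most $\eta^2 n_{\min}/m \leq \eta^2 n_{\min}$. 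Hence for each $k$ there is an index $\tau(k)$ with $\norm{\bar{\fka}_k - \hat{\fka}_{\tau(k)}} \leq \eta$; the factor $m$ in \eqref{eqn:low-noise} gives slack we do not even need here. Since $\eta < \bar{\fks}/2$, the balls of radius $\eta$ around distinct $\bar{\fka}_k$ are disjoint. So $\tau(k) \neq \tau(\ell)$ for $k \neq \ell$, and $\tau$ is an injection $\dbraks{m}\to\dbraks{m}$, hence a permutation.

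\emph{Step 2 (inclusion).} Suppose $\hat{\pi}(i) = j \neq \tau(\bar{\pi}(i))$ and set $k = \bar{\pi}(i)$. Then $j = \tau(\ell)$ for some $\ell \neq k$, since $\tau$ is onto. By the nearest-center property in \eqref{eqn:near-opt},
\[
\norm{\fkx_i - \hat{\fka}_{\tau(\ell)}} \leq \norm{\fkx_i - \hat{\fka}_{\tau(k)}}.
\]
Combining this with Step 1 gives
\[
\norm{\fkx_i - \bar{\fka}_\ell} \leq \norm{\fkx_i - \bar{\fka}_k} + 2\eta.
\]
On the other hand, $\norm{\fkx_i-\bar{\fka}_\ell} \geq \bar{\fks} - \norm{\fkx_i-\bar{\fka}_k}$. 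Putting the two together yields $\norm{\fkx_i - \bar{\fka}_k} \geq \bar{\fks}/2 - \eta$, which is exactly \eqref{eqn:inclusion-misclass}.

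The only delicate point is Step 1: the averaging must be organized over a whole reference cluster, using its size $n_{\min}$. This is what turns the global loss bound into a bound on every individual center, and everything downstream rests on it.
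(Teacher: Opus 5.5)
Your proof is correct. Step 2 is the paper's nearest-center argument written in contrapositive form; the real difference is how Step 1 builds the permutation.

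The paper's Step 1 goes as follows:
\begin{itemize}
\item It first bounds $\norm{\bar{\bfPi}\bar{\fkA}-\hat{\bfPi}\hat{\fkA}}_{\Fnorm}^2\le(1+\sqrt{B})^2\caL(\bar{\pi},\{\bar{\fka}_k\})$.
\item It defines $\tau(k)$ as the plurality estimated label within reference cluster $k$, so that $\fkn_{k,\tau(k)}\ge n_{\min}/m$ by pigeonhole.
\item It proves injectivity by showing that a collision $\tau(k)=\tau(\ell)$ would force that Frobenius norm to be at least $\bar{\fks}^2 n_{\min}/(2m)$.
\item Only then does it deduce $\sum_k\norm{\bar{\fka}_k-\hat{\fka}_{\tau(k)}}^2\le\eta^2$.
\end{itemize}

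You instead take $\tau(k)$ to be the index of an estimated center nearest to $\bar{\fka}_k$. You bound that distance using the fact that a minimum is at most the average over all $n_k\ge n_{\min}$ points of the cluster. Injectivity then comes for free, because the closed $\eta$-balls around distinct reference centers are disjoint once $2\eta<\bar{\fks}$.

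Your route is shorter and avoids the pigeonhole step. It also gives the sharper per-center bound $\norm{\bar{\fka}_k-\hat{\fka}_{\tau(k)}}\le\eta/\sqrt{m}$. As you observe, this shows the factor $m$ in \eqref{eqn:low-noise} is unnecessary. The paper's plurality matching genuinely spends that factor, since the plurality label is only guaranteed $n_{\min}/m$ points. The paper's version phrases everything through the contingency counts $\fkn_{k\ell}$, but for the inclusion \eqref{eqn:inclusion-misclass} it yields nothing beyond your argument. Since $m$ is fixed in the paper, the lost factor is immaterial there.

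Two small remarks:
\begin{itemize}
\item You are right that the center index in \eqref{def:loss-Kmeans} should be $\pi(i)$.
\item Both arguments tacitly use $n_{\min}\ge 1$. You need it to divide by $n_k$; the paper needs it for its strict inequality in \eqref{eqn:YM-YMstar}. The statement genuinely requires it: with an empty reference cluster, ties in $\hat{\pi}$ give a counterexample. This is harmless, since $n_{\min}\asymp n$ in the application.
\end{itemize}
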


Let us emphasize that Lemma \ref{lemma:misclassification} does not require $(\{ \bar{\fka}_k \}, \bar{\pi})$ to minimize the $K$-means loss. Rather, the only condition imposed on this reference clustering is the low-noise condition \eqref{eqn:low-noise}.

\begin{proof}[Proof of Lemma \ref{lemma:misclassification}]
For notational convenience, define
\begin{equation*}
    \fkX = [\fkx_1, \cdots, \fkx_n]^\top 
    \in \bbr^{n \times r},
    \qquad
    \bar{\fkA} = [\bar{\fka}_1, \cdots, \bar{\fka}_m]^\top 
    \in \bbr^{m \times r},
    \qquad
    \bar{\bfPi} = [\bfe_{\bar{\pi} (1)}, \cdots, \bfe_{\bar{\pi} (n)}]^\top 
    \in \bbr^{n \times m}.
\end{equation*}
Define $\hat{\fkA}$ and $\hat{\bfPi}$ analogously from $\{\hat{\fka}_k\}_{k=1}^m$ and $\hat{\pi}$, respectively. Thus, the $i$th row of $\bar{\bfPi}\bar{\fkA}$ is $\bar{\fka}_{\bar{\pi}(i)}$, while the $i$th row of $\hat{\bfPi}\hat{\fkA}$ is $\hat{\fka}_{\hat{\pi}(i)}$. By the triangle inequality and the definition of the $K$-means loss in \eqref{def:loss-Kmeans},
\begin{equation*}
    \norm{\bar{\bfPi} \bar{\fkA} - \hat{\bfPi} \hat{\fkA}}_\Fnorm
    \leq 
    \norm{\fkX - \bar{\bfPi}\bar{\fkA}}_\Fnorm 
    + \norm{\fkX - \hat{\bfPi} \hat{\fkA}}_\Fnorm
    = \sqrt{\caL ( \bar{\pi}, \{ \bar{\fka}_k \} )}
    + \sqrt{\caL ( \hat{\pi}, \{ \hat{\fka}_k \} )}
    \leq (1+\sqrt{B}) \sqrt{\caL ( \bar{\pi}, \{ \bar{\fka}_k \} )} ,   
\end{equation*}
where the last step uses the near-optimality condition \eqref{eqn:near-opt}. Hence, by the low-noise condition \eqref{eqn:low-noise},
\begin{equation}
    \norm{\bar{\bfPi} \bar{\fkA} - \hat{\bfPi} \hat{\fkA}}_\Fnorm^2
    \leq
    (1+\sqrt{B})^2 \caL ( \bar{\pi}, \{ \bar{\fka}_k \} )
    \leq {\eta^2 n_{\min}} / {m}
    < {\bar{\fks}^2 n_{\min}} / ({4m}).
    \label{eqn:YM-YMstar}
\end{equation}

We next align the estimated centers $\{ \hat{\fka}_k \}_{k=1}^m$ with the reference centers $\{ \bar{\fka}_k \}_{k=1}^m$. For $k,\ell \in \dbraks{m}$, let
\begin{equation*}
    \fkn_{k\ell} 
    = \abs{ \bar{\pi}^{-1} (k) \cap \hat{\pi}^{-1} (\ell) }
    = \abs{ \{i \in \dbraks{n}: \bar{\pi} (i) = k \text{ and } \hat{\pi} (i) = \ell \}}.
\end{equation*}
Thus, $\fkn_{k\ell}$ represents the number of data points assigned to the $k$-th reference cluster and the $\ell$-th estimated cluster. For every $k\in\dbraks{m}$, define $\tau(k) := \arg\max_{\ell \in \dbraks{m}} \fkn_{k\ell}$, where ties are resolved by choosing the smallest index. It is not hard to see that $\fkn_{k,\tau(k)} \geq {n_{\min}} / {m}$ for every $k \in \dbraks{m}$. 

We claim that $\tau$ is a permutation of $\dbraks{m}$. Since $\tau$ maps the finite set $\dbraks{m}$ into itself, it suffices to show that $\tau$ is injective. Suppose, to the contrary, that there exist distinct $k,\ell\in\dbraks{m}$ and some $v\in\dbraks{m}$ such that $\tau(k)=\tau(\ell)=v$. Then, by the triangle inequality and the definition of $\bar{\fks}$,
\begin{equation*}
    \norm{\bar{\fka}_k - \hat{\fka}_v}^2
    + \norm{\hat{\fka}_v - \bar{\fka}_{\ell}}^2
    \geq \pars[\big]{ \norm{\bar{\fka}_k - \hat{\fka}_v} 
    + \norm{\hat{\fka}_v - \bar{\fka}_{\ell}} }^2 / 2
    \geq \norm{\bar{\fka}_k - \bar{\fka}_{\ell}}^2 / 2
    \geq \bar{\fks}^2/2.
\end{equation*}
It follows that
\begin{equation*}
    \norm{\bar{\bfPi} \bar{\fkA} - \hat{\bfPi} \hat{\fkA}}_\Fnorm^2
    =
    \sum\nolimits_{k',\ell' \in \dbraks{m}} \fkn_{k'\ell'} 
    \norm{\bar{\fka}_{k'} - \hat{\fka}_{\ell'}}_2^2
    \geq
    \fkn_{k v} \norm{\bar{\fka}_k - \hat{\fka}_v}^2
    + \fkn_{\ell v} \norm{\bar{\fka}_{\ell} - \hat{\fka}_v}^2
    \geq \bar{\fks}^2 n_{\min} / (2 m),
\end{equation*}
which contradicts \eqref{eqn:YM-YMstar}. Hence, $\tau$ is a permutation. Next, we use \eqref{eqn:YM-YMstar} again to get
\begin{equation*}
    \sum\nolimits_{k=1}^m
    \norm{\bar{\fka}_k-\hat{\fka}_{\tau(k)}}^2
    \leq
    \frac{m}{n_{\min}}
    \sum\nolimits_{k=1}^m
    \fkn_{k,\tau(k)}
    \norm{\bar{\fka}_k-\hat{\fka}_{\tau(k)}}^2
    \leq
    \frac{m}{n_{\min}}
    \norm{\bar{\bfPi}\bar{\fkA}-\hat{\bfPi}\hat{\fkA}}_\Fnorm^2
    \leq \eta^2.
\end{equation*}
In particular, the estimated centers and the reference centers are algined in the sense that
\begin{equation}
    \max\nolimits_{k \in \dbraks{m}}\norm{\bar{\fka}_{k} - \hat{\fka}_{\tau(k)}} 
    \leq \eta.
    \label{tmp:center-alignment}
\end{equation}

It remains to show that a point sufficiently close to its reference center must receive the corresponding estimated label. Fix $i\in\dbraks{n}$, and write $k:=\bar{\pi}(i)$. For every $\ell\neq k$, it follows from \eqref{tmp:center-alignment} that
\begin{align*}
    \norm{\fkx_i - \hat{\fka}_{\tau(\ell)}}
    - \norm{\fkx_i - \hat{\fka}_{\tau(k)}}
    & \geq 
    \norm{\fkx_i - \bar{\fka}_{\ell}} - 
    \norm{\fkx_i - \bar{\fka}_{k} } -2 \eta \\
    & \geq
    \pars[\big]{\norm{\bar{\fka}_{\ell} - \bar{\fka}_{k} }
    - \norm{\fkx_i - \bar{\fka}_{k} }}
    - \norm{\fkx_i - \bar{\fka}_{k} } -2 \eta 
    \geq
    \bar{\fks} - 2(\norm{\fkx_i - \bar{\fka}_{\bar{\pi}(i)} } + \eta).
\end{align*}
Consequently, whenever $\norm{\fkx_i - \bar{\fka}_{\bar{\pi} (i)} } < \bar{\fks} / 2 - \eta$, we have
\begin{equation*}
    \norm{\fkx_i - \hat{\fka}_{\tau(\ell)}} > \norm{\fkx_i - \hat{\fka}_{\tau(\bar{\pi} (i))}},
    \qfor
    \ell \neq \bar{\pi} (i).
\end{equation*}
In other words, $\hat{\fka}_{\tau(\bar{\pi}(i))}$ is the unique nearest estimated center to $\fkx_i$. By the nearest-center property in \eqref{eqn:near-opt}, we have $\hat{\pi}(i) = \tau(\bar{\pi}(i))$. This proves \eqref{eqn:inclusion-misclass}.
\end{proof}

Apart from Lemma \ref{lemma:misclassification}, we also need the following result, which shows that the separation condition in Assumption \ref{assump:between-cluster} continues to hold uniformly over $\boldsymbol{\hbar} \in \bbh$, after possibly decreasing the separation constant.

\begin{lemma}[separation]
\label{lemma:separation}
Let $\bbh \equiv \bbh(\varepsilon)$ be defined as in \eqref{def:bbH}, where $0 < \varepsilon < (c_{\ref{assump:gap-hbar}} \wedge c_{\ref{assump:between-cluster}})/2$. Then, under the same setup as in Theorem \ref{thm:zero-misclass},
\begin{equation}
    \bar{\fks} (\boldsymbol{\hbar}) 
    = \min\nolimits_{k \neq \ell} 
    \norm{\bar{\fka}_k (\boldsymbol{\hbar}) 
    - \bar{\fka}_\ell (\boldsymbol{\hbar})}
    \geq n^{c_{\ref{assump:between-cluster}} / 2} 
    (\rho + 1/n),
    \qfor \boldsymbol{\hbar} \in \bbh.
    \label{claim:separation-uniform}
\end{equation}
\end{lemma}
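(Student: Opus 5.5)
The plan is to compare $\bar{\fka}_k(\boldsymbol{\hbar})$ with $\bar{\fka}_k(\boldsymbol{\hbar}^*)$ uniformly over $\boldsymbol{\hbar}\in\bbh$. The separation at $\boldsymbol{\hbar}^*$ is then transferred by the triangle inequality.

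The centers are defined only up to an orthogonal rotation, because eigenvectors are non-unique. So instead of the centers themselves, we work with the Gram quantity
\begin{equation*}
    \norm{\bar{\fka}_k - \bar{\fka}_\ell}^2 = \angles{\bfe_i - \bfe_j, \barK_r (\bfe_i - \bfe_j)},
\end{equation*}
with $i \in \caC_k$, $j\in\caC_\ell$ and $\barK_r = \barU_r\barLamb_r\barU_r^\top$, as in the remark following Assumption \ref{assump:between-cluster}. This expression is invariant under rotations. It suffices to show
\begin{equation*}
    \sup_{\boldsymbol{\hbar}\in\bbh}\max_{i,j}\abs{(\barK_r(\boldsymbol{\hbar}) - \barK_r(\boldsymbol{\hbar}^*))_{ij}}\lesssim n^{\varepsilon}\rho/n .
\end{equation*}
Given this, for all $k\neq\ell$,
\begin{equation*}
    \abs{\bar{\fks}(\boldsymbol{\hbar})^2 - \bar{\fks}(\boldsymbol{\hbar}^*)^2} \lesssim n^{\varepsilon}\rho/n .
\end{equation*}
This is small compared with $\bar{\fks}(\boldsymbol{\hbar}^*)^2\ge n^{2c_{\ref{assump:between-cluster}}}(\rho+1/n)^2$, provided $n^{\varepsilon}\rho/n \ll n^{c_{\ref{assump:between-cluster}}}(\rho+1/n)^2$. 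That holds because $\rho/n\le(\rho+1/n)^2$ and $\varepsilon<c_{\ref{assump:between-cluster}}/2$. Taking square roots then gives $\bar{\fks}(\boldsymbol{\hbar})\ge \bar{\fks}(\boldsymbol{\hbar}^*)/2\ge n^{c_{\ref{assump:between-cluster}}/2}(\rho+1/n)$ for large $n$.

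The entrywise bound is handled at the block level, since everything is deterministic. We have $\barK_r = \bfPhi\fkK_r\bfPhi^\top$, where $\fkK_r$ is the rank-$r$ spectral truncation of the $m\times m$ matrix $\fkK$. Since $\bfPhi$ has entries $1/\sqrt{n_k}\asymp n^{-1/2}$, it is enough to bound
\begin{equation*}
    \norm{\fkK_r(\boldsymbol{\hbar})-\fkK_r(\boldsymbol{\hbar}^*)}\lesssim n^{\varepsilon}\rho .
\end{equation*}

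We proceed in three steps.

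\begin{enumerate}[label = (\roman*)]
    \item Use the entrywise bound from the proof of Corollary \ref{coro:pertur-adaptive}: each block value differs by $O(n^{\varepsilon}\rho)$ uniformly on $\bbh$, since $\caS_{k\ell}(0)$ and $\caS_{k\ell}(1)$ are unchanged there. This gives $\norm{\fkK(\boldsymbol{\hbar})-\fkK(\boldsymbol{\hbar}^*)}\lesssim n^{1+\varepsilon}\rho$.
    \item Apply the eigengap bound \eqref{tmp:claim-1}, $\Delta_r\gtrsim n$ at both bandwidths, together with Davis--Kahan. This gives $\norm{\fkU_r\fkU_r^\top - \fkU_r'\fkU_r'^\top}\lesssim n^{\varepsilon}\rho$.
    \item Write
    \begin{equation*}
        \fkK_r = P\fkK P, \qquad P=\fkU_r\fkU_r^\top ,
    \end{equation*}
    which holds since $P$ is a spectral projector. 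Expand the difference telescopically, using $\norm{\fkK}\lesssim n$. This yields $\norm{\fkK_r-\fkK_r'}\lesssim n\cdot n^{\varepsilon}\rho$.
\end{enumerate}

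The main obstacle is step (iii). The naive telescoping gives $n^{1+\varepsilon}\rho$, a factor $n$ too large: the $\ell_\infty$ entries come out as $n^{\varepsilon}\rho$ rather than $n^{\varepsilon}\rho/n$.

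We re-examine the required bound. The centers have constant-order entries, so $\norm{\bar{\fka}_k-\bar{\fka}_\ell}^2$ is a difference of quantities of the form $(\barK_r)_{ii}$, which are $O(1)$. A perturbation of order $n^{\varepsilon}\rho$ in these quantities is therefore what the argument actually needs. With that in mind, we drop the factor $1/n$ and compare $\bar{\fks}$ directly rather than its square:
\begin{equation*}
    \abs{\bar{\fks}(\boldsymbol{\hbar})-\bar{\fks}(\boldsymbol{\hbar}^*)} \le \max_k\norm{\bar{\fka}_k(\boldsymbol{\hbar})\fkO - \bar{\fka}_k(\boldsymbol{\hbar}^*)}
\end{equation*}
for a suitable orthogonal $\fkO$. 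We bound the right-hand side by
\begin{equation*}
    \norm{\barU_r\barLamb_r^{1/2}\fkO - \barU_r'\barLamb_r'^{1/2}}_{2,\infty} \lesssim n^{-1/2}\norm{\fkU_r\barLamb_r^{1/2}\fkO-\fkU_r'\barLamb_r'^{1/2}} .
\end{equation*}
To handle this, choose $\fkO=\sgn(\fkU_r^\top\fkU_r')$. Then Davis--Kahan, Weyl, and the square-root perturbation bound \cite[Lemma 2.1]{SCHMITT1992215} (as in \eqref{tmp:SLambdaS-barLambda}) give the estimate
\begin{equation*}
    n^{\varepsilon}\rho\cdot n^{1/2} + n^{1+\varepsilon}\rho/n^{1/2} ,
\end{equation*}
so the displacement is $O(n^{\varepsilon}\rho)$. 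This is $\ll n^{c_{\ref{assump:between-cluster}}}(\rho+1/n)$, because $\varepsilon<c_{\ref{assump:between-cluster}}/2$. The delicate point is the commutator $\fkO\barLamb_r' - \barLamb_r\fkO$ when eigenvalues inside the top $r$ are close. This is controlled as in \eqref{tmp:interchange-Lamb-H} via $\fkU_r^\top(\fkK-\fkK')\fkU_r'$, which has norm $\lesssim n^{1+\varepsilon}\rho$. It is then divided by $\Delta_r^{1/2}\gtrsim n^{1/2}$.
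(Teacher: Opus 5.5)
Your final argument is correct and is essentially the paper's proof, which works with $r=m$ and gets a displacement of $n^{2\varepsilon}\rho$. That argument aligns $\fkU_r(\boldsymbol{\hbar})$ with $\fkU_r(\boldsymbol{\hbar}^*)$ via $\sgn(\fkU_r^\top \fkU_r')$, bounds $\norm{\fkK(\boldsymbol{\hbar})-\fkK(\boldsymbol{\hbar}^*)} \lesssim n^{1+\varepsilon}\rho$, uses Davis--Kahan, the commutator identity and the square-root perturbation bound to show each center moves by $O(n^{\varepsilon}\rho)$, and finishes with the triangle inequality. Delete the abandoned Gram/squared-distance detour: an additive $O(n^{\varepsilon}\rho)$ error in $\bar{\fks}^2$ does not suffice when $\bar{\fks}(\boldsymbol{\hbar}^*)$ is near its lower bound $n^{c_{\ref{assump:between-cluster}}}(\rho+1/n)$, and its $n^{\varepsilon}\rho/n$ target is unattainable; also note that the triangle inequality gives $2\max_k$ rather than $\max_k$, which is harmless.
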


\begin{proof}[Proof of Lemma \ref{lemma:separation}]
Without loss of generality, we take the embedding dimension in \eqref{def:fka-center} to be $r = m$. Thus, the population centers can be written as
\begin{equation*}
    \bar{\fka}_k (\boldsymbol{\hbar}) 
    = \barLamb (\boldsymbol{\hbar})^{1/2} \fkU (\boldsymbol{\hbar})^\top \bfe_k / \sqrt{n_k}
    \in \bbr^{m},
\end{equation*}
We prove \eqref{claim:separation-uniform} by combining the separation condition \eqref{cond:separation-hstar} at $\boldsymbol{\hbar}^*$ with a standard perturbation analysis for $\fkU(\boldsymbol{\hbar})$ and $\barLamb(\boldsymbol{\hbar})$. Fix $\boldsymbol{\hbar} \in \bbh$. Define the alignment matrix between
$\fkU(\boldsymbol{\hbar})$ and $\fkU(\boldsymbol{\hbar}^*)$ by
\begin{equation*}
    \bar{\fkQ} \equiv \bar{\fkQ} (\boldsymbol{\hbar}, \boldsymbol{\hbar}^*) 
    = \fkU (\boldsymbol{\hbar})^\top \fkU (\boldsymbol{\hbar}^*)
    \qand
    \bar{\fkS} = \sgn (\bar{\fkQ}).
\end{equation*}
Since $\bar{\fkS}$ is orthogonal, for any $k \neq \ell$,
\begin{align*}
    \norm{\bar{\fka}_k (\boldsymbol{\hbar}) 
    - \bar{\fka}_\ell (\boldsymbol{\hbar})}
    & = \norm{\bar{\fkS}^\top \bar{\fka}_k (\boldsymbol{\hbar}) 
    - \bar{\fkS}^\top \bar{\fka}_\ell (\boldsymbol{\hbar})} \\
    & \geq \norm{\bar{\fka}_k (\boldsymbol{\hbar}^*) 
    - \bar{\fka}_\ell (\boldsymbol{\hbar}^*)}
    - \norm{\bar{\fkS}^\top 
    \bar{\fka}_k (\boldsymbol{\hbar}) 
    - \bar{\fka}_k (\boldsymbol{\hbar}^*)}
    - \norm{\bar{\fkS}^\top 
    \bar{\fka}_\ell (\boldsymbol{\hbar}) 
    - \bar{\fka}_\ell (\boldsymbol{\hbar}^*)}.
\end{align*}
It therefore suffices to control the two center perturbation terms on the r.h.s.

Recall that $\fkK = \fkU \bar{\Lambda} \fkU^\top$. As in \eqref{tmp:pertur-barK-bbH}, an entrywise analysis of $\fkK(\boldsymbol{\hbar})$ gives
\begin{equation}
    \norm{\barLamb (\boldsymbol{\hbar}) 
    - \barLamb (\boldsymbol{\hbar}^*)}
    \leq \norm{\fkK (\boldsymbol{\hbar}) 
    - \fkK (\boldsymbol{\hbar}^*)}
    \leq \norm{\fkK (\boldsymbol{\hbar}) 
    - \fkK (\boldsymbol{\hbar}^*)}_{\Fnorm}
    \lesssim n^{1 + \varepsilon} \rho,
    \label{tmp:fkK-norm-bound}
\end{equation}
uniformly over $\boldsymbol{\hbar} \in \bbh$. Now, the Davis--Kahan $\sin\Theta$ theorem, together with \eqref{tmp:fkK-norm-bound} and the first condition in the eigengap assumption \eqref{eqn:gap-adaptive}, yields
\begin{equation*}
    \norm{\fkU (\boldsymbol{\hbar}) \fkU (\boldsymbol{\hbar})^\top
    - \fkU (\boldsymbol{\hbar}^*) \fkU (\boldsymbol{\hbar}^*)^\top}
    \lesssim \frac{\norm{\fkK (\boldsymbol{\hbar}) 
    - \fkK (\boldsymbol{\hbar}^*)}}
    {\dist ( \barLamb (\boldsymbol{\hbar}), 
    \barLamb_\perp (\boldsymbol{\hbar}^*) )}
    \lesssim n^{\varepsilon} \rho.
\end{equation*}
Consequently,
\begin{equation}
    \norm{\bar{\fkQ} - \bar{\fkS}}
    \lesssim n^{2 \varepsilon} \rho^2
    \qand
    \norm{\fkU (\boldsymbol{\hbar}) \bar{\fkS}
    - \fkU (\boldsymbol{\hbar}^*)}
    \lesssim n^{\varepsilon} \rho.
    \label{tmp:perturb-fkU}
\end{equation}
Moreover, by the boundedness of the kernel function, \eqref{tmp:fkK-norm-bound}, and \eqref{eqn:gap-adaptive}, we have 
\begin{equation*}
    \norm{\barLamb (\boldsymbol{\hbar})} \asymp n
    \qand
    \norm{\barLamb (\boldsymbol{\hbar})^{-1}} \asymp 1/n ,   
\end{equation*}
uniformly over $\boldsymbol{\hbar} \in \bbh$. Hence,
\begin{align*}
    \norm{\bar{\fkS}^\top
    \bar{\fka}_k (\boldsymbol{\hbar}) 
    - \bar{\fka}_k (\boldsymbol{\hbar}^*)}
    & \leq \frac{1}{\sqrt{n_k}} 
    \norm{\fkU (\boldsymbol{\hbar})
    \barLamb (\boldsymbol{\hbar})^{1/2} 
    \bar{\fkS}
    - \fkU (\boldsymbol{\hbar}^*)
    \barLamb (\boldsymbol{\hbar}^*)^{1/2}} \\
    & \leq \frac{1}{\sqrt{n_k}} 
    \norm[\big]{\fkU (\boldsymbol{\hbar})
    \bar{\fkS}
    [\bar{\fkS}^\top
    \barLamb (\boldsymbol{\hbar})^{1/2}
    \bar{\fkS}
    - \barLamb (\boldsymbol{\hbar}^*)^{1/2}]}
    +  \frac{1}{\sqrt{n_k}} 
    \norm[\big]{[\fkU (\boldsymbol{\hbar})
    \bar{\fkS}
    - \fkU (\boldsymbol{\hbar}^*)]
    \barLamb (\boldsymbol{\hbar}^*)^{1/2}} \\
    & \lesssim \frac{1}{\sqrt{n}}
    \norm{\bar{\fkS}^\top
    \barLamb (\boldsymbol{\hbar})^{1/2}
    \bar{\fkS}
    - \barLamb (\boldsymbol{\hbar}^*)^{1/2}}
    + n^{\varepsilon} \rho.
\end{align*}
It remains to control the first term above, which is similar to the treatment of $\mathrm{I_{\ref{eqn:decomp-ULambda-barULambda}}}$. Since the eigenvalues of $\barLamb(\boldsymbol{\hbar})$ and $\barLamb(\boldsymbol{\hbar}^*)$ are both of order $n$, the standard square-root perturbation bound gives
\begin{align*}
    \frac{1}{\sqrt{n}}
    \norm{\bar{\fkS}^\top
    \barLamb (\boldsymbol{\hbar})^{1/2}
    \bar{\fkS}
    - \barLamb (\boldsymbol{\hbar}^*)^{1/2}}
    & \lesssim 
    \frac{1}{n}
    \norm{\barLamb (\boldsymbol{\hbar}) \bar{\fkS}
    - \bar{\fkS} \barLamb (\boldsymbol{\hbar}^*)} \\
    & \leq \frac{1}{n} 
    \norm{\barLamb (\boldsymbol{\hbar}) 
    (\bar{\fkS} - \bar{\fkQ})} 
    + \frac{1}{n} 
    \norm{\fkU (\boldsymbol{\hbar})^\top 
    [\fkK (\boldsymbol{\hbar}) - \fkK (\boldsymbol{\hbar}^*)] 
    \fkU (\boldsymbol{\hbar}^*)}
    + \frac{1}{n} 
    \norm{(\bar{\fkQ} - \bar{\fkS})
    \barLamb (\boldsymbol{\hbar}^*)} \\
    & \lesssim 
    n^{\varepsilon} \rho,
\end{align*}
where the last inequality follows from \eqref{tmp:fkK-norm-bound} and \eqref{tmp:perturb-fkU}. Therefore, we have shown that,
\begin{equation*}
    \norm{\bar{\fkS}^\top \bar{\fka}_k (\boldsymbol{\hbar})
    - \bar{\fka}_k (\boldsymbol{\hbar}^*)} 
    \leq n^{2 \varepsilon} \rho,
\end{equation*}
uniformly over $k \in \dbraks{m}$ and $\boldsymbol{\hbar} \in \bbh$. Combining this bound with \eqref{cond:separation-hstar}, we obtain
\begin{equation*}
    \min\nolimits_{k \neq \ell} 
    \norm{\bar{\fka}_k (\boldsymbol{\hbar}) 
    - \bar{\fka}_\ell (\boldsymbol{\hbar})}
    \geq 
    \min\nolimits_{k \neq \ell} 
    \norm{\bar{\fka}_k (\boldsymbol{\hbar}^*) 
    - \bar{\fka}_\ell (\boldsymbol{\hbar}^*)}
    - 2 n^{2 \varepsilon} \rho
    \geq n^{c_{\ref{assump:between-cluster}}} 
    (\rho + 1/n)
    - 2 n^{2 \varepsilon} \rho
    \geq n^{c_{\ref{assump:between-cluster}} / 2} 
    (\rho + 1/n),
\end{equation*}
for all sufficiently large $n$. Here we used $\varepsilon < c_{\ref{assump:between-cluster}}$, and hence $n^{2 \varepsilon} \rho \ll n^{c_{\ref{assump:between-cluster}}}$. This proves the claim.
\end{proof}

Now, equipped with Lemmas \ref{lemma:misclassification} and \ref{lemma:separation}, we are ready to prove Theorem \ref{thm:zero-misclass}.

\begin{proof}[Proof of Theorem \ref{thm:zero-misclass}]
To apply Lemma \ref{lemma:misclassification}, we take the embeddings and reference centers to be
\begin{equation*}
    \fkx_i = (\bfU_r \bfLamb_r^{1/2} \fkS_r)^\top \bfe_i
    \qand
    \bar{\fka}_k = (\barU_r \barLamb_r^{1/2})^\top \bfe_i
    \qfor i \in \caC_k.
\end{equation*}
The definition of $\bar{\fka}_k$ agrees with \eqref{def:fka-center}. On the other hand, the embeddings $\fkx_i$ differ from those in \eqref{def:fkx-embedding} by the common orthogonal transformation $\fkS_r^{\bfu}$. Since the $K$-means loss \eqref{def:loss-Kmeans} is invariant under a common orthogonal transformation of all embedded points and cluster centers, this modification does not affect the labels produced by the $(1+\varepsilon_*)$-approximate $K$-means procedure. We therefore work with the above rotated embeddings throughout the proof. Also note that both the empirical embedding $\fkx_i$ and the informative centers $\bar{\fka}_k$ depend on the bandwidth vector. As in the proof of Corollary \ref{coro:pertur-adaptive}, it is enough to establish the desired result uniformly over all deterministic bandwidth vectors $\boldsymbol{\hbar} \in \bbh$. More precisely, we prove
\begin{equation}
    \bbp \curls{ \sup\nolimits_{\boldsymbol{\hbar} \in \bbh}
    \caM (\hat{\pi} (\boldsymbol{\hbar}), \bar{\pi}) = 0 } 
    \geq 1 - n^{-C}.
    \label{tmp:misclass-uniform-h}
\end{equation}
Once \eqref{tmp:misclass-uniform-h} is established, Lemma \ref{lemma:bandwidth-regularity} allows us to pass from deterministic $\boldsymbol{\hbar} \in \bbh$ to the data-adaptive bandwidth vector $\boldsymbol{h}$ selected by the procedure \eqref{eqn:bandwidth-selection}.

We first verify the two conditions \eqref{eqn:near-opt} and \eqref{eqn:low-noise} required by Lemma \ref{lemma:misclassification}. By the definition of the $(1+\varepsilon_*)$-approximate $K$-means algorithm,
\begin{equation*}
    \caL ( \hat{\pi}, \{ \hat{\fka}_k \} )
    \leq (1 + \varepsilon_*) \cdot \min\nolimits_{\{ \fka_k \}, \pi}
    \caL ( {\pi}, \{ {\fka}_k \} ) 
    \leq (1 + \varepsilon_*) 
    \caL ( \bar{\pi}, \{ \bar{\fka}_k \} ) .
\end{equation*}
Thus, the near-optimality condition \eqref{eqn:near-opt} holds with $B = 1 + \varepsilon_*$. 

Next, let $\varepsilon_0>0$ be a sufficiently small constant. As in the proof of Corollary \ref{coro:pertur-adaptive}, the argument leading to \eqref{tmp:sup-inside} yields the following uniform estimate with high probability:
\begin{equation}
    \sup\nolimits_{\boldsymbol{\hbar} \in \bbh}  
    \norm{\bfU_r (\boldsymbol{\hbar}) 
    \bfLamb_r (\boldsymbol{\hbar})^{1/2}
    \fkS_r^{\bfu} (\boldsymbol{\hbar})
    - \barU_r (\boldsymbol{\hbar}) 
    \barLamb_r (\boldsymbol{\hbar})^{1/2} }_{2,\infty}
    \leq 
    n^{\varepsilon_0} (\rho + 1 / n).
    \label{tmp:uniform-with-eigs}
\end{equation}
For the remainder of the proof, we work on this event. To simplify the notation, we suppress the dependence on $\boldsymbol{\hbar}$, with the understanding that all estimates below hold uniformly over $\boldsymbol{\hbar} \in \bbh$. Choose $\varepsilon_0 = c_{\ref{assump:between-cluster}} / 4$. Then, by \eqref{tmp:uniform-with-eigs} and Lemma \ref{lemma:separation}, we have
\begin{equation}
    \max\nolimits_{i \in \dbraks{n}} \norm{\fkx_i - \bar{\fka}_{\bar{\pi} (i)}}
    \leq n^{c_{\ref{assump:between-cluster}} / 4} (\rho + 1 / n)
    \leq \bar{\fks} / n^{c_{\ref{assump:between-cluster}} / 4}.
    \label{tmp:row-wise-deviation}
\end{equation}
In particular, since $n_{\min}\asymp n$ and $m\asymp 1$, we have
\begin{equation*}
    \caL ( \bar{\pi}, \{ \bar{\fka}_k \} )
    = \sum\nolimits_{i=1}^{n} \norm{\fkx_i - \bar{\fka}_{\bar{\pi} (i)}}^2
    \leq \frac{\bar{\fks}^2 n}{n^{c_{\ref{assump:between-cluster}} / 2}}
    \leq \frac{(\bar{\fks} / 4)^2 n_{\min}}{(1 + \sqrt{B})^2 m},
\end{equation*}
for all sufficiently large $n$. Hence, the low-noise condition \eqref{eqn:low-noise} is satisfied with $\eta = \bar{\fks}/4$.

We may now apply Lemma \ref{lemma:misclassification}. Let $\tau:\dbraks{m}\to\dbraks{m}$ be the permutation given by that lemma. Then
\begin{equation*}
    \caM (\hat{\pi}, \bar{\pi})
    \leq \frac{1}{n} \abs[\big]{ \curls{ i \in \dbraks{n} : 
    \hat{\pi} (i) \neq (\tau \circ \bar{\pi}) (i)} } \\
    \leq \frac{1}{n} \abs[\big]{ \curls{ i \in \dbraks{n} : 
    \norm{ \fkx_i - \bar{\fka}_{\bar{\pi} (i)} } \geq \bar{\fks} / 4} }  .  
\end{equation*}
However, \eqref{tmp:row-wise-deviation} ensures that $\max_{i \in \dbraks{n}} \norm{ \fkx_i - \bar{\fka}_{\bar{\pi} (i)} } < \bar{\fks} / 4$ for all sufficiently large $n$. Therefore,
\begin{equation*}
    \caM (\hat{\pi} (\boldsymbol{\hbar}), \bar{\pi}) = 0,
\end{equation*}
uniformly over $\boldsymbol{\hbar} \in \bbh$. This proves \eqref{tmp:misclass-uniform-h}; the conclusion of Theorem \ref{thm:zero-misclass} then follows from Lemma \ref{lemma:bandwidth-regularity}.
\end{proof}

\end{appendices}


\printbibliography


\end{document}